\documentclass{article}
\pdfoutput=1

\usepackage{xcolor}

\newcommand{\numericalConst}[1]{#1}

\usepackage[utf8]{inputenc}
\usepackage{amssymb, amsmath, amsthm}
\usepackage{algorithm, algpseudocode}
\usepackage{graphicx}
\usepackage{booktabs}
\usepackage{multirow,multicol}
\usepackage{hyperref}

\newtheorem{lemma}{Lemma}[section]
\newtheorem{definition}[lemma]{Definition}

\newtheorem{theorem}[lemma]{Theorem}
\newtheorem{corollary}[lemma]{Corollary}
\theoremstyle{remark}
\newtheorem{remark}[lemma]{Remark}


\newcommand{\dualp}[1]{\left\langle #1 \right\rangle} 

\newcommand{\E}[1]{\mathbb{E}\left[ #1 \right]}
\newcommand{\EE}[2]{\mathbb{E}_{#1}\left[ #2 \right]}
\newcommand{\pr}[1]{\operatorname{Pr}\left[ #1 \right]}
\DeclareMathOperator{\pdf}{pdf}

\newcommand{\loss}{\mathcal{L}}
\newcommand{\dom}{D}
\newcommand{\wdom}{\Theta}
\newcommand{\hs}{\mathcal{H}}

\newcommand{\ptheta}{{\bar{\theta}}}
\newcommand{\res}{\kappa}
\newcommand{\wdiff}{h}


\newcommand{\real}{\mathbb{R}}


\DeclareMathOperator{\relu}{ReLU}
\newcommand{\activation}{\sigma}
\newcommand{\activationp}[1]{\sigma\left(#1\right)}
\newcommand{\dactivation}{\dot{\sigma}}
\newcommand{\dactivationp}[1]{\dot{\sigma}\left(#1\right)}

\newcommand{\gaussian}[1]{\mathcal{N}(#1)}
\newcommand{\px}{\bar{x}}
\newcommand{\py}{\bar{y}}
\newcommand{\pg}{\bar{g}}
\newcommand{\ph}{\bar{h}}

\newcommand{\gp}{\Sigma}
\newcommand{\egp}{\hat{\Sigma}}
\newcommand{\dgp}{\dot{\Sigma}}
\newcommand{\edgp}{\hat{\dot{\Sigma}}}
\newcommand{\ntk}{\Gamma}
\newcommand{\entk}{\hat{\Gamma}}

\newcommand{\pegp}{\bar{\hat{\Sigma}}}

\newcommand{\pedgp}{\bar{\hat{\dot{\Sigma}}}}
\newcommand{\pntk}{\bar{\Gamma}}
\newcommand{\pentk}{\bar{\hat{\Gamma}}}
\newcommand{\egpr}{\hat{\Lambda}}
\newcommand{\eA}{\hat{A}}

\newcommand{\NTK}{\Theta}

\newcommand{\w}{\lambda}

\newcommand{\pW}{\bar{W}}
\newcommand{\pf}{\bar{f}}

\newcommand{\CN}[1]{{C^{#1}}}
\newcommand{\CHN}[1]{{C^{0;#1}}}
\newcommand{\CHHN}[2]{{C^{0;#1,#2}}}
\newcommand{\CND}[2]{{C^{#1}(#2)}}
\newcommand{\CHND}[2]{{C^{0;#1}(#2)}}
\newcommand{\CHHND}[3]{{C^{0;#1,#2}(#3)}}

\newcommand{\idelta}{\bar{\Delta}}

\newcommand{\Sd}{\mathbb{S}^{d-1}}

\newcommand{\wnorm}[1]{\left\| #1 \right\|_*}
\newcommand{\ntks}{\beta}
\newcommand{\holder}{\gamma}
\newcommand{\sm}{\alpha}
\newcommand{\Sm}{S}
\newcommand{\ntkop}{H}


\def\Xint#1{\mathchoice
   {\XXint\displaystyle\textstyle{#1}}%
   {\XXint\textstyle\scriptstyle{#1}}%
   {\XXint\scriptstyle\scriptscriptstyle{#1}}%
   {\XXint\scriptscriptstyle\scriptscriptstyle{#1}}%
   \!\int}
\def\XXint#1#2#3{{\setbox0=\hbox{$#1{#2#3}{\int}$ }
     \vcenter{\hbox{$#2#3$ }}\kern-.6\wd0}}

\def\dashint{\Xint-}

\begin{document}

\title{Approximation Results for Gradient Descent trained Neural Networks}
\author{G. Welper\footnote{Department of Mathematics, University of Central Florida, Orlando, FL 32816, USA, \texttt{gerrit.welper@ucf.edu}}}
\date{}
\maketitle

\begin{abstract}

  The paper contains approximation guarantees for neural networks that are trained with gradient flow, with error measured in the continuous  $L_2(\Sd)$-norm on the $d$-dimensional unit sphere and targets that are Sobolev smooth. The networks are fully connected of constant depth and increasing width. Although all layers are trained, the gradient flow convergence is based on a neural tangent kernel (NTK) argument for the non-convex second but last layer. Unlike standard NTK analysis, the continuous error norm implies an under-parametrized regime, possible by the natural smoothness assumption required for approximation. The typical over-parametrization re-enters the results in form of a loss in approximation rate relative to established approximation methods for Sobolev smooth functions.

\end{abstract}

\smallskip
\noindent \textbf{Keywords:} deep neural networks, approximation, gradient descent, neural tangent kernel

\smallskip
\noindent \textbf{AMS subject classifications:} 41A46, 65K10, 68T07

\tableofcontents

\section{Introduction}

Direct approximation results for a large variety of methods, including neural networks, are typically of the form
\begin{equation} \label{eq:approx}
  \begin{aligned}
    \inf_\theta \|f_\theta - f\| & \le n(\theta)^{-r}, &
    f & \in K.
  \end{aligned}
\end{equation}
I.e., a target function $f$ is approximated by an approximation method $f_\theta$, parametrized by some degrees of freedom or weights $\theta$ up to a rate $n(\theta)^{-r}$ for some $n(\theta)$ that measures the richness of the approximation method as width, depth or number of weights for neural networks. Generally, the approximation rate can be arbitrarily slow unless the target $f$ is contained in some compact set $K$, which depends on the approximation method and application and is typically a unit ball in a Sobolev, Besov, Barron or other normed smoothness space. Such results are well established for a variety of neural network architectures and compact sets $K$, however, these results rarely address how to practically compute the infimum in the formula above and instead use hand-picked weights.

On the other hand, the neural network optimization literature, typically considers discrete error norms (or losses)
\[
  \|f_\theta - f\|_* := \left( \frac{1}{n} \sum_{i=1}^n |f_\theta(x_i) - f(x_i)|^2 \right)^{1/2},
\]
together with neural networks that are \emph{over-parametrized}, i.e. for which the number of weights is larger than the number of samples $n$ so that they can achieve zero training error
\[
  \inf_\theta \|f_\theta - f\|_* = 0,
\]
rendering the approximation question obsolete. In contrast, approximation theory measures the error in  continuous norms that emerge in the sample $n \to \infty$ limit, where the problem is necessarily \emph{under-parametrized}.

This paper contains approximation results of type \eqref{eq:approx} for fully connected networks that are trained with gradient flow and therefore avoids the question how to compute the infimum in \eqref{eq:approx}. The outline of the proof follows the typical neural tangent kernel (NTK) argument: We show that the empirical NTK is close to the infinite width NTK and that the NTK does not change too much during training. The main differences to the standard analysis are:
\begin{enumerate}
  \item Due to the under-parametrization, the eigenvalues of the NTK are not lower bounded away form zero. Instead we require that the NTK is coercive in a negative Sobolev norm.
  \item We show that the gradient flow networks are uniformly bounded in positive Sobolev norms.
  \item The coercivity in negative Sobolev smoothness and the uniform bounds of positive Sobolev smoothness allow us to derive $L_2$ error bounds by interpolation inequalities.
  \item All perturbation and concentration estimates are carried out in function space norms. In particular, the concentration results need some careful consideration and are proven by chaining arguments.
\end{enumerate}
The NTK is a sum of positive matrices from which we only use the contribution form the second but last layer to drive down the error, while all other layers are trained but estimated only by a perturbation analysis. The coercivity assumption on the NTK is not shown in this paper. It is known for $\relu$ activations, but we require smoother activations and only provide a preliminary numerical test while leaving a rigorous analysis of the resulting NTK for future work.

The proven approximation rates are lower than finite element, wavelet or spline rates under the same smoothness assumptions. This seems to be a variant of the over-parametrization in the usual NTK arguments: the networks need some redundancy in their degrees of freedom to aid the optimization.

\paragraph{Paper Organization} The paper is organized as follows. Section \ref{sec:setup} defines the neural networks and training procedures and Section \ref{sec:main} contains the main result. The coercivity of the NTK is discussed in Section \ref{sec:coercivity}. The proof is split into two parts. Section \ref{sec:proof-overview} provides an overview and all major lemmas. The proof the these lemmas and further details are provided in Section \ref{sec:proof-details}. Finally, to keep the paper self contained, Section \ref{sec:supplements} contains several facts from the literature.

\paragraph{Literature Review}

\begin{itemize}

  \item \emph{Approximation:} Some recent surveys are given in
  \cite{
    Pinkus1999,
    DeVoreHaninPetrova2020,
    WeinanChaoLeiWojtowytsch2020,
    BernerGrohsKutyniokPetersen2021%
  }. Most of the results prove direct approximation guarantees as in \eqref{eq:approx} for a variety of classes $K$ and network architectures. They show state of the art or even superior performance of neural networks, but typically do not provide training methods and rely on hand-picked weights, instead.
  \begin{itemize}
    \item Results for classical \emph{Sobolev} and \emph{Besov regularity} are in 
    \cite{
      GribonvalKutyniokNielsenEtAl2019,
      GuhringKutyniokPetersen2020,
      OpschoorPetersenSchwab2020,
      LiTangYu2019,
      Suzuki2019%
    }.
    \item 
    \cite{
      Yarotsky2017,
      Yarotsky2018,
      YarotskyZhevnerchuk2020,
      DaubechiesDeVoreFoucartEtAl2019,
      ShenYangZhang2019,
      LuShenYangZhang2021%
    }
    show better than classical approximation rates for Sobolev smoothness. Since classical methods are optimal (with regard to nonlinear width and entropy), this implies that the weight assignment $f \to \theta$ must be discontinuous.
    \item Function classes that are specifically tailored to neural networks are \emph{Barron spaces} for which approximation results are given in 
    \cite{
      Bach2017,
      KlusowskiBarron2018,
      WeinanMaWu2019,
      LiMaWu2020,
      SiegelXu2020,
      SiegelXu2020a,
      BreslerNagaraj2020%
    }.
    \item Many papers address specialized function classes
    \cite{
      ShahamCloningerCoifman2018,
      PoggioMhaskarRosascoEtAl2017%
    },
    often from applications like PDEs
    \cite{
      KutyniokPetersenRaslanSchneider2022,
      PetersenVoigtlaender2018,
      LaakmannPetersen2020,
      MarcatiOpschoorPetersenSchwab2022%
    }.
  \end{itemize}
  Besides approximation guarantees \eqref{eq:approx} many of the above papers also discuss limitations of neural networks, for more information see
  \cite{
    ElbraechterPerekrestenkoGrohsEtAl2019%
  }.

  \item \emph{Optimization:} We confine the literature overview to neural tangent kernel based approaches, which are most relevant to this paper. The NTK is introduced in 
  \cite{
    JacotGabrielHongler2018%
  }
  and similar arguments together with convergence and perturbation analysis appear simultaneously in
  \cite{
    LiLiang2018,
    Allen-ZhuLiSong2019,
    DuZhaiPoczosSingh2019,
    DuLeeLiEtAl2019%
  },
  Related optimization ideas are further developed in many papers, including
  \cite{
    ZouCaoZhouGu2020,
    AroraDuHuEtAl2019a,
    LeeXiaoSchoenholzBahriNovakSohlDicksteinPennington2019,
    SongYang2019,
    ZouGu2019,
    KawaguchiHuang2019,
    ChizatOyallonBach2019,
    OymakSoltanolkotabi2020,
    NguyenMondelli2020,
    BaiLee2020,
    SongRamezaniKebryaPethickEftekhariCevher2021,
    LeeChoiRyuNo2022%
  }.
  In particular, 
  \cite{
    AroraDuHuEtAl2019,
    SuYang2019,
    JiTelgarsky2020,
    ChenCaoZouGu2021%
  } 
  refine the analysis based on expansions of the target $f$ in the NTK eigenbasis and are closely related to the arguments in this paper, with the major difference that they rely on the typical over-parametrized regime, whereas we do solemnly rely on smoothness.

  The papers
  \cite{
    GeigerJacotSpiglerGabrielSagundAscoliBiroliHonglerWyart2019,
    HaninNica2020,
    FortDziugaitePaulKharaghaniRoyGanguli2020,
    LeeSchoenholzPenningtonAdlamXiaoNovakSohl-Dickstein2020,
    SeleznovaKutyniok2022a,
    VyasBansalNakkiran2022%
  }
  discuss to what extend the linearization approach of the NTK can describe real neural network training. Characterizations of the NTK are fundamental for this paper and given
  \cite{
    BiettiMairal2019,
    GeifmanYadavKastenGalunJacobsRonen2020,
    JiTelgarskyXian2020,
    ChenXu2021%
  }.
  Convergence analysis for optimizing NTK models directly are in
  \cite{
    VelikanovYarotsky2021,
    VelikanovYarotsky2022%
  }.

  \item \emph{Approximation and Optimization:} Since the approximation question is under-parametrized and the optimization literature largely relies on over-parametrization there is little work on optimization methods for approximation. The gap between approximation theory and practice is considered in 
  \cite{
    AdcockDexter2020,
    GrohsVoigtlaender2021%
  }. The previous paper 
  \cite{
    GentileWelper2022a%
  }
  contains comparable results for $1d$ shallow networks. Similar approximation results for gradient flow trained shallow $1d$ networks are in
  \cite{
    JentzenRiekert2022,
    IbragimovJentzenRiekert2022%
  }, 
  with slightly different assumptions on the target $f$, more general probability weighted $L_2$ loss and an alternative proof technique. Other approximation and optimization guarantees rely on alternative optimizers.
  \cite{
    SiegelXu2022,
    HaoJinSiegelXu2021%
  }
  use greedy methods and 
  \cite{
    HerrmannOpschoorSchwab2022%
  }
  uses a two step procedure involving a classical and subsequent neural network approximation.

  $L_2$ error bounds are also proven in generalization error bounds for statistical estimation. E.g. the papers 
  \cite{
    DrewsKohler2022,
    KohlerKrzyzak2022%
  }
  show generalization errors for parallel fully connected networks in over-parametrized regimes with Hölder continuity.

\end{itemize}

\section{Main Result}

\subsection{Notations}

\begin{itemize}
  \item $\lesssim$, $\gtrsim$, $\sim$ denote less, bigger and equivalence up to a constant that can change in every occurrence and is independent of smoothness and number of weights. It can depend on the number of layers $L$ and input dimension $d$. Likewise, $c$ is a generic constant that can be different in each occurrence.
  \item $[n] := \{1, \dots, n\}$
  \item $\w = ij;\ell$ is the index of the weight $W_\w := W_{ij}^\ell$ with $|\w| := \ell$. Likewise, we set $\partial_\w = \frac{\partial}{\partial W_\w}$.
  \item $\odot$: Element wise product
  \item $A_{i\cdot}$ and $A_{\cdot j}$ are $i$th row and $j$th column of matrix $A$, respectively.
\end{itemize}

\subsection{Setup}
\label{sec:setup}

\paragraph{Neural Networks}

We train fully connected deep neural networks without bias and a few modifications: The first and last layer remain untrained, we use gradient flow instead of (stochastic) gradient descent and the first layer remains unscaled. For $x$ in some bounded domain $\dom \subset \real^d$, the networks are defined by
\begin{equation} \label{eq:setup:network}
  \begin{aligned}
    f^1(x) & = W^0 V x, & & \\
    f^{\ell+1}(x) & = W^\ell n_\ell^{-1/2} \activationp{f^\ell(x)}, & \ell = 1, \dots, L \\ 
    f(x) & = f^{L+1}(x), & & 
  \end{aligned}
\end{equation}
which we abbreviate by $f^\ell = f^\ell(x)$ if $x$ is unimportant or understood from context. The weights are initialized as follows
\begin{align*}
  & W^{L+1} \in \{-1,+1\}^{1 \times n_{L+1}} & & \text{i.i.d. Rademacher} & & \text{not trained}, \\
  & W^{\ell} \in \real^{n_{\ell+1} \times n_\ell}, \, \ell \in [L] & & \text{i.i.d. }\gaussian{0, 1} & & \text{trained}, \\
  & V \in \real^{n_0 \times d} & & \text{orthogonal columns }V^T V = I & & \text{not trained},
\end{align*}
all trained by gradient flow, except for the last layer $W^{L+1}$ and the first matrix $V$, which is pre-chosen with orthonormal columns. All layers have conventional $1/\sqrt{n_\ell}$ scaling, except for the first, which ensures that the NTK is of unit size on the diagonal and common in the literature \cite{DuLeeLiEtAl2019,BiettiMairal2019,GeifmanYadavKastenGalunJacobsRonen2020,ChenXu2021}. We also require that the layers are of similar size, except for the last one which ensures scalar valued output of the network
\begin{align*}
  m & := n_{L-1}, &
  1 & = n_{L+1} \le n_L \sim \dots \sim n_0 \ge d.
\end{align*}

\paragraph{Activation Functions}

We require comparatively smooth activation functions that have no more that linear growth
\begin{equation} \label{eq:setup:activation-growth}
  |\activationp{x}| \lesssim |x|,
\end{equation}
uniformly bounded first derivatives
\begin{align} \label{eq:setup:dactivation-bounded}
  |\activation^{(i)}(x)| & \lesssim 1 &
  i & = 1,2, &
  x & \in \real
\end{align}
and continuous second and third derivative with at most polynomial growth
\begin{align} \label{eq:setup:dactivation-growth}
  |\activation^{(i)}(x)| & \le p(x), &
  i & = 0,1,2,3,4
\end{align}
for some polynomial $p$ and all $x \in \real$.

\paragraph{Training}

We wish to approximate a function $f \in L_2(\dom)$ by neural networks and therefore use the $L_2(\dom)$ norm for the loss function
\begin{equation*}
  \loss(\theta) := \frac{1}{2} \|f_\theta - f\|_{L_2(\dom)}^2.
\end{equation*}
In the usual split up into approximation and estimation error in the machine learning literature, this corresponds to the former. It can also be understood as an infinite sample limit of the mean squared loss. This implies that we perform convergence analysis in an under-parametrized regime, different from the bulk of the neural network optimization literature, which typically relies on over-parametrization.

For simplicity, we optimize the loss by gradient flow
\begin{equation} \label{eq:setup:gradient-flow}
  \frac{d}{dt} \theta = -\nabla \loss(\theta)
\end{equation}
and not gradient descent or stochastic gradient descent.

\paragraph{Smoothness}

Since we are in an under-parametrized regime, we require smoothness of $f$ to guarantee meaningful convergence bounds. In this paper, we use Sobolev spaces $H^\sm(\Sd)$ on the sphere $\dom = \Sd$, with norms and scalar products denoted by $\|\cdot\|_{H^\sm(\Sd)}$ and $\dualp{\cdot, \cdot}_{H^\alpha(\Sd)}$. We drop the explicit reference to the domain $\Sd$ when convenient. Definitions and required properties are summarized in Section \ref{sec:sobolev:norms}.

\paragraph{Neural Tangent Kernel}

The analysis is based on the neural tangent kernel, which for the time being, we informally define as
\begin{equation} \label{eq:setup:ntk-limit}
  \ntk(x,y) 
  = \lim_{\text{width}\to\infty} \sum_{|\w| = L-1} \partial_\w f_r^{L+1}(x) \partial_\w f_r^{L+1}(y).
\end{equation}
The rigorous definition is in \eqref{eq:def-ntk}, based on an recursive formula as in \cite{JacotGabrielHongler2018}. Our definition differs slightly form the standard version because we only include weights from layer $|\w| = L-1$. We require that it is coercive in Sobolev norms
\begin{equation} \label{eq:setup:coercivity}
  \dualp{f, \int_\dom \ntk(\cdot,y) f(y) \, dy}_{H^{\Sm(\Sd)}}
  \gtrsim \|f\|_{H^{\Sm - \ntks}}
\end{equation}
for some $0 \le \sm \le \frac{\ntks}{2}$, $\Sm \in \{-\sm,\sm\}$ and all $f \in H^{\sm}(\Sd)$. For $\relu$ activations and regular NTK, including all layers, this property easily follows from \cite{BiettiMairal2019,GeifmanYadavKastenGalunJacobsRonen2020,ChenXu2021} as shown in Lemma \ref{lemma:supplements:sobolev:ntk-coercive}. However, our convergence theory requires smoother activations and therefore Section \ref{sec:coercivity} provides some numerical evidence, while a rigorous analysis is left for future research.

The paper \cite{JacotGabrielHongler2018} provides a recursive formula for the NTK, which in our simplified case reduces to
\begin{equation*}
  \ntk(x,y) = \dgp^L(x,y) \gp^{L-1}(x,y),
\end{equation*}
where $\dgp^L(x,y)$ and $\gp^{L-1}(x,y)$ are the covariances of two Gaussian processes that characterize the forward evaluation of the networks $W^L n_L^{1/2} \dactivationp{f^L}$ and $f^{L-1}$ in the infinite width limit, see Section \ref{sec:ntk} for their rigorous definition. We require that\begin{align} \label{eq:setup:gp-bounds}
  c_\gp \le \gp^k(x,x) & \le C_\gp > 0, &
\end{align}
for all $x,y \in \dom$, $k = 1, \dots, L$ and constants $c_\gp, C_\gp \ge 0$. As we see in Section \ref{sec:coercivity}, the kernels are zonal, i.e. they only depend on $x^T y$. Hence, with a slight abuse of notation \eqref{eq:setup:gp-bounds} simplifies to $\gp^k(x,x) = \gp^k(x^T x) = \gp(1) \ne 0$. In fact, for $\relu$ activation (which is not sufficiently differentiable for our results) the paper \cite{ChenXu2021} shows $\gp^k(x,x) = 1$.

\subsection{Result}
\label{sec:main}

We are now ready to state the main result of the paper.

\begin{theorem} \label{th:convergence}

  Assume that the neural network \eqref{eq:setup:network} - \eqref{eq:setup:dactivation-growth} is trained by gradient flow \eqref{eq:setup:gradient-flow}. Let $\res(t) := f_{\theta(t)} - f$ be the residual and assume:
  \begin{enumerate}
    \item The NTK satisfies coercivity \eqref{eq:setup:coercivity} for some $0 \le \sm \le \frac{\ntks}{2}$ and the forward process satisfies \eqref{eq:setup:gp-bounds}.
    \item All hidden layers are of similar size: $n_0 \sim \dots \sim n_{L-1} =: m$.
    \item Smoothness is bounded by $0 < \sm < 1/2$.
    \item $0 < \holder < 1 - \sm$ is an arbitrary number (used for Hölder continuity of the NTK in the proof).
    \item For $\tau$ specified below, $m$ is sufficiently large so that
    \begin{align*}
      \|\res(0)\|_{-\sm}^{\frac{1}{2}} \|\res(0)\|_\sm^{\frac{1}{2}} m^{-\frac{1}{2}} & \lesssim 1, &
      \frac{cd}{m} & \le 1, &
      \frac{\tau}{m} & \le 1.
    \end{align*}
  \end{enumerate}
  Then with probability at least $1 - c L (e^{-m} + e^{-\tau})$ we have
  \begin{align} \label{eq:main:convergence}
    \|\res(t)\|_{L_2(\Sd)}^2
    & \lesssim \left[ \wdiff^{\frac{\ntks \holder}{\ntks-\sm}} \|\res(0)\|_{H^\sm(\Sd)}^{\frac{\ntks}{\sm}} + \|\res(0)\|_{H^{-\sm}(\Sd)}^{\frac{\ntks}{\sm}} e^{-c\wdiff^{\frac{\ntks \holder}{\ntks-\sm}} \frac{\ntks}{2\sm} t} \right]^{\frac{\sm}{\ntks}}
      \|\res(0)\|_{H^\sm(\Sd)}
  \end{align}
  for some $\wdiff$ with
  \begin{align*}
    \wdiff 
    & \lesssim \max \left\{ \left[ \frac{\|\res(0)\|_{H^{-\sm}(\Sd)}^{\frac{1}{2}}\|\res(0)\|_{H^\sm(\Sd)}^{\frac{1}{2}}}{\sqrt{m}} \right]^{\frac{\ntks - \sm}{\ntks(1+\holder) - \sm}},\, c\sqrt{\frac{d}{m}} \right\}, &
    \tau & = \wdiff^{2\holder} m
  \end{align*}
  and generic constant $c \ge 0$, dependent on smoothness $\sm$, depth $L$ and dimension $d$, independent of width $m$ and residual $\res$.
  
\end{theorem}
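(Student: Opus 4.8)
The plan is to track the residual flow. Differentiating $\res(t)=f_{\theta(t)}-f$ along \eqref{eq:setup:gradient-flow} gives, in $L_2(\Sd)$, the linear non-autonomous equation $\frac{d}{dt}\res(t)=-\entk_t\res(t)$, where $\entk_t$ is the integral operator of the empirical NTK $\sum_{\text{trained }\w}\partial_\w f_{\theta(t)}(x)\,\partial_\w f_{\theta(t)}(y)$. I would write $\entk_t=\ntk+R+E_t$, where $\ntk=\dgp^L\gp^{L-1}$ is the infinite-width kernel of layer $|\w|=L-1$, the constant $R$ collects the infinite-width contributions of all the other trained layers — which are zonal and positive semi-definite, hence simultaneously diagonalised with the Laplacian and never harmful in any $H^s(\Sd)$ inner product — and $E_t$ is the genuine fluctuation of the empirical NTK about its infinite-width limit. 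The first ingredient is then an operator-norm bound $\|E_t\|\lesssim\wdiff$ in the relevant Sobolev scales, with $\wdiff$ splitting into an initialisation part of size $\sqrt{d/m}$ (the $d$ entering through the unscaled first layer) and a training-drift part controlled by $m^{-1/2}\int_0^t\|\nabla\loss(\theta(s))\|\,ds$. The initialisation estimate is the delicate concentration step, carried out in a function-space norm by a chaining argument over $\Sd$; the drift estimate uses Lipschitz-in-the-weights bounds for the recursive NTK together with the growth and boundedness hypotheses \eqref{eq:setup:activation-growth}–\eqref{eq:setup:dactivation-growth} and the forward bounds \eqref{eq:setup:gp-bounds}, and it is here that the exponent $\holder$ enters, exploiting Hölder continuity of the kernel to convert $\wdiff$ into a small operator from $H^\sm$ to $H^{-\sm}$.

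With this perturbation bound I would run the rest of the argument under a continuity (bootstrap) hypothesis: on a maximal interval the weights stay in a tube of radius $\sim\wdiff$ and $\|\res(t)\|_{H^\sm}\le 2\|\res(0)\|_{H^\sm}$. Two energy estimates then do the work. In the positive scale, $\frac{d}{dt}\|\res(t)\|_{H^\sm}^2=-2\langle\res(t),\entk_t\res(t)\rangle_{H^\sm}$: the $\ntk$-term is $\geq 0$ by coercivity \eqref{eq:setup:coercivity} at $\Sm=\sm$, the $R$-term is $\geq 0$, and the $E_t$-term is absorbed into $\wdiff\|\res(t)\|_{H^\sm}^2$, so Grönwall yields the uniform bound $\|\res(t)\|_{H^\sm}\lesssim\|\res(0)\|_{H^\sm}$ when $\wdiff$ is small. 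In the negative scale I would split $\res(t)$ along the $\ntk$-eigenbasis (spherical harmonics, since $\ntk$ is zonal with eigenvalues decaying like a smoothing operator of order $\ntks$) at a cutoff eigenvalue $\mu_\ast\sim\wdiff^{\,\ntks\holder/(\ntks-\sm)}$ placed exactly at the perturbation level. On the head modes coercivity degenerates only to $\mu_\ast\|\cdot\|_{H^{-\sm}}^2$, and the cross-terms with the tail, with $R$, and with $E_t$ are controlled using the $H^\sm$-bound and $\|E_t\|\lesssim\wdiff$, so $\|P_{\mathrm{head}}\res(t)\|_{H^{-\sm}}$ decays exponentially at rate $\sim\mu_\ast$; on the tail the uniform $H^\sm$-bound forces $\|(I-P_{\mathrm{head}})\res(t)\|$ below a non-vanishing floor $\sim\mu_\ast^{\,2\sm/\ntks}\|\res(0)\|_{H^\sm}$, since high-frequency components of an $H^\sm$-function are small. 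Combining head decay and tail floor and passing to $L_2$ through the interpolation inequality $\|g\|_{L_2}\le\|g\|_{H^{-\sm}}^{1/2}\|g\|_{H^\sm}^{1/2}$ (valid since $L_2(\Sd)=[H^{-\sm},H^{\sm}]_{1/2}$) produces the bracket in \eqref{eq:main:convergence}, with the power $\sm/\ntks$ and the rate factor $\ntks/(2\sm)$ appearing from this translation of $H^{-\sm}$-behaviour into an $L_2$-bound.

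It remains to close the bootstrap and fix $\wdiff$. The energy identity $\int_0^t\|\nabla\loss(\theta(s))\|^2\,ds=\loss(\theta(0))-\loss(\theta(t))\le\loss(\theta(0))$, together with the decay just obtained and $\|\nabla\loss(\theta)\|\lesssim\|\res\|_{L_2}\cdot(\text{uniform bounds on the }\partial_\w f)$, keeps the weight displacement strictly inside the tube and $\|\res(t)\|_{H^\sm}$ below $2\|\res(0)\|_{H^\sm}$ once $m$ is large in the sense of assumption~5, so the continuity hypothesis is never tight. Making the loop self-consistent — $\wdiff=$ (initialisation error $\sqrt{d/m}$) $+$ (drift $\propto m^{-1/2}\times$ decay integral), where the decay rate is itself $\propto\wdiff^{\,\ntks\holder/(\ntks-\sm)}$ so the drift integral scales like $\wdiff^{-\ntks\holder/(\ntks-\sm)}$ — is precisely what forces $\wdiff\lesssim\max\{(\|\res(0)\|_{H^{-\sm}}^{1/2}\|\res(0)\|_{H^\sm}^{1/2}/\sqrt m)^{(\ntks-\sm)/(\ntks(1+\holder)-\sm)},\ \sqrt{d/m}\}$, and the probability $1-cL(e^{-m}+e^{-\tau})$ with $\tau=\wdiff^{2\holder}m$ is exactly what the concentration lemma returns after a union bound over the $L$ layers. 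The main obstacle, beyond the chaining estimate itself, is this coupled bootstrap: decay rate, tube radius, fluctuation size and positive-norm bound all depend on one another, and the cutoff $\mu_\ast$ has to be placed right at the fluctuation scale for the head/tail split to balance — getting every exponent to line up, rather than merely producing "some power of $1/\sqrt m$", is the technical heart of the proof.
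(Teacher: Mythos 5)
Your skeleton (concentration of the empirical NTK at initialization, H\"older-in-weights drift control, coercivity in a negative Sobolev norm, a time-uniform positive Sobolev bound, interpolation to $L_2$, and a self-consistent choice of $\wdiff$) is the same as the paper's, and your head/tail spectral split at a cutoff eigenvalue is a legitimate alternative to the paper's device of a coupled system of differential inequalities for $\|\res\|_{H^{-\sm}}^2$ and $\|\res\|_{H^{\sm}}^2$ solved by a Bernoulli-type comparison (Lemma \ref{lemma:ode-system-bounds}). The genuine gap is in the step your whole argument leans on: the uniform bound $\|\res(t)\|_{H^{\sm}}\lesssim\|\res(0)\|_{H^{\sm}}$. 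You discard the coercivity contribution in the $H^{\sm}$ energy identity (``the $\ntk$-term is $\ge 0$''), absorb the fluctuation into $\wdiff\|\res\|_{H^{\sm}}^2$, and invoke Gr\"onwall. That gives only $\|\res(t)\|_{H^{\sm}}^2\le e^{c\wdiff t}\|\res(0)\|_{H^{\sm}}^2$, which is not uniform in $t$; since the theorem (and the corollary built on it) is used for arbitrarily large $t$, smallness of $\wdiff$ does not close your bootstrap $\|\res(t)\|_{H^{\sm}}\le 2\|\res(0)\|_{H^{\sm}}$ --- the maximal interval it produces has length of order $1/\wdiff$. Moreover the fluctuation term cannot have the form $\wdiff\|\res\|_{H^{\sm}}^2$: the concentration and H\"older estimates control the kernel only as an operator from $H^{-\sm}$ to $H^{\sm}$, so the available bound is $\wdiff^{\holder}\|\res\|_{H^{\sm}}\|\res\|_{H^{-\sm}}$; and since $\|\res\|_{H^{-\sm}}$ decays only to the strictly positive floor $\sim\wdiff^{\frac{\holder\sm}{\ntks-\sm}}\|\res(0)\|_{H^{\sm}}$, this forcing is not integrable in time, so no Gr\"onwall argument with the damping thrown away can yield a $t$-independent bound. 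The paper closes exactly this loop by keeping the negative term $-c\|\res\|_{H^{\sm-\ntks}}^2$ from coercivity with $\Sm=\sm$, converting it by interpolation into a damping expressed in $\|\res\|_{H^{-\sm}}$ and $\|\res\|_{H^{\sm}}$ alone, and showing that this damping dominates the perturbation term precisely as long as $\|\res\|_{H^{-\sm}}$ is above the final error floor, which is the only regime in which the bound is still needed. Without this (or an equivalent mechanism, e.g. running your head/tail analysis simultaneously in the $H^{\sm}$ scale), the positive-norm control, and with it the interpolation step that produces \eqref{eq:main:convergence}, does not close.

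Two smaller accounting issues. First, your remainder $R$ (the infinite-width kernels of the other trained layers) is indeed harmless in every $H^s$ inner product, being zonal and positive semi-definite; but then your fluctuation $E_t$ contains the empirical-versus-limit error and the training drift of \emph{all} trained layers, so the chaining concentration and the H\"older perturbation estimates must be established for those kernels as well, not only for the layer-$(L-1)$ kernel treated in Lemmas \ref{lemma:concentration:concentration-ntk} and \ref{lemma:continuity:entk-holder}; this is extra (if similar) work that your outline treats as already available. Second, the drift bound you propose via the energy identity $\int_0^t\|\nabla\loss(\theta(s))\|^2ds\le\loss(\theta(0))$ only gives $\int_0^t\|\nabla\loss\|ds\lesssim\sqrt{t\,\loss(\theta(0))}$, which grows with $t$; the self-consistent determination of $\wdiff$ has to use the exponential decay of the residual up to the time the floor is reached (as you do later in the paragraph), together with a separate argument past that time, which is again where the paper's treatment of the stopping time $T$ enters.
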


All assumptions are easy to verify, except for the coercivity of the NTK \eqref{eq:setup:coercivity} and the bounds \eqref{eq:setup:gp-bounds} of the forward kernel, which we discuss in the next section. The error bound \eqref{eq:main:convergence} consists of two summands, only one of which depends on the gradient flow time $t$. For large $t$, it converges to zero and we are left with the first error term. This results in the following corollary, which provides a direct approximation result of type \eqref{eq:approx} for the outcome of gradient flow training.

\begin{corollary} \label{cor:approximation}

  Let all assumptions of Theorem \ref{th:convergence} be satisfied. Then for $m$ sufficiently large, with high probability (both as in Theorem \ref{th:convergence}), we have
  \begin{align*}
    \|\res\|_{L_2(\Sd)}
    & \lesssim 
    \max \left\{ \left[ \frac{C(\res(0))}{m} \right]^{\frac{1}{4} \frac{\sm \holder}{\ntks(1+\holder) - \sm}},\, \left[\frac{d}{m}\right]^{\frac{1}{4} \frac{\sm \holder}{\ntks-\sm}} \right\} \|\res(0)\|_{H^\sm(\Sd)},
    \\
    C(\res(0)) & = \|\res(0)\|_{H^{-\sm}(\Sd)} \|\res(0)\|_{H^\sm(\Sd)}
  \end{align*}
  where $\res := f_{\theta(t)} - f$ is the gradient flow residual for sufficiently large time $t$.
\end{corollary}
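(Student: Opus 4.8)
The plan is to pass to the limit $t\to\infty$ in the bound \eqref{eq:main:convergence} of Theorem \ref{th:convergence} and then insert the upper bound on $\wdiff$. Assume $\res(0)\neq 0$, since otherwise there is nothing to prove, and take $\wdiff$ as in Theorem \ref{th:convergence}; granting $\wdiff>0$ (see the last paragraph), the rate $c\,\wdiff^{\frac{\ntks\holder}{\ntks-\sm}}\frac{\ntks}{2\sm}$ multiplying $t$ in the exponential is strictly positive, so the only $t$-dependent term in \eqref{eq:main:convergence},
\[
  \|\res(0)\|_{H^{-\sm}(\Sd)}^{\frac{\ntks}{\sm}}\, e^{-c\,\wdiff^{\frac{\ntks \holder}{\ntks-\sm}}\frac{\ntks}{2\sm}\, t},
\]
decays to $0$ and, for $t$ large enough, is bounded by the $t$-independent term $\wdiff^{\frac{\ntks\holder}{\ntks-\sm}}\|\res(0)\|_{H^\sm(\Sd)}^{\frac{\ntks}{\sm}}$. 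Substituting this into \eqref{eq:main:convergence} and raising to the power $\frac{\sm}{\ntks}$, which sends $\wdiff^{\ntks\holder/(\ntks-\sm)}$ to $\wdiff^{\sm\holder/(\ntks-\sm)}$ and $\|\res(0)\|_{H^\sm}^{\ntks/\sm}$ to $\|\res(0)\|_{H^\sm}$, leaves
\[
  \|\res\|_{L_2(\Sd)}^2 \lesssim \wdiff^{\frac{\sm\holder}{\ntks-\sm}}\,\|\res(0)\|_{H^\sm(\Sd)}^2,
  \qquad\text{hence}\qquad
  \|\res\|_{L_2(\Sd)} \lesssim \wdiff^{\frac12\frac{\sm\holder}{\ntks-\sm}}\,\|\res(0)\|_{H^\sm(\Sd)} .
\]

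Next I would use $\wdiff\lesssim\max\{A,B\}$ from Theorem \ref{th:convergence}, where, after rewriting $\|\res(0)\|_{H^{-\sm}}^{1/2}\|\res(0)\|_{H^\sm}^{1/2}/\sqrt m = (C(\res(0))/m)^{1/2}$,
\[
  A = \Big(\tfrac{C(\res(0))}{m}\Big)^{\frac12\frac{\ntks-\sm}{\ntks(1+\holder)-\sm}},
  \qquad
  B = c\sqrt{\tfrac{d}{m}} .
\]
The exponent $p:=\tfrac12\frac{\sm\holder}{\ntks-\sm}$ is positive because $0<\sm\le\ntks/2<\ntks$ and $0<\holder$, so $x\mapsto x^{p}$ is increasing and $\max\{A,B\}^{p}\le\max\{A^{p},B^{p}\}$. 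Multiplying exponents (the factor $\ntks-\sm$ cancels) gives $A^{p}=(C(\res(0))/m)^{\frac14\frac{\sm\holder}{\ntks(1+\holder)-\sm}}$ and $B^{p}=c^{p}(d/m)^{\frac14\frac{\sm\holder}{\ntks-\sm}}$, the constant $c^{p}$ being absorbed into $\lesssim$. Feeding these into the display above and splitting the maximum into its two cases yields exactly the claimed estimate; the probability statement and the hypothesis that $m$ be sufficiently large are inherited unchanged from Theorem \ref{th:convergence}.

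I expect the only point needing genuine care — rather than a real obstacle — is the positivity of $\wdiff$: if the $\wdiff$ furnished by the proof of Theorem \ref{th:convergence} were $0$, the exponential in \eqref{eq:main:convergence} would not decay and the limiting argument would fail, so one must check there that $\wdiff$ is strictly positive in the non-degenerate case $\res(0)\neq 0$ (for finite $m$ and $d\ge 1$ the structure of its bound — a maximum whose second entry is the positive $c\sqrt{d/m}$ — strongly suggests $\wdiff$ is of exactly that order). One should also note that the threshold ``$t$ sufficiently large'' is not uniform: it depends on $\wdiff$, hence on $m$ and on $\|\res(0)\|_{H^{\pm\sm}}$; this is harmless since the corollary only asserts the existence of such a $t$. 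Beyond this, the proof is pure bookkeeping, the main thing being to keep all implied constants independent of $m$ and of the residual, exactly as in Theorem \ref{th:convergence}.
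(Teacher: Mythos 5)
Your proposal is correct and follows exactly the route the paper (implicitly) takes: let $t\to\infty$ so the exponential term in \eqref{eq:main:convergence} is dominated by the $t$-independent term, take the $\frac{\sm}{\ntks}$-power and square root, then insert the bound $\wdiff \lesssim \max\{(C(\res(0))/m)^{\frac12\frac{\ntks-\sm}{\ntks(1+\holder)-\sm}}, c\sqrt{d/m}\}$ and simplify exponents; your exponent bookkeeping matches the stated rates. Your side remark on positivity of $\wdiff$ is also resolved by the paper's construction, since $\wdiff$ is defined in the proof of Lemma \ref{lemma:convergence:perturbed-gradient-flow} as a maximum whose second entry is $c\sqrt{d/m}>0$.
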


For traditional approximation methods, one would expect convergence rate $m^{-\sm/d}$ for functions in the Sobolev space $H^{\sm}$. Our rates are lower, which seems to be a variation of over-parametrization is disguise: In the over-parametrized as well as in our approximation regime the optimizer analysis seems to require some redundancy and thus more weights than necessary for the approximation alone. Of course, we only provide upper bounds and practical neural networks may perform better. Some preliminary experiments in \cite{GentileWelper2022a} show that shallow networks in one dimension outperform the theoretical bounds but are still worse than classical approximation theory would suggest. In addition, the linearization argument of the NTK results in smoothness measures in Hilbert spaces $H^{\sm}$ and not in larger $L_p$ based smoothness spaces with $p<2$ or even Barron spaces, as is common for nonlinear approximation.

\begin{remark}
  Although Theorem \ref{th:convergence} and Corollary \ref{cor:approximation} seem to show dimension independent convergence rates, they are not. Indeed, $\ntks$ depends on the dimension and smoothness of the activation function as we see in Section \ref{sec:coercivity} and Lemma \ref{lemma:supplements:sobolev:ntk-coercive}.
\end{remark}

\section{Coercivity of the NTK}
\label{sec:coercivity}

While most assumptions of Theorem \ref{th:convergence} are easy to verify, the coercivity \eqref{eq:setup:coercivity} is less clear. This section contains some results for the NTK $\ntk(x,y)$ in this paper, which only considers the second but last layer, as well as the regular NTK defined by the infinite width limit
\begin{equation*}
  \NTK(x,y) 
  = \lim_{\text{width}\to\infty} \sum_\w \partial f^{L+1}(x) \partial_p f^{L+1}(y)
\end{equation*}
of all layers. Coercivity easily follows once we understand the NTK's spectral decomposition. To this end, first note that $\ntk(x,y)$ and $\NTK(x,y)$ are both zonal kernels, i.e. they only depend on $x^Ty$, and as consequence their eigenfunctions are spherical harmonics.

\begin{lemma}[{\cite[Lemma 1]{GeifmanYadavKastenGalunJacobsRonen2020}}]
  The eigenfunctions of the kernels $\ntk(x,y)$ and $\NTK(x,y)$ on the sphere with uniform measure are spherical harmonics.
\end{lemma}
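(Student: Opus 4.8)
The plan is to exploit the fact that both kernels are *zonal*—they depend only on $x^T y$—together with the rotational invariance of the uniform measure on $\Sd$, and then invoke the Funk–Hecke theorem. The statement to prove is that the eigenfunctions of the integral operators with kernels $\ntk(x,y)$ and $\NTK(x,y)$ on $L_2(\Sd)$ with uniform measure are spherical harmonics.

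\textbf{Step 1: Zonality.} First I would record why $\ntk(x,y)$ and $\NTK(x,y)$ depend only on $x^T y$. This follows from the recursive formulas for the Gaussian-process covariances $\gp^k$ and $\dgp^k$: at the first layer, $f^1(x) = W^0 V x$ with $V^T V = I$, so the covariance of $f^1$ is governed by $(Vx)^T(Vy) = x^T y$; each subsequent recursion step applies a fixed scalar function (a Gaussian expectation of $\activation$ or $\dactivation$ against a $2\times2$ covariance built from $\gp^{k}(x,x),\gp^k(y,y),\gp^k(x,y)$), and by \eqref{eq:setup:gp-bounds} together with the zonal structure the diagonal terms are constants, so the whole kernel is a function of $x^T y$ alone. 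The same argument applies to $\NTK$ since it is a finite sum of products of such zonal kernels. I would state this as a short observation (it is also asserted in the surrounding text).

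\textbf{Step 2: Commuting with rotations.} Let $K(x,y) = \phi(x^T y)$ be either kernel and let $T_K g(x) = \int_{\Sd} \phi(x^T y) g(y)\, d\omega(y)$ be the associated integral operator, with $\omega$ the uniform (rotation-invariant) probability measure. For any rotation $R \in O(d)$, the substitution $y \mapsto Ry$ leaves $\omega$ invariant and gives $T_K(g \circ R)(x) = \int \phi(x^T R y) g(Ry)\,d\omega(y) = \int \phi((R^{-1}x)^T y) g(y)\,d\omega(y) = (T_K g)(R^{-1}x)$, i.e. $T_K$ commutes with the rotation action of $O(d)$ on $L_2(\Sd)$.

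\textbf{Step 3: Decompose into harmonics and conclude.} The space $L_2(\Sd)$ decomposes as the orthogonal direct sum $\bigoplus_{k\ge 0} \mathcal{H}_k$ of spaces of degree-$k$ spherical harmonics, and each $\mathcal{H}_k$ is an irreducible representation of $O(d)$. Since $T_K$ commutes with the $O(d)$-action, it preserves each isotypic component, hence maps $\mathcal{H}_k$ into itself; by Schur's lemma (using irreducibility of $\mathcal{H}_k$ and that $T_K$ is self-adjoint because $\phi(x^Ty)$ is symmetric) $T_K$ acts on $\mathcal{H}_k$ as a scalar multiple of the identity. Therefore every spherical harmonic is an eigenfunction. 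Equivalently—and this is the more self-contained route I would actually write down—one applies the Funk–Hecke formula directly: for any degree-$k$ spherical harmonic $Y_k$, $\int_{\Sd} \phi(x^Ty) Y_k(y)\, d\omega(y) = \lambda_k Y_k(x)$ with $\lambda_k$ the Funk–Hecke coefficient $\lambda_k = c_d \int_{-1}^1 \phi(t) P_k(t)(1-t^2)^{(d-3)/2}\,dt$ (Gegenbauer/Legendre polynomial $P_k$), which exhibits the eigenfunction–eigenvalue pairs explicitly.

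\textbf{Main obstacle.} The only genuine subtlety is \emph{integrability}: Funk–Hecke and the spectral statement require $\phi$ to be such that $T_K$ is a well-defined (bounded, ideally Hilbert–Schmidt or trace-class) operator on $L_2(\Sd)$. Here $\phi$ is built from Gaussian expectations of the activation and its derivatives; the growth hypotheses \eqref{eq:setup:activation-growth}–\eqref{eq:setup:dactivation-growth} and the bounds \eqref{eq:setup:gp-bounds} ensure $\phi$ is continuous and bounded on $[-1,1]$, so $T_K$ is a continuous kernel operator and all of the above is rigorous. Since this lemma is quoted verbatim from \cite{GeifmanYadavKastenGalunJacobsRonen2020}, I would keep the proof to the three steps above (zonality $\Rightarrow$ $O(d)$-equivariance $\Rightarrow$ Funk–Hecke / Schur), and simply note boundedness of $\phi$ in passing.
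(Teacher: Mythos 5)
Your argument is correct, but note that the paper does not actually prove this lemma at all: its ``proof'' is a pointer to \cite[Lemma 1]{GeifmanYadavKastenGalunJacobsRonen2020} and the discussion there, so you have supplied the self-contained reasoning that the paper outsources. Your route (zonality $\Rightarrow$ the integral operator commutes with the $O(d)$-action $\Rightarrow$ Funk--Hecke, or equivalently Schur's lemma on the irreducible components $\mathcal{H}_k$ of $L_2(\Sd)$) is precisely the standard mechanism behind the cited result, so the two approaches agree in substance; yours buys a verifiable proof at the cost of having to justify zonality of $\ntk$ and $\NTK$ yourself, which the paper simply asserts in the surrounding text. One small correction in Step 1: the constancy of the diagonal terms $\gp^k(x,x)$ on the sphere does not come from \eqref{eq:setup:gp-bounds} (those are only upper and lower bounds); it follows by induction, since $\gp^0(x,x)=x^Tx=1$ on $\Sd$ and each recursion step makes $\gp^{k+1}(x,x)$ a fixed function of $\gp^k(x,x)$ alone, and similarly $\gp^{k+1}(x,y)$ a fixed function of $\bigl(\gp^k(x,x),\gp^k(y,y),\gp^k(x,y)\bigr)$, which gives zonality of $\gp^k$, $\dgp^k$ and hence of $\ntk$ and $\NTK$. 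With that repair, and your closing remark that continuity/boundedness of the zonal profile makes the integral operator well defined, the proof is complete; it is also slightly more general than the citation, since it does not rely on the $\relu$-specific computations of the referenced paper.
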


\begin{proof}
See \cite[Lemma 1]{GeifmanYadavKastenGalunJacobsRonen2020} and the discussion thereafter.
\end{proof}

Hence, it is sufficient to show lower bounds for the eigenvalues. These are provided in \cite{BiettiMairal2019,GeifmanYadavKastenGalunJacobsRonen2020,ChenXu2021} under slightly different assumptions than required in this paper:
\begin{enumerate}
  \item They use all layers $\NTK(x,y)$ instead of only the second but last one in $\ntk(x,y)$. (The reference \cite{DuLeeLiEtAl2019} does consider $\ntk(x,y)$ and shows that the eigenvalues are strictly positive in the over-parametrized regime with discrete loss and non-degenerate data.)
  \item They use bias, whereas we don't. We can however easily introduce bias into the first layer by the usual technique to incorporate one fixed input component $x_0 = 1$.
  \item The cited papers use $\relu$ activations, which do not satisfy the third derivative smoothness requirements \eqref{eq:setup:dactivation-bounded}.
\end{enumerate}
Anyways, with these modified assumptions, it is easy to derive coercivity from the NTK's RKHS in \cite{BiettiMairal2019,GeifmanYadavKastenGalunJacobsRonen2020,ChenXu2021}.

\newcommand{\lemmaCoercivity}{

  Let $\NTK(x,y)$ be the neural tangent kernel for a fully connected neural network with bias on the sphere $\Sd$ with $\relu$ activation. Then for any $\alpha \in \real$
  \begin{equation*}
    \dualp{f, L_\NTK f}_{H^\alpha(\Sd)}
    \gtrsim \|f\|_{H^{\alpha - d/2}(\Sd)}^2,
  \end{equation*}
  where $L_\NTK$ is the integral operator with kernel $\NTK(x,y)$.

}

\begin{lemma} \label{lemma:supplements:sobolev:ntk-coercive}
  \lemmaCoercivity
\end{lemma}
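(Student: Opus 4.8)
The plan is to reduce the coercivity statement to known decay rates for the eigenvalues of the ReLU NTK on the sphere. By the preceding lemma, the eigenfunctions of $L_\NTK$ are the spherical harmonics $Y_{k,j}$, so $L_\NTK Y_{k,j} = \mu_k Y_{k,j}$ with eigenvalue $\mu_k$ depending only on the frequency $k$. Writing $f = \sum_{k,j} \hat f_{k,j} Y_{k,j}$, the Sobolev norm is $\|f\|_{H^\alpha(\Sd)}^2 \sim \sum_{k,j} (1+k)^{2\alpha} |\hat f_{k,j}|^2$ (this is one of the properties collected in Section \ref{sec:sobolev:norms}), and
\begin{equation*}
  \dualp{f, L_\NTK f}_{H^\alpha(\Sd)}
  = \sum_{k,j} (1+k)^{2\alpha} \mu_k |\hat f_{k,j}|^2 .
\end{equation*}
Hence the claimed inequality is equivalent to the lower bound $\mu_k \gtrsim (1+k)^{-d}$ on the NTK eigenvalues, since then each term is bounded below by a constant times $(1+k)^{2\alpha - d}|\hat f_{k,j}|^2$, and summing gives $\|f\|_{H^{\alpha-d/2}(\Sd)}^2$.

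So the core step is to invoke the spectral analysis of the ReLU NTK from \cite{BiettiMairal2019,GeifmanYadavKastenGalunJacobsRonen2020,ChenXu2021}: for a fully connected ReLU network with bias on $\Sd$, the NTK eigenvalues satisfy $\mu_k \sim k^{-d}$ for $k$ in the appropriate parity class (and one must also check $\mu_k > 0$ for the low frequencies and the off-parity frequencies, or else restrict the claim — I would cite the precise statement, e.g. that the RKHS of $\NTK$ equals $H^{d/2}(\Sd)$ as sets with equivalent norms, which is exactly the two-sided eigenvalue bound $\mu_k \sim (1+k)^{-d}$). Only the lower bound $\mu_k \gtrsim (1+k)^{-d}$ is needed here. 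With that in hand, the argument above closes immediately: multiply through by $(1+k)^{2\alpha}$, bound $(1+k)^{2\alpha}\mu_k \gtrsim (1+k)^{2\alpha-d}$, sum over $k,j$, and recognize the result as $\|f\|_{H^{\alpha-d/2}(\Sd)}^2$ up to the usual norm equivalence constants.

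The only real subtlety — and the step I would be most careful about — is the parity issue: for ReLU the even-frequency and odd-frequency eigenvalues decay at the same polynomial rate $k^{-d}$ but the constants and the exact cutoffs differ, and some references state the bound only for one parity class while handling the other via the bias term. Since the lemma statement includes bias, I would use the formulation in which the bias restores coercivity across all frequencies (as discussed just before the lemma, bias is incorporated through a fixed coordinate $x_0 = 1$), so that $\mu_k \gtrsim (1+k)^{-d}$ holds for all $k \geq 0$ with no parity restriction. Everything else — the spherical harmonic expansion, the Sobolev norm characterization, the term-by-term comparison — is routine and I would not belabor it. The proof is therefore short: state the eigenvalue decay as a cited fact, and perform the one-line Fourier-side comparison.
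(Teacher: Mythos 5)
Your proposal is correct and follows essentially the same route as the paper: expand $f$ in the spherical-harmonic eigenbasis of $L_\NTK$, invoke the cited ReLU NTK spectral results (via the RKHS-equals-$H^{d/2}(\Sd)$ norm equivalence, i.e.\ the two-sided bound $\lambda_{\ell,j} \sim (\ell+1)^{-d}$), and compare term by term against the Sobolev norm \eqref{eq:supplements:sobolev-spherical-harmonics}; the paper packages the eigenvalue bound as Lemmas \ref{lemma:supplements:sobolev:ntk-sobolev} and \ref{lemma:supplements:sobolev:ntk-eigenvalues} and then performs exactly your one-line Fourier-side computation. Your remark on the parity/bias issue matches the paper's own discussion of why bias is assumed.
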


The proof is given at the end of Section \ref{sec:ntk-sphere}. Note that this implies $\ntks = d/2$ and thus Theorem \ref{th:convergence} cannot be expected to be dimension independent. In fact, due to smoother activations, the kernel $\ntk(x,y)$ is expected to be more smoothing than $\NTK(x,y)$ resulting in a faster decay of the eigenvalues and larger $\ntks$. This leads to Sobolev coercivity (Lemmas \ref{lemma:supplements:sobolev:ntk-eigenvalues} and \ref{lemma:supplements:sobolev:ntk-coercive}) as long as the decay is polynomial, which we only verify numerically in this paper, as shown in Figure \ref{fig:ntk-ev} for \numericalConst{$n=100$} uniform samples on the \numericalConst{$d=2$} dimensional sphere and \numericalConst{$L-1 = 1$} hidden layers of width \numericalConst{$m=1000$}. The plot uses log-log axes so that straight lines represent polynomial decay. As expected, $\relu$ and ELU activations show polynomials decay with higher order for the latter, which are smoother. For comparison the $C^\infty$ activation $GELU$ seems to show super polynomial decay. However, the results are preliminary and have to be considered carefully:
\begin{enumerate}
  \item The oscillations at the end, are for eigenvalues of size $\sim 10^{-7}$, which is machine accuracy for floating point numbers.
  \item Most eigenvalues are smaller than the difference between the empirical NTK and the actual NTK. For comparison, the difference between two randomly sampled empirical NTKs (in matrix norm) is: %
ReLU: $0.280$, ELU: $0.524$, GELU: $0.262$ %
.
  \item According to \cite{BiettiMairal2019}, for shallow networks without bias, every other eigenvalue of the NTK should be zero. This is not clear from the experiments (which do not use bias, but have one more layer), likely because of the large errors in the previous item.
  \item The errors should be better for wider hidden layers, but since the networks involve dense matrices, their size quickly becomes substantial.
\end{enumerate}
In conclusion, the experiments show the expected polynomial decay of NTK eigenvalues and activations with singularities in higher derivatives, but the results have to be regraded with care.

\begin{figure}
  \centering\includegraphics[width=5cm]{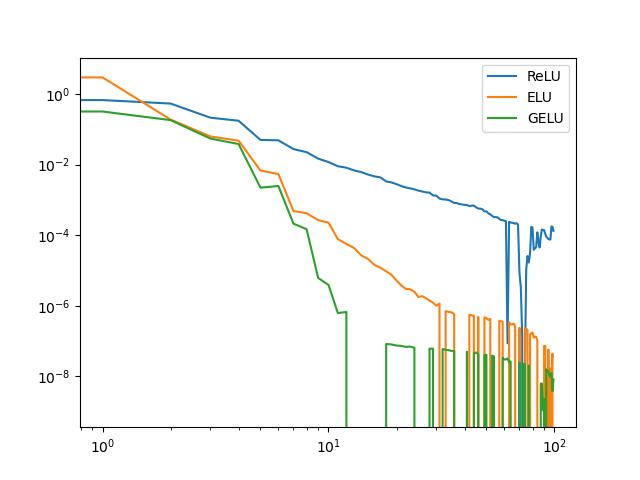}
  \caption{Eigenvalues of the NTK $\ntk(x,y)$ for different activation functions.}
  \label{fig:ntk-ev}
\end{figure}

\section{Proof Overview}
\label{sec:proof-overview}

\subsection{Preliminaries}

\subsubsection{Neural Tangent Kernel}
\label{sec:ntk}

In this section, we recall the definition of the neural tangent kernel (NTK) and setup notations for its empirical variants. Our definition differs slightly from the literature because we only use the last hidden layer (weights $W^{L-1}$) to reduce the loss, whereas all other layers are trained but only estimated by a perturbation analysis. Throughout the paper, we only need the definitions as stated, not that they are the infinite width limit of the network derivatives as stated in \eqref{eq:setup:ntk-limit}, although we sometimes refer to this for motivation.

As usual, we start with the recursive definition of the covariances
\begin{align*}
  \gp^{\ell+1}(x,y) & := \EE{u,v \sim \gaussian{0, A}}{\activationp{u}, \activationp{v}}, &
  A & = \begin{bmatrix}
    \gp^\ell(x,x) & \gp^\ell(x,y) \\
    \gp^\ell(y,x) & \gp^\ell(y,y)
  \end{bmatrix}, & 
  \gp^0(x,y) & = x^T y,
\end{align*}
which define a Gaussian process that is the infinite width limit of the forward evaluation of the hidden layer $f^\ell(x)$, see \cite{JacotGabrielHongler2018}. Likewise, we define
\begin{align*}
  \dgp^{\ell+1}(x,y) & := \EE{u,v \sim \gaussian{0, A}}{\dactivationp{u}, \dactivationp{v}}, &
  A & = \begin{bmatrix}
    \gp^\ell(x,x) & \gp^\ell(x,y) \\
    \gp^\ell(y,x) & \gp^\ell(y,y)
  \end{bmatrix},
\end{align*}
with activation function of the last layer is exchanged with its derivative. Then the neural tangent kernel (NTK) is defined by
\begin{equation} \label{eq:def-ntk}
  \ntk(x,y) := \dgp^L(x,y) \gp^{L-1}(x,y).
\end{equation}
The paper \cite{JacotGabrielHongler2018} shows that all three definitions above are infinite width limits of the corresponding empirical processes (denoted with an extra hat $\hat{\cdot}$)
\begin{equation} \label{eq:def-gp}
  \begin{aligned}
    \egp^\ell(x,y) 
    & := \frac{1}{n_\ell}\sum_{r=1}^{n_\ell} \activationp{f_r^\ell(x)} \activationp{f_r^\ell(y)}
    = \frac{1}{n_\ell} \activationp{f^\ell(x)}^T \activationp{f^\ell(y)}, \\ 
    \edgp^\ell(x,y) 
    & := \frac{1}{n_\ell}\sum_{r=1}^{n_\ell} \dactivationp{f_r^\ell(x)} \dactivationp{f_r^\ell(y)}
    = \frac{1}{n_\ell} \dactivationp{f^\ell(x)}^T \dactivationp{f^\ell(y)}
  \end{aligned}
\end{equation}
and
\begin{equation*}
  \entk(x,y) := \sum_{|\w| = L-1} \partial_\w f_r^{L+1}(x) \partial_\w f_r^{L+1}(y).
\end{equation*}
Note that unlike the usual definition of the NTK, we only include weights from the second but last layer. Formally, we do not show that $\gp^\ell$, $\dgp^\ell$ and $\ntk$ arise as infinite width limits of the empirical versions $\egp^\ell$, $\edgp^\ell$ and $\entk$, but rather concentration inequalities between them.

The next lemma shows that the empirical kernels satisfy the same identity \eqref{eq:def-ntk} as their limits.

\begin{lemma} \label{lemma:ntk:entk-as-gp}
  Assume that $W_{ij}^L \in \{-1, +1\}$. Then  
  \[
    \entk(x,y) = \edgp^L(x,y) \egp^{L-1}(x,y).
  \]
\end{lemma}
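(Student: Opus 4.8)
The plan is to compute $\entk(x,y)$ directly from the definition $\entk(x,y) = \sum_{|\w|=L-1} \partial_\w f_r^{L+1}(x)\,\partial_\w f_r^{L+1}(y)$ by differentiating the recursion \eqref{eq:setup:network} with respect to the weights $W^{L-1}_{ij}$ and then recognizing the two empirical covariance factors. First I would fix $\w = ij;L-1$ and trace how the weight $W^{L-1}_{ij}$ enters the forward pass: it appears only in the update $f^L = W^{L-1} n_{L-1}^{-1/2}\activationp{f^{L-1}}$, contributing to the $i$th coordinate $f^L_i = n_{L-1}^{-1/2}\sum_k W^{L-1}_{ik}\activationp{f^{L-1}_k}$, so $\partial_\w f^L_p = n_{L-1}^{-1/2}\,\delta_{pi}\,\activationp{f^{L-1}_j}$. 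Propagating through the last two layers by the chain rule, $f^{L+1}(x) = W^L n_L^{-1/2}\activationp{f^L(x)}$ is scalar (since $n_{L+1}=1$), so
\[
  \partial_\w f^{L+1}(x) = n_L^{-1/2}\sum_p W^L_p\,\dactivationp{f^L_p(x)}\,\partial_\w f^L_p(x)
  = n_L^{-1/2} n_{L-1}^{-1/2}\,W^L_i\,\dactivationp{f^L_i(x)}\,\activationp{f^{L-1}_j(x)}.
\]

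Next I would form the product $\partial_\w f^{L+1}(x)\,\partial_\w f^{L+1}(y)$ and sum over $\w = ij;L-1$, i.e. over $i \in [n_L]$ and $j \in [n_{L-1}]$. This gives
\[
  \entk(x,y) = \frac{1}{n_L n_{L-1}} \sum_{i=1}^{n_L}\sum_{j=1}^{n_{L-1}} (W^L_i)^2\,\dactivationp{f^L_i(x)}\dactivationp{f^L_i(y)}\,\activationp{f^{L-1}_j(x)}\activationp{f^{L-1}_j(y)}.
\]
Here the hypothesis $W^L_{ij} \in \{-1,+1\}$ enters crucially: it forces $(W^L_i)^2 = 1$, decoupling the sum into a product of independent sums over $i$ and over $j$. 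The $i$-sum is $\frac{1}{n_L}\sum_i \dactivationp{f^L_i(x)}\dactivationp{f^L_i(y)} = \edgp^L(x,y)$ and the $j$-sum is $\frac{1}{n_{L-1}}\sum_j \activationp{f^{L-1}_j(x)}\activationp{f^{L-1}_j(y)} = \egp^{L-1}(x,y)$, both by the definitions in \eqref{eq:def-gp}. Multiplying the two yields $\entk(x,y) = \edgp^L(x,y)\,\egp^{L-1}(x,y)$, which is the claim.

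I do not anticipate a genuine obstacle here — this is a short computation — but the one point requiring care is bookkeeping the chain rule through exactly two layers and confirming that $W^{L-1}_{ij}$ affects only the single coordinate $f^L_i$, so that no cross terms in $p$ survive; the Kronecker delta $\delta_{pi}$ collapses that sum. The other thing worth stating explicitly is where $(W^L_i)^2 = 1$ is used: without the Rademacher assumption the sum over $i$ would not factor and the empirical identity would fail (this mirrors why the lemma's hypothesis is exactly $W^L_{ij}\in\{-1,+1\}$). Everything else — the scalar output from $n_{L+1}=1$, the $n_\ell^{-1/2}$ scalings lining up to produce the $\frac{1}{n_L}$ and $\frac{1}{n_{L-1}}$ normalizations in $\edgp^L$ and $\egp^{L-1}$ — is routine.
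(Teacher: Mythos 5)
Your proposal is correct and follows essentially the same route as the paper: differentiate the forward pass with respect to $W^{L-1}_{ij}$, use the Kronecker delta to collapse the chain rule through the last layer, and invoke $(W^L_i)^2=1$ so that the double sum factors into exactly $\edgp^L(x,y)\,\egp^{L-1}(x,y)$. One tiny phrasing point: the double sum would factor into a product of an $i$-sum and a $j$-sum even without the Rademacher assumption (the summand is a product of an $i$-term and a $j$-term); what the assumption buys is that the $i$-sum is precisely $\edgp^L(x,y)$ rather than a $(W^L_i)^2$-weighted version of it.
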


\begin{proof}

By definitions of $f^L$ and $f^{L-1}$, we have
\begin{align*}
  \partial_{W_{ij}^{L-1}} f_r^{L+1}
  & = \sum_{1=r}^{n_L} W_{\cdot r}^L n_L^{-1/2} \partial_{W_{ij}^{L-1}} \activationp{f_r^L}
  \\
  & = \sum_{1=r}^{n_L} W_{\cdot r}^L n_L^{-1/2} \dactivationp{f_r^L} \partial_{W_{ij}^{L-1}} f_r^L
  \\
  & = \sum_{1=r}^{n_L} W_{\cdot r}^L n_L^{-1/2} \dactivationp{f_r^L} \delta_{ir} n_{L-1}^{-1/2}  \activationp{f_j^{L-1}}
  \\
  & = W_{\cdot i}^L n_L^{-1/2} n_{L-1}^{-1/2} \dactivationp{f_i^L} \activationp{f_j^{L-1}}.
\end{align*}
It follows that
\begin{align*}
  \entk(x,y)
  & = \sum_{i=1}^{n_L} \sum_{j=1}^{n_{L-1}} 
      \partial_{W_{ij}^{L-1}} f_r^{L+1}(x) 
      \partial_{W_{ij}^{L-1}} f_r^{L+1}(y)
  \\
  & = \frac{1}{n_L} \sum_{i=1}^{n_L} \frac{1}{n_{L-1}} \sum_{j=1}^{n_{L-1}} 
      \left| W_{\cdot i}^L \right|^2
      \dactivationp{f_i^L(x)} \dactivationp{f_i^L(y)} 
      \activationp{f_j^{L-1}(x)} \activationp{f_j^{L-1}(y)}
  \\
  & = \edgp^L(x,y) \egp^{L-1}(x,y),
\end{align*}
where in the last step we have used that $\left| W_{\cdot i}^L \right|^2 = 1$ by assumption and the definitions of $\edgp^L$ and $\egp^{L-1}$.

\end{proof}

The NTK and empirical NTK induce integral operators, which we denote by
\begin{align*}
  \ntkop f & := \int_\dom \ntk(\cdot,y) f(y) \, dy, &
  \ntkop_\theta f & := \int_\dom \entk(\cdot,y) f(y) \, dy
\end{align*}
The last definition makes the dependence on the weights explicit, which is hidden in $\entk$.

\subsubsection{Norms}

We use several norms for our analysis.
\begin{enumerate}

  \item \emph{$\ell_2$ and matrix norms:} $\|\cdot\|$ denotes the $\ell_2$ norm when applied to a vector and the matrix norm when applied to a matrix.

  \item \emph{Hölder norms} $\|\cdot\|_\CHND{\alpha}{\dom; V}$ for functions $f \colon \dom \subset \real^d \to V$ into some normed vector space $V$, with Hölder continuity measured in the $V$ norm
  \begin{align*}
    \|f\|_\CND{0}{\dom;V}
    := \sup_{x \in D} \|f(x)\|_V
      + \sup_{x \ne \px \in D} \frac{\|f(x) - f(\px)\|_V}{\|x - \px\|_U^\alpha}.
  \end{align*}
  We drop $V$ in $\|\cdot\|_\CHND{\alpha}{\dom}$ when $V = \ell_2$ and $\dom$ in $\|\cdot\|_\CHN{\alpha}$ when it is understood from context. We also use alternate definitions as the supremum over the finite difference operator
  \begin{align*}
    \Delta^0_h f(x) & = f(x), &
    \Delta_h^\alpha f(x) & = \|h\|_U^{-\alpha} [f(x+h) - f(x)], &
    \alpha & > 0,
  \end{align*}
  See Section \ref{sec:supplements:holder-spaces} for the full definitions and basic properties.

  \item \emph{Mixed Hölder norms} $\|\cdot\|_\CHHND{\alpha}{\beta}{\dom; V}$ for functions $f \colon \dom \times \dom \subset \real^d \to V$ of two variables. They measure the supremum of all mixed finite difference operators $\Delta_{x,h_x}^s \Delta_{y,h_y}^t$ for any $s \in \{0, \alpha\}$ and $t \in \{0, \beta\}$, similar to Sobolev spaces with mixed smoothness. As for Hölder norms for one variable, we use two different definitions, which are provided in Section \ref{sec:supplements:holder-spaces}.

  \item \emph{Sobolev Norms on the Sphere} denoted by $\|\cdot\|_{H^\alpha(\Sd)}$. Definitions and properties are provided in Section \ref{sec:sobolev:norms}. The bulk of the analysis is carried out in Hölder norms, which control Sobolev norms by
  \begin{equation*}
    \|\cdot\|_{H^\alpha(\Sd)} \lesssim \|\cdot\|_\CHND{\alpha+\epsilon}{\Sd}.
  \end{equation*}
  for $\epsilon > 0$, see Lemma \ref{lemma:supplements:sobolev-holder}.

  \item \emph{Generic Smoothness norms} $\|\cdot\|_\sm$, $\sm \in \real$ for associated Hilbert spaces $\hs^\sm$. These are used in abstract convergence results and later replaced by Sobolev norms.

  \item \emph{Orlicz norms} $\|\cdot\|_{\psi_i}$ for $i=1,2$ measure sub-gaussian and sub-exponential concentration. Some required results are summarized in Section \ref{sec:supplements:concentration}.

  \item \emph{Gaussian weighted $L_2$ norms} defined by
  \begin{align*}
    \|f\|_N^2 & = \dualp{f,f}_N, &
    \dualp{f,g} & = \int_\real f(x)^2 d\gaussian{0,1}(x)
  \end{align*}

\end{enumerate}

\subsubsection{Neural Networks}

Many results use a generic activation function denoted by $\activation$ with derivative $\dactivation$, which is allowed to change in each layer, although we always use the same symbol for notational simplicity. They satisfy the linear growth condition
\begin{equation} \label{eq:assumption:activation-growth}
  |\activationp{x}| \lesssim |x|,
\end{equation}
are Lipschitz
\begin{equation} \label{eq:assumption:activation-lipschitz}
  |\activationp{x} - \activationp{\px}| \lesssim |x - \px|
\end{equation}
and have uniformly bounded derivatives
\begin{equation} \label{eq:assumption:dactivation-bounded}
  |\dactivationp{x}| \lesssim 1.
\end{equation}

\subsection{Abstract Convergence result}

We first show convergence in a slightly generalized setting. To this end, we consider neural networks as maps from the parameter space to the square integrable functions $f_\cdot \colon \wdom \subset \ell_2(\real^m) \to L_2(\dom)$ defined by $\theta \to f_\theta(\cdot)$. More generally, for the time being, we replace $L_2(\dom)$ by an arbitrary Hilbert space $\hs$ and the network by an arbitrary Fr\'{e}chet differentiable function 
\begin{align*}
  & f: \wdom = \ell_2(\real^m) \to \hs, & \theta & \to f_\theta.
\end{align*}
For a target function $f \in \hs$, we define the loss
\[
  L(\theta) = \frac{1}{2} \|f_\theta - f\|_\hs^2
\]
and the corresponding gradient flow for $\theta(t)$
\begin{equation} \label{eq:gradient-flow}
  \frac{d}{dt} \theta(t) = - \nabla L(\theta),
\end{equation}
initialized with random $\theta(0)$. The convergence analysis relies on a regime where the evolution of the gradient flow is governed by its linearization
\begin{equation*}
  \ntkop_\theta := D f_\theta (Df_\theta)^*,
\end{equation*}
where $*$ denotes the adjoint and $H_\theta$ is the empirical NTK if $f_\theta$ is a neural network. To describe the smoothness of the target and spectral properties of $H_\theta$, we use a series of Hilbert spaces $\hs^\sm$ for some smoothness index $\sm \in \real$ so that $\hs^0 = \hs$. As stated in the lemma below, they satisfy interpolation inequalities and coercivity conditions. In this abstract framework, we show convergence as follows.

\begin{lemma} \label{lemma:convergence:perturbed-gradient-flow}
  Let $\theta(t)$ be defined by the gradient flow \eqref{eq:gradient-flow}, $\res = f_\theta - f$ be the residual and $m$ be a number that satisfies all assumptions below, which is typically related to the degrees of freedom. For constants $c_\infty, c_0, \ntks, \holder > 0$ and $0 \le \sm \le \frac{\ntks}{2}$, functions $p_0(m), p_\infty(\tau)$, $p_L(m,\wdiff)$ and weight norm $\wnorm{\cdot}$ assume that:
  \begin{enumerate}

    \item \label{item:perturbed-gradient-flow:weight-distance} With probability at least $1-p_0(m)$, the distance of the weights from their initial value is controlled by
    \begin{align} \label{eq:perturbed-gradient-flow:weight-diff}
      \wnorm{\theta(t) - \theta(0)} & \le 1 &
      & \Rightarrow & 
      \wnorm{\theta(t) - \theta(0)}
      & \lesssim \sqrt{\frac{2}{m}} \int_0^t \|\res(\tau)\|_0 \, d\tau.
    \end{align}

    \item \label{item:perturbed-gradient-flow:norms} The norms and scalar product satisfy interpolation and continuity
    \begin{align} \label{eq:interpolation-inequality}
      \|\cdot\|_b & \lesssim \|\cdot\|_a^{\frac{c-b}{c-a}} \|\cdot\|_c^{\frac{b-a}{c-a}}, &
      \dualp{\cdot, \cdot}_{-\sm} & \lesssim \|\cdot\|_{-3\sm} \|\cdot\|_\sm,
    \end{align}
    for all $-\sm-\ntks \le a \le b \le c \le \sm$.

    \item \label{item:perturbed-gradient-flow:initial} Let $\ntkop \colon \hs^\sm \to \hs^{-\sm}$ be an operator that satisfies the concentration inequality
    \begin{equation} \label{eq:perturbed-gradient-flow:initial}
      \pr{\|\ntkop - \ntkop_{\theta(0)}\|_{\sm \leftarrow -\sm} \ge c\sqrt{\frac{d}{m}} + \sqrt{\frac{c_\infty \tau}{m}}} \le p_\infty(\tau)
    \end{equation}
    for all $\tau$ with $\sqrt{\frac{c_\infty \tau}{m}} \le 1$. (In our application $\ntkop$ is the NTK and $\ntkop_{\theta(0)}$ the empirical NTK.)

    \item \label{item:perturbed-gradient-flow:perturbation} Hölder continuity with high probability:
    \begin{multline} \label{eq:perturbed-gradient-flow:perturbation}
      \pr{\exists \, \ptheta \in \wdom\text{ with } \wnorm{\ptheta - \theta(0)} \le \wdiff\text{ and }\|\ntkop_\ptheta - \ntkop_{\theta(0)}\|_{\sm \leftarrow -\sm} \ge c_0 \wdiff^\holder}
      \\
      \le p_L(m,\wdiff)
    \end{multline}
    for all $0 < \wdiff \le 1$.

    \item \label{item:perturbed-gradient-flow:coercive} $\ntkop$ is coercive for $\Sm \in \{-\sm, \sm\}$
    \begin{align} \label{eq:perturbed-gradient-flow:coercive}
      \|v\|_{\Sm-\ntks}^2 & \lesssim \dualp{v, \ntkop v}_\Sm, &
      v & \in \hs^{\Sm-\ntks}
    \end{align}

    \item For $\tau$ specified below, $m$ is sufficiently large so that
    \begin{align*}
      \|\res(0)\|_{-\sm}^{\frac{1}{2}} \|\res(0)\|_\sm^{\frac{1}{2}} m^{-\frac{1}{2}} & \lesssim 1, &
      \frac{cd}{m} & \le 1, &
      \frac{\tau}{m} & \le 1.
    \end{align*}

  \end{enumerate}
  Then with probability at least $1 - p_0(m) - p_\infty(\tau) - p_L(m,\wdiff)$ we have
  \begin{align*}
    \|\res\|_{-\sm}^2
    & \lesssim \left[ \wdiff^{\frac{\ntks \holder}{\ntks-\sm}} \|\res(0)\|_\sm^{\frac{\ntks}{\sm}} + \|\res(0)\|_{-\sm}^{\frac{\ntks}{\sm}} e^{-c\wdiff^{\frac{\ntks \holder}{\ntks-\sm}} \frac{\ntks}{2\sm} t} \right]^{\frac{2\sm}{\ntks}}
    \\
    \|\res\|_\sm^2
    & \lesssim \|\res(0)\|_\sm^2
  \end{align*}
  for some $\wdiff$ with
  \begin{align*}
    \wdiff 
    & \lesssim \max \left\{ \left[ \frac{\|\res(0)\|_{-\sm}^{\frac{1}{2}} \|\res(0)\|_\sm^{\frac{1}{2}}}{\sqrt{m}} \right]^{\frac{\ntks - \sm}{\ntks(1+\holder) - \sm}},\, c\sqrt{\frac{d}{m}} \right\}, &
    \tau & = \wdiff^{2\holder} m
  \end{align*}
  and generic constants $c \ge 0$ dependent of $\sm$ and independent of $\res$ and $m$.

\end{lemma}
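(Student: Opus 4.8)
## Proof Proposal for Lemma \ref{lemma:convergence:perturbed-gradient-flow}

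The plan is to track the residual $\res(t) = f_{\theta(t)} - f$ through the gradient flow, controlling it in the negative norm $\|\cdot\|_{-\sm}$ while keeping a uniform bound in the positive norm $\|\cdot\|_\sm$. The chain rule gives $\frac{d}{dt}\res = Df_\theta \cdot \dot\theta = -Df_\theta (Df_\theta)^* \res = -\ntkop_\theta \res$, so everything reduces to understanding the operator $\ntkop_\theta$ along the trajectory. I would first set up a bootstrap/continuity argument on the maximal interval where $\wnorm{\theta(t)-\theta(0)} \le \wdiff$ holds. On this interval, assumptions \ref{item:perturbed-gradient-flow:initial} and \ref{item:perturbed-gradient-flow:perturbation} together give $\|\ntkop_{\theta(t)} - \ntkop\|_{\sm \leftarrow -\sm} \lesssim c_0\wdiff^\holder + c\sqrt{d/m} + \sqrt{c_\infty\tau/m}$, and with the stated choice $\tau = \wdiff^{2\holder} m$ the last term is also $O(\wdiff^\holder)$, so the empirical NTK is a $\wdiff^\holder$-perturbation of the coercive limiting operator $\ntkop$.

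Next I would derive the two differential inequalities. For the positive norm, $\frac{d}{dt}\|\res\|_\sm^2 = -2\dualp{\res, \ntkop_\theta \res}_\sm = -2\dualp{\res,\ntkop\res}_\sm + 2\dualp{\res,(\ntkop-\ntkop_\theta)\res}_\sm$; the first term is $\le 0$ (actually $\lesssim -\|\res\|_{\sm-\ntks}^2$ by coercivity with $\Sm = \sm$), and the perturbation term is bounded using the operator norm bound and the interpolation inequality $\dualp{\cdot,\cdot}_{-\sm} \lesssim \|\cdot\|_{-3\sm}\|\cdot\|_\sm$ (and its analogues) to absorb it into the negative term once $\wdiff^\holder$ is small enough, yielding $\|\res(t)\|_\sm \le \|\res(0)\|_\sm$. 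For the negative norm, $\frac{d}{dt}\|\res\|_{-\sm}^2 = -2\dualp{\res,\ntkop\res}_{-\sm} + (\text{perturbation}) \lesssim -\|\res\|_{-\sm-\ntks}^2 + \wdiff^\holder\|\res\|_{\text{mixed}}^2$; then interpolate $\|\res\|_{-\sm} \lesssim \|\res\|_{-\sm-\ntks}^{\ntks/(\ntks+?)}\cdots$ — more precisely use \eqref{eq:interpolation-inequality} to write $\|\res\|_{-\sm-\ntks}$ from below in terms of $\|\res\|_{-\sm}$ and the bounded $\|\res\|_\sm$, giving a closed Gronwall-type inequality $\frac{d}{dt}u \lesssim -c\wdiff^{\ntks\holder/(\ntks-\sm)} u^{1+\ntks/\sm}\cdot(\ldots) $ or rather, after accounting for the perturbation floor, $\frac{d}{dt}u \le -a u^{1+\ntks/(2\sm)}\cdot(\text{const}) + b$ with $u = \|\res\|_{-\sm}^2$. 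Solving this ODE comparison explicitly produces the bracketed expression in the conclusion: the $e^{-ct\cdots}$ term is the homogeneous decay and the $\wdiff^{\ntks\holder/(\ntks-\sm)}\|\res(0)\|_\sm^{\ntks/\sm}$ term is the equilibrium floor forced by the NTK perturbation.

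Finally I would close the bootstrap: from assumption \ref{item:perturbed-gradient-flow:weight-distance}, $\wnorm{\theta(t)-\theta(0)} \lesssim \sqrt{2/m}\int_0^t \|\res(\tau)\|_0\,d\tau$, and $\|\res\|_0 \lesssim \|\res\|_{-\sm}^{1/2}\|\res\|_\sm^{1/2}$ by interpolation; plugging in the decay bound for $\|\res\|_{-\sm}$ and the uniform bound for $\|\res\|_\sm$, the time integral converges (the floor term controls the tail since $\wdiff$ is chosen balanced), and one gets $\wnorm{\theta(t)-\theta(0)} \lesssim \sqrt{\|\res(0)\|_{-\sm}^{1/2}\|\res(0)\|_\sm^{1/2}/m} \cdot \wdiff^{-\text{something}}$, which, self-consistently with $\tau = \wdiff^{2\holder}m$, is $\le \wdiff$ exactly when $\wdiff$ is taken as the stated maximum of the two expressions — the $\sqrt{d/m}$ alternative covering the case where the concentration term in \eqref{eq:perturbed-gradient-flow:initial} dominates. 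The probability bound is just a union bound over the three bad events in assumptions \ref{item:perturbed-gradient-flow:weight-distance}, \ref{item:perturbed-gradient-flow:initial}, \ref{item:perturbed-gradient-flow:perturbation}.

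I expect the main obstacle to be the self-consistency of the bootstrap: one must choose $\wdiff$ so that (i) the perturbation $\wdiff^\holder$ is small enough that the negative term in the differential inequalities genuinely dominates after interpolation, and simultaneously (ii) the weight displacement bound, which depends on the time-integral of the residual and hence on $\wdiff$ itself through the decay rate, stays below $\wdiff$. Getting the exponents to match — i.e. verifying that the fixed-point equation for $\wdiff$ has the solution $\wdiff \lesssim [\|\res(0)\|_{-\sm}^{1/2}\|\res(0)\|_\sm^{1/2}/\sqrt m]^{(\ntks-\sm)/(\ntks(1+\holder)-\sm)}$ — is the delicate bookkeeping at the heart of the proof, and is where the precise form of the interpolation inequalities in \eqref{eq:interpolation-inequality} and the range $0 \le \sm \le \ntks/2$ get used.
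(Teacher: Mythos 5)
Your overall route is the same as the paper's: the same residual evolution $\frac{d}{dt}\res=-\ntkop_{\theta(t)}\res$, the same three-way splitting $\ntkop_{\theta(t)}\approx\ntkop_{\theta(0)}\approx\ntkop$ with $\tau=\wdiff^{2\holder}m$, coercivity plus the interpolation inequalities to close differential inequalities in $\|\res\|_{-\sm}$ and $\|\res\|_{\sm}$, an ODE comparison giving ``equilibrium floor plus exponential decay,'' and a self-consistent determination of $\wdiff$ from the weight-displacement bound, finished with a union bound. The $\wdiff$-bookkeeping you flag as delicate is indeed carried out exactly as you describe (solve $\wdiff^{1+\frac{\ntks\holder}{\ntks-\sm}}\lesssim \|\res(0)\|_{-\sm}^{1/2}\|\res(0)\|_{\sm}^{1/2}m^{-1/2}$).

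There is, however, one step that fails as you state it: the claim that the perturbation in the $\|\cdot\|_\sm$-equation can be ``absorbed into the negative term once $\wdiff^\holder$ is small enough, yielding $\|\res(t)\|_\sm\le\|\res(0)\|_\sm$.'' After interpolation the $\sm$-inequality reads
\begin{equation*}
  \frac{1}{2}\frac{d}{dt}\|\res\|_\sm^2
  \lesssim -c\,\|\res\|_{-\sm}^{\frac{\ntks}{\sm}}\|\res\|_\sm^{\frac{2\sm-\ntks}{\sm}}
  + \wdiff^\holder\,\|\res\|_\sm\|\res\|_{-\sm},
\end{equation*}
and the negative term dominates if and only if $\|\res\|_{-\sm}^2\gtrsim \wdiff^{\frac{2\holder\sm}{\ntks-\sm}}\|\res\|_\sm^2$, i.e.\ only while the $-\sm$ error stays above the very floor your own $-\sm$-bound converges to; no fixed smallness of $\wdiff$ makes this hold uniformly in $t$. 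The same issue infects your reduction to a single scalar ODE in $u=\|\res\|_{-\sm}^2$: the system is genuinely coupled through the $\|\res\|_\sm$-factors and cannot be decoupled a priori. The paper resolves this by treating the pair $(x,y)=(\|\res\|_{-\sm}^2,\|\res\|_\sm^2)$ with the dedicated ODE Lemma \ref{lemma:ode-system-bounds}, whose hypothesis \eqref{eq:shallow:cond} is exactly the floor condition \eqref{eq:perturbed-gradient-flow:time-limit}; it takes $T$ to be the first time this condition fails (so both bounds hold on $[0,T]$), and extends the $-\sm$ bound to $t>T$ by monotonicity of the loss along the gradient flow, since at $t=T$ the error already equals the floor term of the final estimate. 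Your proposal needs this stopping-time and extension argument (or an equivalent device) to make both conclusions, and in particular the uniform bound on $\|\res\|_\sm$, rigorous.
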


We defer the proof to Section \ref{sec:convergnece} and only consider a sketch here. As for standard NTK arguments, the proof is based on the following observation
\begin{equation}
  \frac{1}{2} \frac{d}{dt} \|\res\|^2
  = - \dualp{\res, \ntkop_{\theta(t)} \, \res}
  \approx - \dualp{\res, \ntkop \, \res}
\end{equation}
which can be shown by a short computation. The last step relies on the observation that empirical NTK stays close to its initial $\ntkop_{\theta(t)} \approx \ntkop_{\theta(0)}$ and that the initial is close to the infinite width limit $\ntkop_{\theta(0)} \approx \ntkop$. However, since we are not in an over-parametrized regime, the NTK's eigenvalues can be arbitrarily close to zero and we only have coercivity in the weaker norm $\dualp{\res, \, \ntkop \, \res} \gtrsim \|\res\|_{-\sm}$, which is not sufficient to show convergence by e.g. Grönwall's inequality. To avoid this problem, we derive a closely related system of coupled ODEs
\begin{align*}
  \frac{1}{2} \frac{d}{dt} \|\res\|_{-\sm}^2
  & \lesssim - c \|\res\|_{-\sm}^{2\frac{2\sm+\ntks}{2\sm}} \|\res\|_\sm^{- 2\frac{\ntks}{2\sm}}+ \wdiff^{\frac{\ntks \holder}{\ntks-\sm}} \|\res\|_{-\sm}^2
  \\
  \frac{1}{2} \frac{d}{dt} \|\res\|_\sm^2
  & \lesssim - c \|\res\|_{-\sm}^{2\frac{\ntks}{2\sm}} \|\res\|_\sm^{2\frac{2\sm-\ntks}{2\sm}} + \wdiff^{\holder} \|\res\|_\sm \|\res\|_{-\sm}.
\end{align*}
The first one is used to bound the error in the $\hs^{-\sm}$ norm and the second ensures that the smoothness of the residual $\res(t)$ is uniformly bounded during gradient flow. Together with the interpolation inequality \eqref{eq:interpolation-inequality}, this shows convergence in the $\hs = \hs^0$ norm.

It remains to verify all assumption of Lemma \ref{lemma:convergence:perturbed-gradient-flow}, which we do in the following subsections. Details are provided in Section \ref{sec:proof:th:convergence}.

\subsection{Assumption (\ref{eq:perturbed-gradient-flow:perturbation}): Hölder continuity}

We use a bar $\bar{\cdot}$ to denote perturbation, in particular $\pW^\ell$ is a perturbed weight, and $\pentk$ is the corresponding empirical neural tangent kernel. In order to obtain continuity results, we require that the weight matrices and domain are bounded
\begin{align} \label{eq:assumption:weights-domain-bounded}
   \left\|W^\ell\right\| n_\ell^{-1/2} & \lesssim 1, & 
   \left\|\pW^\ell\right\| n_\ell^{-1/2} & \lesssim 1, &
   \|x\| & \lesssim 1 \, \forall x \in \dom.
\end{align}
For the initial weights $W^\ell$, this holds with high probability because its entries are i.i.d. standard Gaussian. For perturbed weights we only need continuity bounds under the condition that $\wnorm{\theta - \ptheta} \le 1$ or equivalently that $\|W^\ell - \pW^\ell\| n_\ell^{-1/2} \le 1$ so that the weight bound of the perturbation $\pW^\ell$ follow from the bounds for $W^\ell$. With this setup, we show the following lemma.

\newcommand{\LemmaEntkHolder}{
  Assume that $\activation$ and $\dactivation$ satisfy the growth and Lipschitz conditions \eqref{eq:assumption:activation-growth}, \eqref{eq:assumption:activation-lipschitz} and may be different in each layer. Assume the weights, perturbed weights and domain are bounded \eqref{eq:assumption:weights-domain-bounded} and $n_L \sim n_{L-1} \sim \dots \sim n_0$. Then for $0 < \alpha < 1$
  \begin{align*}
    \left\|\entk\right\|_\CHHN{\alpha}{\alpha} & \lesssim 1
    \\
    \left\|\pentk\right\|_\CHHN{\alpha}{\alpha} & \lesssim 1
    \\
    \left\|\entk - \pentk\right\|_\CHHN{\alpha}{\alpha}
    & \lesssim \frac{n_0}{n_L} \left[ \sum_{k=0}^{L-1} \left\|W^k - \pW^k \right\| n_k^{-1/2} \right]^{1-\alpha}.
  \end{align*}
}

\begin{lemma} \label{lemma:continuity:entk-holder}
  \LemmaEntkHolder
\end{lemma}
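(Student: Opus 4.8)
The plan is to reduce the mixed Hölder bounds on $\entk$ to the product structure from Lemma \ref{lemma:ntk:entk-as-gp}, namely $\entk(x,y) = \edgp^L(x,y)\,\egp^{L-1}(x,y)$, and then to bound each factor separately. For a product of two functions of two variables, the mixed Hölder norm $\|\cdot\|_\CHHN{\alpha}{\alpha}$ obeys a Leibniz-type inequality $\|fg\|_\CHHN{\alpha}{\alpha} \lesssim \|f\|_\CHHN{\alpha}{\alpha}\|g\|_\CHHN{\alpha}{\alpha}$ (a product rule for the mixed finite differences $\Delta_{x,h_x}^s\Delta_{y,h_y}^t$), which I expect is recorded among the basic properties of mixed Hölder spaces in Section \ref{sec:supplements:holder-spaces}. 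So the first two bounds follow once I show $\|\edgp^L\|_\CHHN{\alpha}{\alpha}\lesssim 1$ and $\|\egp^{L-1}\|_\CHHN{\alpha}{\alpha}\lesssim 1$ (and likewise for the barred versions), and the difference bound follows from the telescoping identity
\begin{align*}
  \entk - \pentk = (\edgp^L - \pedgp^L)\egp^{L-1} + \pedgp^L(\egp^{L-1} - \pegp^{L-1}),
\end{align*}
combined with the product rule, so that it suffices to control $\|\edgp^L - \pedgp^L\|_\CHHN{\alpha}{\alpha}$ and $\|\egp^{L-1} - \pegp^{L-1}\|_\CHHN{\alpha}{\alpha}$.

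Next I would set up a recursion over layers. Recall $\egp^\ell(x,y) = \frac{1}{n_\ell}\activation(f^\ell(x))^T\activation(f^\ell(y))$ and similarly for $\edgp^\ell$ with $\dactivation$. The key observation is that the forward signals $f^\ell(x)$, viewed as functions of $x$ with values in $\real^{n_\ell}$, are Hölder in $x$: from $f^1(x) = W^0 Vx$ one gets $\|f^1\|_\CHN{1}\lesssim \|W^0\|n_0^{-1/2}\cdot\|W^0\|^{1/2}\cdots$ — more carefully, $f^1$ is linear, hence Lipschitz with constant $\lesssim \|W^0\|\lesssim \sqrt{n_0}$; and the recursion $f^{\ell+1} = W^\ell n_\ell^{-1/2}\activation(f^\ell)$ together with the Lipschitz bound \eqref{eq:assumption:activation-lipschitz} and the weight bound \eqref{eq:assumption:weights-domain-bounded} propagates Hölder control, with the $n_\ell^{-1/2}$ scaling keeping the operator norms $O(1)$. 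One has to be a little careful about the normalization: $\egp^\ell$ is an average over $n_\ell$ coordinates, so what one really wants are bounds like $\|f^\ell(x)\|\lesssim \sqrt{n_\ell}$ and $\|f^\ell(x)-f^\ell(\px)\|\lesssim \sqrt{n_\ell}\|x-\px\|^\alpha$ uniformly; then $\egp^\ell$ and its mixed differences are inner products of two such vectors divided by $n_\ell$, giving $O(1)$. Applying the activation growth/boundedness conditions \eqref{eq:assumption:activation-growth}–\eqref{eq:assumption:dactivation-bounded} to pass from $f^\ell$ to $\activation(f^\ell)$ and $\dactivation(f^\ell)$ finishes the boundedness claims $\|\egp^\ell\|_\CHHN{\alpha}{\alpha},\|\edgp^\ell\|_\CHHN{\alpha}{\alpha}\lesssim 1$; note a full unit of Hölder regularity is available from the Lipschitz bounds, and restricting to $0<\alpha<1$ only loses a harmless factor on a bounded domain.

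For the difference bounds I would again telescope over layers. Writing $\wdiff_k := \|W^k - \pW^k\|n_k^{-1/2}$, I would prove by induction that $\|f^\ell - \pf^\ell\|_\CHN{\alpha}$ (suitably normalized by $\sqrt{n_\ell}$) is $\lesssim \sum_{k<\ell}\wdiff_k$: the perturbation enters either through the explicit factor $W^\ell - \pW^\ell$ acting on $\activation(\pf^\ell)$ (contributing $\wdiff_\ell$ times an $O(1)$ quantity) or through $W^\ell n_\ell^{-1/2}(\activation(f^\ell) - \activation(\pf^\ell))$, which by Lipschitzness is controlled by $\|f^\ell - \pf^\ell\|$; summing the geometric recursion gives the claimed sum. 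Then $\egp^{L-1} - \pegp^{L-1}$ and $\edgp^L - \pedgp^L$ are, by the bilinearity of the inner product and the mixed product rule, bounded by (bounded factor) $\times \|f^{L-1}-\pf^{L-1}\|$-type quantities, i.e. by $\sum_{k\le L-1}\wdiff_k$, up to the overall $n_0/n_L$ bookkeeping factor that appears because the $\egp$'s are normalized by their own layer width while the bound is expressed through the inputs. The interpolation/\,$1-\alpha$ power in the statement comes from a standard trick for Hölder seminorms of differences: the difference is small in sup-norm (power $1$) but only $O(1)$ in the full $C^{0;\alpha}$ seminorm, and interpolating $\|g\|_\CHN{\alpha}\lesssim \|g\|_\CHN{0}^{1-\alpha}\|g\|_\CHN{1}^{\alpha}$ (again from Section \ref{sec:supplements:holder-spaces}) trades the two, yielding the exponent $1-\alpha$ on $\sum_k \wdiff_k$. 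The main obstacle I anticipate is not any single estimate but the careful bookkeeping of the width normalizations through the layer recursion — making sure every $n_\ell^{-1/2}$ lands in the right place so that the final bound has exactly the factor $n_0/n_L$ and exponent $1-\alpha$ claimed — together with verifying the mixed-variable product and interpolation inequalities in the $\CHHN{\alpha}{\alpha}$ setting precisely enough to chain them; the polynomial-growth conditions on higher derivatives of $\activation$ are not needed here, only the Lipschitz and linear-growth bounds, which simplifies matters.
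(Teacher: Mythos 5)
Your plan is essentially the paper's own proof: factor $\entk = \edgp^L\,\egp^{L-1}$ via Lemma \ref{lemma:ntk:entk-as-gp}, telescope the difference and apply the (mixed) Hölder product rules of Lemma \ref{lemma:supplements:holder-properties}, reduce everything to per-layer sup/Lipschitz and perturbation bounds on $f^\ell$ (Lemma \ref{lemma:continuity:f-bounded-lipschitz}), and obtain the exponent $1-\alpha$ by interpolating between $\CN{0}$ and $\CHN{1}$ exactly as in Lemma \ref{lemma:continuity:nn-holder}. The only cosmetic deviation is your $\sqrt{n_\ell}$ normalization where the paper tracks $\sqrt{n_0}$ (whence the explicit $n_0/n_L$ factor), which is immaterial under the assumption $n_L \sim \dots \sim n_0$.
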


The proof is at the end of Section \ref{sec:ntk-holder}. The lemma shows that the kernels $\left\|\entk^\ell - \pentk^\ell\right\|_\CHHN{\alpha}{\alpha}$ are Hölder continuous (w.r.t. weights) in a Hölder norm (w.r.t. $x$ and $y$). This directly implies that the induced integral operators $\|H_\theta - H_\ptheta\|_{\sm \leftarrow -\sm}$ are bounded in operator norms induced by Sobolev norms (up to $\epsilon$ less smoothness), which implies Assumption \eqref{eq:perturbed-gradient-flow:perturbation}, see Section \ref{sec:proof:th:convergence} for details.

\subsection{Assumption (\ref{eq:perturbed-gradient-flow:initial}): Concentration}

For concentration, we need to show that the empirical NTK is close to the NTK, i.e. that $\|\ntkop - \ntkop_{\theta(0)}\|_{\sm \leftarrow -\sm}$ is small in the operator norm. To this end, it suffices to bound the corresponding integral kernels $\|\ntk - \entk\|_\CHHN{\sm+\epsilon}{\sm+\epsilon}$ in Hölder norms with slightly higher smoothness, see Lemma \ref{lemma:supplements:kernel-bound}. Concentration is then provided by the following Lemma. See the end of Section \ref{sec:concentration} for a proof and Section \ref{sec:proof:th:convergence} for its application in the proof of the main result.

\newcommand{\LemmaConcentration}[1]{
  Let $\alpha = \beta = 1/2$ and $k = 0, \dots, L-1$.
  \begin{enumerate}
    \item Assume that $W^L \in \{-1, +1\}$ with probability $1/2$ each.
    \item Assume that all $W^k$ are are i.i.d. standard normal. 
    \item Assume that $\activation$ and $\dactivation$ satisfy the growth condition \eqref{eq:assumption:activation-growth}, have uniformly bounded derivatives \eqref{eq:assumption:dactivation-bounded}, derivatives $\activation^{(i)}$, $i=0, \dots, 3$ are continuous and have at most polynomial growth for $x \to \pm \infty$ and the scaled activations satisfy
    \begin{align*}
      \left\|\partial^i (\activation_a) \right\|_N & \lesssim 1, & 
      \left\|\partial^i (\dactivation_a) \right\|_N & \lesssim 1, & 
      a & \in \{\gp^k(x,x): x \in \dom\}, &
      i & = 1, \dots, 3,
    \end{align*}
    with $\activation_a(x) := \activation(ax)$. The activation functions may be different in each layer.
    \item For all $x \in \dom$ assume
    \begin{equation*}
      \gp^k(x,x) \ge c_\gp > 0.
    \end{equation*}
    \item The widths satisfy $n_\ell \gtrsim n_0$ for all $\ell=0, \dots, L$.
  \end{enumerate}
  Then, with probability at least
  \begin{equation} #1
    1 - c \sum_{k=1}^{L-1} e^{-n_k} + e^{-u_k}
  \end{equation}
  we have
  \begin{equation*}
    \left\| \entk - \ntk \right\|_\CHHN{\alpha}{\beta} 
    \lesssim \sum_{k=0}^{L-1} \frac{n_0}{n_k}\left[ \frac{\sqrt{d}+\sqrt{u_k}}{\sqrt{n_k}} + \frac{d+u_k}{n_k} \right]
    \le \frac{1}{2} c_\gp
  \end{equation*}
  for all $u_1, \dots, u_{L-1} \ge 0$ sufficiently small so that the rightmost inequality holds.
}

\begin{lemma} \label{lemma:concentration:concentration-ntk}
  \LemmaConcentration{}
\end{lemma}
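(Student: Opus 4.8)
The plan is to reduce the concentration of $\entk - \ntk$ in the mixed Hölder norm $\CHHN{\alpha}{\beta}$ to concentration of the two ingredient empirical kernels $\edgp^L - \dgp^L$ and $\egp^{L-1} - \gp^{L-1}$, and ultimately to concentration of each intermediate forward kernel $\egp^k - \gp^k$. First I would invoke Lemma \ref{lemma:ntk:entk-as-gp} to write $\entk = \edgp^L \egp^{L-1}$, so that (pointwise in $x,y$)
\[
  \entk - \ntk = (\edgp^L - \dgp^L)\,\egp^{L-1} + \dgp^L\,(\egp^{L-1} - \gp^{L-1}).
\]
Using that $\CHHN{\alpha}{\beta}$ is (up to constants) an algebra on $\dom\times\dom$ — a product rule $\|FG\|_\CHHN{\alpha}{\beta}\lesssim\|F\|_\CHHN{\alpha}{\beta}\|G\|_\CHHN{\alpha}{\beta}$ holds for mixed Hölder norms, and this should already be recorded in Section \ref{sec:supplements:holder-spaces} — it suffices to bound $\|\edgp^L - \dgp^L\|_\CHHN{\alpha}{\beta}$ and $\|\egp^{L-1}-\gp^{L-1}\|_\CHHN{\alpha}{\beta}$, together with a priori bounds $\|\egp^{L-1}\|_\CHHN{\alpha}{\beta}\lesssim 1$ and $\|\dgp^L\|_\CHHN{\alpha}{\beta}\lesssim 1$ (the latter two coming from the boundedness and Lipschitz hypotheses on $\activation,\dactivation$ together with the high-probability weight bounds $\|W^k\|n_k^{-1/2}\lesssim1$, as in Lemma \ref{lemma:continuity:entk-holder}).

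Next I would set up a layer-by-layer recursion. The key deterministic observation is that $\egp^{\ell+1}$ depends on $f^\ell$, which is built from $\egp^\ell$ and the fresh weight matrix $W^\ell$; writing $\egp^{\ell+1}(x,y)=\frac1{n_\ell}\activationp{f^\ell(x)}^T\activationp{f^\ell(y)}$ and comparing with the Gaussian expectation $\gp^{\ell+1}$, the error splits into (i) a \emph{propagation} term measuring how the error $\egp^\ell-\gp^\ell$ already present at level $\ell$ is passed through the (Lipschitz, bounded-derivative) activation and the next linear layer, and (ii) a \emph{fresh concentration} term measuring the deviation of the empirical average $\frac1{n_\ell}\sum_r\activationp{f_r^\ell(x)}\activationp{f_r^\ell(y)}$ from its conditional Gaussian-process mean given the previous layers. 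Term (i) is controlled by the Lipschitz/growth assumptions \eqref{eq:assumption:activation-growth}–\eqref{eq:assumption:dactivation-bounded} plus $\|W^\ell\|n_\ell^{-1/2}\lesssim1$, contributing a factor of order one per layer; term (ii) is where the $\frac{\sqrt d+\sqrt{u_k}}{\sqrt{n_k}}+\frac{d+u_k}{n_k}$ rate and the $e^{-n_k}+e^{-u_k}$ failure probability are produced. Unrolling the recursion from $\ell=0$ (where $\egp^0=\gp^0=x^Ty$ exactly) up to $\ell=L-1$ and $L$ gives the stated sum $\sum_{k=0}^{L-1}\frac{n_0}{n_k}[\cdots]$, the $n_0/n_k$ prefactors coming from the normalizations and the comparability $n_\ell\gtrsim n_0$; a union bound over the $L-1$ levels yields the probability $1-c\sum_{k=1}^{L-1}(e^{-n_k}+e^{-u_k})$. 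The final inequality $\le\frac12 c_\gp$ is just the stated smallness constraint on the $u_k$, and using $\gp^k(x,x)\ge c_\gp$ keeps the covariance matrices $A$ in the definitions of $\egp^{\ell+1},\edgp^{\ell+1}$ uniformly nondegenerate so that the scaled-activation norm bounds $\|\partial^i(\activation_a)\|_N\lesssim1$ apply uniformly over $a\in\{\gp^k(x,x)\}$.

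The main obstacle is term (ii): establishing the fresh-concentration bound \emph{in the mixed Hölder norm} $\CHHN{\alpha}{\beta}$ with $\alpha=\beta=1/2$, rather than pointwise. This is exactly the "chaining arguments" the introduction warns about. Concretely, one must show that the random field $(x,y)\mapsto\frac1{n_\ell}\sum_r g_r(x,y)-\E{g_r(x,y)}$, with $g_r(x,y)=\activationp{f_r^\ell(x)}\activationp{f_r^\ell(y)}$ (and the analogous $\dactivation$ version at layer $L$), has a small $\CHHN{1/2}{1/2}$ norm with high probability. The route is: derive sub-exponential/sub-gaussian Orlicz-norm increment bounds for the four mixed finite differences $\Delta^s_{x,h_x}\Delta^t_{y,h_y}$ of this field — these follow from Bernstein-type estimates for sums of i.i.d. products of (conditionally) sub-Gaussian quantities, using the polynomial-growth control on $\activation^{(i)}$ and the moment bounds implied by $\gaussian{0,A}$ with nondegenerate $A$ — and then run a Dudley/entropy chaining over $\dom\times\dom$ (compact, finite-dimensional, so metric entropy is $\lesssim\log(1/\epsilon)$) to pass from pointwise increments to the supremum defining the Hölder seminorm, losing only the harmless $\sqrt d$ factor and the stated tail. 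Conditioning must be handled carefully since $f_r^\ell$ depends on all earlier layers; I would condition on the $\sigma$-algebra generated by $W^0,\dots,W^{\ell-1}$ (on the good event where the a priori weight and kernel bounds hold), apply the chaining estimate to the conditionally-i.i.d. sum over $r\in[n_\ell]$, and then combine with the inductive error from level $\ell$. Everything else — the product rule, the a priori $\CHHN{\alpha}{\beta}$ bounds, the propagation estimate — is routine given the hypotheses and the supplementary material in Sections \ref{sec:supplements:holder-spaces} and \ref{sec:supplements:concentration}.
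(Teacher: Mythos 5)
Your overall architecture is the same as the paper's: factor $\entk=\edgp^L\egp^{L-1}$ via Lemma \ref{lemma:ntk:entk-as-gp}, reduce to the two factors with the mixed-Hölder product rule (Lemma \ref{lemma:supplements:holder-properties}) and a priori $O(1)$ bounds, and prove concentration of the forward kernels by induction over layers, splitting $\egp^{\ell+1}-\gp^{\ell+1}$ into the fresh fluctuation $\egp^{\ell+1}-\EE{\ell}{\egp^{\ell+1}}$, controlled by a Bernstein-plus-chaining bound for the mixed finite-difference field, and a propagation term $\EE{\ell}{\egp^{\ell+1}}-\gp^{\ell+1}$; this is exactly the route of Lemmas \ref{lemma:concentration:concentration-last-layer} and \ref{lemma:concentration:concentration-gp} together with Corollary \ref{corollary:supplements:chaining}.

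The genuine gap is the propagation term, which you declare routine. Conditionally on the lower layers, $\EE{\ell}{\egp^{\ell+1}}(x,y)=\EE{(u,v)\sim\gaussian{0,\eA}}{\activation(u)\activation(v)}$ with $\eA$ assembled from $\egp^\ell$, while $\gp^{\ell+1}$ is the same expectation with covariance assembled from $\gp^\ell$: this is a comparison of Gaussian expectations under two different covariances, not an error "passed through the Lipschitz activation and the next linear layer," and a first-derivative Lipschitz bound on $\activation$ cannot control it in the norm you need. To close the induction one must bound $\left\|\EE{\ell}{\egp^{\ell+1}}-\gp^{\ell+1}\right\|_\CHHN{1/2}{1/2}\lesssim\left\|\egp^{\ell}-\gp^{\ell}\right\|_\CHHN{1/2}{1/2}$, and since the mixed difference $\Delta_x^\alpha\Delta_y^\beta$ of a composition requires control of up to third derivatives of the outer map (Lemmas \ref{lemma:supplements:fd-composition} and \ref{lemma:supplements:fd-composition-2}), you need the covariance-to-expectation map to be $C^3$ with bounded derivatives. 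The paper gets this from Mehler's theorem and Hermite expansions (Lemmas \ref{lemma:concentration:mehler}--\ref{lemma:concentration:covariance-perturbation}), and this is precisely where the hypotheses on $\activation^{(i)}$, $i\le 3$, the scaled-activation norms $\left\|\partial^i(\activation_a)\right\|_N\lesssim 1$, and $\gp^k(x,x)\ge c_\gp$ enter; moreover the constraint $\le\frac{1}{2}c_\gp$ is not merely a smallness condition on the $u_k$ but is what guarantees $\egp^\ell(x,x)\ge\frac{1}{2}c_\gp$ along the induction so that the reparametrization $(a_{11},a_{22},a_{12})\mapsto(a,b,\rho)$ stays smooth (Lemma \ref{lemma:concentration:matrix-change-of-variable}). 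Without an argument of this type your induction step does not close. Two smaller misallocations: the chaining half only needs the growth and bounded-derivative assumptions (not polynomial growth of higher derivatives), and the chaining must run over the increment domain $\Delta\dom\times\Delta\dom$ for the finite-difference field itself, which is also why the statement is at $\alpha=\beta=1/2$ (cf. Lemma \ref{lemma:concentration:fd-to-holder}) rather than at arbitrary exponents.
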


\subsection{Assumption (\ref{eq:perturbed-gradient-flow:weight-diff}): Weights stay Close to Initial}

Assumption \eqref{eq:perturbed-gradient-flow:weight-diff} follows from the following lemma, which shows that the weights stay close to their random initialization. Again, the estimates are proven in Hölder norms, which control the relevant Sobolev norms, see Section \ref{sec:proof:th:convergence} for details.

\newcommand{\LemmaCloseToInitial}{
  Assume that $\activation$ satisfies the growth and derivative bounds \eqref{eq:assumption:activation-growth}, \eqref{eq:assumption:dactivation-bounded} and may be different in each layer. Assume the weights are defined by the gradient flow \eqref{eq:setup:gradient-flow} and satisfy
  \begin{align*}
    \|W^\ell(0)\| n_\ell^{-1/2} & \lesssim 1, &
    \ell & =1, \dots, L, &
    \\
    \|W^\ell(0) - W^\ell(\tau)\| n_\ell^{-1/2} & \lesssim 1, &
    0 & \le \tau < t.
  \end{align*}
  Then
  \begin{equation*}
    \left\|W^\ell(t) - W^\ell(0)\right\| n_\ell^{-1/2}
    \lesssim \frac{n_0^{1/2}}{n_\ell} \int_0^t \|\res\|_{\CND{0}{\dom}'} \, dx \, d\tau,
  \end{equation*}
  where $\CND{0}{\dom}'$ is the dual space of $\CND{0}{\dom}$.
}

\begin{lemma} \label{lemma:close-to-initial:close-to-initial}
  \LemmaCloseToInitial
\end{lemma}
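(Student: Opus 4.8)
The goal is to bound the growth of the weight matrix $W^\ell(t)$ along the gradient flow trajectory by an integral of the residual norm. The plan is to differentiate $\|W^\ell(t) - W^\ell(0)\|$ in $t$ and estimate the time derivative using the gradient flow equation $\dot W^\ell = -\partial_{W^\ell}\loss$.

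\paragraph{Step 1: The gradient flow derivative.} First I would write out $\frac{d}{dt} W^\ell = -\partial_{W^\ell}\loss = -\int_\dom (f_{\theta(t)}(x) - f(x))\, \partial_{W^\ell} f_{\theta(t)}(x)\, dx = -\int_\dom \res(x)\, \partial_{W^\ell}f(x)\,dx$, interpreting $\partial_{W^\ell} f$ as a matrix-valued function. Using that $\frac{d}{dt}\|W^\ell(t) - W^\ell(0)\| \le \|\frac{d}{dt}W^\ell(t)\|$ (or working with the squared norm and Cauchy--Schwarz in matrix inner product), it suffices to bound $\|\int_\dom \res(x)\,\partial_{W^\ell}f(x)\,dx\|$. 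Pairing against the residual, I would use the dual pairing $\|\int_\dom \res(x)\, g(x)\, dx\| \lesssim \|\res\|_{\CND{0}{\dom}'} \sup_x \|g(x)\|$ applied componentwise, so the bound reduces to controlling $\sup_{x\in\dom}\|\partial_{W^\ell}f^{L+1}(x)\|$ in the appropriate matrix norm.

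\paragraph{Step 2: Bounding the parameter gradient $\partial_{W^\ell} f^{L+1}$.} This is the core computation. By the chain rule through the layers $\ell+1,\dots,L+1$, $\partial_{W^\ell}f^{L+1}(x)$ is (schematically) a product of the backpropagated Jacobian $W^{L+1} n_L^{-1/2}\diag(\dactivationp{f^L}) W^{L-1}\cdots n_{\ell+1}^{-1/2}\diag(\dactivationp{f^{\ell+1}})$ times $n_\ell^{-1/2}\activationp{f^\ell(x)}$. I would estimate each factor: the forward signals $\|f^\ell(x)\|$ are controlled via the linear growth \eqref{eq:assumption:activation-growth} and the weight bounds $\|W^\ell(0)\| n_\ell^{-1/2}\lesssim 1$ together with the perturbation hypothesis $\|W^\ell(0)-W^\ell(\tau)\|n_\ell^{-1/2}\lesssim 1$, which give $\|W^\ell(\tau)\|n_\ell^{-1/2}\lesssim 1$; each diagonal derivative block has norm $\lesssim 1$ by \eqref{eq:assumption:dactivation-bounded}; and $W^{L+1}$ contributes an $\ell_2$ norm of $\sqrt{n_L}$ (Rademacher vector). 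Carefully tracking the $n_k^{-1/2}$ scalings and $\|\activationp{f^\ell(x)}\| \lesssim \|f^\ell(x)\| \lesssim \sqrt{n_\ell}\,\|x\| \lesssim \sqrt{n_\ell}$ yields the claimed $\frac{n_0^{1/2}}{n_\ell}$-type prefactor after collecting the normalization constants and using $n_k \sim n_0$.

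\paragraph{Step 3: Integrating.} Finally I would integrate the differential inequality $\frac{d}{dt}\|W^\ell(t)-W^\ell(0)\| \lesssim \frac{n_0^{1/2}}{n_\ell}\|\res(t)\|_{\CND{0}{\dom}'}$ from $0$ to $t$, using $W^\ell(0)-W^\ell(0) = 0$ at $t=0$, to obtain the stated bound. The main obstacle is Step 2: keeping the bookkeeping of the many $n_k^{-1/2}$ normalization factors straight while invoking the right growth/boundedness property of $\activation$ versus $\dactivation$ at each layer, and making sure the self-referential hypothesis (the bound on $\|W^\ell(0)-W^\ell(\tau)\|$ needed to control the forward pass is itself what we are proving for $\tau < t$) is used only as an assumption here — the bootstrapping/continuity argument that closes this loop lives in the proof of the main theorem, not in this lemma.
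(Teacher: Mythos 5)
Your argument is essentially the paper's proof: it integrates the gradient-flow equation in time, pairs the residual with $\partial_{W^\ell} f^{L+1}$ in the $\CND{0}{\dom}'$-dual norm, and bounds $\sup_{x\in\dom}\left\|\partial_{W^\ell} f^{L+1}(x)\right\|\lesssim (n_0/n_\ell)^{1/2}$ by propagating the weight bounds, the linear growth of $\activation$ and the boundedness of $\dactivation$ through the layers (the paper does this by induction in Lemma~\ref{lemma:close-to-initial:w-derivative}, you via the explicit backpropagation product, which is the same computation), so the approach and the conditional use of $\|W^\ell(0)-W^\ell(\tau)\|n_\ell^{-1/2}\lesssim 1$ match the paper. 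One minor remark: the forward bound is $\|f^\ell(x)\|\lesssim n_0^{1/2}$ directly from the weight bounds and $\|x\|\lesssim 1$, so invoking $n_k\sim n_0$ is not needed inside this lemma.
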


\section{Proof of the Main Result}
\label{sec:proof-details}

\subsection{Proof of Lemma \ref{lemma:convergence:perturbed-gradient-flow}: Generalized Convergence}
\label{sec:convergnece}

\paragraph{NTK Evolution}

In this section, we prove the convergence result in Lemma \ref{lemma:convergence:perturbed-gradient-flow}. Let us first recall the evolution of the loss in NTK theory. The Fr\'{e}chet derivative of the loss is
\begin{align*}
  D L(\theta) v
  & = \dualp{\res, (D f_\theta) v}
  = \dualp{ (D f_\theta)^* \res, v}, &
  & \text{for all }v \in \wdom
\end{align*}
and the gradient of the loss is the Riesz lift of the derivative
\begin{equation} \label{eq:gradient}
  \nabla L(\theta)
  = (D f_\theta)^* \res.
\end{equation}
Using the chain rule, we obtain the evolution of the residual
\begin{equation} \label{eq:gradient-flow-residual}
  \frac{d\res}{dt}
  = (D f_\theta) \frac{d\theta}{dt}
  = - (D f_\theta) \nabla L(\theta)
  = - (D f_\theta) (D f_\theta)^* \res
  =: \ntkop_\theta \res
\end{equation}
and the loss in any $\hs^\Sm$ norm
\begin{equation} \label{eq:gradient-flow-loss}
  \frac{1}{2} \frac{d}{dt} \|\res\|_\Sm^2
  = \dualp{\res, \frac{d\res}{dt} }_S
  = - \dualp{\res, (D f_\theta) (D f_\theta)^* \res}_S
  = - \dualp{\res, \ntkop_\theta \, \res}_\Sm,
\end{equation}
with
\begin{equation*}
   \ntkop_\theta := (D f_\theta) (D f_\theta)^*.
\end{equation*}

\paragraph{Proof of Lemma \ref{lemma:convergence:perturbed-gradient-flow}}

\begin{proof}[Proof of Lemma \ref{lemma:convergence:perturbed-gradient-flow}]

For the time being, we assume that the weights remain within a finite distance
\begin{equation} \label{eq:perturbed-gradient-flow:close-to-initial}
  \wdiff 
  := \max \left\{\sup_{t \le T} \wnorm{\theta(t) - \theta(0)}, c \sqrt{\frac{d}{m}} \right\}
  \le 1
\end{equation}
to their initial up to a time $T$ to be determined below, but sufficiently small so that the last inequality holds. With this condition, we can bound the time derivatives of the loss $\|\res\|_{-\sm}$ and the smoothness $\|\res\|_\sm$. For $\Sm \in \{-\sm,\sm\}$ and respective $\bar{S} \in \{-3\sm, \sm\}$, we have already calculated the exact evolution in \eqref{eq:gradient-flow-loss}, which we estimate by
\begin{align*}
  \frac{1}{2} \frac{d}{dt} \|\res\|_\Sm^2
  & = - \dualp{ \res, \ntkop_{\theta(t)} \res }_S
  \\
  & = - \dualp{ \res, \ntkop \res }_S
      + \dualp{ \res, (\ntkop - \ntkop_{\theta(0)}) \res }_S
      + \dualp{ \res, (\ntkop_{\theta(0)} - \ntkop_{\theta(t)}) \res }_\Sm.
\end{align*}
We estimate the last two summands as
\[
  \dualp{\res, [\dots] \res}_S
  \le \|\res\|_{\bar{S}} \|[\dots] \res\|_s
  \le \|\res\|_{\bar{S}} \|[\dots]\|_{\sm \leftarrow -\sm} \|\res\|_{-\sm},
\]
where ${\bar{S}}=\sm$ for $\Sm=\sm$ and ${\bar{S}}=-3\sm$ for $\Sm=-\sm$ by Assumption \ref{item:perturbed-gradient-flow:norms}. Then, we obtain
\begin{align*}
  \frac{1}{2} \frac{d}{dt} \|\res\|_\Sm^2
  & \le - \dualp{ \res, \ntkop \res }_S
        + \|\ntkop - \ntkop_{\theta(0)}\|_{\sm \leftarrow -\sm} \|\res\|_{\bar{S}} \|\res\|_{-\sm}
        + \|\ntkop_{\theta(0)} - \ntkop_{\theta(t)}\|_{\sm \leftarrow -\sm} \|\res\|_{\bar{S}} \|\res\|_{-\sm}
  \\
  & \le - \dualp{ \res, \ntkop \res }_S
  + \left[ c \sqrt{\frac{d}{m}} + \sqrt{\frac{c_\infty \tau}{m}} + c_0 \wdiff^{\holder} \right] \|\res\|_{\bar{S}} \|\res\|_{-\sm},
  \\
  & \lesssim - c \|\res\|_{\Sm-\ntks}^2 + \wdiff^{\holder} \|\res\|_{\bar{S}} \|\res\|_{-\sm},
\end{align*}
with probability at least $1-p_\infty(\tau) - p_L(m,\wdiff)$, where the second but last inequality follows from assumptions \eqref{eq:perturbed-gradient-flow:initial}, \eqref{eq:perturbed-gradient-flow:perturbation} and in the last inequality we have used the coercivity, \eqref{eq:perturbed-gradient-flow:close-to-initial} and chosen $\tau = \wdiff^{2\holder} m$ so that $\sqrt{\frac{c_\infty \tau}{m}} \lesssim \wdiff^\holder$. The left hand side contains one negative term $-\|\res\|_{\Sm-\ntks}^2$, which decreases the residual $\frac{d}{dt}\|\res\|_\Sm^2$, and one positive term which enlarges it. In the following, we ensure that these terms are properly balanced.

We eliminate all norms that are not $\|\res\|_{-\sm}$ or $\|\res\|_\sm$ so that we obtain a closed system of ODEs in these two variables. We begin with $\|\res\|_{\bar{S}}$, which is already of the right type if ${\bar{S}}=\sm$ but $\|\res\|_{-3\sm}$ for ${\bar{S}}=-\sm$. Since $0 < \sm < \frac{\ntks}{2}$, we have $-\sm-\ntks \le -3\sm \le \sm$ so that we can invoke the interpolation inequality from Assumption \ref{item:perturbed-gradient-flow:norms}
\[
  \|v\|_{-3\sm} \le \|v\|_{-\sm-\ntks}^{\frac{2\sm}{\ntks}} \|v\|_{-\sm}^{\frac{\ntks-2\sm}{\ntks}}.
\]
Together with Young's inequality, this implies
\begin{align*}
  \wdiff^{\holder} \|\res\|_{\bar{S}} \|\res\|_{-\sm}
  & \le \wdiff^{\holder} \|\res\|_{-\sm-\ntks}^{\frac{2\sm}{\ntks}} \|\res\|_{-\sm}^{\frac{2\ntks-2\sm}{\ntks}}
  \\
  & \le \frac{\sm}{\ntks} \left[c \|\res\|_{-\sm-\ntks}^{\frac{2\sm}{\ntks}} \right]^{\frac{\ntks}{\sm}}
    + \frac{\ntks-\sm}{\ntks} \left[ c^{-1} \wdiff^{\holder} \|\res\|_{-\sm}^{\frac{2\ntks-2\sm}{\ntks}} \right]^{\frac{\ntks}{\ntks-\sm}}
  \\
  & = \frac{\sm}{\ntks} c^{\frac{\ntks}{\sm}} \|\res\|_{-\sm-\ntks}^2
      + c^{\frac{\ntks}{(\ntks-\sm)}} \wdiff^{\frac{\holder \ntks}{\ntks-\sm}} \|\res\|_{-\sm}^2
\end{align*}
for any generic constant $c>0$. Choosing this constant sufficiently small and plugging into the evolution equation for $\|\res\|_{-\sm}$, we obtain
\begin{align*}
  \frac{1}{2} \frac{d}{dt} \|\res\|_{-\sm}^2
  & \lesssim - c \|\res\|_{-\sm-\ntks}^2 + \wdiff^{\frac{\holder \ntks}{\ntks-\sm}} \|\res\|_{-\sm}^2,
\end{align*}
with a different generic constant $c$. Hence, together with the choice $S = \sm$, we arrive at the system of ODEs
\begin{align*}
  \frac{1}{2} \frac{d}{dt} \|\res\|_{-\sm}^2
  & \lesssim - c \|\res\|_{-\sm-\ntks}^2 + \wdiff^{\frac{\holder \ntks}{\ntks-\sm}} \|\res\|_{-\sm}^2,
  \\
  \frac{1}{2} \frac{d}{dt} \|\res\|_\sm^2
  & \lesssim - c \|\res\|_{\sm-\ntks}^2 + \wdiff^{\holder} \|\res\|_\sm \|\res\|_{-\sm}.
\end{align*}
Next, we eliminate the $\|\res\|_{-\sm-\ntks}^2$ and $\|\res\|_{\sm-\ntks}^2$ norms. Since $0 < \sm < \frac{\ntks}{2}$ implies $-\sm-\ntks < \sm-\ntks < -\sm < \sm$ the interpolation inequalities in Assumption \ref{item:perturbed-gradient-flow:norms} yield
\begin{align*}
  \|\res\|_{-\sm} & \le \|\res\|_{-\sm-\ntks}^{\frac{2\sm}{2\sm+\ntks}} \|\res\|_\sm^{\frac{\ntks}{2\sm+\ntks}} &
  & \Rightarrow &
  \|\res\|_{-\sm-\ntks} & \ge \|\res\|_{-\sm}^{\frac{2\sm+\ntks}{2\sm}} \|\res\|_\sm^{- \frac{\ntks}{2\sm}}
  \\
  \|\res\|_{-\sm} & \le \|\res\|_{\sm-\ntks}^{\frac{2\sm}{\ntks}} \|\res\|_\sm^{\frac{\ntks-2\sm}{\ntks}} &
  & \Rightarrow &
  \|\res\|_{\sm-\ntks} & \ge \|\res\|_{-\sm}^{\frac{\ntks}{2\sm}} \|\res\|_\sm^{\frac{2\sm-\ntks}{2\sm}},
\end{align*}
so that we obtain the differential inequalities
\begin{align*}
  \frac{1}{2} \frac{d}{dt} \|\res\|_{-\sm}^2
  & \lesssim - c \|\res\|_{-\sm}^{2\frac{2\sm+\ntks}{2\sm}} \|\res\|_\sm^{- 2\frac{\ntks}{2\sm}}+ \wdiff^{\frac{\ntks \holder}{\ntks-\sm}} \|\res\|_{-\sm}^2
  \\
  \frac{1}{2} \frac{d}{dt} \|\res\|_\sm^2
  & \lesssim - c \|\res\|_{-\sm}^{2\frac{\ntks}{2\sm}} \|\res\|_\sm^{2\frac{2\sm-\ntks}{2\sm}} + \wdiff^{\holder} \|\res\|_\sm \|\res\|_{-\sm}.
\end{align*}
Bounds for the solutions are provided by Lemma \ref{lemma:ode-system-bounds} with $x = \|\res\|_{-\sm}^2$, $y = \|\res\|_\sm^2$ and $\rho = \frac{\ntks}{2\sm} \ge 1 \ge \frac{1}{2}$: Given that
\begin{equation} \label{eq:perturbed-gradient-flow:time-limit}
  \|\res\|_{-\sm}^2 \gtrsim \wdiff^{2\frac{\holder \sm}{\ntks-\sm}} \|\res(0)\|_\sm^2,
\end{equation}
i.e. the error $\|\res\|_{-\sm}$ is still larger than the right hand side, which will be our final error bound, we have
\begin{align}
   \label{eq:perturbed-gradient-flow:bound}
  \|\res\|_{-\sm}^2
  & \lesssim \left[ \wdiff^{\frac{\ntks \holder}{\ntks-\sm}} \|\res(0)\|_\sm^{\frac{\ntks}{\sm}} + \|\res(0)\|_{-\sm}^{\frac{\ntks}{\sm}} e^{-c\wdiff^{\frac{\ntks \holder}{\ntks-\sm}} \frac{\ntks}{2\sm} t} \right]^{\frac{2\sm}{\ntks}}
  \\
  \|\res\|_\sm^2
  & \lesssim \|\res(0)\|_\sm^2.
\end{align}
The second condition $B(t) \ge 0$ in Lemma \ref{lemma:ode-system-bounds} is equivalent to $a x_0^{\rho} \ge b y_0^\rho$ (notation of the lemma), which in our case is identical to \eqref{eq:perturbed-gradient-flow:time-limit} at $t=0$. Notice that the right hand side of \eqref{eq:perturbed-gradient-flow:time-limit} corresponds to the first summand in the $\|\res\|_{-\sm}^2$ bound so that the second summand must dominate and we obtain the simpler expression
\begin{equation} \label{eq:error-bound-simplified}
  \begin{aligned}
    \|\res\|_{-\sm}^2
    & \lesssim \|\res(0)\|_{-\sm}^2 e^{-c\wdiff^{\frac{\ntks \holder}{\ntks-\sm}} t},
    \\
    \|\res\|_\sm^2
    & \lesssim \|\res(0)\|_\sm^2.
  \end{aligned}
\end{equation}

Finally, we compute $\wdiff$, first for the case $h = \sup_{t \le T} \wnorm{\theta(t) - \theta(0)}$. For $T$ we use the smallest time for which \eqref{eq:perturbed-gradient-flow:time-limit} fails and temporarily also $h \le 1$. Then by Assumption \eqref{eq:perturbed-gradient-flow:weight-diff}, interpolation inequality \eqref{eq:interpolation-inequality} and the $\|\res\|_{-\sm}^2$, $\|\res\|_\sm^2$ bounds, with probability at least $1-p_0(m)$, we have
\begin{align*}
  h
  = \sup_{t \le T} \wnorm{\theta(t) - \theta(0)}
  & \lesssim \sqrt{\frac{2}{m}} \int_0^T \|\res(\tau)\|_0 \, d\tau
  \\
  & \lesssim \sqrt{\frac{2}{m}} \int_0^T \|\res(\tau)\|_{-\sm}^{\frac{1}{2}} \|\res(\tau)\|_\sm^{\frac{1}{2}} \, d\tau
  \\
  & \lesssim \sqrt{\frac{2}{m}} \|\res(0)\|_{-\sm}^{\frac{1}{2}} \|\res(0)\|_\sm^{\frac{1}{2}} \int_0^T e^{- c\wdiff^{\frac{\ntks \holder}{\ntks-\sm}} \frac{\tau}{4}} \, d\tau
  \\
  & \le c \sqrt{\frac{1}{m}} \frac{\|\res(0)\|_{-\sm}^{\frac{1}{2}} \|\res(0)\|_\sm^{\frac{1}{2}}}{\wdiff^{\frac{\ntks \holder}{\ntks-\sm}}},
\end{align*}
for some generic constant $c>0$. Solving for $\wdiff$, we obtain
\begin{align*}
  & \wdiff^{1+\frac{\ntks \holder}{\ntks-\sm}} \lesssim \|\res(0)\|_{-\sm}^{\frac{1}{2}} \|\res(0)\|_\sm^{\frac{1}{2}} m^{-\frac{1}{2}} & 
  & \Leftrightarrow &
  & \wdiff \lesssim \left[ \|\res(0)\|_{-\sm}^{\frac{1}{2}} \|\res(0)\|_\sm^{\frac{1}{2}} m^{-\frac{1}{2}} \right]^{\frac{\ntks - \sm}{\ntks(1+\holder) - \sm}}.
\end{align*}
Notice that by assumption $m$ is sufficiently large so that the right hand side is strictly smaller than one and thus $T$ is only constrained by \eqref{eq:perturbed-gradient-flow:time-limit}. In case $h = c \sqrt{d/m}$ there is nothing to show and we obtain
\begin{equation*}
  \wdiff 
  \lesssim \max \left\{ \left[ \|\res(0)\|_{-\sm}^{\frac{1}{2}} \|\res(0)\|_\sm^{\frac{1}{2}} m^{-\frac{1}{2}} \right]^{\frac{\ntks - \sm}{\ntks(1+\holder) - \sm}},\, c\sqrt{\frac{d}{m}} \right\}.
\end{equation*}
Finally, we extend the result beyond the largest time $T$ for which \eqref{eq:perturbed-gradient-flow:time-limit} is satisfied and hence \eqref{eq:perturbed-gradient-flow:time-limit} holds with equality. Since $\|\res\|_0^2$ is defined by a gradient flow, it is monotonically decreasing and thus for any time $t > T$, we have
\begin{multline*}
  \|\res(t)\|_{-\sm}^2 \le \|\res(T)\|_{-\sm}^2
  = c \wdiff^{2\frac{\holder \sm}{\ntks-\sm}} \|\res(0)\|_\sm^2
  = c \left[ \wdiff^{\frac{\holder \ntks}{\ntks-\sm}} \|\res(0)\|_\sm^{\frac{\ntks}{\sm}}\right]^{\frac{2\sm}{\ntks}}
  \\
  \lesssim \left[ \wdiff^{\frac{\ntks \holder}{\ntks-\sm}} \|\res(0)\|_\sm^{\frac{\ntks}{\sm}} + \|\res(0)\|_{-\sm}^{\frac{\ntks}{\sm}} e^{-c\wdiff^{\frac{\ntks \holder}{\ntks-\sm}} \frac{\ntks}{2\sm} t} \right]^{\frac{2\sm}{\ntks}}
\end{multline*}
so that the error bound \eqref{eq:perturbed-gradient-flow:bound} holds for all times up to an adjustment of the constants. This implies the statement of the lemma with our choice of $\wdiff$ and $\tau$.

\end{proof}

\paragraph{Technical Supplements}

\begin{lemma} \label{lemma:ode-system-bounds}
  Assume $a, b, c, d > 0$, $\rho \ge \frac{1}{2}$ and that $x$, $y$ satisfy the differential inequality
  \begin{align}
    \label{eq:shallow:ode-x}
    x' & \le - a x^{1+\rho} y^{-\rho} + b x, & x(0) & = x_0 \\
    \label{eq:shallow:ode-y}
    y' & \le - c x^\rho y^{1-\rho} + d \sqrt{x y}, & y(0) & = y_0.
  \end{align}
  Then within any time interval $[0, T]$ for which
  \begin{equation} \label{eq:shallow:cond}
    x(t)
    \ge \left(\frac{d}{c}\right)^{\frac{2}{2\rho - 1}} y_0,
  \end{equation}
  with
  \begin{align*}
    A & := \frac{b}{a} y_0^\rho, &
    B(t) & := \left[1 -  \frac{b}{a} \left(\frac{x_0}{y_0}\right)^{-\rho}\right] e^{-b \rho t}
  \end{align*}
  we have
  \begin{align*}
    x(t) & \le A \left( 1 - B(t) \right)^{-1}, &
    y(t) & \le y_0.
  \end{align*}
  If $B(t) \ge 0$, this can be further estimated by
  \begin{align*}
    x(t) & \le \left(A + x_0^\rho e^{-b \rho t} \right)^{\frac{1}{\rho}}, &
    y(t) & \le y_0.
  \end{align*}

\end{lemma}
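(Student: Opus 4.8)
The plan is to treat the two differential inequalities separately, using the constraint \eqref{eq:shallow:cond} to turn the second inequality into the clean statement $y(t) \le y_0$, and then feeding that bound into the first inequality to obtain a scalar Riccati-type inequality for $x$ alone. First I would look at \eqref{eq:shallow:ode-y}: under the hypothesis $x(t) \ge (d/c)^{2/(2\rho-1)} y_0$, I claim that whenever $y(t) = y_0$ the right-hand side $-c x^\rho y^{1-\rho} + d\sqrt{xy}$ is $\le 0$, so $y$ cannot cross the level $y_0$ from below; this is a standard comparison/barrier argument. Concretely, at $y = y_0$ the sign of the right side is the sign of $-c x^\rho y_0^{1-\rho} + d x^{1/2} y_0^{1/2} = x^{1/2} y_0^{1/2}\bigl(d - c x^{\rho - 1/2} y_0^{1/2 - \rho}\bigr)$, and the parenthesis is $\le 0$ exactly when $x^{\rho - 1/2} \ge (d/c) y_0^{\rho - 1/2}$, i.e. when $x \ge (d/c)^{2/(2\rho-1)} y_0$ (using $\rho \ge 1/2$ so the exponent $2/(2\rho-1)$ is positive and the map $u \mapsto u^{\rho-1/2}$ is monotone). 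Hence $y(t) \le y_0$ on $[0,T]$, with the usual caveat that the strict-versus-nonstrict inequality is handled by an $\epsilon$-perturbation or by noting $y(0) = y_0$ and the vector field points inward.

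Next I would substitute $y(t) \le y_0$ into \eqref{eq:shallow:ode-x}. Since the exponent $-\rho$ on $y$ is negative, $y^{-\rho} \le$ ... wait, $y \le y_0$ gives $y^{-\rho} \ge y_0^{-\rho}$, which is the wrong direction for the negative term $-a x^{1+\rho} y^{-\rho}$ — but that only makes the negative term \emph{more} negative, so the inequality $x' \le -a x^{1+\rho} y_0^{-\rho} + bx$ still holds. So $x$ satisfies the autonomous Bernoulli-type inequality $x' \le -\tilde a x^{1+\rho} + b x$ with $\tilde a = a y_0^{-\rho}$. I would now solve this by the classical Bernoulli substitution $z = x^{-\rho}$, which gives $z' = -\rho x^{-\rho-1} x' \ge -\rho x^{-\rho-1}(-\tilde a x^{1+\rho} + bx) = \rho \tilde a - b\rho z$, a \emph{linear} differential inequality $z' \ge \rho \tilde a - b\rho z$. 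Integrating (Grönwall for the linear equation) yields $z(t) \ge \frac{\tilde a}{b} + \bigl(z_0 - \frac{\tilde a}{b}\bigr)e^{-b\rho t}$ with $z_0 = x_0^{-\rho}$. Inverting, $x(t) = z(t)^{-1/\rho} \le \bigl[\frac{\tilde a}{b} + (x_0^{-\rho} - \frac{\tilde a}{b})e^{-b\rho t}\bigr]^{-1/\rho}$. Writing $A = \frac{b}{a} y_0^\rho = (b/\tilde a)$ and factoring $\frac{\tilde a}{b} = A^{-1}$ out of the bracket gives $x(t) \le A^{1/\rho}\bigl[1 - (1 - A x_0^{-\rho}) e^{-b\rho t}\bigr]^{-1/\rho} = A\bigl(1 - B(t)\bigr)^{-1/\rho}$...

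Here I should be careful with the exponent bookkeeping: the lemma states $x(t) \le A(1-B(t))^{-1}$, not $(1-B(t))^{-1/\rho}$, and similarly the refined bound is $x(t) \le (A + x_0^\rho e^{-b\rho t})^{1/\rho}$. So the honest version is: from $z(t)^{-1} \le \bigl[A^{-1}(1 - B(t))\bigr]^{-1}$ one gets $x(t)^\rho = z(t)^{-1} \le A(1-B(t))^{-1}$, which is the first displayed bound (with $x(t)$ there really meaning $x(t)^\rho$, or equivalently the statement is about $x^\rho$ — I would match the paper's convention by carrying $x^\rho$ through, noting $B(t) = (1 - A x_0^{-\rho})e^{-b\rho t}$ with $A x_0^{-\rho} = \frac{b}{a}(x_0/y_0)^{-\rho}$ as written). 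For the refined bound, when $B(t) \ge 0$ I would drop the $-B(t)$ against an elementary inequality of the form $(1-B)^{-1} \le 1 + $ something, or more simply observe $z(t) \ge A^{-1} + x_0^{-\rho}e^{-b\rho t} - A^{-1}e^{-b\rho t} \ge$ ... actually the cleanest route to $(A + x_0^\rho e^{-b\rho t})^{1/\rho}$ is to \emph{weaken} the linear ODE bound: from $z(t) \ge A^{-1}(1 - B(t))$ and $B(t) \ge 0$ we have $z(t) \ge A^{-1} - A^{-1}B(t)$; alternatively, keeping the raw form $z(t) \ge A^{-1} + (x_0^{-\rho} - A^{-1})e^{-b\rho t}$ and using subadditivity of $u \mapsto u^{1/\rho}$ when $\rho \ge 1$ (and a separate elementary bound when $1/2 \le \rho < 1$) gives $x^\rho = z^{-1} \le (A^{-1} + x_0^{-\rho}e^{-b\rho t} - A^{-1}e^{-b\rho t})^{-1}$; bounding $z^{-1}$ requires $z \ge (\text{something})$, and since $x_0^{-\rho}e^{-b\rho t} \ge 0$ one cannot simply drop it — rather, one inverts $ (p+q)^{-1} \le p^{-1}$ type bounds termwise. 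The main obstacle, and the step I would spend the most care on, is precisely this final exponent/convexity juggling: getting from the sharp linear bound on $z = x^{-\rho}$ to the two stated forms for $x$ requires the right elementary inequality ($(u+v)^{s} \le u^s + v^s$ for $0 < s \le 1$, applied to $s = 1/\rho$ when $\rho \ge 1$), and I would need to either assume $\rho \ge 1$ for the refined bound or supply a separate small argument for $1/2 \le \rho < 1$; matching the paper's exact constants in $A$ and $B(t)$ is then just substitution.
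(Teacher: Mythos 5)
Your treatment of the first two thirds of the lemma is essentially the paper's own argument. The $y(t)\le y_0$ step is the same barrier/sign analysis at the level $y_0$ under \eqref{eq:shallow:cond} (the paper makes the ``cannot cross'' step rigorous by putting an $\epsilon$-margin simultaneously on the $x$-condition and on the $y$-level, via its $T_\epsilon,\tau_\epsilon$ construction, precisely because the sign condition at exactly $y=y_0$ is not quite enough; your flagged caveat is that same point). Your reduction of \eqref{eq:shallow:ode-x} to the autonomous inequality $x'\le -a y_0^{-\rho}x^{1+\rho}+bx$ and its solution via $z=x^{-\rho}$ and a linear comparison is the same computation the paper performs by quoting the Bernoulli solution formula, and your conclusion $x(t)^\rho\le A(1-B(t))^{-1}$ is exactly what the paper's proof derives (the exponent is indeed dropped in the displayed statement; the bound used later is the second one, on $x^\rho$), so your exponent bookkeeping matches the paper.

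The genuine gap is the refined bound for $B(t)\ge 0$, which is the estimate actually used in the proof of Lemma \ref{lemma:convergence:perturbed-gradient-flow}, and which your proposal leaves unresolved. The tools you float do not close it: subadditivity $(u+v)^{1/\rho}\le u^{1/\rho}+v^{1/\rho}$ is not the relevant structure (the quantity to bound is the inverse of a sum, not a power of a sum), the case split $\rho\ge1$ versus $\frac12\le\rho<1$ is unnecessary (the step uses no property of $\rho$ at all), and ``termwise'' inversion of $z\ge A^{-1}(1-e^{-b\rho t})+x_0^{-\rho}e^{-b\rho t}$ only yields $x^\rho\le\min\{A(1-e^{-b\rho t})^{-1},\,x_0^\rho e^{b\rho t}\}$, which is not bounded by $A+x_0^\rho e^{-b\rho t}$. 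The paper closes the step with two lines of algebra: write $\frac{A}{1-B(t)}=A+\frac{A}{1-B(t)}B(t)$; when $B(0)\ge0$ the function $t\mapsto A/(1-B(t))$ is monotonically decreasing (since $B(t)=B(0)e^{-b\rho t}$ is nonnegative and decreasing), hence bounded by its value at $t=0$, which equals $x_0^\rho$ because $1-B(0)=\frac{b}{a}(x_0/y_0)^{-\rho}=Ax_0^{-\rho}$; therefore $x(t)^\rho\le A+x_0^\rho B(t)\le A+x_0^\rho e^{-b\rho t}$. Alternatively one can verify directly from your linear bound on $z$ that $\left(A^{-1}(1-u)+x_0^{-\rho}u\right)^{-1}\le A+x_0^\rho u$ for all $u\in[0,1]$ when $x_0^{-\rho}\le A^{-1}$ (clear denominators; the expression is linear in $u$ after dividing by $u$, and the worst case $u=1$ reduces to $Ax_0^{-\rho}\ge0$). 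Some such argument must be supplied; as written, the proposal does not establish the stated refined bound.
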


\begin{proof}

First, we show that $y(t) \le y_0$ for all $t \in T$. To this end, note that condition \eqref{eq:shallow:cond} states that we are above a critical point for the second ODE \eqref{eq:shallow:ode-y}. Indeed, setting $y'(t) = 0$ and thus $y(t) = y_0$ and solving the second ODE (with $=$ instead of $\le$) for $x(t)$, we have
\begin{equation*}
  x(t)
  = \left(\frac{d}{c}\right)^{\frac{2}{2\rho - 1}} y_0.
\end{equation*}
To show that $y(t) \ge y_0$, let $\epsilon \ge 0$ and define
\begin{align*}
  T_\epsilon & = \sup \left\{ t \le T \middle| x(t) \ge \left(\frac{d}{c}\right)^{\frac{2}{2\rho - 1}} y_0 (1+\epsilon) \right\}, &
  \\
  \tau_\epsilon & = \inf \left\{t \le T_\epsilon \middle| y(t) \ge y_0(1+\epsilon) \right\},
\end{align*}
where the definition of $T_\epsilon$ resembles the definition of $T$ up to a safety factor of $1+\epsilon$ and $\tau_\epsilon$ is the smallest time when our hypothesis $y(t) \le y_0$ fails up to a small margin.
Assume that $\tau_\epsilon < T_\epsilon$. Since $2 \rho - 1 \ge 0$, for all $t < \tau_\epsilon$, we have
\[
  x(t)^{2\rho - 1}
  \ge \left(\frac{d}{c}\right)^2 \left[ y_0 (1+\epsilon) \right]^{2 \rho - 1}
  \ge \left(\frac{d}{c}\right)^2 y(t)^{2 \rho - 1},
\]
which upon rearrangement is equivalent to
\[
    - c x^\rho y^{1-\rho} + d \sqrt{x y} \le 0,
\]
so that the differential equation \eqref{eq:shallow:ode-y} yields $y'(t) \le 0$ and hence $y(t) \le y_0$ for all $t < \tau_\epsilon$. On the other hand, for all $t > \tau_\epsilon$ we have $y(t) > y_0 (1+\epsilon)$, which contradicts the continuity of $y$. It follows that $\tau_\epsilon \ge T_\epsilon$ and with $\lim_{\epsilon \to 0} T_\epsilon = T$, we obtain
\begin{align*}
  y(t) & \le y_0, &
  t & < T.
\end{align*}
Next, we show the bounds for $x(t)$. For any fixed function $y$, the function $x$ is bounded by the solution $z$ of the equality case
\begin{align*}
  z' & = - a z^{1+\rho} y^{-\rho} + b z, &
  z(0) & = x_0
\end{align*}
of the first equation \eqref{eq:shallow:ode-x}. This is a Bernoulli differential equation, with solution
\[
  x(t)
  \le z(t)
  = \left[ e^{-b\rho t} \left( a \rho \int_0^t e^{b \rho \tau} y(\tau)^{-\rho} \, d\tau + x_0^{-\rho} \right) \right]^{-\frac{1}{\rho}}.
\]
Since $y(t) \le y_0$, in the relevant time interval this simplifies to
\begin{align*}
  z(t)^{\rho}
  & \le e^{b\rho t} \left( a \rho \int_0^t e^{b \rho \tau} y_0^{-\rho} \, d\tau + x_0^{-\rho} \right)^{-1}
  \\
  & = e^{b\rho t} \left( \frac{a}{b} \left( e^{b \rho t} - 1 \right) y_0^{-\rho} + x_0^{-\rho} \right)^{-1}
  \\
  & = \left( \frac{a}{b} y_0^{-\rho} - \left(\frac{a}{b} y_0^{-\rho} -  x_0^{-\rho}\right) e^{-b \rho t} \right)^{-1}
  \\
  & = \underbrace{\frac{b}{a} y_0^\rho}_{=:A} \left( 1 - \underbrace{\left(1 -  \frac{b}{a} \left(\frac{x_0}{y_0}\right)^{-\rho}\right) e^{-b \rho t}}_{=: B(t)} \right)^{-1},
\end{align*}
which shows the first bound for $x(t)$. We can estimate this further by
\begin{align*}
  z(t)^\rho
  \le \frac{A}{1 - B(t)}
  = \frac{A[1 - B(t)]}{1 - B(t)} + \frac{A B(t)}{1 - B(t)}
  = A + \frac{A}{1 - B(t)} B(t).
\end{align*}
In case $B(t) \ge 0$, the function $A / (1-B(t))$ is monotonically decreasing and thus with $A / (1-B(0)) = x_0^\rho$, we have
\[
  z(t)^\rho
  \le A + \frac{A}{1 - B(0)} B(t)
  = A + x_0^\rho B(t)
  \le A + x_0^\rho e^{- b \rho t},
\]
which shows the second bound for $x(t)$ in the lemma.

\end{proof}

\subsection{Proof of Lemma \ref{lemma:continuity:entk-holder}: NTK Hölder continuity}
\label{sec:ntk-holder}

The proof is technical but elementary. We start with upper bounds and Hölder continuity for simple objects, like hidden layers, and then compose these for derived objects with results for the NTK at the end of the section.

Throughout this section, we use a bar $\bar{\cdot}$ to denote a perturbation. In particular $\pW^\ell$ is a perturbed weight,  
\begin{align*}
  \pf^{\ell+1}(x) & = \pW^\ell n_\ell^{-1/2} \activationp{\pf^\ell(x)}, &
  \pf^1(x) & = \pW^0 V x
\end{align*}
is the neural network with perturbed weights and $\pegp$, $\pedgp$, $\pntk$ and $\pentk$ are the kernels of the perturbed network. The bounds in this section depend on the operator norm of the weight matrices. At initialization, they are bounded $\left\|W^\ell\right\| n_\ell^{-1/2} \lesssim 1$, with high probability. All perturbations of the weights that we need are close $\left\|W^\ell - \pW^\ell\right\| n_\ell^{-1/2} \lesssim 1$ so that we may assume
\begin{align}
   \label{eq:continuity:weights-bounded}
   \left\|W^\ell\right\| n_\ell^{-1/2} & \lesssim 1 \\
   \label{eq:continuity:perturbed-weights-bounded}
   \left\|\pW^\ell\right\| n_\ell^{-1/2} & \lesssim 1
\end{align}
In addition, we consider bounded domains
\begin{equation} \label{eq:continuity:domain-bounded}
  \begin{aligned}
    \|x\| & \lesssim 1 & & \text{for all} & x & \in \dom.
  \end{aligned}
\end{equation}

\begin{lemma} \label{lemma:continuity:f-bounded-lipschitz}
\begin{enumerate}

  Assume that $\|x\| \lesssim  1$.

  \item Assume that $\activation$ satisfies the growth condition \eqref{eq:assumption:activation-growth} and may be different in each layer. Assume the weights are bounded \eqref{eq:continuity:weights-bounded}. Then
  \begin{equation*}
    \left\| f^{\ell}(x) \right\| 
    \lesssim n_0^{1/2} \prod_{k=0}^{\ell-1} \left\|W^k\right\| n_k^{-1/2}.
  \end{equation*}

  \item Assume that $\activation$ satisfies the growth and Lipschitz conditions \eqref{eq:assumption:activation-growth} and \eqref{eq:assumption:activation-lipschitz} and may be different in each layer. Assume the weights and perturbed weights are bounded \eqref{eq:continuity:weights-bounded}, \eqref{eq:continuity:perturbed-weights-bounded}. Then
  \begin{equation*}
    \left\| f^{\ell}(x) - \pf^{\ell}(x) \right\| 
    \lesssim n_0^{1/2} \sum_{k=0}^{\ell-1} \left\|W^k - \pW^k \right\| n_k^{-1/2} \prod_{\substack{j=0 \\ j \ne k}}^{\ell-1} \max \left\{\left\|W^j\right\|, \, \left\|\pW^j\right\|\right\} n_j^{-1/2}.
  \end{equation*}

  \item Assume that $\activation$ has bounded derivative \eqref{eq:assumption:dactivation-bounded} and may be different in each layer. Assume the weights are bounded \eqref{eq:continuity:weights-bounded}. Then
  \begin{equation*}
    \left\| f^{\ell}(x) - f^\ell(\px) \right\| 
    \lesssim n_0^{1/2} \left[\prod_{k=0}^{\ell-1} \left\|W^k\right\| n_k^{-1/2}\right] \|x - \px\|.
  \end{equation*}

\end{enumerate}
  
\end{lemma}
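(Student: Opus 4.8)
The plan is to prove all three bounds by induction on the layer index $\ell$, propagating the estimates through the recursion $f^{\ell+1}(x) = W^\ell n_\ell^{-1/2} \activationp{f^\ell(x)}$. The base case $\ell = 1$ is $f^1(x) = W^0 V x$, where $\|V x\| = \|x\| \lesssim 1$ because $V$ has orthonormal columns, giving $\|f^1(x)\| \lesssim \|W^0\| \|x\| \lesssim n_0^{1/2} \|W^0\| n_0^{-1/2}$ once we insert the factor $n_0^{1/2} n_0^{-1/2} = 1$; the $n_0^{1/2}$ is carried purely to match the stated form. The induction step for part 1 is the submultiplicativity of the operator norm together with the linear growth bound \eqref{eq:assumption:activation-growth}: $\|f^{\ell+1}(x)\| = \|W^\ell n_\ell^{-1/2} \activationp{f^\ell(x)}\| \le \|W^\ell\| n_\ell^{-1/2} \|\activationp{f^\ell(x)}\| \lesssim \|W^\ell\| n_\ell^{-1/2} \|f^\ell(x)\|$, where I use that $\|\activationp{v}\| \le (\sum_i |\activation(v_i)|^2)^{1/2} \lesssim (\sum_i |v_i|^2)^{1/2} = \|v\|$ entrywise from \eqref{eq:assumption:activation-growth}. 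Plugging in the inductive hypothesis closes the product.

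For part 3, the argument is identical in structure but uses that $\activation$ is Lipschitz (which follows from \eqref{eq:assumption:dactivation-bounded} by the mean value theorem, applied entrywise): $\|f^{\ell+1}(x) - f^{\ell+1}(\px)\| \le \|W^\ell\| n_\ell^{-1/2} \|\activationp{f^\ell(x)} - \activationp{f^\ell(\px)}\| \lesssim \|W^\ell\| n_\ell^{-1/2} \|f^\ell(x) - f^\ell(\px)\|$, and the inductive hypothesis contributes the factor $\|x - \px\|$ and the product over the earlier layers. The base case is $\|f^1(x) - f^1(\px)\| = \|W^0 V(x - \px)\| \le \|W^0\| \|x - \px\|$.

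Part 2 is the one requiring a little care: here both the weights and the activation arguments differ, so I would use a telescoping/triangle-inequality decomposition. Write $f^{\ell+1}(x) - \pf^{\ell+1}(x) = (W^\ell - \pW^\ell) n_\ell^{-1/2} \activationp{f^\ell(x)} + \pW^\ell n_\ell^{-1/2}(\activationp{f^\ell(x)} - \activationp{\pf^\ell(x)})$. The first term is bounded by $\|W^\ell - \pW^\ell\| n_\ell^{-1/2} \|f^\ell(x)\|$ using \eqref{eq:assumption:activation-growth} and then part 1 to estimate $\|f^\ell(x)\| \lesssim n_0^{1/2}\prod_{k=0}^{\ell-1}\|W^k\| n_k^{-1/2}$; this is the "$k = \ell$" summand, with the product over $j \ne \ell$ coming out since the factor $\|W^\ell - \pW^\ell\| n_\ell^{-1/2}$ replaces the $\ell$-th. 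The second term is bounded by $\|\pW^\ell\| n_\ell^{-1/2}\|f^\ell(x) - \pf^\ell(x)\|$ using the Lipschitz property, and then the inductive hypothesis is applied; the maxima $\max\{\|W^j\|, \|\pW^j\|\}$ in the inductive bound absorb the extra factor $\|\pW^\ell\| n_\ell^{-1/2} \le \max\{\|W^\ell\|,\|\pW^\ell\|\} n_\ell^{-1/2}$ cleanly. The base case is $f^1(x) - \pf^1(x) = (W^0 - \pW^0)Vx$, giving $\|W^0 - \pW^0\|\|x\|$, which matches the single-term sum at $\ell = 1$. The main obstacle, such as it is, is purely bookkeeping: keeping the product-over-$j\ne k$ structure consistent when one summand is peeled off at each induction step and checking that the boundedness assumptions \eqref{eq:continuity:weights-bounded}--\eqref{eq:continuity:perturbed-weights-bounded} are exactly what is needed to absorb the $\|\pW^\ell\|$ factor into a max; there is no analytic difficulty.
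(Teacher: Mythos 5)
Your proposal is correct, and for parts 1 and 2 it is essentially identical to the paper's proof: the same induction with the orthonormal-columns observation for the base case $f^1 = W^0 V x$, the same use of the growth condition \eqref{eq:assumption:activation-growth} in the induction step, and for part 2 the same splitting $f^{\ell+1}-\pf^{\ell+1} = (W^\ell-\pW^\ell)n_\ell^{-1/2}\activationp{f^\ell} + \pW^\ell n_\ell^{-1/2}[\activationp{f^\ell}-\activationp{\pf^\ell}]$, with the first term controlled by part 1 and the second by Lipschitz continuity \eqref{eq:assumption:activation-lipschitz} plus the induction hypothesis, the maxima absorbing $\|\pW^\ell\|$.

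The only place you deviate is part 3: the paper derives the Lipschitz bound in $x$ from the mean value theorem combined with the uniform bound on $\|Df^\ell(x)\|$ proved in Lemma \ref{lemma:continuity:df-bounded}, whereas you run a direct induction using the entrywise Lipschitz property of $\activation$ (which indeed follows from \eqref{eq:assumption:dactivation-bounded}), exactly parallel to your part 1. Both arguments are valid and of the same difficulty; your route avoids invoking the auxiliary derivative lemma, while the paper's route reuses a lemma it needs anyway for other purposes. No gaps.
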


\begin{proof}

\begin{enumerate}

  \item For $\ell=0$, we have
  \[
    \left\|f^1(x)\right\| 
    = \left\|W^0 V x\right\| 
    \le n_0^{1/2} \left\|W^0\right\| n_0^{-1/2},
  \]
  where in the last step we have used that $V$ has orthonormal columns and $\|x\| \lesssim 1$. For $\ell > 0$, we have
  \begin{multline*}
    \left\|f^{\ell+1}\right\|
    = \left\|W^{\ell} n_\ell^{-1/2} \activationp{f^\ell}\right\|
    \le \left\|W^{\ell}\right\| n_\ell^{-1/2} \left\|\activationp{f^\ell}\right\|
    \stackrel{\eqref{eq:assumption:activation-growth}}{\lesssim} \left\|W^{\ell}\right\| n_\ell^{-1/2} \left\|f^\ell\right\|
    \\
    \stackrel{\text{\tiny induction}}{\lesssim} \left\|W^{\ell}\right\| n_\ell^{-1/2} n_0^{1/2} \prod_{k=0}^{\ell-1} \left\|W^k\right\| n_k^{-1/2}
    = n_0^{1/2} \prod_{k=0}^\ell \left\|W^k\right\| n_k^{-1/2},
  \end{multline*}
  where in the first step we have used the definition of $f^{\ell+1}$, in the third the growth condition and in the fourth the induction hypothesis.

  \item For $\ell=0$ we have
  \[
    \left\|f^1-\pf^1\right\|
    = \left\| [W^0 - \pW^0] V x \right\|
    = n_0^{1/2} \left\| W^0 - \pW^0 \right\| n_0^{-1/2},
  \]
  where in the last step we have used that $V$ has orthonormal columns and $\|x\| \lesssim 1$. For $\ell > 0$, we have
  \begin{align*}
    \left\|f^{\ell+1}-\pf^{\ell+1}\right\|
    & = \left\|W^{\ell} n_\ell^{-1/2} \activationp{f^\ell} - \pW^\ell n_\ell^{-1/2} \activationp{\pf^\ell}\right\| 
    \\
    & \le \left\|W^{\ell} - \pW^\ell\right\| n_\ell^{-1/2} \left\|\activationp{f^\ell}\right\| \\
    & \quad + \left\|\pW^{\ell}\right\| n_\ell^{-1/2} \left\|\activationp{f^\ell} - \activationp{\pf^\ell}\right\| \\
    & =: I + II
  \end{align*}
  For the first term, the growth condition \eqref{eq:assumption:activation-growth} implies $\left\|\activationp{f^\ell}\right\| \lesssim \left\|f^\ell\right\|$ and thus the first part of the Lemma yields
  \[
    I
    \lesssim \left\|W^{\ell} - \pW^\ell\right\| n_\ell^{-1/2} n_0^{1/2} \prod_{k=0}^{\ell-1} \left\|W^k\right\| n_k^{-1/2}.
  \]
  For the second term, we have by Lipschitz continuity \eqref{eq:assumption:activation-lipschitz} and induction
  \begin{multline*}
    II
    = \left\|\pW^{\ell}\right\| n_\ell^{-1/2} \left\|\activationp{f^\ell} - \activationp{\pf^\ell}\right\|
    \lesssim \left\|\pW^{\ell}\right\| n_\ell^{-1/2} \left\|f^\ell - \pf^\ell\right\|
    \\
    \lesssim n_0^{1/2} \sum_{k=0}^{\ell-1} \left\|W^k - \pW^k \right\| n_k^{-1/2} \prod_{\substack{j=0 \\ j \ne k}}^\ell \max \left\{\left\|W^j\right\|, \, \left\|W^j\right\|\right\} n_j^{-1/2}.
  \end{multline*}
  By $I$ and $II$ we obtain
  \begin{equation*}
    \left\| f^{\ell+1} - \pf^{\ell+1}\right\| 
    \lesssim n_0^{1/2} \sum_{k=0}^\ell \left\|W^k - \pW^k \right\| n_k^{-1/2} \prod_{\substack{j=0 \\ j \ne k}}^\ell \max \left\{\left\|W^j\right\|, \, \left\|W^j\right\|\right\} n_j^{-1/2},
  \end{equation*}
  which shows the lemma.

  \item Follows from the mean value theorem because by Lemma \ref{lemma:continuity:df-bounded} below the first derivatives are uniformly bounded.

\end{enumerate}
  
\end{proof}

\begin{lemma} \label{lemma:continuity:df-bounded}

  Assume that $\activation$ has bounded derivative \eqref{eq:assumption:dactivation-bounded} and may be different in each layer. Assume the weights are bounded \eqref{eq:continuity:weights-bounded}. Then
  \begin{equation*}
    \left\| D f^{\ell}(x) \right\| 
    \lesssim n_0^{1/2} \prod_{k=0}^{\ell-1} \left\|W^k\right\| n_k^{-1/2}.
  \end{equation*}

\end{lemma}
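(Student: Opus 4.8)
The statement is the derivative analogue of part 1 of Lemma \ref{lemma:continuity:f-bounded-lipschitz}, and I would prove it by the same induction on $\ell$, now applied to the Jacobian $Df^\ell(x) \in \real^{n_\ell \times d}$ with respect to the input $x$. The starting point is to differentiate the recursion \eqref{eq:setup:network} by the chain rule. For the first layer $f^1(x) = W^0 V x$ this gives $Df^1(x) = W^0 V$, and for $\ell \ge 1$,
\[
  Df^{\ell+1}(x) = n_\ell^{-1/2}\, W^\ell \, \diag\!\left(\dactivationp{f^\ell(x)}\right) Df^\ell(x),
\]
where $\diag(\cdot)$ turns the vector $\dactivationp{f^\ell(x)} \in \real^{n_\ell}$ into a diagonal matrix. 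This is the only identity needed; everything else is norm bookkeeping.

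\textbf{Key steps.} First, the base case: since $V$ has orthonormal columns, $\|V\| = 1$, so $\|Df^1(x)\| \le \|W^0\| = n_0^{1/2}\|W^0\| n_0^{-1/2}$, which is exactly the claimed bound for $\ell = 1$. Second, the inductive step: take operator norms in the chain-rule identity and use submultiplicativity to get
\[
  \|Df^{\ell+1}(x)\| \le \|W^\ell\| n_\ell^{-1/2} \left\|\diag\!\left(\dactivationp{f^\ell(x)}\right)\right\| \|Df^\ell(x)\|.
\]
The operator norm of a diagonal matrix is the $\ell_\infty$ norm of its diagonal, and by the bounded-derivative assumption \eqref{eq:assumption:dactivation-bounded}, $\left\|\diag(\dactivationp{f^\ell(x)})\right\| = \max_r |\dactivationp{f^\ell_r(x)}| \lesssim 1$, uniformly in $x$ and independently of the width. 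Plugging in the induction hypothesis $\|Df^\ell(x)\| \lesssim n_0^{1/2}\prod_{k=0}^{\ell-1}\|W^k\| n_k^{-1/2}$ then yields $\|Df^{\ell+1}(x)\| \lesssim n_0^{1/2}\prod_{k=0}^{\ell}\|W^k\| n_k^{-1/2}$, completing the induction.

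\textbf{Main obstacle.} There is essentially no obstacle here: unlike the norm bound for $f^\ell$ itself, which relied on the linear-growth bound \eqref{eq:assumption:activation-growth} and accumulated a factor per layer, the Jacobian bound only needs the uniform bound \eqref{eq:assumption:dactivation-bounded} on $\dactivation$, so the estimate is clean and the constant absorbed in $\lesssim$ grows only with $L$ (one factor per layer). The only point requiring a moment's care is that the hidden-layer forward values $f^\ell(x)$ appear inside $\dactivation$; but since we only need $|\dactivation| \lesssim 1$ and not its argument, the potentially large size of $f^\ell(x)$ is irrelevant, and no appeal to Lemma \ref{lemma:continuity:f-bounded-lipschitz} is needed. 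This lemma is in turn what legitimizes the mean-value-theorem argument used for part 3 of Lemma \ref{lemma:continuity:f-bounded-lipschitz}.
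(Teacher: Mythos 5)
Your proof is correct and follows essentially the same route as the paper: the same induction on $\ell$, with the base case using the orthonormality of $V$ and the inductive step bounding $\left\|W^\ell\right\| n_\ell^{-1/2}$ times the factor coming from $\dactivation$, which the paper writes as $\left\|\dactivationp{f^\ell} \odot D f^\ell\right\| \lesssim \left\|D f^\ell\right\|$ and you write equivalently via $\operatorname{diag}(\dactivationp{f^\ell})$ with operator norm $\lesssim 1$. No gap; your closing observations (only \eqref{eq:assumption:dactivation-bounded} is needed, and this lemma underpins the mean-value argument in part 3 of Lemma \ref{lemma:continuity:f-bounded-lipschitz}) match the paper.
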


\begin{proof}

For $\ell=0$, we have
\[
  \left\|D f^1(x)\right\| 
  = \left\|W^0 V D x\right\| 
  \le n_0^{1/2} \left\|W^0\right\| n_0^{-1/2},
\]
where in the last step we have used that $V$ has orthonormal columns and $\|D x\| = \|I\| = 1$. For $\ell > 0$, we have
\begin{align*}
  \left\|D f^{\ell+1}\right\|
  & = \left\|W^{\ell} n_\ell^{-1/2} D \activationp{f^\ell}\right\|
  \\
  & = \left\|W^{\ell} n_\ell^{-1/2}\right\| \left\|D \activationp{f^\ell}\right\|
  \le \left\|W^{\ell}\right\| n_\ell^{-1/2} \left\|\dactivationp{f^\ell} \odot D f^\ell \right\|
  \\
  & \stackrel{\eqref{eq:assumption:dactivation-bounded}}{\lesssim} \left\|W^{\ell}\right\| n_\ell^{-1/2} \left\|D f^\ell\right\|
  \stackrel{\text{\tiny induction}}{\lesssim} \left\|W^{\ell}\right\| n_\ell^{-1/2} n_0^{1/2} \prod_{k=0}^{\ell-1} \left\|W^k\right\| n_k^{-1/2}
  \\
  & = n_0^{1/2} \prod_{k=0}^\ell \left\|W^k\right\| n_k^{-1/2},
\end{align*}
where in the first step we have used the definition of $f^{\ell+1}$, in the fourth the boundedness of $\dactivation$ and in the fifth the induction hypothesis.

\end{proof}

\begin{remark}

  An argument analogous to Lemma \ref{lemma:continuity:df-bounded} does not show that the derivative is Lipschitz or similarly second derivatives $\left\|\partial_{x_i} \partial{x_j} f^\ell \right\|$ are bounded. Indeed, the argument uses that 
  \[
    \left\| \partial_{x_i} \activationp{f^\ell}\right\| 
    = \left\|\dactivationp{f^\ell} \odot \partial_{x_i} f^\ell \right\|
    \le \left\|\dactivationp{f^\ell} \right\|_\infty \left\|\partial_{x_i} f^\ell \right\|,
  \]
  where we bound the first factor by the upper bound of $\dactivation$ and the second by induction. However, higher derivatives produce products
  \begin{align*}
    \left\| \partial_{x_i}  \partial_{x_j}\activationp{f^\ell}\right\| 
    & = \left\|\dactivationp{f^\ell} \odot \partial_{x_i} \partial_{x_i} f^\ell + \activation^{(2)}\left(f^\ell\right) \odot \partial_{x_i} f^\ell \odot \partial_{x_j} f^\ell \right\|
    \\
    & \le \left\|\dactivationp{f^\ell} \right\|_\infty \left\|\partial_{x_i} \partial_{x_j}  f^\ell \right\| + \left\|\activation^{(2)}\left(f^\ell\right)\right\|_\infty \left\|\partial_{x_i} f^\ell \odot \partial_{x_j} f^\ell \right\|
  \end{align*}
  With bounded weights \eqref{eq:continuity:weights-bounded} the hidden layers are of size $\left\|\partial_{x_i} f^\ell \right\| \lesssim n_0^{1/2}$ but a naive estimate of their product by Cauchy Schwarz and embedding $\left\|\partial_{x_i} f^\ell \odot \partial_{x_j} f^\ell \right\| \le \|\partial_{x_i} f^\ell\|_{\ell_4} \|\partial_{x_i} f^\ell\|_{\ell_4} \le \|\partial_{x_i} f^\ell\| \|\partial_{x_i} f^\ell\| \lesssim n_0$ is much larger. 

\end{remark}

Given the difficulties in the last remark, we can still show that $f^\ell$ is Hölder continuous with respect to the weights in a Hölder norm with respect to $x$.

\begin{lemma} \label{lemma:continuity:nn-holder}
  Assume that $\activation$ satisfies the growth and Lipschitz conditions \eqref{eq:assumption:activation-growth}, \eqref{eq:assumption:activation-lipschitz} and may be different in each layer. Assume the weights, perturbed weights and domain are bounded \eqref{eq:continuity:weights-bounded}, \eqref{eq:continuity:perturbed-weights-bounded}, \eqref{eq:continuity:domain-bounded}. Then for $0 < \alpha < 1$
  \begin{align*}
    \left\|\activationp{f^\ell}\right\|_\CHN{\alpha}
    & \lesssim n_0^{1/2}.
    \\
    \left\|\activationp{\pf^\ell}\right\|_\CHN{\alpha}
    & \lesssim n_0^{1/2}.
    \\
    \left\|\activationp{f^\ell} - \activationp{\pf^\ell}\right\|_\CHN{\alpha}
    & \lesssim n_0^{1/2} \left[ \sum_{k=0}^{\ell-1} \left\|W^k - \pW^k \right\| n_k^{-1/2} \right]^{1-\alpha}.
  \end{align*}
\end{lemma}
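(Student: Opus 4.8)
The plan is to reduce all three estimates to one elementary interpolation fact and then feed it the bounds already recorded in Lemma \ref{lemma:continuity:f-bounded-lipschitz}. The fact is: if $g\colon\dom\to\real^{n}$ has $\sup_x\|g(x)\|\le a$ and is Lipschitz with constant $\le b$, then, splitting $\|g(x)-g(\px)\|=\|g(x)-g(\px)\|^{1-\alpha}\,\|g(x)-g(\px)\|^{\alpha}\le(2a)^{1-\alpha}(b\|x-\px\|)^{\alpha}$, one gets
\[
  \|g\|_\CHN{\alpha}
  = \sup_{x}\|g(x)\| + \sup_{x\ne\px}\frac{\|g(x)-g(\px)\|}{\|x-\px\|^{\alpha}}
  \;\lesssim\; a + a^{1-\alpha}b^{\alpha}.
\]
So in each of the three cases it suffices to produce a sup-norm bound $a$ and a Lipschitz bound $b$.

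First I would treat $\activationp{f^\ell}$, with $\activationp{\pf^\ell}$ handled verbatim. The growth condition \eqref{eq:assumption:activation-growth} together with Lemma \ref{lemma:continuity:f-bounded-lipschitz}(1) — whose prefactor $\prod_k\|W^k\|n_k^{-1/2}$ is $\lesssim 1$ by \eqref{eq:continuity:weights-bounded} — gives $\|\activationp{f^\ell(x)}\|\lesssim\|f^\ell(x)\|\lesssim n_0^{1/2}$, i.e. $a\lesssim n_0^{1/2}$. The Lipschitz condition \eqref{eq:assumption:activation-lipschitz} with Lemma \ref{lemma:continuity:f-bounded-lipschitz}(3) gives $\|\activationp{f^\ell(x)}-\activationp{f^\ell(\px)}\|\lesssim\|f^\ell(x)-f^\ell(\px)\|\lesssim n_0^{1/2}\|x-\px\|$, i.e. $b\lesssim n_0^{1/2}$. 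The interpolation fact then yields $\|\activationp{f^\ell}\|_\CHN{\alpha}\lesssim n_0^{1/2}$; equivalently one can absorb the leftover $\|x-\px\|^{1-\alpha}\lesssim1$ via the domain bound \eqref{eq:continuity:domain-bounded}.

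For the difference bound, write $g:=\activationp{f^\ell}-\activationp{\pf^\ell}$ and $\delta:=\sum_{k=0}^{\ell-1}\|W^k-\pW^k\|n_k^{-1/2}$, and note $\delta\lesssim1$ since each summand is $\lesssim1$ by \eqref{eq:continuity:weights-bounded}, \eqref{eq:continuity:perturbed-weights-bounded} and the depth is fixed. Lipschitz continuity \eqref{eq:assumption:activation-lipschitz} with Lemma \ref{lemma:continuity:f-bounded-lipschitz}(2) (its weight products again $\lesssim1$) gives $\|g(x)\|\lesssim\|f^\ell(x)-\pf^\ell(x)\|\lesssim n_0^{1/2}\delta$, so $a\lesssim n_0^{1/2}\delta$; and, exactly as in the previous paragraph, $\activationp{f^\ell}$ and $\activationp{\pf^\ell}$ — hence $g$ — are Lipschitz with constant $\lesssim n_0^{1/2}$, so $b\lesssim n_0^{1/2}$. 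Substituting into the interpolation fact,
\[
  \|g\|_\CHN{\alpha}
  \;\lesssim\; n_0^{1/2}\delta + (n_0^{1/2}\delta)^{1-\alpha}(n_0^{1/2})^{\alpha}
  \;=\; n_0^{1/2}\bigl(\delta+\delta^{1-\alpha}\bigr)
  \;\lesssim\; n_0^{1/2}\delta^{1-\alpha},
\]
using $\delta\lesssim1$ in the last step, which is the claimed estimate.

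The only genuine obstacle — and the reason the exponent $1-\alpha$ appears — is the imbalance of the two available bounds for $g$: its sup norm is really of order $n_0^{1/2}\delta$, but the best Lipschitz bound extractable from the hypotheses is of order $n_0^{1/2}$ with \emph{no} gain in $\delta$, so one is forced to interpolate rather than propagate a small Lipschitz constant through the layers. A Lipschitz bound on $g$ of order $n_0^{1/2}\delta$ would require controlling $\dactivationp{f^\ell}-\dactivationp{\pf^\ell}$, i.e. a $C^{1,1}$-type property of $\activation$ — precisely the difficulty flagged in the remark after Lemma \ref{lemma:continuity:df-bounded}, which is deliberately not assumed here. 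Everything else is bookkeeping: verifying that the weight-product prefactors arising through Lemma \ref{lemma:continuity:f-bounded-lipschitz} are $\lesssim1$ and that $\delta\lesssim1$, so that powers of $\delta$ may be freely decreased.
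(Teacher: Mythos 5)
Your proposal is correct and follows essentially the same route as the paper's proof: bound $\activationp{f^\ell}$, $\activationp{\pf^\ell}$ and their difference in $\CN{0}$ and $\CHN{1}$ via the growth/Lipschitz conditions and Lemma \ref{lemma:continuity:f-bounded-lipschitz}, then interpolate (the paper invokes the interpolation inequality of Lemma \ref{lemma:supplements:holder-properties}, bounding the Lipschitz seminorm of the difference by the maximum of the two individual Lipschitz seminorms, which is the same $a^{1-\alpha}b^{\alpha}$ mechanism you re-derive inline). Your extra step of absorbing $\delta + \delta^{1-\alpha}\lesssim\delta^{1-\alpha}$ via $\delta\lesssim 1$ is a harmless cosmetic difference.
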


\begin{proof}

By the growth condition \eqref{eq:assumption:activation-growth} and the Lipschitz continuity \eqref{eq:assumption:activation-lipschitz} of the activation function, we have
\begin{align*}
  \left\|\activationp{f^\ell}\right\|_\CN{0} & \lesssim \left\|f^\ell\right\|_\CN{0} , &
  \left\|\activationp{f^\ell}\right\|_\CHN{1} & \lesssim \left\|f^\ell\right\|_\CHN{1}.
\end{align*}
Thus the interpolation inequality in Lemma \ref{lemma:supplements:holder-properties} implies
\begin{align*}
  \left\|\activationp{f^\ell}\right\|_\CHN{\alpha}
  & \lesssim \left\|\activationp{f^\ell}\right\|_\CN{0}^{1-\alpha} \left\|\activationp{f^\ell}\right\|_\CHN{1}^\alpha
  \lesssim \left\|f^\ell\right\|_\CN{0}^{1-\alpha} \left\|f^\ell\right\|_\CHN{1}^\alpha
  \lesssim n_0^{1/2},
\end{align*}
where in the last step we have used the bounds form Lemma \ref{lemma:continuity:f-bounded-lipschitz} together with $\left\|W^\ell\right\| n_\ell^{-1/2} \lesssim 1$ and $\left\|\pW^\ell\right\| n_\ell^{-1/2} \lesssim 1$ from Assumptions \eqref{eq:continuity:weights-bounded}, \eqref{eq:continuity:perturbed-weights-bounded}. Likewise, by the interpolation inequality in Lemma \ref{lemma:supplements:holder-properties} we have
\begin{align*}
  \left\|\activationp{f^\ell} - \activationp{\pf^\ell}\right\|_\CHN{\alpha}
  & \lesssim \left\|\activationp{f^\ell} - \activationp{\pf^\ell}\right\|_\CN{0}^{1-\alpha} \left\|\activationp{f^\ell} - \activationp{\pf^\ell}\right\|_\CHN{1}^\alpha
  \\
  & \lesssim \left\|\activationp{f^\ell} - \activationp{\pf^\ell}\right\|_\CN{0}^{1-\alpha} \max \left\{ \left\|\activationp{f^\ell}\right\|_\CHN{1}^\alpha \left\|\activationp{\pf^\ell}\right\|_\CHN{1}^\alpha \right\}.
  \\
  & \lesssim \left\|f^\ell - \pf^\ell\right\|_\CN{0}^{1-\alpha} \max \left\{ \left\|f^\ell\right\|_\CHN{1}^\alpha \left\|\pf^\ell\right\|_\CHN{1}^\alpha \right\}.
  \\
  & \lesssim n_0^{1/2} \left[ \sum_{k=0}^{\ell-1} \left\|W^k - \pW^k \right\| n_k^{-1/2} \right]^{1-\alpha},
\end{align*}
where in the third step we have used that $\activation$ is Lipschitz and in the last step the bounds from Lemma \ref{lemma:continuity:f-bounded-lipschitz} together with the bounds $\left\|W^\ell\right\| n_\ell^{-1/2} \lesssim 1$ and $\left\|\pW^\ell\right\| n_\ell^{-1/2} \lesssim 1$ from Assumptions \eqref{eq:continuity:weights-bounded}, \eqref{eq:continuity:perturbed-weights-bounded}.

\end{proof}

\begin{lemma} \label{lemma:continuity:egp-holder}
  Assume that $\activation$ satisfies the growth and Lipschitz conditions \eqref{eq:assumption:activation-growth}, \eqref{eq:assumption:activation-lipschitz} and may be different in each layer. Assume the weights, perturbed weights and domain are bounded \eqref{eq:continuity:weights-bounded}, \eqref{eq:continuity:perturbed-weights-bounded}, \eqref{eq:continuity:domain-bounded}. Then for $0 < \alpha, \beta < 1$
  \begin{align*}
    \left\|\egp^\ell\right\|_\CHHN{\alpha}{\beta} & \lesssim \frac{n_0}{n_\ell},
    \\
    \left\|\pegp^\ell\right\|_\CHHN{\alpha}{\beta} & \lesssim \frac{n_0}{n_\ell},
    \\
    \left\|\egp^\ell - \pegp^\ell\right\|_\CHHN{\alpha}{\alpha}
    & \lesssim \frac{n_0}{n_\ell} \left[ \sum_{k=0}^{\ell-1} \left\|W^k - \pW^k \right\| n_k^{-1/2} \right]^{1-\alpha}.
  \end{align*}
\end{lemma}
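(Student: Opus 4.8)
The plan is to exploit that $\egp^\ell$ is, up to the scalar $1/n_\ell$, a bilinear form in its two arguments evaluated at the single $\ell_2$-valued function $\activationp{f^\ell}$, and to reduce the whole lemma to one elementary property of the mixed Hölder spaces. Write $g := \activationp{f^\ell} \colon \dom \to \real^{n_\ell}$ and $\bar g := \activationp{\pf^\ell}$, so that $\egp^\ell(x,y) = \tfrac{1}{n_\ell}\dualp{g(x), g(y)}$ and $\pegp^\ell(x,y) = \tfrac{1}{n_\ell}\dualp{\bar g(x), \bar g(y)}$. The property I would use (it belongs to the basic facts of Section \ref{sec:supplements:holder-spaces}, and if not stated there it is a one-line computation) is: for $g, h \colon \dom \to \ell_2$,
\[
  \bigl\|(x,y)\mapsto \dualp{g(x), h(y)}\bigr\|_\CHHN{\alpha}{\beta} \lesssim \|g\|_\CHN{\alpha}\,\|h\|_\CHN{\beta}.
\]
This holds because the mixed difference operators $\Delta_{x,h_x}^s$ and $\Delta_{y,h_y}^t$ (with $s \in \{0,\alpha\}$, $t \in \{0,\beta\}$) act on disjoint variables and the inner product is bilinear, so $\Delta_{x,h_x}^s\Delta_{y,h_y}^t\dualp{g(x),h(y)} = \dualp{\Delta_{x,h_x}^s g(x),\, \Delta_{y,h_y}^t h(y)}$; Cauchy--Schwarz in $\ell_2$ followed by taking suprema then gives the bound, since $\|\cdot\|_\CHN{\gamma}$ dominates both the sup norm and the $\gamma$-Hölder seminorm.

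With this in hand the first two estimates follow by taking $g=h=\activationp{f^\ell}$ (resp.\ $\activationp{\pf^\ell}$) and inserting $\|\activationp{f^\ell}\|_\CHN{\gamma}\lesssim n_0^{1/2}$ for $\gamma\in\{\alpha,\beta\}$ from Lemma \ref{lemma:continuity:nn-holder}:
\[
  \|\egp^\ell\|_\CHHN{\alpha}{\beta} \le \tfrac{1}{n_\ell}\|\activationp{f^\ell}\|_\CHN{\alpha}\|\activationp{f^\ell}\|_\CHN{\beta} \lesssim \tfrac{1}{n_\ell}\,n_0^{1/2}\,n_0^{1/2} = \tfrac{n_0}{n_\ell},
\]
and identically for $\pegp^\ell$. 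For the difference I would telescope inside the inner product,
\[
  n_\ell\bigl(\egp^\ell(x,y)-\pegp^\ell(x,y)\bigr) = \dualp{g(x)-\bar g(x),\, g(y)} + \dualp{\bar g(x),\, g(y)-\bar g(y)},
\]
apply the bilinear mixed Hölder estimate with $\alpha=\beta$ to each summand to get $\|\egp^\ell-\pegp^\ell\|_\CHHN{\alpha}{\alpha} \lesssim \tfrac{1}{n_\ell}\|g-\bar g\|_\CHN{\alpha}\bigl(\|g\|_\CHN{\alpha}+\|\bar g\|_\CHN{\alpha}\bigr)$, and finally substitute $\|g\|_\CHN{\alpha},\|\bar g\|_\CHN{\alpha}\lesssim n_0^{1/2}$ together with $\|g-\bar g\|_\CHN{\alpha}\lesssim n_0^{1/2}\bigl[\sum_{k=0}^{\ell-1}\|W^k-\pW^k\|n_k^{-1/2}\bigr]^{1-\alpha}$ from Lemma \ref{lemma:continuity:nn-holder}; this produces exactly the claimed bound $\tfrac{n_0}{n_\ell}\bigl[\sum_{k=0}^{\ell-1}\|W^k-\pW^k\|n_k^{-1/2}\bigr]^{1-\alpha}$.

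Essentially all the work is already done: the genuinely delicate part --- bounding $f^\ell$ and its weight-perturbations in Hölder norm \emph{without} access to second-derivative bounds (cf.\ the remark after Lemma \ref{lemma:continuity:df-bounded}) --- was handled in Lemmas \ref{lemma:continuity:f-bounded-lipschitz}--\ref{lemma:continuity:nn-holder}. The only point that requires a moment's care is verifying the separation-of-variables identity for the mixed difference operator across all four pairs $(s,t)\in\{0,\alpha\}\times\{0,\beta\}$, including the normalization factors $\|h_x\|^{-\alpha}$, $\|h_y\|^{-\beta}$ built into the operators, and checking that the mixed Hölder norm is precisely the supremum over these four combinations --- both are immediate from the definitions in Section \ref{sec:supplements:holder-spaces}, so I do not anticipate any real obstacle.
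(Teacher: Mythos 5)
Your proposal is correct and follows essentially the same route as the paper: the bilinear mixed Hölder estimate you describe is exactly Item \ref{item:dot-product:lemma:supplements:holder-properties} of Lemma \ref{lemma:supplements:holder-properties}, and the paper likewise combines it with the telescoping decomposition of the difference and the bounds from Lemma \ref{lemma:continuity:nn-holder}. No gaps.
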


\begin{proof}

\newcommand{\fx}{f}
\newcommand{\pfx}{\pf}
\newcommand{\fy}{\tilde{f}}
\newcommand{\pfy}{\tilde{\pf}}
Throughout the proof, we abbreviate
\begin{align*}
  \fx^\ell & = f^\ell(x), &
  \pfx^\ell & = \pf^\ell(x), &
  \fy^\ell & = f^\ell(y), &
  \pfy^\ell & = \pf^\ell(x), &
\end{align*}
for two independent variables $x$ and $y$. Then by definition \eqref{eq:def-gp} of $\egp^\ell$
\begin{align*}
  \left\|\egp^\ell\right\|_\CHHN{\alpha}{\beta}
  & = \frac{1}{n_\ell} \left\|\activationp{\fx^\ell}^T \activationp{\fy^\ell}\right\|_\CHHN{\alpha}{\beta}
  & \le \frac{1}{n_\ell} \left\|\activationp{\fx^\ell}\right\|_\CHN{\alpha} \left\|\activationp{\fy^\ell}\right\|_\CHN{\beta}
  & \lesssim \frac{n_0}{n_\ell},
\end{align*}
where in the second step we have used the product identity Item \ref{item:dot-product:lemma:supplements:holder-properties} in Lemma \ref{lemma:supplements:holder-properties} and in the last step Lemma \ref{lemma:continuity:nn-holder}. The bound for $\left\|\pegp^\ell\right\|_\CHHN{\alpha}{\beta}$ follows analogously. Likewise for $\alpha = \beta$
\begin{align*}
  \left\|\egp^\ell - \pegp^\ell\right\|_\CHHN{\alpha}{\alpha}
  & = \frac{1}{n_\ell} \left\|\activationp{\fx^\ell}^T \activationp{\fy^\ell} - \activationp{\pfx^\ell}^T \activationp{\pfy^\ell}\right\|_\CHHN{\alpha}{\alpha}
  \\
  & = \frac{1}{n_\ell} \left\|\left[ \activationp{\fx^\ell} - \activationp{\pfx^\ell} \right]^T \activationp{\fy^\ell} - \activationp{\pfx^\ell}^T \left[ \activationp{\fy^\ell} - \activationp{\pfy^\ell} \right]\right\|_\CHHN{\alpha}{\alpha}
  \\
  & \le \frac{1}{n_\ell} \left\|\left[ \activationp{\fx^\ell} - \activationp{\pfx^\ell} \right]^T \activationp{\fy^\ell}\right\|_\CHHN{\alpha}{\alpha} + \left\| \activationp{\pfx^\ell}^T \left[ \activationp{\fy^\ell} - \activationp{\pfy^\ell} \right]\right\|_\CHHN{\alpha}{\alpha}
  \\
  & = \frac{2}{n_\ell} \left\|\left[ \activationp{\fx^\ell} - \activationp{\pfx^\ell} \right]^T \activationp{\fy^\ell}\right\|_\CHHN{\alpha}{\alpha},
\end{align*}
where in the last step we have used symmetry in $x$ and $y$. Thus, by the product identity Item \ref{item:dot-product:lemma:supplements:holder-properties} in Lemma \ref{lemma:supplements:holder-properties}, we obtain
\begin{align*}
  \left\|\egp^\ell - \pegp^\ell\right\|_\CHHN{\alpha}{\alpha}
  & \le \frac{2}{n_\ell} \left\|\activationp{\fx^\ell} - \activationp{\pfx^\ell}\right\|_\CHN{\alpha} \left\|\activationp{\fy^\ell}\right\|_\CHN{\alpha}
  \\
  & \lesssim \frac{n_0}{n_\ell} \left[ \sum_{k=0}^{\ell-1} \left\|W^k - \pW^k \right\| n_k^{-1/2} \right]^{1-\alpha},
\end{align*}
where in the last step we have used Lemma \ref{lemma:continuity:nn-holder}.

\end{proof}

\begin{lemma}[Lemma \ref{lemma:continuity:entk-holder} restated form overview]
  \LemmaEntkHolder
\end{lemma}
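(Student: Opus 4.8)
The plan is to reduce everything to the factorization $\entk = \edgp^L\egp^{L-1}$ from Lemma \ref{lemma:ntk:entk-as-gp} (and likewise $\pentk = \pedgp^L\pegp^{L-1}$) together with the mixed-H\"older bounds for the empirical Gaussian-process kernels already established in Lemma \ref{lemma:continuity:egp-holder}. The one preliminary point is that $\edgp^L$ is built from $\dactivation$ rather than $\activation$ in its outermost application. However, since the hypothesis of the present lemma requires $\dactivation$ to satisfy the same linear growth \eqref{eq:assumption:activation-growth} and Lipschitz bound \eqref{eq:assumption:activation-lipschitz} as a generic activation, and since the proofs of Lemmas \ref{lemma:continuity:nn-holder} and \ref{lemma:continuity:egp-holder} use only growth, Lipschitz continuity, and the layer estimates of Lemma \ref{lemma:continuity:f-bounded-lipschitz} (never differentiability of the outer nonlinearity), those arguments carry over verbatim with $\dactivation$ in place of the outer $\activation$. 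This yields $\|\dactivationp{f^L}\|_\CHN{\alpha}\lesssim n_0^{1/2}$, hence the analogues $\|\edgp^L\|_\CHHN{\alpha}{\alpha}\lesssim n_0/n_L$, $\|\pedgp^L\|_\CHHN{\alpha}{\alpha}\lesssim n_0/n_L$, and $\|\edgp^L-\pedgp^L\|_\CHHN{\alpha}{\alpha}\lesssim (n_0/n_L)\big[\sum_{k=0}^{L-1}\|W^k-\pW^k\|n_k^{-1/2}\big]^{1-\alpha}$.

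Given these, the first two claimed bounds follow by applying the multiplicativity of the mixed H\"older norm (the two-variable product estimate recorded in Lemma \ref{lemma:supplements:holder-properties}, the companion of the separable product identity used in the proof of Lemma \ref{lemma:continuity:egp-holder}): $\|\entk\|_\CHHN{\alpha}{\alpha}\le\|\edgp^L\|_\CHHN{\alpha}{\alpha}\,\|\egp^{L-1}\|_\CHHN{\alpha}{\alpha}\lesssim (n_0/n_L)(n_0/n_{L-1})\lesssim 1$, using the width balance $n_L\sim n_{L-1}\sim n_0$; the estimate for $\pentk$ is identical. For the difference I would split $\entk-\pentk = (\edgp^L-\pedgp^L)\egp^{L-1} + \pedgp^L(\egp^{L-1}-\pegp^{L-1})$, apply the same product estimate to each summand, and insert the bounds above together with those of Lemma \ref{lemma:continuity:egp-holder}. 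Each summand is then $\lesssim \big(n_0^2/(n_Ln_{L-1})\big)\big[\sum_{k=0}^{L-1}\|W^k-\pW^k\|n_k^{-1/2}\big]^{1-\alpha}$ (the sum $\sum_{k=0}^{L-2}$ coming from $\egp^{L-1}-\pegp^{L-1}$ is bounded by $\sum_{k=0}^{L-1}$), and $n_0^2/(n_Ln_{L-1})\sim n_0/n_L$ since $n_{L-1}\sim n_0$, which is exactly the asserted bound.

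The only genuinely nontrivial checks, as opposed to bookkeeping, are: (i) that the $\dactivation$-variants of Lemmas \ref{lemma:continuity:nn-holder} and \ref{lemma:continuity:egp-holder} really go through despite $\dactivation$ being assumed merely Lipschitz with linear growth rather than $C^1$ --- they do, because those proofs interpolate between a $\CN{0}$ bound (from growth) and a $\CHN{1}$ bound (from Lipschitz continuity), neither of which needs smoothness of the outer function; and (ii) that the product inequality $\|fg\|_\CHHN{\alpha}{\alpha}\lesssim\|f\|_\CHHN{\alpha}{\alpha}\|g\|_\CHHN{\alpha}{\alpha}$ for genuinely two-variable factors is the form available in Lemma \ref{lemma:supplements:holder-properties}. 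I do not anticipate a serious obstacle: the whole estimate is driven by the already-established H\"older continuity of the hidden layers and the favourable width balance $n_L\sim n_{L-1}\sim n_0$, with the main subtlety being purely the substitution of $\dactivation$ for $\activation$ in the last layer.
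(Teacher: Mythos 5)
Your proposal is correct and follows essentially the same route as the paper's own proof: factor $\entk = \edgp^L\egp^{L-1}$ via Lemma \ref{lemma:ntk:entk-as-gp}, observe that $\dactivation$ satisfies the same growth and Lipschitz hypotheses so Lemma \ref{lemma:continuity:egp-holder} applies to $\edgp^L$ as well, split $\entk-\pentk$ into the two product-difference terms, and close with the mixed H\"older product estimate of Lemma \ref{lemma:supplements:holder-properties} and the width balance $n_L\sim n_{L-1}\sim n_0$. No gaps.
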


\begin{proof}

By Lemma \ref{lemma:continuity:egp-holder} and $n_\ell \sim n_0$, we have
\begin{align*}
  \left\|\egp^\ell\right\|_\CHHN{\alpha}{\alpha}, \left\|\pegp^\ell\right\|_\CHHN{\alpha}{\alpha} & \lesssim 1, &
  \left\|\egp^\ell - \pegp^\ell\right\|_\CHHN{\alpha}{\alpha}
  & \lesssim \frac{n_0}{n_\ell} \left[ \sum_{k=0}^{\ell-1} \left\|W^k - \pW^k \right\| n_k^{-1/2} \right]^{1-\alpha}.
\end{align*}
Since $\dactivation$ satisfies the same assumptions as $\activation$, the same lemma provides
\begin{align*}
  \left\|\edgp^\ell\right\|_\CHHN{\alpha}{\alpha}, \left\|\pedgp^\ell\right\|_\CHHN{\alpha}{\alpha} & \lesssim 1, &
  \left\|\edgp^\ell - \pedgp^\ell\right\|_\CHHN{\alpha}{\alpha}
  & \lesssim \frac{n_0}{n_\ell} \left[ \sum_{k=0}^{\ell-1} \left\|W^k - \pW^k \right\| n_k^{-1/2} \right]^{1-\alpha}.
\end{align*}
Furthermore, by Lemma \ref{lemma:ntk:entk-as-gp}, we have
\[
  \entk^(x,y) = \edgp^L(x,y) \egp^{L-1}(x,y).
\]
Thus, since Hölder spaces are closed under products, Lemma \ref{lemma:supplements:holder-properties} Item \ref{item:product:lemma:supplements:holder-properties}, it follows that
\begin{align*}
  \left\|\entk - \pentk\right\|_\CHHN{\alpha}{\alpha}
  & = \left\|\edgp^L(x,y) \egp^{L-1}(x,y) - \pedgp^L(x,y) \pegp^{L-1}(x,y)\right\|_\CHHN{\alpha}{\alpha}
  \\
  & \le \left\| \left[\edgp^L(x,y) - \pedgp^L(x,y) \right] \egp^{L-1}(x,y)\right\|_\CHHN{\alpha}{\alpha}
  \\
  & \quad + \left\|\pedgp^L(x,y) \left[\egp^{L-1}(x,y) - \pegp^{L-1}(x,y)\right]\right\|_\CHHN{\alpha}{\alpha}
  \\
  & \le \left\|\edgp^L(x,y) - \pedgp^L(x,y)\right\|_\CHHN{\alpha}{\alpha} \left\|\egp^{L-1}(x,y)\right\|_\CHHN{\alpha}{\alpha}
  \\
  & \quad + \left\|\pedgp^L(x,y)\right\|_\CHHN{\alpha}{\alpha} \left\|\egp^{L-1}(x,y) - \pegp^{L-1}(x,y)\right\|_\CHHN{\alpha}{\alpha}
  \\
  & \lesssim \frac{n_0}{n_\ell} \left[ \sum_{k=0}^{\ell-1} \left\|W^k - \pW^k \right\| n_k^{-1/2} \right]^{1-\alpha},
\end{align*}
where in the last step we have used Lemma \ref{lemma:continuity:egp-holder} and $n_L \sim n_{L-1}$.
  
\end{proof}

\subsection{Proof of Lemma \ref{lemma:concentration:concentration-ntk}: Concentration}
\label{sec:concentration}

Concentration for the NTK
\begin{equation*}
  \ntk(x,y) := \dgp^L(x,y) \gp^{L-1}(x,y)
\end{equation*}
is derived from concentration for the forward kernels $\dgp^L$ and $\gp^{L-1}$. They are shown inductively by splitting off the expectation $\EE{\ell}{\cdot}$ with respect to the last layer $W^\ell$ in 
\begin{equation*}
  \left\| \egp^{\ell+1} - \gp^{\ell+1} \right\|_\CHHN{\alpha}{\beta}
  \le \left\| \egp^{\ell+1} - \EE{\ell}{\egp^{\ell+1}} \right\|_\CHHN{\alpha}{\beta}
    + \left\| \EE{\ell}{\egp^{\ell+1}} - \gp^{\ell+1} \right\|_\CHHN{\alpha}{\beta}.
\end{equation*}
Concentration for the first term is shown in Section \ref{sec:concentration:last-layer} by a chaining argument and bounds for the second term in Section \ref{sec:concentration:covariance} with an argument similar to \cite{DuLeeLiEtAl2019}. The results are combined into concentration for the NTK in Section \ref{sec:concentration:ntk}.

\subsubsection{Concentration of the Last Layer}
\label{sec:concentration:last-layer}

We define
\begin{equation*}
  \egpr_r^\ell(x,y) 
  := \activationp{f_r^\ell(x)} \activationp{f_r^\ell(y)}
\end{equation*}
as the random variables that constitute the kernel
\begin{equation*}
  \egp^\ell(x,y) 
  = \frac{1}{n_\ell}\sum_{r=1}^{n_\ell} \egpr_r^\ell(x,y)
  = \frac{1}{n_\ell}\sum_{r=1}^{n_\ell} \activationp{f_r^\ell(x)} \activationp{f_r^\ell(y)}.
\end{equation*}
For fixed weights $W^0, \dots, W^{\ell-2}$ and random $W^{\ell-1}$, all $\egpr_r^\ell$, $r \in [n_\ell]$ are random variables dependent only on the random vector $W_{r\cdot}^{\ell-1}$ and thus independent. Hence, we can show concentration uniform in $x$ and $y$ by chaining. For Dudley's inequality, one would bound the increments
\begin{equation*}
  \left\|\egpr_r^\ell(x,y) - \egpr_r^\ell(\px,\py)\right\|_{\psi_2} \lesssim \|x-\px\|^\alpha + \|y-\py\|^\alpha,
\end{equation*}
where the right hand side is a metric for $\alpha \le 1$. However, this is not sufficient in our case. First, due to the product in the definition of $\egpr_r^\ell$, we can only bound the $\psi_1$ norm and second this leads to a concentration of the supremum norm $\|\egpr_r^\ell\|_\CN{0}$, whereas we need a Hölder norm. Therefore, we bound the finite difference operators
\begin{multline*}
  \left\| \Delta_{x,h_x}^\alpha \Delta_{y,h_y}^\beta \egpr_r^\ell(x,y)
  - \Delta_{x,\ph_x}^\alpha \Delta_{y,\ph_y}^\beta \egpr_r^\ell(\px,\py) \right\|_{\psi_1}
  \\
  \lesssim \|x-\px\|^\alpha + \|h_x-\ph_x\|^\alpha + \|y-\py\|^\beta + \|h_y-\ph_y\|^\beta,
\end{multline*}
which can be conveniently expressed by the Orlicz space valued Hölder norm
\begin{equation*}
  \left\|\Delta_x^\alpha \Delta_y^\beta \egpr_r^\ell \right\|_\CHHND{\alpha}{\beta}{\Delta\dom \times \Delta\dom;\psi_1}
  \lesssim 1,
\end{equation*}
with the following notations:
\begin{enumerate}
  \item Finite difference operators $\Delta^\alpha \colon (x,h) \to h^{-\alpha} [ f(x+h) - f(x) ]$, depending both on $x$ and $h$, with partial application two variables $x$ and $y$ denoted by $\Delta_x^\alpha$ and $\Delta_y^\alpha$, respectively. See Section \ref{sec:supplements:holder-spaces}.
  \item Domain $\Delta\dom$ consisting of all pairs $(x,h)$ for which $x, x+h \in \dom$, see \eqref{eq:supplements:delta-dom}. Likewise the domain $\Delta\dom \times \Delta\dom$ consists of all feasible $x$, $h_x$, $y$ and $h_y$.
  \item Following the definitions in Section \ref{sec:supplements:holder-spaces}, we use the Hölder space $\CHHND{\alpha}{\beta}{\Delta\dom \times \Delta\dom;L_{\psi_i}}$, $i=1,2$ with values in the Orlicz spaces $L_{\psi_i}$ of random variables for which the $\|\cdot\|_{\psi_i}$ norms are finite. For convenience, we abbreviate this by $\CHHND{\alpha}{\beta}{\Delta\dom \times \Delta\dom;\psi_i}$.
\end{enumerate}
Given the above inequalities, we derive concentration by chaining for for mixed tail random variables in \cite{Dirksen2015} summarized in Corollary \ref{corollary:supplements:chaining}.

\begin{lemma} \label{lemma:concentration:continuity:bounded-lipschitz}
Assume for $k=0, \dots, \ell-2$ the weights $W_k$ are fixed and bounded $\|W^k\| n_k^{-1/2} \lesssim 1$. Assume that $W^{\ell-1}$ is i.i.d. sub-gaussian with $\|W_{ij}^{\ell-1}\|_{\psi_2} \lesssim 1$. Let $r \in [n_\ell]$.
\begin{enumerate}

  \item Assume that $\activation$ satisfies the growth condition \eqref{eq:assumption:activation-growth} and may be different in each layer. Then
  \begin{equation*}
    \left\|\activationp{f_r^{\ell}(x)}\right\|_{\psi_2}
    \lesssim \left(\frac{n_0}{n_{\ell-1}}\right)^{1/2}.
  \end{equation*}

  \item Assume that $\activation$ has bounded derivative \eqref{eq:assumption:dactivation-bounded} and may be different in each layer. Then
  \begin{equation*}
    \left\| \activationp{f_r^{\ell}(x)} - \activationp{f_r^{\ell}(\px)} \right\|_{\psi_2}
    \lesssim \left(\frac{n_0}{n_{\ell-1}}\right)^{1/2} \|x-\px\|.
  \end{equation*}

\end{enumerate}
  
\end{lemma}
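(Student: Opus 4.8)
The plan is to exploit that, once the weights $W^0,\dots,W^{\ell-2}$ are frozen, the neuron $f_r^\ell(x)$ is a \emph{linear} functional of the independent sub-gaussian row $W_{r\cdot}^{\ell-1}$ with \emph{deterministic} coefficients. By the definition \eqref{eq:setup:network},
\[
  f_r^\ell(x) = n_{\ell-1}^{-1/2}\sum_{j=1}^{n_{\ell-1}} W_{rj}^{\ell-1}\,\activationp{f_j^{\ell-1}(x)},
\]
and the coefficient vector $a(x) := n_{\ell-1}^{-1/2}\activationp{f^{\ell-1}(x)}$ is fixed because it only involves $W^0,\dots,W^{\ell-2}$. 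The standard bound for weighted sums of independent sub-gaussian variables (summarized in Section \ref{sec:supplements:concentration}; used in the centered form, as $W^{\ell-1}$ is Gaussian in the application) then gives $\|f_r^\ell(x)\|_{\psi_2}\lesssim \|a(x)\| = n_{\ell-1}^{-1/2}\|\activationp{f^{\ell-1}(x)}\|$, and likewise, applied to the coefficient vector $a(x)-a(\px)$, it gives $\|f_r^\ell(x)-f_r^\ell(\px)\|_{\psi_2}\lesssim n_{\ell-1}^{-1/2}\|\activationp{f^{\ell-1}(x)}-\activationp{f^{\ell-1}(\px)}\|$.

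For the first claim I would then use the growth condition \eqref{eq:assumption:activation-growth} to pass to $\|\activationp{f^{\ell-1}(x)}\|\lesssim\|f^{\ell-1}(x)\|$ and invoke Lemma \ref{lemma:continuity:f-bounded-lipschitz}(1), together with the weight bounds $\|W^k\|n_k^{-1/2}\lesssim 1$ and $x$ ranging over the bounded domain, to get $\|f^{\ell-1}(x)\|\lesssim n_0^{1/2}$; hence $\|f_r^\ell(x)\|_{\psi_2}\lesssim (n_0/n_{\ell-1})^{1/2}$. Finally, since $|\activationp{t}|\lesssim |t|$ pointwise, we have $|\activationp{f_r^\ell(x)}|\lesssim |f_r^\ell(x)|$ as random variables, and the Orlicz norm $\|\cdot\|_{\psi_2}$ is monotone under pointwise domination, so $\|\activationp{f_r^\ell(x)}\|_{\psi_2}\lesssim\|f_r^\ell(x)\|_{\psi_2}$, which is the claimed bound.

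For the second claim the same scheme applies to the increment: the bounded-derivative assumption \eqref{eq:assumption:dactivation-bounded} makes $\activation$ Lipschitz, so $\|\activationp{f^{\ell-1}(x)}-\activationp{f^{\ell-1}(\px)}\|\lesssim\|f^{\ell-1}(x)-f^{\ell-1}(\px)\|\lesssim n_0^{1/2}\|x-\px\|$ by Lemma \ref{lemma:continuity:f-bounded-lipschitz}(3) and the weight bounds; combined with the weighted-sum estimate above this yields $\|f_r^\ell(x)-f_r^\ell(\px)\|_{\psi_2}\lesssim (n_0/n_{\ell-1})^{1/2}\|x-\px\|$. One more application of Lipschitz continuity of $\activation$ and monotonicity of $\|\cdot\|_{\psi_2}$ under the pointwise bound $|\activationp{f_r^\ell(x)}-\activationp{f_r^\ell(\px)}|\lesssim |f_r^\ell(x)-f_r^\ell(\px)|$ finishes the proof.

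There is no serious obstacle here; the argument is essentially bookkeeping. The two points that need care are (i) the observation that freezing $W^0,\dots,W^{\ell-2}$ turns $\activationp{f^{\ell-1}(\cdot)}$ into a deterministic coefficient vector, so that a single application of the scalar sub-gaussian tail bound suffices and no chaining is required at this stage (chaining enters only later, for the $\psi_1$-valued mixed Hölder norms of $\egpr_r^\ell$), and (ii) that composing with $\activation$ is harmless: linear growth and the Lipschitz property give $|\activationp{u}|\lesssim |u|$ and $|\activationp{u}-\activationp{v}|\lesssim |u-v|$ pointwise, and Orlicz norms can only decrease under such pointwise domination, so the $\psi_2$ estimates transfer from $f_r^\ell$ to $\activationp{f_r^\ell}$ without loss.
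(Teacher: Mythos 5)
Your proposal is correct and follows essentially the same route as the paper: condition on the frozen lower layers, apply the sub-gaussian Hoeffding bound to the linear form in the independent row $W_{r\cdot}^{\ell-1}$ to get $\|f_r^\ell(x)\|_{\psi_2}\lesssim n_{\ell-1}^{-1/2}\|\activationp{f^{\ell-1}(x)}\|$, then invoke the growth/Lipschitz bounds of Lemma \ref{lemma:continuity:f-bounded-lipschitz} and transfer the estimate through $\activation$ (your pointwise-domination argument is exactly the content of Lemma \ref{lemma:supplements:psi-norm-lipschitz}, which the paper uses for this step).
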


\begin{proof}
\begin{enumerate}
  \item Since for frozen $W^0, \dots, W^{\ell-2}$
  \begin{equation*}
    W_{r\cdot}^{\ell-1} n_{\ell-1}^{-1/2} \activationp{f^{\ell-1}}
    = \sum_{s=1}^{n_{\ell-1}} W_{rs}^{\ell-1} n_{\ell-1}^{-1/2} \activationp{f_s^{\ell-1}}
  \end{equation*}
  is a sum of independent random variables $W_{rs}^{\ell-1} n_{\ell-1}^{-1/2} \activationp{f_s^{\ell-1}}$, $s \in [n_{\ell-1}]$, by Hoeffding's inequality (general version for sub-gaussian norms, see e.g. \cite[Proposition 2.6.1]{Vershynin2018}) we have
  \begin{equation*}
    \left\|W_{r\cdot}^{\ell-1} n_{\ell-1}^{-1/2} \activationp{f^{\ell-1}}\right\|_{\psi_2}
    \\
    \lesssim n_{\ell-1}^{-1/2} \left\|\activationp{f^{\ell-1}}\right\|.
  \end{equation*}
  Thus
  \begin{multline*}
    \left\|\activationp{f_r^\ell}\right\|_{\psi_2}
    \lesssim \left\|f_r^\ell\right\|_{\psi_2}
    = \left\|W_{r\cdot}^{\ell-1} n_{\ell-1}^{-1/2} \activationp{f^{\ell-1}}\right\|_{\psi_2}
    \\
    \le n_{\ell-1}^{-1/2} \left\|\activationp{f^{\ell-1}}\right\|
    \le n_{\ell-1}^{-1/2} \left\|f^{\ell-1}\right\|
    \lesssim \left(\frac{n_0}{n_{\ell-1}}\right)^{1/2},
  \end{multline*}
  where in the first step we have used the growth condition and Lemma \ref{lemma:supplements:psi-norm-lipschitz}, in the fourth step the growth condition and in the last step the upper bounds from Lemma \ref{lemma:continuity:f-bounded-lipschitz}.

  \item Using Hoeffding's inequality analogous to the previous item, we have
  \begin{multline*}
    \left\|W_{r\cdot}^{\ell-1} n_{\ell-1}^{-1/2} \left[\activationp{f^{\ell-1}(x)} - \activationp{f^{\ell-1}(\px)}\right]\right\|_{\psi_2}
    \\
    \lesssim n_{\ell-1}^{-1/2} \left\|\activationp{f^{\ell-1}(x)} - \activationp{f^{\ell-1}(\px)}\right\|
  \end{multline*}
  and
  \begin{align*}
    \left\|\activationp{f_r^\ell(x)} - \activationp{f_r^\ell(\px)}\right\|_{\psi_2}
    & \lesssim \left\|f_r^\ell(x) - f_r^\ell(\px)\right\|_{\psi_2}
    \\
    & = \left\|W_{r\cdot}^{\ell-1} n_{\ell-1}^{-1/2} \left[\activationp{f^{\ell-1}(x)} - \activationp{f^{\ell-1}(\px)}\right]\right\|_{\psi_2}
    \\
    & \lesssim n_{\ell-1}^{-1/2} \left\|\activationp{f^{\ell-1}(x)} - \activationp{f^{\ell-1}(\px)}\right\|
    \\
    & \lesssim n_{\ell-1}^{-1/2} \left\|f^{\ell-1}(x) - f^{\ell-1}(\px)\right\|
    \\
    & \lesssim \left(\frac{n_0}{n_{\ell-1}}\right)^{1/2} \|x-\px\|,
  \end{align*}
  where in the first step we have used the Lipschitz condition and Lemma \ref{lemma:supplements:psi-norm-lipschitz}, in the fourth step the Lipschitz condition and in the last step the Lipschitz bounds from Lemma \ref{lemma:continuity:f-bounded-lipschitz}.
\end{enumerate}
\end{proof}

\begin{lemma} \label{lemma:concentration:fd-to-holder}
  Let $U$ and $V$ be two normed spaces and $\dom \subset U$. For all $0 \le \alpha \le \frac{1}{2}$, we have
  \begin{equation*}
    \left\|\Delta^\alpha f\right\|_\CHND{\alpha}{\Delta\dom;V}
    \le 4 \left\|f\right\|_\CHND{2\alpha}{\dom;V},
  \end{equation*}
  with $\Delta\dom$ defined in \eqref{eq:supplements:delta-dom}.
\end{lemma}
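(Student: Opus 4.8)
The plan is to set $g(x,h):=\Delta_h^\alpha f(x)=\|h\|_U^{-\alpha}[f(x+h)-f(x)]$, regard it as a $V$-valued function on $\Delta\dom$, and bound its supremum and its $\alpha$-Hölder seminorm (in the joint variable $(x,h)$, using whatever equivalent norm metrizes $U\times U$, say $\|x-\px\|_U+\|h-\ph\|_U$) each by a constant multiple of $M:=\|f\|_{\CHND{2\alpha}{\dom;V}}$. The observation that makes everything run is that for $(x,h)\in\Delta\dom$ both $x$ and $x+h$ lie in $\dom$, so $\|f(x+h)-f(x)\|_V$ is bounded \emph{simultaneously} by $M\|h\|_U^{2\alpha}$ and by $2M$. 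Taking the smaller bound, $\|g(x,h)\|_V\le M\min\{\|h\|_U^{\alpha},\,2\|h\|_U^{-\alpha}\}\le\sqrt2\,M$ (the two entries balance at $\|h\|_U=2^{1/(2\alpha)}$), which settles the supremum part and also leaves the handy bound $\|g(x,h)\|_V\le M\|h\|_U^{\alpha}$.

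For the seminorm I would fix $(x,h)\ne(\px,\ph)$ in $\Delta\dom$, write $\delta$ for their distance, and, relabelling if needed, assume $s:=\|h\|_U\le\|\ph\|_U$; note $\|\ph\|_U\le s+\|h-\ph\|_U\le s+\delta$. The proof then splits on the size of $\delta$ relative to $s$. If $\delta\ge s$ one is ``far apart'': then $\|\ph\|_U\le 2\delta$, and the bound $\|g(\cdot,\cdot)\|_V\le M\|\cdot\|_U^{\alpha}$ already gives $\|g(x,h)-g(\px,\ph)\|_V\le Ms^\alpha+M\|\ph\|_U^\alpha\lesssim M\delta^\alpha$. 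If $\delta<s$ one is ``close'' (and then $\|\ph\|_U<2s$), and I would decompose, with $F(x,h):=f(x+h)-f(x)$,
\begin{equation*}
  g(x,h)-g(\px,\ph)=\|h\|_U^{-\alpha}\big(F(x,h)-F(\px,\ph)\big)+\big(\|h\|_U^{-\alpha}-\|\ph\|_U^{-\alpha}\big)F(\px,\ph).
\end{equation*}
The first summand is handled by the triangle inequality plus Hölder continuity: $\|F(x,h)-F(\px,\ph)\|_V\le M\|(x+h)-(\px+\ph)\|_U^{2\alpha}+M\|x-\px\|_U^{2\alpha}\le 2M\delta^{2\alpha}$, and $\|h\|_U^{-\alpha}=s^{-\alpha}\le\delta^{-\alpha}$, so it is $\lesssim M\delta^\alpha$. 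The second summand is handled by $\big|\,\|h\|_U-\|\ph\|_U\,\big|\le\|h-\ph\|_U\le\delta$ together with the elementary inequality $1-t^\alpha\le(1-t)^\alpha$ for $t\in[0,1]$ (subadditivity of $u\mapsto u^\alpha$), which give $\|h\|_U^{-\alpha}-\|\ph\|_U^{-\alpha}\le(\|h\|_U\|\ph\|_U)^{-\alpha}\|h-\ph\|_U^{\alpha}\le s^{-2\alpha}\delta^\alpha$, multiplied by $\|F(\px,\ph)\|_V\le M\|\ph\|_U^{2\alpha}\le M(2s)^{2\alpha}\le 2Ms^{2\alpha}$, so it too is $\lesssim M\delta^\alpha$. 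Adding, $\|g(x,h)-g(\px,\ph)\|_V\lesssim M\delta^\alpha$ in this case as well.

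I expect the only real obstacle to be the ``close'' regime $\delta<s$: there the prefactor $\|h\|_U^{-\alpha}$ is genuinely singular, and the argument must exploit that in that regime $\|h\|_U$ and $\|\ph\|_U$ are comparable (both within a factor $2$ of $s$) and that their difference is controlled by the $h$-increment, so that the singularity is absorbed against the $2\alpha$-Hölder smoothness of $f$; the two-term split above is designed precisely so each term carries the right power of $\delta$. Everything else is bookkeeping, and carefully collecting the numerical factors — using throughout that $0\le 2\alpha\le 1$, which is also exactly what makes $\CHND{2\alpha}{\dom}$ a nontrivial Hölder space — yields the stated constant $4$. The degenerate case $\alpha=0$ is immediate, since then $g(x,h)=f(x)$ and both sides of the inequality reduce to $\sup_{x\in\dom}\|f(x)\|_V$.
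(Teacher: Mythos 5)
Your proof is correct and follows essentially the same route as the paper: the same case split comparing the distance of the base points to $\min\{\|h\|_U,\|\ph\|_U\}$, the same two-term decomposition isolating the difference of the prefactors $\|h\|_U^{-\alpha}-\|\ph\|_U^{-\alpha}$ in the delicate regime, and the same use of subadditivity of $t\mapsto t^\alpha$ to absorb the singular factor. The only (harmless) deviations are that you also bound the sup-norm part explicitly via $\min\{M\|h\|_U^{\alpha},2M\|h\|_U^{-\alpha}\}\le\sqrt2\,M$, which the paper glosses over, and that you metrize $U\times U$ by the sum rather than the max, which only changes the numerical constant by an equivalent-norm factor and is irrelevant for how the lemma is used.
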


\begin{proof}

Throughout the proof, let $\CHN{2\alpha} = \CHND{2\alpha}{\dom;V}$ and $|\cdot| = \|\cdot\|_U$ or $|\cdot| = \|\cdot\|_V$ depending on context. Unraveling the definitions, for every $(x,h), (\px, \ph) \in \Delta\dom$, we have to show
\begin{equation*}
  \left| \Delta_h^\alpha f(x) - \Delta_{\ph}^\alpha f(\px) \right| 
  \le 4 \|f\|_\CHN{2\alpha} \max\{|x-\px|,|h-\ph|\}^\alpha.
\end{equation*}
We consider two cases. First, assume that $|h| \le \max \{|x-\px|, |h-\ph|\}$ and $\ph$ is arbitrary. Then $|\ph| \le |\ph - h| + |h| \le 2\max \{|x-\px|, |h-\ph|\}$ and thus
\begin{multline*}
  \left| \Delta_h^\alpha f(x) - \Delta_{\ph}^\alpha f(\px) \right|
  \le \left|\Delta_h^\alpha f(x)\right| + \left|\Delta_{\ph}^\alpha f(\px)\right|
  \\
  \le \|f\|_\CHN{2\alpha} |h|^\alpha + \|f\|_\CHN{2\alpha} |\ph|^\alpha
  \le 3 \|f\|_\CHN{2\alpha} \max \{|x-\px|, |h-\ph|\}^\alpha.
\end{multline*}
In the second case, assume that $\max\{|x-\px|, |h-\ph|\} \le |h|$ and without loss of generality that $|h| \le |\ph|$. Then
\begin{align*}
  \left| \Delta_h^\alpha f(x) - \Delta_{\ph}^\alpha f(\px) \right|
  & \le \left| [f(x+h) - f(x)]|h|^{-\alpha} - [f(\px+\ph) - f(\px)]|\ph|^{-\alpha} \right|
  \\
  & \le \left| f(x+h) - f(x) - f(\px+\ph) + f(\px) \right| |h|^{-\alpha}
  \\
  & \quad + |f(\px+\ph) - f(\px)| \left| |h|^{-\alpha} - |\ph|^{-\alpha} \right|
  \\
  & =: I + II.
\end{align*}
For the first term, we have
\begin{align*}
  I
  & \le \left| f(x+h) - f(x) - f(\px+\ph) + f(\px) \right| |h|^{-\alpha}
  \\
  & \le \|f\|_\CHN{2\alpha} \left[ |x+h -\px-\ph|^{2\alpha} + |x-\px|^{2\alpha} \right] |h|^{-\alpha}
  \\
  & \le 3 \|f\|_\CHN{2\alpha} \max \left\{|x-\px|^{2\alpha}, |h-\ph|^{2\alpha} \right\} |h|^{-\alpha}
  \\
  & \le 3 \|f\|_\CHN{2\alpha} \max \left\{|x-\px|, |h-\ph| \right\}^\alpha.
\end{align*}
For the second term, since $\alpha \le 1$, we have
\begin{align*}
  II
  & \le \|f\|_\CHN{2\alpha} |\ph|^{2\alpha} \left| |h|^{-\alpha} - |\ph|^{-\alpha} \right|
  \\
  & \le \|f\|_\CHN{2\alpha} |h|^\alpha |\ph|^\alpha \left| |h|^{-\alpha} - |\ph|^{-\alpha} \right|
  \\
  & \le \|f\|_\CHN{2\alpha} \left| |\ph|^\alpha - |h|^\alpha \right|
  \\
  & \le \|f\|_\CHN{2\alpha} |\ph - h|^\alpha.
\end{align*}
Combining all inequalities shows the result.
  
\end{proof}

\begin{lemma} \label{lemma:concentration:continuity:holder}

  Assume for $k=0, \dots, \ell-2$ the weights $W_k$ are fixed and bounded $\|W^k\| n_k^{-1/2} \lesssim 1$. Assume that $W^{\ell-1}$ is i.i.d. sub-gaussian with $\|W_{ij}^{\ell-1}\|_{\psi_2} \lesssim 1$. Assume that $\activation$ satisfies the growth condition \eqref{eq:assumption:activation-growth}, has bounded derivative \eqref{eq:assumption:dactivation-bounded} and may be different in each layer. Let $r \in [n_\ell]$. Then for $\alpha, \beta \le 1/2$
  \begin{equation*}
    \left\|\Delta_x^\alpha \Delta_y^\beta \egpr_r^\ell \right\|_\CHHND{\alpha}{\beta}{\Delta\dom \times \Delta\dom;\psi_1}
    \lesssim \frac{n_0}{n_{\ell-1}},
  \end{equation*}
  with $\Delta\dom$ defined in \eqref{eq:supplements:delta-dom}.

\end{lemma}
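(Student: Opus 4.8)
Proof proposal. The plan is to exploit the product structure $\egpr_r^\ell(x,y) = g(x)\,g(y)$ with the scalar random variable $g(x) := \activationp{f_r^\ell(x)} \in L_{\psi_1}$ (in fact in $L_{\psi_2}$), and to separate the two variables \emph{before} taking Hölder norms. Since $g(y)$ does not depend on $x$ and conversely, each finite difference acts on a single factor, so the mixed difference factorises,
\begin{equation*}
  \Delta_{x,h_x}^\alpha \Delta_{y,h_y}^\beta \egpr_r^\ell(x,y)
  = \bigl(\Delta_{x,h_x}^\alpha g\bigr)(x)\;\bigl(\Delta_{y,h_y}^\beta g\bigr)(y);
\end{equation*}
i.e., on $\Delta\dom\times\Delta\dom$ the object $\Delta_x^\alpha\Delta_y^\beta\egpr_r^\ell$ is the product of a function of the super-variable $(x,h_x)\in\Delta\dom$ alone and a function of $(y,h_y)\in\Delta\dom$ alone.

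First I would record, from both parts of Lemma \ref{lemma:concentration:continuity:bounded-lipschitz}, that $g\colon\dom\to L_{\psi_2}$ is bounded and Lipschitz, hence
\begin{equation*}
  \|g\|_{\CHND{1}{\dom;\psi_2}} \lesssim \left(\frac{n_0}{n_{\ell-1}}\right)^{1/2}.
\end{equation*}
Because $\dom$ is bounded and $2\alpha\le 1$, the interpolation inequality of Lemma \ref{lemma:supplements:holder-properties} (equivalently, a direct embedding on the bounded domain) gives $\|g\|_{\CHND{2\alpha}{\dom;\psi_2}}\lesssim (n_0/n_{\ell-1})^{1/2}$, and likewise with $2\beta$. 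Applying Lemma \ref{lemma:concentration:fd-to-holder} with $V=L_{\psi_2}$ (legitimate since $\alpha,\beta\le 1/2$) converts this into control of the differenced functions on $\Delta\dom$:
\begin{equation*}
  \bigl\|\Delta^\alpha g\bigr\|_{\CHND{\alpha}{\Delta\dom;\psi_2}} \le 4\,\|g\|_{\CHND{2\alpha}{\dom;\psi_2}} \lesssim \left(\frac{n_0}{n_{\ell-1}}\right)^{1/2},
  \qquad
  \bigl\|\Delta^\beta g\bigr\|_{\CHND{\beta}{\Delta\dom;\psi_2}} \lesssim \left(\frac{n_0}{n_{\ell-1}}\right)^{1/2}.
\end{equation*}

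Finally I would combine the two factors with the product estimate of Lemma \ref{lemma:supplements:holder-properties} (the analogue of the dot-product Item \ref{item:dot-product:lemma:supplements:holder-properties} used for $\egp^\ell$ in Lemma \ref{lemma:continuity:egp-holder}), now applied to the bounded bilinear map $L_{\psi_2}\times L_{\psi_2}\to L_{\psi_1}$, $(X,Y)\mapsto XY$, which satisfies the Orlicz product inequality $\|XY\|_{\psi_1}\le\|X\|_{\psi_2}\|Y\|_{\psi_2}$ (Section \ref{sec:supplements:concentration}). Since the two super-variables separate, every mixed difference $\Delta^s_{(x,h_x)}\Delta^t_{(y,h_y)}$ with $s\in\{0,\alpha\}$, $t\in\{0,\beta\}$ of the product equals $\bigl(\Delta^s g\bigr)(x,h_x)\,\bigl(\Delta^t g\bigr)(y,h_y)$, whose $\psi_1$-norm is bounded by the product of the corresponding $\psi_2$-norms; taking suprema yields
\begin{equation*}
  \bigl\|\Delta_x^\alpha\Delta_y^\beta\egpr_r^\ell\bigr\|_{\CHHND{\alpha}{\beta}{\Delta\dom\times\Delta\dom;\psi_1}}
  \lesssim \bigl\|\Delta^\alpha g\bigr\|_{\CHND{\alpha}{\Delta\dom;\psi_2}}\,\bigl\|\Delta^\beta g\bigr\|_{\CHND{\beta}{\Delta\dom;\psi_2}}
  \lesssim \frac{n_0}{n_{\ell-1}}.
\end{equation*}
The main obstacle is exactly this last step: making the ``separated product'' inequality precise in the Orlicz-valued mixed Hölder space. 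If Lemma \ref{lemma:supplements:holder-properties} is phrased for a general bounded bilinear map the conclusion is immediate; otherwise one repeats the scalar dot-product argument by a short telescoping across the four arguments $x,h_x,y,h_y$, the only new ingredient being $\|XY\|_{\psi_1}\le\|X\|_{\psi_2}\|Y\|_{\psi_2}$ in place of Cauchy--Schwarz. Everything else---the embedding, Lemma \ref{lemma:concentration:fd-to-holder}, and the pointwise $\psi_2$ bounds of Lemma \ref{lemma:concentration:continuity:bounded-lipschitz} (which in turn rest on Lemma \ref{lemma:continuity:f-bounded-lipschitz})---is used verbatim as a black box.
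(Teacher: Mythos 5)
Your proposal is correct and follows essentially the same route as the paper's proof: factor $\egpr_r^\ell(x,y)=\activationp{f_r^\ell(x)}\activationp{f_r^\ell(y)}$, bound each factor in $\CHND{2\alpha}{\dom;\psi_2}$ via Lemma \ref{lemma:concentration:continuity:bounded-lipschitz}, pass to the differenced function with Lemma \ref{lemma:concentration:fd-to-holder}, and combine with the product estimate of Lemma \ref{lemma:supplements:holder-properties} together with $\|XY\|_{\psi_1}\le\|X\|_{\psi_2}\|Y\|_{\psi_2}$. Your closing worry is moot, since Item \ref{item:dot-product:lemma:supplements:holder-properties} is already stated for a general bounded bilinear product $V_1\times V_1\to V_2$, so it applies verbatim to $L_{\psi_2}\times L_{\psi_2}\to L_{\psi_1}$.
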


\begin{proof}

\newcommand{\fx}{f}
\newcommand{\fy}{\tilde{f}}

Throughout the proof, we abbreviate
\begin{align*}
  \fx^\ell & = f^\ell(x), &
  \CHND{\alpha}{\psi_i} & = \CHND{\alpha}{\Delta\dom, \psi_i}, &
  i & = 1,2,
  \\
  \fy^\ell & = f^\ell(y), &
  \CHHND{\alpha}{\beta}{\psi_i} & = \CHHND{\alpha}{\beta}{\Delta\dom \times \Delta\dom, \psi_i}. &
\end{align*}
Since by Lemma \ref{lemma:supplements:psi-norm-product} we have $\|XY\|_{\psi_1} \le \|X\|_{\psi_2} \|Y\|_{\psi_2}$ by the product inequality Lemma \ref{lemma:supplements:holder-properties} Item \ref{item:dot-product:lemma:supplements:holder-properties} for Hölder norms we obtain
\begin{align*}
  \left\|\Delta_x^\alpha \Delta_y^\beta \egpr_r^\ell \right\|_\CHHND{\alpha}{\beta}{\psi_1}
  & = \left\|\Delta_x^\alpha \activationp{\fx_r^\ell} \Delta_y^\beta \activationp{\fy_r^\ell} \right\|_\CHHND{\alpha}{\beta}{\psi_1}
  \\
  & \lesssim \left\|\Delta_x^\alpha \activationp{\fx_r^\ell} \right\|_\CHND{\alpha}{\psi_2} 
      \left\| \Delta_y^\beta \activationp{\fy_r^\ell} \right\|_\CHND{\beta}{\psi_2}.
\end{align*}
Next, we use Lemma \ref{lemma:concentration:fd-to-holder} to eliminate the finite difference in favour of a higher Hölder norm
\begin{equation*}
  \left\|\Delta_x^\alpha \Delta_y^\beta \egpr_r^\ell \right\|_\CHHND{\alpha}{\beta}{\psi_1}
  \lesssim \left\|\activationp{\fx_r^\ell} \right\|_\CHND{2\alpha}{\psi_2} 
      \left\| \activationp{\fy_r^\ell} \right\|_\CHND{2\beta}{\psi_2}.
\end{equation*}
Finally, Lemma \ref{lemma:concentration:continuity:bounded-lipschitz} implies that $\left\|\activationp{\fx_r^\ell} \right\|_\CHND{2\alpha}{D;\psi_2} \le n_0^{1/2} n_{\ell-1}^{-1/2}$ and likewise for $\fy_r^\ell$ and thus
\begin{equation*}
  \left\|\Delta_x^\alpha \Delta_y^\beta \egpr_r^\ell \right\|_\CHHND{\alpha}{\beta}{\psi_1}
  \lesssim \frac{n_0}{n_{\ell-1}}.
\end{equation*}

\end{proof}

\begin{lemma} \label{lemma:concentration:concentration-last-layer}

  Assume for $k=0, \dots, \ell-2$ the weights $W_k$ are fixed and bounded $\|W^k\| n_k^{-1/2} \lesssim 1$. Assume that $W^{\ell-1}$ is i.i.d. sub-gaussian with $\|W_{ij}^{\ell-1}\|_{\psi_2} \lesssim 1$. Assume that the domain $D$ is bounded, that $\activation$ satisfies the growth condition \eqref{eq:assumption:activation-growth}, has bounded derivative \eqref{eq:assumption:dactivation-bounded} and may be different in each layer. Then for $\alpha = \beta = 1/2$

  \begin{equation*}
    \pr{
      \left\|\egp^\ell - \E{\egp^\ell}\right\|_\CHHND{\alpha}{\beta}{\dom} 
      \ge C \frac{n_0}{n_{\ell-1}}\left[ \frac{\sqrt{d}+\sqrt{u}}{\sqrt{n_{\ell-1}}} + \frac{d+u}{n_{\ell-1}} \right]
    }
    \le e^{-u}.
  \end{equation*}

\end{lemma}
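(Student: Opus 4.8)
The plan is to realize $\egp^\ell-\E{\egp^\ell}$ as a centered empirical process over the $n_\ell$ independent summands $\egpr_r^\ell$ and to apply the generic chaining bound for processes with mixed sub-gaussian/sub-exponential increments (\cite{Dirksen2015}, summarized in Corollary \ref{corollary:supplements:chaining}). Concretely, I would first unravel the definition of $\|\cdot\|_\CHHND{\alpha}{\beta}{\dom}$ from Section \ref{sec:supplements:holder-spaces}: it is the maximum over $(s,t)\in\{0,\alpha\}\times\{0,\beta\}$ of $\sup_{p}|\Delta_x^s\Delta_y^t(\egp^\ell-\E{\egp^\ell})(p)|$, where $p$ ranges over $\dom\times\dom$, $\Delta\dom\times\dom$, $\dom\times\Delta\dom$ or $\Delta\dom\times\Delta\dom$ according to $(s,t)$. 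Bundling these four index sets into a single metric space $T$ — a disjoint union of bounded Euclidean domains of dimension at most $4d$ — the quantity to be bounded is $\sup_{p\in T}|Z_p|$ with $Z_p=\frac{1}{n_\ell}\sum_{r=1}^{n_\ell}\big(\Delta_x^s\Delta_y^t\egpr_r^\ell(p)-\E{\Delta_x^s\Delta_y^t\egpr_r^\ell(p)}\big)$, a normalized sum of $n_\ell$ i.i.d.\ centered random variables (the $\egpr_r^\ell$ being independent for frozen $W^0,\dots,W^{\ell-2}$ since $\egpr_r^\ell$ depends only on row $r$ of $W^{\ell-1}$).

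The key step is the increment estimate. For $p,p'$ in the same component of $T$ (same $(s,t)$), Lemma \ref{lemma:concentration:continuity:holder} with $\alpha=\beta=1/2$ bounds $\|\Delta_x^s\Delta_y^t\egpr_r^\ell(p)-\Delta_x^s\Delta_y^t\egpr_r^\ell(p')\|_{\psi_1}\lesssim \frac{n_0}{n_{\ell-1}}\,\rho(p,p')$, where $\rho(p,p')$ is the $1/2$-Hölder (quasi-)metric on the up-to-four coordinates of $p$, i.e.\ a maximum of square roots of the coordinate distances; chaining the four components together as one (disconnected) metric space $T$ only inflates the resulting functionals by a constant. Centering and averaging the $n_\ell$ independent sub-exponential variables, the scalar Bernstein inequality for sums of sub-exponentials (e.g.\ \cite[Theorem 2.8.1]{Vershynin2018}) shows that $Z_p-Z_{p'}$ has a mixed tail with sub-gaussian scale $d_2(p,p')\lesssim \frac{n_0}{n_{\ell-1}}\,n_\ell^{-1/2}\rho(p,p')$ and sub-exponential scale $d_1(p,p')\lesssim \frac{n_0}{n_{\ell-1}}\,n_\ell^{-1}\rho(p,p')$, which is exactly the input required by Corollary \ref{corollary:supplements:chaining}.

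It then remains to compute the chaining functionals. Since $T$ is a bounded subset of finitely many copies of $\real^{4d}$ with metric $\rho$ equal to the Euclidean metric raised to the power $1/2$, one has $\log N(T,\rho,\epsilon)\lesssim d\log(c/\epsilon)$, and a standard Dudley estimate gives $\gamma_2(T,d_2)\lesssim \frac{n_0}{n_{\ell-1}}\sqrt{d/n_\ell}$ and $\gamma_1(T,d_1)\lesssim \frac{n_0}{n_{\ell-1}}\,d/n_\ell$, together with diameters $\operatorname{diam}_{d_2}(T)\lesssim \frac{n_0}{n_{\ell-1}}\,n_\ell^{-1/2}$ and $\operatorname{diam}_{d_1}(T)\lesssim \frac{n_0}{n_{\ell-1}}\,n_\ell^{-1}$. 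Feeding these into Corollary \ref{corollary:supplements:chaining} yields, with probability at least $1-e^{-u}$,
\[
  \big\|\egp^\ell-\E{\egp^\ell}\big\|_\CHHND{\alpha}{\beta}{\dom}
  \lesssim \frac{n_0}{n_{\ell-1}}\left[\frac{\sqrt d+\sqrt u}{\sqrt{n_\ell}}+\frac{d+u}{n_\ell}\right],
\]
which is the asserted bound after using $n_\ell\gtrsim n_{\ell-1}$.

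The main obstacle I anticipate is the bookkeeping of the first two steps: writing the mixed Hölder norm — which simultaneously controls a plain supremum, two single finite differences, and one double finite difference — as the supremum of a single process over a metric space of dimension $O(d)$, and checking that the centering-and-averaging operation produces increments with precisely the two scales $d_2,d_1$ demanded by the mixed-tail chaining corollary (the sub-exponential part genuinely cannot be avoided here because of the product structure of $\egpr_r^\ell$). Once the index set and the two metrics are in place, the covering-number, $\gamma$-functional and diameter estimates are routine for bounded Euclidean sets under an $\alpha$-Hölder metric.
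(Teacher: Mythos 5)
Your proposal is correct and takes essentially the same route as the paper: exploit independence of the row-wise summands $\egpr_r^\ell$, use the $\psi_1$-valued Hölder increment bound of Lemma \ref{lemma:concentration:continuity:holder}, and conclude with the mixed-tail chaining bound of Corollary \ref{corollary:supplements:chaining}, whose internals (Bernstein for the averaged increments, covering numbers, $\gamma$-functionals and diameters) you simply unfold rather than cite. Your additional bookkeeping — bundling the four semi-norm components of $\|\cdot\|_\CHHND{\alpha}{\beta}{\dom}$ into one index set and noting the $n_\ell$ versus $n_{\ell-1}$ normalization resolved by $n_\ell \gtrsim n_{\ell-1}$ — is if anything more careful than the paper's proof, which identifies the supremum of the double finite difference with the full mixed Hölder norm and writes the average over $n_{\ell-1}$ terms directly.
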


\begin{proof}

Since $\Delta_x^\alpha \Delta_y^\beta \egpr_r^\ell$ for $r \in [n_\ell]$ only depends on the random vector $W_{r\cdot}^{\ell-1}$, all stochastic processes $\left( \Delta_{x,h_x}^\alpha \Delta_{y,h_y}^\beta \egpr_r^\ell(x,y) \right)_{(x, h_x, y, h_y) \in \Delta\dom \times \Delta\dom}$ are independent and satisfy
\begin{equation*}
  \left\|\Delta_x^\alpha \Delta_y^\beta \egpr_r^\ell \right\|_\CHHND{\alpha}{\beta}{\Delta\dom \times \Delta\dom;\psi_1}
  \lesssim \frac{n_0}{n_{\ell-1}}
\end{equation*}
by Lemma \ref{lemma:concentration:continuity:holder}. Thus, we can estimate the processes' supremum by the chaining Corollary \ref{corollary:supplements:chaining}
\begin{equation*}
  \pr{
    \sup_{\substack{(x,h_x) \in \Delta\dom \\ (y,h_y) \in \Delta\dom}} \left\|\frac{1}{n_{\ell-1}} \sum_{r=1}^{n_{\ell-1}} \Delta_x^\alpha \Delta_y^\beta \egpr_r^\ell - \E{\Delta_x^\alpha \Delta_y^\beta \egpr_r^\ell}\right\| 
    \ge C \tau
  }
  \le e^{-u},
\end{equation*}
with
\begin{equation*}
  \tau = \frac{n_0}{n_{\ell-1}}\left[ \left(\frac{d}{n_{\ell-1}}\right)^{1/2} + \frac{d}{n_{\ell-1}} + \left(\frac{u}{n_{\ell-1}}\right)^{1/2} + \frac{u}{n_{\ell-1}} \right].
\end{equation*}
Noting that
\begin{equation*}
  \sup_{\substack{(x,h_x) \in \Delta\dom \\ (y,h_y) \in \Delta\dom}} \left|\Delta_x^\alpha \Delta_y^\beta \,\, \cdot \,\, \right|
  = \|\cdot\|_\CHHND{\alpha}{\beta}{\dom}
\end{equation*}
and
\begin{equation*}
  \frac{1}{n_{\ell-1}} \sum_{r=1}^{n_{\ell-1}} \Delta_x^\alpha \Delta_y^\beta \egpr_r^\ell
  = \Delta_x^\alpha \Delta_y^\beta \frac{1}{n_{\ell-1}} \sum_{r=1}^{n_{\ell-1}} \egpr_r^\ell
  = \Delta_x^\alpha \Delta_y^\beta \egp^\ell
\end{equation*}
completes the proof.

\end{proof}

\subsubsection{Perturbation of Covariances}
\label{sec:concentration:covariance}

This section contains the tools to estimate
\begin{equation*}
  \left\| \EE{\ell}{\egp^{\ell+1}} - \gp^{\ell+1} \right\|_\CHHN{\alpha}{\beta},
\end{equation*}
with an argument analogous to \cite{DuLeeLiEtAl2019}, except that we measure differences in Hölder norms.
As we will see in the next section, both $\EE{\ell}{\egp^{\ell+1}}$ and $\gp^{\ell+1}$ are of the form
\begin{equation*}
  \EE{(u,v) \sim \gaussian{0,A}}{\activation(u) \activation(v)},
\end{equation*}
with two different matrices $A$ and $\eA$ and thus it suffices to show that the above expectation is Hölder continuous in $A$. By a variable transform
\begin{equation*}
  A =
  \begin{bmatrix}
    a_{11} & a_{12} \\
    a_{21} & a_{22}
  \end{bmatrix}
  =
  \begin{bmatrix}
    a^2 & \rho a b \\
    \rho a b & b^2
  \end{bmatrix}
\end{equation*}
and rescaling, we reduce the problem to matrices of the form
\begin{align*}
  A & =
  \begin{bmatrix}
    1 & \rho \\
    \rho & 1
  \end{bmatrix}. 
\end{align*}
For these matrices, by Mehler's theorem we decompose the expectation as
\begin{equation*}
  \EE{(u,v) \sim \gaussian{0,A}}{\activation(u) \activation(v)} 
  = \sum_{k=0}^\infty \dualp{\activation, H_k}_N \dualp{\activation, H_k}_N \frac{\rho^k}{k!},
\end{equation*}
where $H_k$ are Hermite polynomials. The rescaling introduces rescaled activation functions, which we denote by
\begin{equation} \label{eq:concentration:scaled-activation}
  \activation_a(x) := \activation(ax).
\end{equation}
Finally, we show Hölder continuity by bounding derivatives. To this end, we use the multi-index $\gamma$ to denote derivatives $\partial^\gamma = \partial_a^{\gamma_a} \partial_b^{\gamma_b} \partial_\rho^{\gamma_\rho}$ with respect to the transformed variables. Details are as follows.

\begin{lemma} \label{lemma:concentration:mehler}
  Let
  \begin{equation*}
    A =
    \begin{bmatrix}
      a^2 & \rho a b \\
      \rho a b & b^2
    \end{bmatrix}
    =
    \begin{bmatrix}
      a & \\ & b
    \end{bmatrix}
    \begin{bmatrix}
      1 & \rho \\
      \rho & 1 
    \end{bmatrix}
    \begin{bmatrix}
      a & \\ & b
    \end{bmatrix}.
  \end{equation*}
  Then
  \begin{equation*}
    \EE{(u,v) \sim \gaussian{0,A}}{\activation(u) \activation(v)} 
    = \sum_{k=0}^\infty \dualp{\activation_a, H_k}_N \dualp{\activation_b, H_k}_N \frac{\rho^k}{k!}.
  \end{equation*}
  
\end{lemma}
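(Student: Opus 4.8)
The plan is to reduce the statement to the classical Mehler expansion of the bivariate Gaussian density via the diagonal change of variables exhibited in the factorization of $A$ in the statement.

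First I would record the scaling step. With $C := \operatorname{diag}(a,b)^{-1} A \operatorname{diag}(a,b)^{-1} = \left(\begin{smallmatrix} 1 & \rho \\ \rho & 1 \end{smallmatrix}\right)$ (the middle factor in the statement), if $(u',v') \sim \gaussian{0,C}$ then $(au',bv')$ has covariance exactly $A$, so that
\[
  \EE{(u,v)\sim\gaussian{0,A}}{\activation(u)\activation(v)}
  = \EE{(u',v')\sim\gaussian{0,C}}{\activation(au')\,\activation(bv')}
  = \EE{(u',v')\sim\gaussian{0,C}}{\activation_a(u')\,\activation_b(v')},
\]
using the definition \eqref{eq:concentration:scaled-activation} of the rescaled activations. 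The (at most polynomial) growth of $\activation$ assumed throughout this section forces $\activation_a,\activation_b\in L_2(\gaussian{0,1})$, so both sides are finite; moreover positive semidefiniteness of $A$ forces $|\rho|\le 1$. It therefore suffices to prove that for arbitrary $g,h\in L_2(\gaussian{0,1})$ and $|\rho|\le 1$ one has $\EE{(u',v')\sim\gaussian{0,C}}{g(u')h(v')} = \sum_{k\ge 0} \dualp{g,H_k}_N\dualp{h,H_k}_N\,\rho^k/k!$, then specialize to $g=\activation_a$, $h=\activation_b$.

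The analytic heart is Mehler's identity: for $|\rho|<1$ the law $\gaussian{0,C}$ has density $\sum_{k\ge 0} H_k(x)H_k(y)\rho^k/k!$ with respect to the product measure $\gaussian{0,1}\otimes\gaussian{0,1}$, where $H_k$ are the probabilists' Hermite polynomials with $\dualp{H_j,H_k}_N = k!\,\delta_{jk}$. Substituting this density into $\EE{}{g(u')h(v')}$ and interchanging summation and integration yields the stated series. The interchange is legitimate because Parseval for the Hermite basis gives $\sum_{k} \dualp{g,H_k}_N^2/k! = \|g\|_N^2 < \infty$, so by Cauchy--Schwarz
\[
  \sum_{k\ge 0} \frac{|\rho|^k}{k!}\,\bigl|\dualp{g,H_k}_N\bigr|\,\bigl|\dualp{h,H_k}_N\bigr|
  \le \sum_{k\ge 0} \frac{1}{k!}\,\bigl|\dualp{g,H_k}_N\bigr|\,\bigl|\dualp{h,H_k}_N\bigr|
  \le \|g\|_N\,\|h\|_N < \infty,
\]
uniformly in $|\rho|\le 1$; splitting $g$ and $h$ into positive and negative parts, Tonelli/Fubini then applies.

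The only genuine care is needed at the boundary $|\rho|=1$ — which does occur in the application, where $\rho = \gp^\ell(x,y)\bigl(\gp^\ell(x,x)\gp^\ell(y,y)\bigr)^{-1/2}$ and Cauchy--Schwarz gives only $|\rho|\le 1$ — and in the degenerate scalings $a=0$ or $b=0$. If $a=0$ or $b=0$, then $\activation_a$ or $\activation_b$ is constant, only the $k=0$ term survives on each side, and the identity is immediate. For $\rho=\pm 1$ the vector $(u',v')$ is supported on $\{v'=\pm u'\}$, so the left side equals $\dualp{g,\,h(\pm\,\cdot)}_N$, which matches the right side via Parseval and $H_k(-x)=(-1)^k H_k(x)$; alternatively one sends $\rho\to\pm 1$ and passes to the limit on both sides using the uniform-in-$\rho$ bound just established. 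I expect this degenerate-case bookkeeping, not the main computation, to be the only mild obstacle — everything else is the textbook Mehler expansion.
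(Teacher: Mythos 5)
Your proposal follows the same route as the paper's proof: rescale the Gaussian via the diagonal factorization of $A$ to reduce to the unit-variance correlation matrix, then apply Mehler's theorem and integrate term by term against $\activation_a$ and $\activation_b$. Your additional justification of the sum--integral interchange and of the boundary cases $|\rho|=1$, $a=0$ or $b=0$ goes beyond what the paper writes out, but the argument is essentially identical and correct.
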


\begin{proof}

By rescaling, or more generally, linear transformation of Gaussian random variables, we have
\begin{align*}
  \EE{(u,v) \sim \gaussian{0,A}}{\activation(u) \activation(v)} 
  & =
  \int \activation(u) \activation(v) dN\left(0,
    \begin{bmatrix}
      a & \\ & b
    \end{bmatrix}
    \begin{bmatrix}
      1 & \rho \\
      \rho & 1 
    \end{bmatrix}
    \begin{bmatrix}
      a & \\ & b
    \end{bmatrix}
  \right)(u,v)
  \\
  & =
  \int \activation(au) \activation(bv) dN\left(0,
    \begin{bmatrix}
      1 & \rho \\
      \rho & 1 
    \end{bmatrix}
  \right)(u,v).
\end{align*}
Thus, by Mehler's theorem (Theorem \ref{th:supplements:hermite:mehler} in the appendix) we conclude that 
\begin{align*}
  \EE{(u,v) \sim \gaussian{0,A}}{\activation(u) \activation(v)} 
  & =
  \iint \activation(au) \activation(bv) \sum_{k=0}^\infty H_k(u) H_k(v) \frac{\rho^k}{k!} \, d\gaussian{0,1}(u) \, d\gaussian{0,1}(v)
  \\
  & = \sum_{k=0}^\infty \dualp{\activation_a, H_k}_N \dualp{\activation_b, H_k}_N \frac{\rho^k}{k!}.
\end{align*}

\end{proof}

\begin{lemma} \label{lemma:concentration:covariance-derivative}
Assume $A = \begin{bmatrix} a^2 & \rho a b \\ \rho a b & b^2 \end{bmatrix}$ is positive semi-definite and all derivatives up to $\activation^{(\gamma_a + \gamma_\rho)}$ and $\activation_b^{(\gamma_b + \gamma_\rho)}$ are continuous and have at most polynomial growth for $x \to \pm \infty$. Then
\begin{equation*}
  \partial^\gamma \EE{(u,v) \sim \gaussian{0,A}}{\activation(u) \activation(v)} 
  \le \left\|\partial^{\gamma_a+\gamma_\rho} (\activation_a)\right\|_N \left\|\partial^{\gamma_b+\gamma_\rho} (\activation_b)\right\|_N.
\end{equation*}
\end{lemma}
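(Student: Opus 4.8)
The plan is to combine Mehler's expansion from Lemma~\ref{lemma:concentration:mehler} with integration by parts against the Gaussian density, reindex the resulting series, and then finish with Cauchy--Schwarz and Parseval in the Hermite basis.

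First I would record the classical identity $H_k(x)\,d\gaussian{0,1}(x) = (-1)^k\frac{d^k}{dx^k}\,d\gaussian{0,1}(x)$; since every derivative of $\activation$ is continuous with at most polynomial growth, repeated integration by parts produces no boundary terms and gives $\dualp{g,H_k}_N = \dualp{g',H_{k-1}}_N$, hence $\dualp{g,H_k}_N = \dualp{g^{(j)},H_{k-j}}_N$ for $0\le j\le k$. Starting from Lemma~\ref{lemma:concentration:mehler},
\[
  \EE{(u,v)\sim\gaussian{0,A}}{\activation(u)\activation(v)}
  = \sum_{k\ge 0}\dualp{\activation_a,H_k}_N\,\dualp{\activation_b,H_k}_N\,\frac{\rho^k}{k!},
\]
I would differentiate term by term; this is legitimate because the polynomial-growth hypothesis forces the Hermite coefficients $\dualp{\activation_a,H_k}_N$ and their $a$-derivatives to decay fast enough for this series and all its formal derivatives to converge absolutely and locally uniformly in $(a,b,\rho)$. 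Applying $\partial_\rho^{\gamma_\rho}$ replaces $\rho^k/k!$ by $\rho^{k-\gamma_\rho}/(k-\gamma_\rho)!$ for $k\ge\gamma_\rho$ and annihilates the lower terms; setting $j=k-\gamma_\rho$ and using the integration-by-parts identity on both factors yields
\[
  \partial_\rho^{\gamma_\rho}\,\EE{(u,v)\sim\gaussian{0,A}}{\activation(u)\activation(v)}
  = \sum_{j\ge 0}\dualp{\partial^{\gamma_\rho}\activation_a,H_j}_N\,\dualp{\partial^{\gamma_\rho}\activation_b,H_j}_N\,\frac{\rho^j}{j!},
\]
where $\partial^{\gamma_\rho}$ denotes $\gamma_\rho$ derivatives in the argument. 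The remaining $\partial_a^{\gamma_a}$ and $\partial_b^{\gamma_b}$ pass under the Gaussian integrals (again by polynomial growth and dominated convergence) and act only on the first, respectively second, factor, so with $\Phi_a := \partial_a^{\gamma_a}\partial^{\gamma_\rho}\activation_a$ and $\Phi_b := \partial_b^{\gamma_b}\partial^{\gamma_\rho}\activation_b$ one gets
\[
  \partial^\gamma\,\EE{(u,v)\sim\gaussian{0,A}}{\activation(u)\activation(v)}
  = \sum_{j\ge 0}\dualp{\Phi_a,H_j}_N\,\dualp{\Phi_b,H_j}_N\,\frac{\rho^j}{j!}.
\]

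To finish, positive semidefiniteness of $A$ forces $\rho^2\le 1$, hence $|\rho|^j\le 1$; writing $1/j! = (1/\sqrt{j!})(1/\sqrt{j!})$ and applying Cauchy--Schwarz over $j$ gives
\[
  \left|\partial^\gamma\,\EE{(u,v)\sim\gaussian{0,A}}{\activation(u)\activation(v)}\right|
  \le \left(\sum_{j\ge 0}\frac{\dualp{\Phi_a,H_j}_N^2}{j!}\right)^{1/2}\left(\sum_{j\ge 0}\frac{\dualp{\Phi_b,H_j}_N^2}{j!}\right)^{1/2},
\]
and since $\{H_j/\sqrt{j!}\}$ is an orthonormal basis of the Gaussian $L_2$ space, Parseval identifies the two factors with $\|\Phi_a\|_N$ and $\|\Phi_b\|_N$; identifying $\Phi_a$ with $\partial^{\gamma_a+\gamma_\rho}(\activation_a)$ and $\Phi_b$ with $\partial^{\gamma_b+\gamma_\rho}(\activation_b)$ then yields the claimed inequality.

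I expect the main obstacle to be the bookkeeping in the middle step: checking that the $\gamma_\rho$ derivatives in $\rho$ contribute exactly $\gamma_\rho$ extra derivatives to \emph{each} of $\activation_a$ and $\activation_b$ through the integration-by-parts reindexing, while the $\gamma_a,\gamma_b$ derivatives remain on their own factor, and matching the combined (parameter-in-$a$)/(argument-in-$x$) derivatives of $\activation_a$ that appear with the quantity $\partial^{\gamma_a+\gamma_\rho}(\activation_a)$ on the right-hand side. The companion point — that polynomial growth of the derivatives of $\activation$ makes the Mehler series and all its termwise derivatives converge, so every interchange above is valid — is routine but must be carried out.
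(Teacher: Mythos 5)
Your proposal is correct and follows essentially the same route as the paper's proof: Mehler's expansion, $|\rho|\le 1$ from positive semi-definiteness, the Hermite integration-by-parts identity to shift the $\gamma_\rho$ index onto derivatives of $\activation_a$ and $\activation_b$, and then Cauchy--Schwarz with Parseval in the normalized Hermite basis. The only difference is cosmetic: you reindex $j=k-\gamma_\rho$ to rebuild a clean Mehler-type series before estimating, whereas the paper bounds $|\partial^{\gamma_\rho}(\rho^k/k!)|\le 1/(k-\gamma_\rho)!$ and cancels the factorial via the normalization $\bar H_k = H_k/\sqrt{k!}$ — the same computation in a different order.
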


\begin{proof}

By Lemma \ref{lemma:concentration:mehler}, we have
\begin{equation} \label{eq:proof:1:lemma:concentration:covariance-derivative}
  \begin{aligned}
    \partial^\gamma \EE{(u,v) \sim \gaussian{0,A}}{\activation(u) \activation(v)} 
    & = \partial^\gamma \sum_{k=0}^\infty \dualp{\activation_a, H_k}_N \dualp{\activation_b, H_k}_N \frac{\rho^k}{k!}
    \\
    & = \sum_{k=0}^\infty \partial^{\gamma_a} \dualp{\activation_a, H_k}_N \partial^{\gamma_b} \dualp{\activation_b, H_k}_N \partial^{\gamma_\rho} \frac{\rho^k}{k!}.
  \end{aligned}
\end{equation}
We first estimate the $\rho$ derivative. Since $0 \preceq A$ and $a,b > 0$, we must have $0 \preceq \begin{bmatrix} 1 & \rho \\ \rho & 1 \end{bmatrix}$ and thus $\det \begin{bmatrix} 1 & \rho \\ \rho & 1 \end{bmatrix} = 1 - \rho^2 \ge 0$. It follows that $|\rho| \le 1$. Therefore
\begin{equation} \label{eq:proof:2:lemma:concentration:covariance-derivative}
 \left|\partial^{\gamma_\rho} \frac{\rho^k}{k!}\right|
 = \left|\frac{1}{k!} \frac{k!}{(k-\gamma_\rho)!} \rho^{k-\gamma_\rho}\right|
 \le \frac{1}{(k-\gamma_\rho)!}.
\end{equation}
We eliminate the denominator $(k-\gamma_\rho)!$ by introducing extra derivatives into $\partial^{\gamma_a} \dualp{\activation_a, H_k}_N$. To this end, by Lemma \ref{lemma:supplements:hermite:properties}, we decrease the degree of the Hermite polynomial for a higher derivative on $\activation_a$:
\begin{equation*}
  \partial^{\gamma_a} \dualp{\activation_a, H_k}_N
  = \dualp{\partial^{\gamma_a} (\activation_a), H_k}_N
  = \dualp{\partial^{\gamma_a+\gamma_\rho} (\activation_a), H_{k-\gamma_\rho}}_N.
\end{equation*}
By Lemma \ref{lemma:supplements:hermite:properties}, $\|\cdot\|_N$ normalized Hermite polynomials are given by
\begin{equation*}
  \bar{H}_k := \frac{1}{\sqrt{k!}} H_k
\end{equation*}
and thus
\begin{equation*}
  \partial^{\gamma_a} \dualp{\activation_a, H_k}_N
  = \dualp{\partial^{\gamma_a+\gamma_\rho} (\activation_a), \bar{H}_{k-\gamma_\rho}}_N \sqrt{(k-\gamma_\rho)!}.
\end{equation*}
Plugging the last equation and \eqref{eq:proof:2:lemma:concentration:covariance-derivative} into \eqref{eq:proof:1:lemma:concentration:covariance-derivative}, we obtain
\begin{multline*}
  \partial^\gamma \EE{(u,v) \sim \gaussian{0,A}}{\activation(u) \activation(v)} 
  \\
  \begin{aligned}
    & \le \sum_{k=0}^\infty \left|\dualp{\partial^{\gamma_a+\gamma_\rho} (\activation_a), \bar{H}_k}_N \right| \left|\dualp{\partial^{\gamma_b+\gamma_\rho} (\activation_b), \bar{H}_k}_N\right|
    \\
    & \le \left(\sum_{k=0}^\infty \dualp{\partial^{\gamma_a+\gamma_\rho} (\activation_a), \bar{H}_k}_N^2 \right)^{1/2} \left(\sum_{k=0}^\infty \dualp{\partial^{\gamma_b+\gamma_\rho} (\activation_b), \bar{H}_k}_N^2 \right)^{1/2},
    \\
    & = \left\|\partial^{\gamma_a+\gamma_\rho} (\activation_a)\right\|_N \left\|\partial^{\gamma_b+\gamma_\rho} (\activation_b)\right\|_N,
  \end{aligned}
\end{multline*}
where in the second step we have used Cauchy-Schwarz and in the last that $\bar{H}_k$ are an orthonormal basis.

\end{proof}

\begin{lemma} \label{lemma:concentration:matrix-change-of-variable}

  Let $f(a_{11}, a_{22}, a_{12})$ be implicitly defined by solving the identity 
  \begin{equation*}
    \begin{bmatrix}
      a_{11} & a_{12} \\
      a_{12} & a_{22}
    \end{bmatrix}
    =
    \begin{bmatrix}
      a & \rho a b \\
      \rho a b & b
    \end{bmatrix}
  \end{equation*}
  for $a$, $b$ and $\rho$. Let $\dom_f$ be a domain with $a_{11}, a_{22} \ge c > 0$ and $|a_{12}| \lesssim 1$. Then
  \begin{equation*}
    \|f'''\|_\CND{1}{\dom_f} \lesssim 1.
  \end{equation*}
  
\end{lemma}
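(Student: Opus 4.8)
The plan is to invert the defining identity in closed form and then bound the explicit expression by elementary calculus. Since $a_{11}, a_{22} \ge c > 0$ on $\dom_f$, the diagonal entries give $a = a_{11}^{1/2}$ and $b = a_{22}^{1/2}$, and the off-diagonal entry $\rho a b = a_{12}$ then yields $\rho = a_{12}\, a_{11}^{-1/2} a_{22}^{-1/2}$, so that
\begin{equation*}
  f(a_{11}, a_{22}, a_{12}) = \bigl( a_{11}^{1/2},\ a_{22}^{1/2},\ a_{12}\, a_{11}^{-1/2} a_{22}^{-1/2} \bigr),
\end{equation*}
which is real-analytic on $\{a_{11} > 0,\ a_{22} > 0\} \supset \dom_f$. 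The structural point I would exploit is that each component has \emph{separated} variables: it is a single product of factors, each depending on a single coordinate, and each factor is either the coordinate $a_{12}$ itself or a power $s \mapsto s^{p}$ with $p \in \{\tfrac{1}{2}, -\tfrac{1}{2}\}$ of $s = a_{11}$ or $s = a_{22}$.

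Next I would record the one elementary estimate needed: for $c > 0$ and $p \in \real$, the $k$-th derivative of $s \mapsto s^{p}$ on $[c, \infty)$ equals $c_{k,p}\, s^{p-k}$, which is bounded there by a constant depending only on $c, p, k$ whenever $p - k < 0$ --- in particular for every $k \ge 1$ when $p = \pm\tfrac{1}{2}$, and also for $k = 0$ when $p = -\tfrac{1}{2}$. Because the factors of a separated product differentiate independently, every partial derivative $\partial^\gamma f$ of order $1 \le |\gamma| \le 4$ is, on $\dom_f$, a single product whose factors are each bounded by a constant depending only on $c$: the only factor that is unbounded when left undifferentiated is $a_{ii}^{1/2}$, but in the components $a_{11}^{1/2}$ and $a_{22}^{1/2}$ it is differentiated at least once (as $|\gamma| \ge 1$), while in the last component $a_{11}$ and $a_{22}$ occur only with the exponent $-\tfrac{1}{2}$, and $|a_{12}| \lesssim 1$ on $\dom_f$. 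Hence $\sup_{\dom_f} |\partial^\gamma f| \lesssim 1$ for $1 \le |\gamma| \le 4$; in particular $\sup_{\dom_f}\|f'''\| \lesssim 1$ and $\sup_{\dom_f}\|f''''\| \lesssim 1$, so that $\|f'''\|_{\CND{1}{\dom_f}} = \sup_{\dom_f}\|f'''\| + \sup_{\dom_f}\|f''''\| \lesssim 1$.

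The argument is entirely routine; the only step requiring a moment's care --- and the reason the hypothesis bounds the \emph{third} derivative rather than $f$ itself --- is that $f$ is genuinely unbounded on $\dom_f$ (e.g. $a_{11}^{1/2} \to \infty$ as $a_{11} \to \infty$, since no upper bound on $a_{11}, a_{22}$ is assumed), so the needed boundedness must be extracted from the fact that differentiating $a_{ii}^{1/2}$ even once, or replacing it by $a_{ii}^{-1/2}$, produces a function that decays and is therefore bounded on $[c, \infty)$.
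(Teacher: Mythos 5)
Your proposal follows essentially the same route as the paper: invert the change of variables in closed form and then bound the relevant partial derivatives using that the diagonal entries are bounded below by $c$ and $|a_{12}|\lesssim 1$. One discrepancy: you inverted the \emph{squared} parametrization $\bigl(\begin{smallmatrix} a^2 & \rho ab \\ \rho ab & b^2 \end{smallmatrix}\bigr)$ (the one appearing in the Mehler lemma), obtaining $f=(a_{11}^{1/2},a_{22}^{1/2},a_{12}a_{11}^{-1/2}a_{22}^{-1/2})$, whereas the identity in the statement has $a$ and $b$ (not $a^2$, $b^2$) on the diagonal, so the function to be bounded is $f=(a_{11},\,a_{22},\,a_{12}/(a_{11}a_{22}))$, which is what the paper's proof writes down. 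This changes nothing substantive: your bounding argument (derivatives of $s\mapsto s^{p}$ on $[c,\infty)$ with negative resulting exponent, plus $|a_{12}|\lesssim 1$) applies verbatim, and for the literal statement it is even simpler since the first two components are linear. A small point in your favor: you explicitly note that $\|f'''\|_{\CND{1}{\dom_f}}$ requires control of fourth derivatives as well, which the paper's one-line proof ("all third partial derivatives exist and are bounded") glosses over.
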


\begin{proof}
  
Comparing coefficients, $f$ is explicitly given by
\begin{equation*}
  f(a_{11}, a_{22}, a_{12})
  = \begin{bmatrix}
    a_{11} & a_{22} & \frac{a_{12}}{a_{11} a_{22}}
  \end{bmatrix}^T.
\end{equation*}
Since the denominator is bounded away from zero, all third partial derivatives exist and are bounded.

\end{proof}

\begin{lemma} \label{lemma:concentration:covariance-perturbation}
  For $\dom \subset \real^d$ and $x,y \in \dom$, let
  \begin{align*}
    A(x,y) & = 
    \begin{bmatrix}
      a_{11}(x,y) & a_{12}(x,y) \\
      a_{12}(x,y) & a_{22}(x,y)
    \end{bmatrix}
    & B(x,y) & = 
    \begin{bmatrix}
      b_{11}(x,y) & b_{12}(x,y) \\
      b_{12}(x,y) & b_{22}(x,y)
    \end{bmatrix},
  \end{align*}
  with
  \begin{align*}
    a_{11}(x,y) & \ge c > 0, & a_{22}(x,y) & \ge c > 0, & |a_{12}(x,y)| \lesssim 1, \\
    b_{11}(x,y) & \ge c > 0, & b_{22}(x,y) & \ge c > 0, & |b_{12}(x,y)| \lesssim 1.
  \end{align*}
  Assume the derivatives $\activation^{(i)}$, $i=0, \dots, 3$ are continuous and have at most polynomial growth for $x \to \pm \infty$ and for all $a \in \{a(x,y) : \, x,y \in \dom, \, a \in \{a_{11}, a_{22}, b_{11}, b_{22} \} \}$ the scaled activation satisfies
  \begin{align*}
    \left\|\partial^i (\activation_a) \right\|_N & \lesssim 1, & 
    i & = 1, \dots, 3,
  \end{align*}
  with $\activation_a$ defined in \eqref{eq:concentration:scaled-activation}. Then, for $\alpha, \beta \le 1$ the functions
  \begin{align*}
    x & \to \EE{(u,v) \sim \gaussian{0,A(x,y)}}{\activation(u) \activation(v)},
    \\
    x & \to \EE{(u,v) \sim \gaussian{0,B(x,y)}}{\activation(u) \activation(v)}
  \end{align*}
  satisfy
  \begin{multline*}
    \left\| \EE{(u,v) \sim \gaussian{0,A}}{\activation(u) \activation(v)} - \EE{(u,v) \sim \gaussian{0,B}}{\activation(u) \activation(v)} \right\|_\CHHND{\alpha}{\beta}{\dom}
    \\
    \lesssim \|A\|_\CHHND{\alpha}{\beta}{\dom} \|B\|_\CHHND{\alpha}{\beta}{\dom} \|A-B\|_\CHHND{\alpha}{\beta}{\dom}.
  \end{multline*}

\end{lemma}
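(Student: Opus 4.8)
The plan is to write the Gaussian expectation as a fixed smooth function $\Phi$ of the three matrix entries and then combine a first-order Taylor (telescoping) expansion in those entries with the product rule and a chain-rule estimate for mixed Hölder norms. By Lemma~\ref{lemma:concentration:mehler} together with the change of variables $f$ of Lemma~\ref{lemma:concentration:matrix-change-of-variable},
\[
  \EE{(u,v)\sim\gaussian{0,A(x,y)}}{\activation(u)\activation(v)} = \Phi\big(a_{11}(x,y),a_{22}(x,y),a_{12}(x,y)\big), \qquad \Phi := G\circ f,
\]
where $G(a,b,\rho) := \sum_{k\ge 0}\dualp{\activation_a,H_k}_N\dualp{\activation_b,H_k}_N\rho^k/k!$ is the expectation in the coordinates $(a,b,\rho)$. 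Since $a_{11},a_{22}\ge c$ and $|a_{12}|\lesssim 1$ (and likewise for $B$), the triples of entry-functions take values in the domain $\dom_f$ of Lemma~\ref{lemma:concentration:matrix-change-of-variable}; moreover $\dom_f$ and the positive semi-definite cone are convex, so every convex combination $B+t(A-B)$, $t\in[0,1]$, is again positive semi-definite with entries in $\dom_f$.

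First I would check that $\Phi$ is $C^3$ on $\dom_f$ with derivatives of orders $1,2,3$ uniformly bounded. Lemma~\ref{lemma:concentration:covariance-derivative}, applicable because on the relevant range the matrices are positive semi-definite and hence $|\rho|\le 1$, bounds $|\partial^\gamma G|$ by $\|\partial^{\gamma_a+\gamma_\rho}(\activation_a)\|_N\,\|\partial^{\gamma_b+\gamma_\rho}(\activation_b)\|_N$, which is $\lesssim 1$ for $1\le|\gamma|\le 3$ by the hypotheses $\|\partial^i(\activation_a)\|_N\lesssim 1$, $i=1,2,3$, together with $\|\activation_a\|_N\lesssim 1$ from the linear growth \eqref{eq:assumption:activation-growth}. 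The map $f$ is explicit and smooth on $\dom_f$ with bounded derivatives there (Lemma~\ref{lemma:concentration:matrix-change-of-variable} supplies in particular the third- and fourth-order bounds), so the chain rule yields uniform bounds on $\partial^\gamma(G\circ f)$ for $1\le|\gamma|\le 3$.

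Then, writing $A_i,B_i$ for the entry-functions and abbreviating $\Phi(A):=\Phi(a_{11},a_{22},a_{12})$, the fundamental theorem of calculus along the segment from $B$ to $A$ gives
\[
  \Phi(A)-\Phi(B) = \sum_i\Big(\int_0^1\partial_i\Phi\big(B+t(A-B)\big)\,dt\Big)\,(A_i-B_i).
\]
By the product rule for mixed Hölder norms (Lemma~\ref{lemma:supplements:holder-properties}) and $\|A_i-B_i\|_\CHHND{\alpha}{\beta}{\dom}\le\|A-B\|_\CHHND{\alpha}{\beta}{\dom}$, it then suffices to bound, for each $i$, the mixed Hölder norm of $\int_0^1\partial_i\Phi(B+t(A-B))\,dt$. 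Pulling the norm under the $t$-integral and setting $w_t:=B+t(A-B)$, with $\|w_t\|_\CHHND{\alpha}{\beta}{\dom}\le\|A\|_\CHHND{\alpha}{\beta}{\dom}+\|B\|_\CHHND{\alpha}{\beta}{\dom}$, this reduces to a chain-rule estimate for mixed Hölder spaces: for $\psi\in C^2$ with uniformly bounded derivatives (here $\psi=\partial_i\Phi$, by the previous step) and any $w\colon\dom\times\dom\to\real^3$,
\[
  \|\psi\circ w\|_\CHHND{\alpha}{\beta}{\dom}\ \lesssim\ \|\psi\|_{C^2}\big(1+\|w\|_\CHHND{\alpha}{\beta}{\dom}\big)^2 .
\]
Since the diagonal entries are bounded below, $\|A\|_\CHHND{\alpha}{\beta}{\dom}$ and $\|B\|_\CHHND{\alpha}{\beta}{\dom}$ are $\gtrsim 1$, so the quadratic factor $(1+\|w_t\|)^2$ is absorbed into $\|A\|_\CHHND{\alpha}{\beta}{\dom}\|B\|_\CHHND{\alpha}{\beta}{\dom}$, giving the claim.

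The main obstacle is the chain-rule estimate for the mixed Hölder norm. It amounts to unravelling $\Delta_{x,h_x}^\alpha\Delta_{y,h_y}^\beta[\psi(w(\cdot,\cdot))]$ into the four-corner difference $\psi(w_{11})-\psi(w_{10})-\psi(w_{01})+\psi(w_{00})$, expanding each inner first difference by Taylor along a segment, and grouping the result into a term controlled by $\|\nabla^2\psi\|_\infty$ (a first difference of $w$ in $x$ times one in $y$, contributing the quadratic dependence on $\|w\|$) and a term controlled by $\|\nabla\psi\|_\infty$ (the mixed second difference of $w$, contributing the linear dependence); the lower-order pieces (the single differences $\Delta_x^\alpha$, $\Delta_y^\beta$ and the supremum) are handled the same way, using that $\dom$ is bounded so that the increments $\|h_x\|,\|h_y\|$ stay bounded. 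If Section~\ref{sec:supplements:holder-spaces} already records such a chain rule, this step is immediate; otherwise it is a routine but lengthy finite-difference computation, and it is the place where the remaining hypotheses on $\activation$ (continuity and at most polynomial growth of $\activation^{(i)}$, $i\le 3$, and the $\|\cdot\|_N$-bounds on the scaled derivatives) are consumed.
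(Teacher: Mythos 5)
Your overall route is essentially the paper's: represent the expectation as $(F\circ f)(a_{11},a_{22},a_{12})$ via Mehler's expansion and the change of variables of Lemma \ref{lemma:concentration:matrix-change-of-variable}, bound $\|F\circ f\|_{\CND{3}{\dom_f}}\lesssim 1$ through Lemma \ref{lemma:concentration:covariance-derivative} (using $|\rho|\le 1$ on the positive semi-definite range) together with the chain rule, and then pass to the mixed Hölder norm of the difference by product and chain rules for finite differences. The mixed-Hölder chain rule you anticipate is exactly what Lemmas \ref{lemma:supplements:fd-composition} and \ref{lemma:supplements:fd-composition-2} supply, and your sketch of its proof (four-corner difference, Taylor along segments, one group controlled by $\|\nabla^2\psi\|_\infty$ and one by $\|\nabla\psi\|_\infty$) is the right computation; your observation that the segment $B+t(A-B)$ stays in the admissible convex set is also correct and needed.

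The one step that does not go through as written is the final absorption. Parametrizing along $w_t=B+t(A-B)$ and applying your chain rule to $\partial_i\Phi\circ w_t$ yields the factor $\bigl(1+\|w_t\|_{\CHHND{\alpha}{\beta}{\dom}}\bigr)^2$, whose second-derivative contribution is a product of two first differences of the \emph{same} function $w_t$, hence of size up to $\max\{\|A\|,\|B\|\}^2$; this produces terms like $\|A\|_{\CHHND{\alpha}{\beta}{\dom}}^2\|A-B\|_{\CHHND{\alpha}{\beta}{\dom}}$. The hypotheses only give lower bounds $\|A\|,\|B\|\gtrsim 1$ (there is no upper bound on the diagonal entries or on the Hölder seminorms), so the claim $(1+\|A\|+\|B\|)^2\lesssim\|A\|\,\|B\|$ is false in general (take $\|A\|$ large and $\|B\|\sim 1$); your argument therefore proves the weaker estimate with $(\|A\|+\|B\|)^2$ in place of $\|A\|\,\|B\|$. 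The paper avoids this by an asymmetric decomposition (Lemma \ref{lemma:supplements:fd-composition-2}): the perturbation $g-\pg$ is kept inside one factor and the integral means $\idelta$ are taken with respect to $g$ or $\pg$ separately, so every term carries at most one factor of $\|A\|$ and one of $\|B\|$, i.e. $\max\{1,\|A\|\}\max\{1,\|B\|\}$, which the lower bound on the diagonal then converts into $\|A\|\,\|B\|$. Your weaker bound would still be sufficient where the lemma is applied (there $\|A\|,\|B\|\lesssim 1$), but to prove the statement as written you must redo the finite-difference bookkeeping so that the two difference factors never both come from the interpolated point $w_t$.
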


\begin{proof}

Define
\begin{align*}
  F(a, b, \rho) & = \EE{(u,v) \sim \gaussian{0,\bar{A}}}{\activation(u) \activation(v)}. &
  \bar{A} & = 
  \begin{bmatrix}
    a & \rho a b \\ \rho a b & b
  \end{bmatrix}
\end{align*}
and $f(a_{11}, a_{22}, a_{12})$ by solving the identity 
\begin{equation*}
  \begin{bmatrix}
    a_{11} & a_{12} \\
    a_{12} & a_{22}
  \end{bmatrix}
  =
  \begin{bmatrix}
    a & \rho a b \\
    \rho a b & b
  \end{bmatrix}
\end{equation*}
for $a$, $b$ and $\rho$. Then
\begin{align*}
  F \circ f \circ A
  & = x,y \to \EE{(u,v) \sim \gaussian{0,A(x,y)}}{\activation(u) \activation(v)},
  \\
  F \circ f \circ B
  & = x,y \to \EE{(u,v) \sim \gaussian{0,B(x,y)}}{\activation(u) \activation(v)}
\end{align*}
and
\begin{multline*}
  \left\| \EE{(u,v) \sim \gaussian{0,A}}{\activation(u) \activation(v)} - \EE{(u,v) \sim \gaussian{0,B}}{\activation(u) \activation(v)} \right\|_\CHHND{\alpha}{\beta}{\dom}
  \\
  = \left\| F \circ f \circ A - F \circ f \circ B \right\|_\CHHND{\alpha}{\beta}{\dom}.
\end{multline*}
By Lemmas \ref{lemma:supplements:fd-composition} (for $\Delta^\alpha$ and $\Delta^\beta$) and \ref{lemma:supplements:fd-composition-2} (for $\Delta^\alpha \Delta^\beta$), we have
\begin{multline*}
  \left\| F \circ f \circ A - F \circ f \circ B \right\|_\CHHND{\alpha}{\beta}{\dom}
  \\
  \lesssim \|F \circ f\|_\CND{3}{\dom_f} \|A-B\|_\CHHND{\alpha}{\beta}{\dom} 
  \\
    \max\{1,\, \|A\|_\CHHND{\alpha}{\beta}{\dom} \}
    \max\{1,\, \|B\|_\CHHND{\alpha}{\beta}{\dom} \},
\end{multline*}
with $\dom_f = A(\dom) \cup B(\dom)$, so that it suffices to bound $\|F \circ f\|_\CND{3}{\dom_f} \lesssim 1$. This follows directly from the assumptions, chain rule, product rule and Lemmas \ref{lemma:concentration:covariance-derivative} and \ref{lemma:concentration:matrix-change-of-variable}. Finally, we simplify
\begin{equation*}
  \max\{1,\, \|A\|_\CHHND{\alpha}{\beta}{\dom} \} \le \frac{1}{c}\|A\|_\CHHND{\alpha}{\beta}{\dom}
\end{equation*}
because 
\begin{equation*}
  \frac{1}{c}\|A\|_\CHHND{\alpha}{\beta}{\dom}
  \ge \frac{1}{c} a_{11}(\cdot) \ge 1
\end{equation*}
and likewise for $B$.
  
\end{proof}

\subsubsection{Concentration of the NTK}
\label{sec:concentration:ntk}

We combine the results from the last two sections to show concentration inequalities, first for the forward kernels $\gp^\ell$ and $\dgp^\ell$ and then for the NTK $\ntk$.

\begin{lemma} \label{lemma:concentration:concentration-gp}

  Let $\alpha = \beta = 1/2$ and $k = 0, \dots, \ell$.
  \begin{enumerate}
    \item Assume that all $W^k$ are are i.i.d. standard normal. 
    \item Assume that $\activation$ satisfies the growth condition \eqref{eq:assumption:activation-growth}, has uniformly bounded derivative \eqref{eq:assumption:dactivation-bounded}, derivatives $\activation^{(i)}$, $i=0, \dots, 3$ are continuous and have at most polynomial growth for $x \to \pm \infty$ and the scaled activations satisfy
    \begin{align*}
      \left\|\partial^i (\activation_a) \right\|_N & \lesssim 1, & 
      a & \in \{\gp^k(x,x): x \in \dom\}, &
      i & = 1, \dots, 3,
    \end{align*}
    with $\activation_a$ defined in \eqref{eq:concentration:scaled-activation}. The activation function may be different in each layer.
    \item For all $x \in \dom$ assume
    \begin{equation*}
      \gp^k(x,x) \ge c_\gp > 0.
    \end{equation*}
    \item The widths satisfy $n_\ell \gtrsim n_0$ for all $\ell=0, \dots, L$.
  \end{enumerate}
  Then, with probability at least
  \begin{equation*}
    1 - c \sum_{k=1}^{\ell-1} e^{-n_k} + e^{-u_k}
  \end{equation*}
  we have
  \begin{align*}
    \left\| \gp^\ell \right\|_\CHHN{\alpha}{\beta}  & \lesssim 1
    \\
    \left\| \egp^\ell \right\|_\CHHN{\alpha}{\beta}  & \lesssim 1
    \\
    \left\| \egp^\ell - \gp^\ell \right\|_\CHHN{\alpha}{\beta} 
    & \lesssim \sum_{k=0}^{\ell-1} \frac{n_0}{n_k}\left[ \frac{\sqrt{d}+\sqrt{u_k}}{\sqrt{n_k}} + \frac{d+u_k}{n_k} \right]
    \le \frac{1}{2} c_\gp
  \end{align*}
  for all $u_1, \dots, u_{\ell-1} \ge 0$ sufficiently small so that the last inequality holds.
  
\end{lemma}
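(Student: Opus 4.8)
The plan is to argue by induction on $\ell$, establishing the two upper bounds $\|\gp^\ell\|_{\CHHN{\alpha}{\beta}}\lesssim 1$, $\|\egp^\ell\|_{\CHHN{\alpha}{\beta}}\lesssim 1$ and the concentration estimate simultaneously. The base case $\ell=0$ is immediate: $\gp^0(x,y)=x^Ty$ is bounded in $\CHHN{\alpha}{\beta}$ on the bounded domain $\dom$, it coincides with its empirical counterpart, and the error vanishes. For the inductive step from $\ell-1$ to $\ell$ we use the splitting from the outline of Section~\ref{sec:concentration},
\[
  \left\|\egp^\ell-\gp^\ell\right\|_{\CHHN{\alpha}{\beta}}
  \le \left\|\egp^\ell-\EE{\ell-1}{\egp^\ell}\right\|_{\CHHN{\alpha}{\beta}}
    + \left\|\EE{\ell-1}{\egp^\ell}-\gp^\ell\right\|_{\CHHN{\alpha}{\beta}},
\]
where $\EE{\ell-1}{\cdot}$ denotes conditional expectation over the outermost random layer $W^{\ell-1}$ with $W^0,\dots,W^{\ell-2}$ frozen.

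First I would handle the fluctuation term. On the event that all frozen weights satisfy $\|W^k\|n_k^{-1/2}\lesssim 1$ (probability at least $1-c\sum_k e^{-n_k}$ by standard Gaussian matrix bounds, e.g.\ \cite{Vershynin2018}), Lemma~\ref{lemma:concentration:concentration-last-layer} applies with $W^{\ell-1}$ as the sub-gaussian layer and gives, with probability at least $1-e^{-u_{\ell-1}}$,
\[
  \left\|\egp^\ell-\EE{\ell-1}{\egp^\ell}\right\|_{\CHHN{\alpha}{\beta}}
  \lesssim \frac{n_0}{n_{\ell-1}}\left[\frac{\sqrt d+\sqrt{u_{\ell-1}}}{\sqrt{n_{\ell-1}}}+\frac{d+u_{\ell-1}}{n_{\ell-1}}\right],
\]
which is exactly the $k=\ell-1$ term of the claimed sum. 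For the perturbation term I would observe that, conditionally on the frozen weights, $\bigl(f^\ell_r(x),f^\ell_r(y)\bigr)$ is centered Gaussian with covariance matrix $\hat A(x,y)$ having entries $\egp^{\ell-1}(x,x),\egp^{\ell-1}(x,y),\egp^{\ell-1}(y,y)$, so $\EE{\ell-1}{\egp^\ell(x,y)}=\EE{(u,v)\sim\gaussian{0,\hat A(x,y)}}{\activation(u)\activation(v)}$, whereas $\gp^\ell(x,y)=\EE{(u,v)\sim\gaussian{0,A(x,y)}}{\activation(u)\activation(v)}$ with $A$ built the same way from $\gp^{\ell-1}$. The inductive hypothesis yields $\|\egp^{\ell-1}\|_{\CHHN{\alpha}{\beta}},\|\gp^{\ell-1}\|_{\CHHN{\alpha}{\beta}}\lesssim 1$ and, through the error bound $\le\frac12 c_\gp$ combined with assumption~(3), $\egp^{\ell-1}(x,x)\ge\frac12 c_\gp>0$; hence the hypotheses of Lemma~\ref{lemma:concentration:covariance-perturbation} hold for both $A$ and $\hat A$ (the scaled-activation bound $\left\|\partial^i(\activation_a)\right\|_N\lesssim 1$ being needed for $a$ in a fixed compact neighbourhood of $\{\gp^{\ell-1}(x,x):x\in\dom\}$, which follows from assumption~(2) and continuity of $a\mapsto\left\|\partial^i(\activation_a)\right\|_N$), and since the entries of $\hat A-A$ are $\egp^{\ell-1}-\gp^{\ell-1}$ evaluated on or off the diagonal, with Hölder norms controlled by $\|\egp^{\ell-1}-\gp^{\ell-1}\|_{\CHHN{\alpha}{\beta}}$, the lemma gives
\[
  \left\|\EE{\ell-1}{\egp^\ell}-\gp^\ell\right\|_{\CHHN{\alpha}{\beta}}
  \lesssim \|\hat A\|_{\CHHN{\alpha}{\beta}}\|A\|_{\CHHN{\alpha}{\beta}}\|\hat A-A\|_{\CHHN{\alpha}{\beta}}
  \lesssim \left\|\egp^{\ell-1}-\gp^{\ell-1}\right\|_{\CHHN{\alpha}{\beta}}
  \lesssim \sum_{k=0}^{\ell-2}\frac{n_0}{n_k}\left[\frac{\sqrt d+\sqrt{u_k}}{\sqrt{n_k}}+\frac{d+u_k}{n_k}\right].
\]
Adding the two contributions gives the stated bound, which is $\le\frac12 c_\gp$ once the $u_k$ are small enough. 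The bound $\|\gp^\ell\|_{\CHHN{\alpha}{\beta}}\lesssim 1$ follows by writing $\gp^\ell=F\circ f\circ A$ as in the proof of Lemma~\ref{lemma:concentration:covariance-perturbation} with $\|F\circ f\|_{\CN{3}}\lesssim 1$ and invoking the composition estimates of Lemmas~\ref{lemma:supplements:fd-composition} and~\ref{lemma:supplements:fd-composition-2}; then $\|\egp^\ell\|_{\CHHN{\alpha}{\beta}}\le\|\gp^\ell\|_{\CHHN{\alpha}{\beta}}+\frac12 c_\gp\lesssim 1$, and a union bound over the weight-boundedness events and the $\ell$ applications of Lemma~\ref{lemma:concentration:concentration-last-layer} produces the stated probability.

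The step I expect to be the main obstacle is the bootstrap structure of the induction rather than any single estimate: Lemma~\ref{lemma:concentration:covariance-perturbation} at level $\ell$ needs the diagonal entries of $\hat A$, i.e.\ $\egp^{\ell-1}(x,x)$, bounded away from zero, which is available only because the level-$(\ell-1)$ error has already been shown to be at most $\frac12 c_\gp$; keeping this quantitative threshold consistent across all levels forces the $u_k$ to be chosen uniformly small, and one must also be careful to identify $\EE{\ell-1}{\egp^\ell}$ correctly as a Gaussian integral against the \emph{empirical} covariance so that Lemmas~\ref{lemma:concentration:concentration-last-layer} and~\ref{lemma:concentration:covariance-perturbation} slot in. Everything else is bookkeeping of failure probabilities and routine Hölder-norm manipulations (slicing the kernel to its diagonal, products, and the like).
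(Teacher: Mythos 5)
Your proposal follows essentially the same route as the paper's proof: induction on the layer, the split into the fluctuation term (handled by Lemma \ref{lemma:concentration:concentration-last-layer} conditionally on bounded frozen weights) and the perturbation term (identified as a Gaussian integral against the empirical covariance and handled by Lemma \ref{lemma:concentration:covariance-perturbation}, with the $\tfrac12 c_\gp$ threshold from the induction hypothesis keeping the diagonal entries bounded below), followed by a union bound. The only cosmetic difference is how the $O(1)$ bounds are obtained — the paper bounds $\|\egp^\ell\|_\CHHN{\alpha}{\beta}$ deterministically via Lemma \ref{lemma:continuity:egp-holder} and gets $\|\gp^\ell\|_\CHHN{\alpha}{\beta}$ by the triangle inequality, while you go in the opposite direction via the composition estimates — which is an equally valid bookkeeping choice.
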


\begin{proof}

We prove the statement by induction. Let us first consider $\ell \ge 1$. We split off the expectation over the last layer
\begin{align*}
  \left\| \egp^{\ell+1} - \gp^{\ell+1} \right\|_\CHHN{\alpha}{\beta}
  & \le \left\| \egp^{\ell+1} - \EE{\ell}{\egp^{\ell+1}} \right\|_\CHHN{\alpha}{\beta}
    + \left\| \EE{\ell}{\egp^{\ell+1}} - \gp^{\ell+1} \right\|_\CHHN{\alpha}{\beta}
  \\
  & = I + II,
\end{align*}
where $\EE{\ell}{\cdot}$ denotes the expectation with respect to $W^\ell$. We Estimate $I$, given that the lower layers satisfy
\begin{align} \label{eq:proof:1:lemma:concentration:concentration-gp}
  \|W^k\| n_k^{-1/2} & \lesssim 1, & k & = 0, \dots, \ell-1,
\end{align}
which is true with probability at least $1 - 2 e^{-n_{k}}$, see e.g. \cite[Theorem 4.4.5]{Vershynin2018}. Then, by Lemma \ref{lemma:concentration:concentration-last-layer} for $u_\ell \ge 0$
\begin{equation} \label{eq:proof:2:lemma:concentration:concentration-gp}
  \pr{
    \left\|\egp^{\ell+1} - \E{\egp^{\ell+1}}\right\|_\CHHND{\alpha}{\beta}{\dom} 
    \ge C \frac{n_0}{n_\ell}\left[ \frac{\sqrt{d}+\sqrt{u_\ell}}{\sqrt{n_\ell}} + \frac{d+u_\ell}{n_\ell} \right]
  }
  \le e^{-u_\ell}.
\end{equation}
Next we estimate II. To this end, recall that $\egp^{\ell+1}(x,y)$ is defined by
\begin{equation*}
  \egp^{\ell+1}(x,y) 
  = \frac{1}{n_\ell}\sum_{r=1}^{n_{\ell+1}} \activationp{f_r^{\ell+1}(x)} \activationp{f_r^{\ell+1}(y)}.
\end{equation*}
For fixed lower layers $W^0, \dots, W^{\ell-1}$, the inner arguments
\begin{align*}
  f_r^{\ell+1}(x) & = W_{r\cdot}^\ell n_\ell^{-1/2} \activationp{f^\ell(x)} &
  f_r^{\ell+1}(x) & = W_{r\cdot}^\ell n_\ell^{-1/2} \activationp{f^\ell(y)}
\end{align*}
are Gaussian random variables in $W_{r\cdot}^\ell$ with covariance
\begin{multline} \label{eq:proof:3:lemma:concentration:concentration-gp}
  \EE{l}{
    W_{r\cdot}^\ell n_\ell^{-1/2} \activationp{f^\ell(x)}^T
    W_{r\cdot}^\ell n_\ell^{-1/2} \activationp{f^\ell(y)}
  }
  \\
  = \frac{1}{n_\ell} \sum_{r=1}^{n_\ell} n_\ell^{-1/2} \activationp{f^\ell(x)} n_\ell^{-1/2} \activationp{f^\ell(y)}
  = \egp^\ell(x,y).
\end{multline}
It follows that
\begin{align*}
  \EE{\ell}{\egp^{\ell+1}(x,y)}
  & = \EE{(u,v) \sim \gaussian{0,\eA}}{\activation(u) \activation(v)}, & 
  \eA
  & = \begin{bmatrix}
    \egp^\ell(x,x) & \egp^\ell(x,y) \\
    \egp^\ell(y,x) & \egp^\ell(y,y) \\
  \end{bmatrix}.
\end{align*}
This matches the definition 
\begin{align*}
  \gp^{\ell+1}(x,y) & = \EE{u,v \sim \gaussian{0, A}}{\activationp{u}, \activationp{v}} &
  A & = \begin{bmatrix}
    \gp^\ell(x,x) & \gp^\ell(x,y) \\
    \gp^\ell(y,x) & \gp^\ell(y,y)
  \end{bmatrix} & 
\end{align*}
of the process $\gp^{\ell+1}$ up to the covariance matrix $\eA$ versus $A$. Thus, we can estimate the difference $\left\|\EE{\ell}{\egp^{\ell+1}(x,y)} - \gp^{\ell+1}\right\|_\CHHN{\alpha}{\beta}$ by Lemma \ref{lemma:concentration:covariance-perturbation} if the entries of $A$ and $\eA$ satisfy the required bounds. To this end, we first bound the diagonal entries away from zero. For $A$, this is true by assumption. For $\eA$, by induction, with probability at least $1 - c \sum_{k=1}^{\ell-1} e^{-n_k} + e^{-u_k}$ we have
\begin{equation} \label{eq:proof:4:lemma:concentration:concentration-gp}
  \left\| \egp^\ell - \gp^\ell \right\|_\CHHN{\alpha}{\beta} 
  \lesssim \sum_{k=0}^{\ell-1} \frac{n_0}{n_k}\left[ \frac{\sqrt{d}+\sqrt{u_k}}{\sqrt{n_k}} + \frac{d+u_k}{n_k} \right]
  \le \frac{1}{2} c_\gp.
\end{equation}
In the event that this is true, we have 
\begin{equation*}
  \egp^\ell(x,x) \ge \frac{1}{2} c_\gp > 0.
\end{equation*}
Next, we bound the off diagonal terms. Since the weights are bounded \eqref{eq:proof:1:lemma:concentration:concentration-gp}, Lemma \ref{lemma:continuity:egp-holder} implies
\begin{align*}
  \left\|\egp^\ell\right\|_\CHHN{\alpha}{\beta} & \lesssim \frac{n_0}{n_l} \lesssim 1, &
  \left\|\gp^\ell\right\|_\CHHN{\alpha}{\beta} & \lesssim 1,
\end{align*}
where the last inequality follows from \eqref{eq:proof:4:lemma:concentration:concentration-gp}. In particular,
\begin{align*}
  \egp^\ell(x,y) & \lesssim 1, &
  \gp^\ell(x,y) & \lesssim 1
\end{align*}
for all $x,y \in \dom$. Hence, we can apply Lemma \ref{lemma:concentration:covariance-perturbation} and obtain
\begin{multline*}
  \left\|\EE{\ell}{\egp^{\ell+1}} - \gp^{\ell+1}\right\|_\CHHN{\alpha}{\beta}
  \lesssim \left\|\gp^{\ell}\right\|_\CHHN{\alpha}{\beta} \left\|\egp^{\ell}\right\|_\CHHN{\alpha}{\beta} \left\|\egp^{\ell} - \gp^{\ell}\right\|_\CHHN{\alpha}{\beta}
  \\
  \lesssim \left\|\egp^{\ell} - \gp^{\ell}\right\|_\CHHN{\alpha}{\beta}.
  \lesssim \sum_{k=0}^{\ell-1} \frac{n_0}{n_k}\left[ \frac{\sqrt{d}+\sqrt{u_k}}{\sqrt{n_k}} + \frac{d+u_k}{n_k} \right],
\end{multline*}
where the last line follows by induction. Together with \eqref{eq:proof:1:lemma:concentration:concentration-gp}, \eqref{eq:proof:2:lemma:concentration:concentration-gp} and a union bound, this shows the result for $\ell \ge 1$.

Finally, we consider the induction start for $\ell=0$. The proof is the same, except that in \eqref{eq:proof:3:lemma:concentration:concentration-gp} the covariance simplifies to
\begin{multline*}
  \EE{l}{
    f^1(x)
    f^1(y)
  }
  = \EE{l}{
    (W_{r\cdot}^0 V x)
    (W_{r\cdot}^0 y)
  }
  = (Vx)^T (Vy)
  = x^T y
  = \gp^0(x,y).
\end{multline*}
Hence, for $\ell=1$ the two covariances $A$ and $\eA$ are identical and therefore $\|\EE{0}{\egp^1(x,y)} - \gp^1\|_\CHHN{\alpha}{\beta} = 0$.

\end{proof}

\begin{lemma}[Lemma \ref{lemma:concentration:concentration-ntk}, restated from the overview]
  \LemmaConcentration{\label{eq:1:lemma:concentration:concentration-ntk}}  
\end{lemma}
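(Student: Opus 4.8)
The plan is to reduce the statement entirely to the forward--kernel concentration already established in Lemma~\ref{lemma:concentration:concentration-gp}, exploiting the product structure of the NTK. First I would apply Lemma~\ref{lemma:ntk:entk-as-gp}, which under the hypothesis $W^L\in\{-1,+1\}$ gives $\entk(x,y)=\edgp^L(x,y)\,\egp^{L-1}(x,y)$, while by definition $\ntk(x,y)=\dgp^L(x,y)\,\gp^{L-1}(x,y)$. Writing the difference as
\[
  \entk-\ntk = \left(\edgp^L-\dgp^L\right)\egp^{L-1} + \dgp^L\left(\egp^{L-1}-\gp^{L-1}\right)
\]
and using that Hölder spaces are closed under products (Lemma~\ref{lemma:supplements:holder-properties}, Item~\ref{item:product:lemma:supplements:holder-properties}), I get
\[
  \left\|\entk-\ntk\right\|_\CHHN{\alpha}{\beta}
  \le \left\|\edgp^L-\dgp^L\right\|_\CHHN{\alpha}{\beta}\left\|\egp^{L-1}\right\|_\CHHN{\alpha}{\beta}
    + \left\|\dgp^L\right\|_\CHHN{\alpha}{\beta}\left\|\egp^{L-1}-\gp^{L-1}\right\|_\CHHN{\alpha}{\beta},
\]
so the whole claim is reduced to bounding these four factors.

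For the pair $\gp^{L-1},\egp^{L-1}$ I would apply Lemma~\ref{lemma:concentration:concentration-gp} at the appropriate level with the network activations $\activation$: this yields $\left\|\gp^{L-1}\right\|_\CHHN{\alpha}{\beta}\lesssim1$, $\left\|\egp^{L-1}\right\|_\CHHN{\alpha}{\beta}\lesssim1$ and $\left\|\egp^{L-1}-\gp^{L-1}\right\|_\CHHN{\alpha}{\beta}\lesssim\sum_{k=0}^{L-2}\frac{n_0}{n_k}\left[\frac{\sqrt d+\sqrt{u_k}}{\sqrt{n_k}}+\frac{d+u_k}{n_k}\right]$, all with probability at least $1-c\sum_{k=1}^{L-2}\left(e^{-n_k}+e^{-u_k}\right)$. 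For the pair $\dgp^L,\edgp^L$ the key observation is that, by the covariance computation in \eqref{eq:proof:3:lemma:concentration:concentration-gp}, $f^L(x)$ and $f^L(y)$ are jointly Gaussian in $W^{L-1}$ with covariance built from $\egp^{L-1}$, so $\dgp^L$ and $\edgp^L$ are exactly the level-$L$ forward kernels obtained by running the same recursion with the top-layer activation replaced by $\dactivation$. Since the hypotheses of the present lemma impose the growth, bounded-derivative, smoothness and scaled $\|\cdot\|_N$-norm conditions on $\dactivation$ as well as on $\activation$, Lemma~\ref{lemma:concentration:concentration-gp} applies verbatim in this case, giving $\left\|\dgp^L\right\|_\CHHN{\alpha}{\beta}\lesssim1$, $\left\|\edgp^L\right\|_\CHHN{\alpha}{\beta}\lesssim1$ and $\left\|\edgp^L-\dgp^L\right\|_\CHHN{\alpha}{\beta}\lesssim\sum_{k=0}^{L-1}\frac{n_0}{n_k}\left[\frac{\sqrt d+\sqrt{u_k}}{\sqrt{n_k}}+\frac{d+u_k}{n_k}\right]$ with probability at least $1-c\sum_{k=1}^{L-1}\left(e^{-n_k}+e^{-u_k}\right)$.

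Plugging the four bounds into the product estimate and using $\sum_{k=0}^{L-2}\le\sum_{k=0}^{L-1}$ gives $\left\|\entk-\ntk\right\|_\CHHN{\alpha}{\beta}\lesssim\sum_{k=0}^{L-1}\frac{n_0}{n_k}\left[\frac{\sqrt d+\sqrt{u_k}}{\sqrt{n_k}}+\frac{d+u_k}{n_k}\right]$; a union bound over the two applications of Lemma~\ref{lemma:concentration:concentration-gp}, which share the common lower-layer events $\left\{\|W^k\|n_k^{-1/2}\lesssim1\right\}$ and so cost nothing extra, yields the stated probability $1-c\sum_{k=1}^{L-1}\left(e^{-n_k}+e^{-u_k}\right)$, and the rightmost inequality $\le\frac{1}{2} c_\gp$ holds once the $u_k$ are taken small enough, just as in Lemma~\ref{lemma:concentration:concentration-gp}. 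I expect no real obstacle here: the genuine work is entirely inside Lemma~\ref{lemma:concentration:concentration-gp}, and the only points deserving attention are the identification of $\dgp^L,\edgp^L$ with level-$L$ forward kernels for the activation $\dactivation$ and checking that $\dactivation$ inherits every activation hypothesis required by that lemma.
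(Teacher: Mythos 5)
Your proposal takes essentially the same route as the paper's proof: the product identity of Lemma \ref{lemma:ntk:entk-as-gp}, a telescoping decomposition of $\entk-\ntk$ bounded via the Hölder product rule (Lemma \ref{lemma:supplements:holder-properties}), and two applications of Lemma \ref{lemma:concentration:concentration-gp} — one for $\egp^{L-1},\gp^{L-1}$ and one for the recursion with the top-layer activation replaced by $\dactivation$ — combined by a union bound. The only point you gloss over with ``applies verbatim'' is that the $\dactivation$-modified recursion is not formally covered by the hypothesis $\gp^k(x,x)\ge c_\gp$ at the output level (nothing guarantees $\dgp^L(x,x)\ge c_\gp$); as the paper notes, this is harmless because $\dactivation$ enters only in the last layer, so the output-level lower bound is never used in the induction step of Lemma \ref{lemma:concentration:concentration-gp}.
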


\begin{proof}
By definition \eqref{eq:def-ntk} of $\ntk$ and Lemma \ref{lemma:ntk:entk-as-gp} for $\entk$, we have
\begin{align*}
  \ntk(x,y) & = \dgp^L(x,y) \gp^{L-1}(x,y), &
  \entk(x,y) & = \edgp^L(x,y) \egp^{L-1}(x,y)
\end{align*}
and therefore
\begin{align*}
  \left\|\ntk - \entk\right\|_\CHHN{\alpha}{\beta}
  & = \left\|\dgp^L \gp^{L-1} - \edgp^L \egp^{L-1}\right\|_\CHHN{\alpha}{\beta}
  \\
  & = \left\| \left[\dgp^L - \edgp^L \right] \gp^{L-1}\right\|_\CHHN{\alpha}{\beta}
    + \left\|\edgp^L \left[\gp^{L-1} - \egp^{L-1}\right]\right\|_\CHHN{\alpha}{\beta}
  \\
  & = \left\| \dgp^L - \edgp^L \right\|_\CHHN{\alpha}{\beta}  \left\|\gp^{L-1}\right\|_\CHHN{\alpha}{\beta}
    + \left\|\edgp^L\right\|_\CHHN{\alpha}{\beta} \left\|\gp^{L-1} - \egp^{L-1}\right\|_\CHHN{\alpha}{\beta},
\end{align*}
where in the last step we have used Lemma \ref{lemma:supplements:holder-properties} Item \ref{item:product:lemma:supplements:holder-properties}. Thus, the result follows from
\begin{align*}
  \left\| \gp^{L-1} \right\|_\CHHN{\alpha}{\beta}  & \lesssim 1, &
  \left\| \egp^{L-1} \right\|_\CHHN{\alpha}{\beta}  & \lesssim 1, &
  \left\| \dgp^L \right\|_\CHHN{\alpha}{\beta}  & \lesssim 1, &
  \left\| \edgp^L \right\|_\CHHN{\alpha}{\beta}  & \lesssim 1
\end{align*}
and
\begin{multline*}
  \max \left\{\left\|\gp^{L-1} - \egp^{L-1}\right\|_\CHHN{\alpha}{\beta}, \, \left\|\dgp^L - \edgp^L\right\|_\CHHN{\alpha}{\beta}\right\}
  \\
  \lesssim \sum_{k=0}^{L-1} \frac{n_0}{n_k}\left[ \frac{\sqrt{d}+\sqrt{u_k}}{\sqrt{n_k}} + \frac{d+u_k}{n_k} \right]
  \le \frac{1}{2} c_\gp,
\end{multline*}
with probability \eqref{eq:1:lemma:concentration:concentration-ntk} by Lemma \ref{lemma:concentration:concentration-gp}. For $\dgp^L$, we do not require the lower bound $\dgp^k(x,x) \ge c_\gp > 0$ because in the recursive definition $\dactivation$ is only used in the last layer and therefore not necessary in the induction step in the proof of Lemma \ref{lemma:concentration:concentration-gp}.
  
\end{proof}

\subsection{Proof of Lemma \ref{lemma:close-to-initial:close-to-initial}: Weights stay Close to Initial}

The derivative $\partial_{W^k} f^\ell(x) \in \real^{n_{\ell-1} \times (n_{k+1} \times n_k)}$ is a tensor with three axes for which we define the norm
\begin{equation*}
  \left\|\partial_{W^k} f^\ell(x)\right\|_*
  := \sup_{\substack{\|u\|, \|v\|, \|w\| \le 1}} \sum_{r,i,j} u_r v_i w_j \partial_{W_{ij}^k} f_r^\ell(x)
\end{equation*}
and the corresponding maximum norm $\|\cdot\|_\CND{0}{\dom;*}$ for functions mapping $x$ to a tensor measured in the $\|\cdot\|_*$ norm. We use this norm for an inductive argument in a proof, but later only apply it for the last layer $\ell=L+1$. In this case $n_{L+1} = 1$ and the norm reduces to a regular matrix norm.

\begin{lemma} \label{lemma:close-to-initial:w-derivative}
  Assume that $\activation$ satisfies the growth and derivative bounds \eqref{eq:assumption:activation-growth}, \eqref{eq:assumption:dactivation-bounded} and may be different in each layer. Assume the weights are bounded $\|W^k\| n_k^{-1/2} \lesssim 1$, $k=1, \dots, \ell-1$. Then for $0 \le \alpha \le 1$
  \begin{equation*}
    \left\| \partial_{W^k} f^\ell \right\|_\CND{0}{\dom;*} 
    \lesssim \left(\frac{n_0}{n_k}\right)^{1/2}.
  \end{equation*}
  
\end{lemma}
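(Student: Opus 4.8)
The plan is to prove the bound by induction on $\ell$, exactly mirroring the structure of Lemmas~\ref{lemma:continuity:f-bounded-lipschitz} and~\ref{lemma:continuity:df-bounded}, but now differentiating with respect to the weights $W^k$ in a fixed lower layer $k$ rather than with respect to the input $x$. The base case is $\ell = k+1$: here $f^{k+1} = W^k n_k^{-1/2}\activationp{f^k}$, so $\partial_{W^k_{ij}} f^{k+1}_r = \delta_{ri}\, n_k^{-1/2}\activationp{f^k_j}$, and pairing against unit vectors $u,v,w$ gives $\sum_{r,i,j} u_r v_i w_j \delta_{ri} n_k^{-1/2}\activationp{f^k_j} = n_k^{-1/2}\,(u\odot v\text{-type contraction})$, which is bounded by $n_k^{-1/2}\|\activationp{f^k}\| \lesssim n_k^{-1/2}\|f^k\| \lesssim n_k^{-1/2} n_0^{1/2}\prod_{j<k}\|W^j\|n_j^{-1/2} \lesssim (n_0/n_k)^{1/2}$, using the growth condition~\eqref{eq:assumption:activation-growth} and the upper bound from Lemma~\ref{lemma:continuity:f-bounded-lipschitz} together with $\|W^j\|n_j^{-1/2}\lesssim 1$.

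For the induction step, I would write $f^{\ell+1} = W^\ell n_\ell^{-1/2}\activationp{f^\ell}$ and apply the chain rule in the $W^k$ direction:
\begin{equation*}
  \partial_{W^k_{ij}} f^{\ell+1}_r
  = \sum_{s=1}^{n_\ell} W^\ell_{rs} n_\ell^{-1/2} \dactivationp{f^\ell_s}\, \partial_{W^k_{ij}} f^\ell_s .
\end{equation*}
Contracting against unit vectors $u\in\real^{n_{\ell}}$ (indexing $r$, really $n_{\ell+1}$-many, but we only ever use $\ell+1=L+1$ so this is harmless), $v,w$, this becomes a product of the operator $W^\ell n_\ell^{-1/2}$, the diagonal multiplication by $\dactivationp{f^\ell}$ which has norm $\lesssim 1$ by~\eqref{eq:assumption:dactivation-bounded}, and the tensor $\partial_{W^k} f^\ell$. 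Hence
\begin{equation*}
  \left\|\partial_{W^k} f^{\ell+1}\right\|_*
  \le \left\|W^\ell\right\| n_\ell^{-1/2}\, \left\|\dactivationp{f^\ell}\right\|_\infty\, \left\|\partial_{W^k} f^\ell\right\|_*
  \lesssim \left\|\partial_{W^k} f^\ell\right\|_*,
\end{equation*}
and the induction hypothesis gives $\lesssim (n_0/n_k)^{1/2}$. Taking the supremum over $x\in\dom$ gives the $\CND{0}{\dom;*}$ bound. Note the dependence on the layer index cancels exactly because each additional layer contributes a factor $\|W^\ell\|n_\ell^{-1/2}\lesssim 1$, so no extra powers of $n$ accumulate.

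The main obstacle is purely bookkeeping: being careful that the contraction defining $\|\cdot\|_*$ interacts correctly with the matrix product in the chain rule — specifically, that $\sup_{\|u\|\le 1}\|u^T W^\ell n_\ell^{-1/2}\| \le \|W^\ell\|n_\ell^{-1/2}$ and that the Hadamard (diagonal) action of $\dactivationp{f^\ell}$ has sup-norm $\lesssim 1$ uniformly in $x$, so that the three-index tensor norm factorizes as claimed. There is no genuine analytic difficulty; the $0\le\alpha\le 1$ in the statement plays no role here (it is a vestige for uniformity with neighboring lemmas) and the growth/derivative assumptions on $\activation$ are used exactly as in the cited lemmas. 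One should only double-check that the base case is taken at $\ell=k+1$ (for $\ell\le k$ the derivative vanishes, and the stated bound holds trivially), and that $n_\ell\sim n_0$ is not needed — only $\|W^j\|n_j^{-1/2}\lesssim 1$ and the telescoping of these unit factors.
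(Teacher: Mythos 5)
Your proposal is correct and follows essentially the same route as the paper: induction on $\ell$ with base case $\ell=k+1$ (where $\partial_{W^k_{ij}}f^{k+1}_r=\delta_{ri}n_k^{-1/2}\activationp{f^k_j}$ is bounded via the growth condition and the forward bound of Lemma \ref{lemma:continuity:f-bounded-lipschitz}), and an induction step via the chain rule where each layer contributes a factor $\|W^\ell\|n_\ell^{-1/2}\,\|\dactivationp{f^\ell}\|_\infty\lesssim 1$. The only cosmetic difference is that the paper quotes Lemma \ref{lemma:continuity:nn-holder} for the bound $\|\activationp{f^k}\|_{\CN{0}}\lesssim n_0^{1/2}$, which amounts to the same estimate you invoke directly.
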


\begin{proof}

First note that for any tensor $T$
\begin{equation*}
  \left\| \sum_{r,i,j} u_r v_i w_j T_{rij} \right\|_\CN{0} \le C \|u\| \|v\| \|w\|
\end{equation*}
implies that $\|T\|_\CND{0}{\dom;*} \le C$, which we use throughout the proof. We proceed by induction over $\ell$. For $k \ge \ell$, the pre-activation $f^\ell$ does not depend on $W^k$ and thus $\partial_{W^k} f^\ell(x) = 0$. For $k = \ell-1$, we have
\begin{equation*}
  \partial_{W_{ij}^k} f_r^{k+1}(x)
  = \partial_{W_{ij}^k} W_{r\cdot}^k n_k^{-1/2} \activationp{f^k(x)}
  = \delta_{ir} n_k^{-1/2} \activationp{f_j^k(x)}
\end{equation*}
and therefore for any vectors $u$, $v$, $w$
\begin{multline*}
  \left\| \sum_{r,i,j} u_r v_i w_j \partial_{W_{ij}^k} f_r^k(x) \right\|_\CN{0}
  = \left\| n_k^{-1/2} (u^T v)\left(w^T \activationp{f^k}\right) \right\|_\CN{0}
  \\
  \le n_k^{-1/2} \|u\| \|v\| \|w\| \left\| \activationp{f^k} \right\|_\CN{0}
  \lesssim \|u\| \|v\| \|w\| \left(\frac{n_0}{n_k}\right)^{1/2},
\end{multline*}
where in the last step we have used Lemma \ref{lemma:continuity:nn-holder}. Thus, we conclude that
\begin{equation*}
  \left\| \partial_{W^k} f^{k+1}(x) \right\|_\CND{0}{\dom;*} 
  \lesssim \left(\frac{n_0}{n_k}\right)^{1/2}.
\end{equation*}
For $k < \ell-1$, we have
\begin{equation*}
  \partial_{W_{ij}^k} f^\ell(x)
  = \partial_{W_{ij}^k} W^{\ell-1} n_{\ell-1}^{-1/2} \activationp{f^{\ell-1}}
  = W^{\ell-1} n_{\ell-1}^{-1/2} \left[\dactivationp{f^{\ell-1}} \odot \partial_{W_{ij}^k}f^{\ell-1} \right]
\end{equation*}
and therefore
\begin{align*}
  \left\| \sum_{r,i,j} u_r v_i w_j \partial_{W_{ij}^\ell} f_r^k \right\|_\CN{0}
  & \le \|u^T W^{\ell-1} n_{\ell-1}^{-1/2} \| \|v\| \|w\| \left\| \dactivationp{f^{\ell-1}} \odot \partial_{W_{ij}^k} f^{\ell-1}\right\|_\CND{0}{\dom;*}
  \\
  & \le \|u\| \|v\| \|w\| \left\| \dactivationp{f^{\ell-1}} \right\|_\CND{0}{\dom;\ell_\infty} \left\| \partial_{W_{ij}^k} f^{\ell-1}\right\|_\CND{0}{\dom;*}
  \\
  & \lesssim \|u\| \|v\| \|w\| \left(\frac{n_0}{n_k}\right)^{1/2},
\end{align*}
where in the second step we have used that $\|W^{\ell-1}\| n_{\ell-1}^{-1/2} \lesssim 1$ and in the last step we have used that $\left\|\dactivationp{f^{\ell-1}}\right\|_{\ell_\infty} \lesssim 1$ because $|\dactivation(\cdot)| \lesssim 1$ and the induction hypothesis. It follows that
\begin{equation*}
  \left\| \partial_{W^k} f^\ell(x) \right\|_\CND{0}{\dom;*} 
  \lesssim \left(\frac{n_0}{n_k}\right)^{1/2}.
\end{equation*}

\end{proof}

\begin{lemma}[Lemma \ref{lemma:close-to-initial:close-to-initial}, restated from the overview]
  \LemmaCloseToInitial
\end{lemma}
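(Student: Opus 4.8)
The plan is to write the gradient flow componentwise as a pairing of the residual against the layer gradients $\partial_{W^\ell} f^{L+1}$, bound the resulting matrix-valued velocity by a duality argument that produces the weak norm $\|\res\|_{\CND{0}{\dom}'}$, control that layer gradient uniformly in time via Lemma \ref{lemma:close-to-initial:w-derivative}, and integrate.

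First I would invoke \eqref{eq:gradient}, which gives $\nabla \loss(\theta) = (Df_\theta)^* \res$. Since the weight space carries the Euclidean inner product, the gradient flow \eqref{eq:setup:gradient-flow} reads, for the index $\w = ij;\ell$,
\[
  \frac{d}{d\tau} W^\ell_{ij}(\tau)
  = -\dualp{\res(\tau),\, \partial_{W^\ell_{ij}} f^{L+1}}_{L_2(\dom)},
\]
equivalently, as a matrix identity, $\frac{d}{d\tau} W^\ell(\tau) = -\int_\dom \res(\tau,x)\, \partial_{W^\ell} f^{L+1}(x)\, dx$, where $f^{L+1}=f_\theta$ is the scalar network output, so that $\partial_{W^\ell} f^{L+1}(x)$ is an $n_{\ell+1}\times n_\ell$ matrix and the first (dimension $n_{L+1}=1$) axis of the tensor in Lemma \ref{lemma:close-to-initial:w-derivative} is trivial.

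Next I would estimate the operator norm of the velocity. For unit vectors $u \in \real^{n_{\ell+1}}$, $v \in \real^{n_\ell}$, the scalar function $g_{u,v}\colon x \mapsto u^T \partial_{W^\ell} f^{L+1}(x)\, v$ satisfies $\|g_{u,v}\|_{\CND{0}{\dom}} \le \|\partial_{W^\ell} f^{L+1}\|_{\CND{0}{\dom;*}}$ straight from the definition of the $*$-norm, hence, by the definition of the dual norm on $\CND{0}{\dom}$,
\[
  \left| u^T\!\left(\tfrac{d}{d\tau} W^\ell(\tau)\right)\! v \right|
  = \left| \dualp{\res(\tau), g_{u,v}} \right|
  \le \|\res(\tau)\|_{\CND{0}{\dom}'}\, \|\partial_{W^\ell} f^{L+1}\|_{\CND{0}{\dom;*}}.
\]
Taking the supremum over $u,v$ gives $\|\tfrac{d}{d\tau} W^\ell(\tau)\| \le \|\res(\tau)\|_{\CND{0}{\dom}'}\, \|\partial_{W^\ell} f^{L+1}\|_{\CND{0}{\dom;*}}$. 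To apply Lemma \ref{lemma:close-to-initial:w-derivative} at each $\tau \in [0,t)$ one needs the lower-layer bounds $\|W^k(\tau)\| n_k^{-1/2} \lesssim 1$; these follow from the present hypotheses via $\|W^k(\tau)\| n_k^{-1/2} \le \|W^k(0)\| n_k^{-1/2} + \|W^k(0) - W^k(\tau)\| n_k^{-1/2} \lesssim 1$, so that $\|\partial_{W^\ell} f^{L+1}\|_{\CND{0}{\dom;*}} \lesssim (n_0/n_\ell)^{1/2}$ throughout $[0,t)$.

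Finally I would integrate: writing $W^\ell(t) - W^\ell(0) = \int_0^t \frac{d}{d\tau} W^\ell(\tau)\, d\tau$ and using the triangle inequality together with the last two estimates,
\[
  \left\| W^\ell(t) - W^\ell(0) \right\|
  \le \int_0^t \left\| \tfrac{d}{d\tau} W^\ell(\tau) \right\| d\tau
  \lesssim \left(\frac{n_0}{n_\ell}\right)^{1/2} \int_0^t \|\res(\tau)\|_{\CND{0}{\dom}'}\, d\tau,
\]
and dividing by $n_\ell^{1/2}$ yields the claimed bound. The argument is essentially routine once Lemma \ref{lemma:close-to-initial:w-derivative} is available; the one point demanding care is the duality step, where it is important to measure the residual in the weak norm $\CND{0}{\dom}'$ rather than, say, an $L_2$ norm — the weaker norm being precisely what is usable in the convergence analysis of Lemma \ref{lemma:convergence:perturbed-gradient-flow} — together with verifying that the weight bounds needed for Lemma \ref{lemma:close-to-initial:w-derivative} persist along the whole flow.
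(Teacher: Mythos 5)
Your proposal is correct and follows essentially the same route as the paper: express the weight velocity via the gradient flow as a pairing of the residual with $\partial_{W^\ell} f^{L+1}$, bound it by $\|\res\|_{\CND{0}{\dom}'}$ times the uniform derivative bound of Lemma \ref{lemma:close-to-initial:w-derivative} (whose weight-boundedness hypotheses persist along the flow by the triangle inequality, exactly as you note), and integrate in time. Your duality step via unit vectors $u,v$ is, if anything, a slightly more careful rendering of the paper's estimate, but it is not a different argument.
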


\begin{proof}

By assumption, we have
\begin{align*}
  \|W^\ell(\tau)\| n_\ell^{-1/2} & \lesssim 1, &
  0 & \le \tau < t, &
  \ell & =1, \dots, L.
\end{align*}
With loss $\loss$ and residual $\res=f_\theta-f$, because
\begin{equation*}
  \frac{d}{d\tau} W^\ell 
  = -\nabla_{W^\ell} \loss
  = \int_\dom \res(x) D_{W_\ell} f^{L+1}(x) \, dx
\end{equation*}
we have
\begin{align*}
  \left\|W^\ell(t) - W^\ell(0)\right\|
  & = \left\|\int_0^t \frac{d}{d\tau} W^\ell(\tau) \, d\tau \right\|
  \\
  & = \left\|\int_0^t \int_\dom \res(x) D_{W_\ell} f^{L+1}(x) \, dx \, d\tau \right\|
  \\
  & \le \int_0^t \int_\dom |\res(x)| \left\|D_{W_\ell} f^{L+1}(x) \right\| \, dx \, d\tau
  \\
  & \lesssim \left(\frac{n_0}{n_\ell}\right)^{1/2} \int_0^t \|\res\|_{\CND{0}{\dom}'} \, dx \, d\tau,
\end{align*}
where in the last step we have used Lemma \ref{lemma:close-to-initial:w-derivative}. Multiplying with $n_\ell^{-1/2}$ shows the result.

\end{proof}

\subsection{Proof of Theorem \ref{th:convergence}: Main Result}
\label{sec:proof:th:convergence}

\begin{proof}[Proof of Theorem \ref{th:convergence}]

The result follows directly from Lemma \ref{lemma:convergence:perturbed-gradient-flow} with the smoothness spaces $\hs^\sm = H^\sm(\Sd)$. While the lemma bounds the residual $\res$ in the $\hs^{-\sm}$ and $\hs^\sm$ norms, we aim for an $\hs^0 = L_2(\Sd)$ bound. This follows directly from the interpolation inequality
\begin{equation*}
  \|\cdot\|_{L_2(\Sd)} 
  = \|\cdot\|_{H^0(\Sd)}
  \le \|\cdot\|_{H^{-\sm}(\Sd)}^{1/2}
    \|\cdot\|_{H^\sm(\Sd)}^{1/2}.
\end{equation*}
It remains to verify all assumptions. To this end, first note that the initial weights satisfy
\begin{align} \label{eq:1:proof:th:convergence}
  \|W(0)^\ell\| n_\ell^{-1/2} & \lesssim 1, & \ell & = 0, \dots, L,
\end{align}
with probability at least $1 - 2 e^{-c m}$ since $n_\ell \sim m$ by assumption, see e.g. \cite[Theorem 4.4.5]{Vershynin2018}. Then, the assumptions are shown as follows.
\begin{enumerate}

  \item \emph{The weights stay close to the initial \eqref{eq:perturbed-gradient-flow:weight-diff}:} We use the scaled matrix norm
  \begin{equation*}
    \wnorm{\theta} := \max_{L \in [L]} \|W^\ell\| n_\ell^{-1/2}
  \end{equation*}
  to measure the weight distance. Then, by \eqref{eq:1:proof:th:convergence} with $p_0(m) := 2L e^{-m}$ given that $\wnorm{\theta(\tau) - \theta(0)} \le 1$, Lemma \ref{lemma:close-to-initial:close-to-initial} implies that
  \begin{multline*}
    \wnorm{\theta(t) - \theta(0)}
    = \max_{\ell \in [L]} \left\|W^\ell(t) - W^\ell(0)\right\| n_\ell^{-1/2}
    \\
    \lesssim \frac{n_0^{1/2}}{n_\ell} \int_0^t \|\res\|_{\CND{0}{\Sd}'} \, dx \, d\tau,
    \lesssim m^{-1/2} \int_0^t \|\res\|_{H^{0}(\Sd)} \, dx \, d\tau,
  \end{multline*}
  where the last step follows from the assumption $n_0 \sim \dots \sim n_{L-1} =: m$  and the embedding $\|\cdot\|_{\CND{0}{\Sd}'} \lesssim \|\cdot\|_{H^0(\Sd)'} =  \|\cdot\|_{H^{0}(\Sd)}$, which follows directly from the inverted embedding $\|\cdot\|_{H^0(\Sd)} \lesssim \|\cdot\|_\CND{0}{\Sd}$.

  \item \emph{Norms and Scalar Product \eqref{eq:interpolation-inequality}:} Both are well known for Sobolev spaces, and follow directly from norm definition \eqref{eq:supplements:sobolev-spherical-harmonics} with Cauchy-Schwarz.

  \item \emph{Concentration of the Initial NTK \eqref{eq:perturbed-gradient-flow:initial}:} Since by \eqref{eq:setup:dactivation-growth} the first four derivatives of the activation function have at most polynomial growth, we have
  \begin{equation*}
    \| \partial^i(\activation_a)\|_
    = \int_\real \activation^{(i)}(ax) a^i \, d\gaussian{0,1})(x)
    \lesssim 1
  \end{equation*}
  for all $a \in \{\gp^k(x,x): x \in \dom\}$ contained in the set $\{c_\gp, C_\gp\}$ for some $C_\gp\ge0$, by assumption. Together with $\sm+\epsilon < 1/2$ for sufficiently small $\epsilon$, hidden dimensions $d \lesssim n_0 \sim \dots, \sim n_L =: m$ and the concentration result Lemma \ref{lemma:concentration:concentration-ntk} we obtain, with probability at least
  \begin{equation*}
    1 - p_\infty(m, \tau) := 1 - c L (e^{-m} + e^{-\tau})
  \end{equation*}
  the bound
  \begin{equation*}
    \left\| \entk - \ntk \right\|_\CHHN{\sm+\epsilon}{\sm+\epsilon} 
    \lesssim L \left[ \sqrt{\frac{d}{m}} + \sqrt{\frac{\tau}{m}} + \frac{\tau}{m} \right]
  \end{equation*}
  for the neural tangent kernel for all $0 \le \tau = u_0 = \cdots = u_{L-1} \lesssim 1$. By Lemma \ref{lemma:supplements:kernel-bound}, the kernel bound directly implies the operator norm bound
  \begin{equation*}
    \left\| \ntkop - \ntkop_{\theta(0)} \right\|_{-\sm,\sm}
    \lesssim L \left[ \sqrt{\frac{d}{m}} + \sqrt{\frac{\tau}{m}} + \frac{\tau}{m} \right]
  \end{equation*}
  for the corresponding integral operators $\ntkop$ and $\ntkop_{\theta(0)}$, with kernels $\ntk$ and $\entk$, respectively. If $\tau/m \lesssim 1$, we can drop the last term and thus satisfy assumption \eqref{eq:perturbed-gradient-flow:initial}.

  \item \emph{Hölder continuity of the NTK \eqref{eq:perturbed-gradient-flow:perturbation}:}
  By \eqref{eq:1:proof:th:convergence} with probability at least
  \begin{equation*}
    1 - p_L(m) := 1 - L e^{-m}
  \end{equation*}
  we have $\wnorm{\theta(0)} \lesssim 1$ and thus for all perturbations $\ptheta$ with $\wnorm{\ptheta - \theta(0)} \le h \le 1$ by Lemma \ref{lemma:continuity:entk-holder} that
  \begin{equation*}
    \left\|\entk - \pentk\right\|_\CHHN{\sm+\epsilon}{\sm+\epsilon}
    \lesssim L h^{1-\sm-\epsilon}
  \end{equation*}
  for any sufficiently small $\epsilon > 0$.
  By Lemma \ref{lemma:supplements:kernel-bound}, the kernel bound implies the operator norm bound
  \begin{equation*}
    \left\|H_{\theta(0)} - H_\ptheta\right\|_{\sm \leftarrow -\sm}
    \lesssim L h^\holder
  \end{equation*}
  for any $\holder < 1-\sm$ and integral operators $\ntkop_{\theta(0)}$ and $\ntkop_\ptheta$ corresponding to kernels $\ntk_\theta(0)$ and $\entk_\ptheta$, respectively.

  \item \emph{Coercivity \eqref{item:perturbed-gradient-flow:coercive}:} Is given by assumption.

\end{enumerate}
Thus, all assumptions of Lemma \ref{lemma:convergence:perturbed-gradient-flow} are satisfied, which directly implies the theorem as argued above.

\end{proof}

\section{Technical Supplements}
\label{sec:supplements}

\subsection{Hölder Spaces}
\label{sec:supplements:holder-spaces}

\begin{definition}
  \begin{enumerate}

    Let $U$ and $V$ be two normed spaces.

    \item For $0 < \alpha \le 1$, we define the Hölder spaces on the domain $\dom \subset U$ as all functions $f \colon \dom \to V$ for which the norm
    \begin{align*}
      \|f\|_\CHND{\alpha}{\dom;V}
      := \max \{ \|f\|_\CND{0}{\dom;V}, |f|_\CHND{\alpha}{\dom;V} \}
      < \infty
    \end{align*}
    is finite, with
    \begin{align*}
      |f|_\CND{0}{\dom;V}
      & := \sup_{x \in D} \|f(x)\|_V, &
      |f|_\CHND{\alpha}{\dom;V}
      & := \sup_{x \ne \px \in D} \frac{\|f(x) - f(\px)\|_V}{\|x - \px\|_U^\alpha}.
    \end{align*}

    \item For $0 < \alpha, \beta \le 1$, we define the mixed Hölder spaces on the domain $\dom \times \dom \subset U \times U$ as all functions $g \colon \dom \times \dom \to V$ for which the norm
    \begin{align*}
      \|f\|_\CHHND{\alpha}{\beta}{\dom;V}
      & := \max_{\substack{a \in \{0, \alpha\} \\ b \in \{0, \beta\}}} |f|_\CHHND{a}{b}{\dom;V}
      < \infty,
    \intertext{with}
      |f|_\CHHND{0}{0}{\dom;V} & := \sup_{x,y \in D} \|f(x,y)\|_V,
      \\
      |f|_\CHHND{\alpha}{0}{\dom;V} & := \sup_{x\ne\px, y \in D} \frac{\|f(x,y) - f(\px,y)\|_V}{\|x - \px\|_U^\alpha},
      \\
      |f|_\CHHND{0}{\beta}{\dom;V} & := \sup_{x,y\ne\py \in D} \frac{\|f(x,y) - f(x,\py)\|_V}{\|y - \py\|_U^\beta},
      \\
      |f|_\CHHND{\alpha}{\beta}{\dom;V} & := \sup_{x\ne\px, y\ne\py \in D} \frac{\|f(x,y) - f(\px,y) - f(x, \py) + f(\px,\py)\|_V}{\|x - \px\|_U^\alpha \|y - \py\|_U^\beta}.
    \end{align*}

    \item We use the following abbreviations:
    \begin{enumerate}
      \item If $D$ is understood from context and $V = \real^n$, both equipped with the Euclidean norm, we write
      \begin{equation*}
        \CHN{\alpha} = \CHND{\alpha}{\dom} = \CHND{\alpha}{\dom; \ell_2(\real^n)}.
      \end{equation*}
      \item If $V = L_{\psi_i}$, $i=1,2$ is an Orlicz space, we write
      \begin{equation*}
        \CHND{\alpha}{\dom; \psi_i} = \CHND{\alpha}{\dom; L_{\psi_i}}.
      \end{equation*}
    \end{enumerate}
    We use analogous abbreviations for all other spaces.

  \end{enumerate}

\end{definition}

It is convenient to express Hölder spaces in terms of finite difference operators, 
\begin{align*}
  \Delta^0_h f(x) & = f(x), &
  \Delta_h^\alpha f(x) & = \|h\|_U^{-\alpha} [f(x+h) - f(x)], &
  \alpha & > 0,
\end{align*}
which satisfy product and chain rules similar to derivatives. We may also consider these as functions in both $x$ and $h$ 
\begin{align*}
  \Delta^\alpha f \colon (x,h) \in \Delta\dom & \to V, &
  \Delta^\alpha f (x,h) & = \Delta_h^\alpha f(x)
\end{align*}
on the domain
\begin{equation} \label{eq:supplements:delta-dom}
  \Delta\dom := \left\{(x,h) : \, x \in \dom, \, x+h \in \dom \right\} \subset U \times U.
\end{equation}
Then, the Hölder norms can be equivalently expressed as
\begin{equation*}
  |f|_\CHND{\alpha}{\dom;V} 
  = \sup_{x \ne x+h \in D} \left\|\Delta_h^\alpha f \right\|_V
  = \left\|\Delta^\alpha f \right\|_\CND{0}{\Delta\dom;V}.
\end{equation*}
If $f = f(x,y)$ depends on multiple variables, we denote the partial finite difference operators by $\Delta_{x,h_x}^\alpha$ and $\Delta_{y,h_y}^\alpha$ defined by
\begin{align*}
  \Delta_{x,h_x}^0 f(x,y) & := f(x,y), &
  \Delta_{x,h_x}^\alpha f(x,y) & := \|h_x\|_U^{-\alpha} [f(x+h_x, y) - f(x,y)],
  \\
  \Delta_{y,h_y}^0 f(x,y) & := f(x,y), &
  \Delta_{y,h_y}^\beta f(x,y) & := \|h_y\|_U^{-\beta} [f(x, y+h_y) - f(x,y)],
\end{align*}
for $\alpha > 0$, and likewise
\begin{align*}
  \Delta_x^\alpha f(x,y,h_x) & = \Delta_{x,h_x}^\alpha f(x,y), &
  \Delta_y^\alpha f(x,y,h_y) & = \Delta_{y,h_y}^\alpha f(x,y).
\end{align*}
Then, the mixed Hölder norms is
\begin{align*}
  |f|_\CHHND{\alpha}{\beta}{\dom;V} 
  = \sup_{\substack{x\ne x+h_x \in D \\ y\ne y+h_y\in D}} \left\| \Delta_{x,h_x}^\alpha \Delta_{y,h_y}^\beta f(x,y) \right\|_V
  = \left\| \Delta_x^\alpha \Delta_y^\beta f \right\|_\CND{0}{\Delta\dom \times \Delta\dom; V}
\end{align*}
for all $\alpha, \, \beta \ge 0$ and likewise for all other Hölder semi-norms. 

In the following lemma, we summarize several useful properties of finite differences.

\begin{lemma} \label{lemma:supplements:fd-properties}

  Let $U, V$ and $W$ be three normed spaces, $\dom \subset U$ and $0 < \alpha, \, \beta \le 1$.

  \begin{enumerate}

    \item \emph{Product rule:} Let $f,g \colon \dom \to \real$. Then
    \begin{equation*}
      \Delta^\alpha_h [fg](x)
      = \left[\Delta^\alpha_h f(x)\right] g(x) + f(x+h) \left[\Delta^\alpha_h g(x)\right].
    \end{equation*}

    \item \emph{Chain rule:} Let $f:\dom \to V$ and $g: f(\dom) \to W$. Define
    \begin{equation*}
      \idelta_h (f,g)(x)
      := \int_0^1 f'(t g(x+h) + (1-t) g(x)) \, dt.
    \end{equation*}
    Then
    \begin{equation*}
      \Delta_h^\alpha (f \circ g)(x)
      = \idelta_h(f,g)(x) \Delta_h^\alpha g(x).
    \end{equation*}

  \end{enumerate}

\end{lemma}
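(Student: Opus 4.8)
The plan is to verify both identities by direct computation from the definition $\Delta_h^\alpha f(x) = \|h\|_U^{-\alpha}[f(x+h) - f(x)]$; no function-space machinery beyond a pointwise chain rule is needed, so this is an elementary bookkeeping exercise rather than a genuine theorem.

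For the product rule I would expand the proposed right-hand side and note that the cross terms cancel:
\begin{align*}
  \big[\Delta^\alpha_h f(x)\big] g(x) + f(x+h)\big[\Delta^\alpha_h g(x)\big]
  &= \|h\|_U^{-\alpha}\Big([f(x+h) - f(x)]g(x) + f(x+h)[g(x+h) - g(x)]\Big) \\
  &= \|h\|_U^{-\alpha}\big[f(x+h)g(x+h) - f(x)g(x)\big],
\end{align*}
and the last expression is exactly $\Delta^\alpha_h[fg](x)$ by definition.

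For the chain rule, the key step is the fundamental theorem of calculus applied to the curve $t \mapsto f\big(tg(x+h) + (1-t)g(x)\big)$ on $[0,1]$, which is legitimate since $f$ is differentiable along the segment joining $g(x)$ and $g(x+h)$:
\begin{align*}
  f(g(x+h)) - f(g(x))
  &= \int_0^1 \frac{d}{dt} f\big(tg(x+h) + (1-t)g(x)\big)\,dt \\
  &= \int_0^1 f'\big(tg(x+h) + (1-t)g(x)\big)\big[g(x+h) - g(x)\big]\,dt \\
  &= \idelta_h(f,g)(x)\,\big[g(x+h) - g(x)\big],
\end{align*}
where in the last line I used linearity of the Fr\'echet derivative $f'$ to pull the constant increment $g(x+h) - g(x)$ out of the integral. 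Dividing by $\|h\|_U^\alpha$ then gives $\Delta_h^\alpha(f\circ g)(x) = \idelta_h(f,g)(x)\,\Delta_h^\alpha g(x)$.

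There is no real obstacle here; the only thing to be careful about is that the chain-rule computation implicitly requires $f$ to be $C^1$ on a neighbourhood of the segment $[g(x), g(x+h)]$ so that $t \mapsto f'(\gamma(t))\gamma'(t)$ is well defined and integrable — harmless in every subsequent application, where $f$ will be a smooth activation or one of its derivatives. The $\alpha = 0$ cases and the partial-difference (two-variable) versions used later follow from the same two identities applied one variable at a time.
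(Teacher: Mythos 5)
Your proposal is correct and matches the paper's own argument: the product rule is the same telescoping expansion (just written in the reverse direction), and the chain rule is the same fundamental-theorem-of-calculus computation along the segment $tg(x+h)+(1-t)g(x)$ that the paper phrases as the integral form of the Taylor remainder. Your explicit remark that $f$ must be differentiable along that segment is a sensible observation the paper leaves implicit, but it does not change the argument.
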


\begin{proof}
\begin{enumerate}

  \item Plugging in the definitions, we have
  \begin{align*}
      \Delta^\alpha_h [fg](x)
      & = \|h\|_U^{-\alpha} \left[ f(x+h)g(x+h) - f(x)g(x) \right]
      \\
      & = \|h\|_U^{-\alpha} \left[ [f(x+h)-f(x)]g(x) + f(x+h)[g(x+h)-g(x)] \right]
      \\
      & = \left[\Delta^\alpha_h f(x)\right] g(x) + f(x+h) \left[\Delta^\alpha_h g(x)\right].
  \end{align*}

  \item Follows directly from the integral form of the Taylor remainder:
  \begin{align*}
      \Delta_h^\alpha (f \circ g)(x)
      & = \|h\|_U^{-\alpha} \left[ f(g(x+h)) - f(g(x)) \right]
      \\
      & = \|h\|_U^{-\alpha} \int_0^1 f'(tg(x+h) + (1-t)g(x)) \, dt [g(x+h) - g(x)]
      \\
      & = \idelta_h(f,g)(x) \Delta_h^\alpha g(x).
  \end{align*}

\end{enumerate}
\end{proof}

In the following lemma, we summarize several useful properties of Hölder spaces.

\begin{lemma} \label{lemma:supplements:holder-properties}
  Let $U$ and $V$ be two normed spaces, $\dom \subset U$ and $0 < \alpha, \, \beta \le 1$.
  \begin{enumerate}

    \item {Interpolation Inequality:} For any $f \in \CND{1}{\dom;V}$, we have
    \[
      \|f\|_\CHND{\alpha}{\dom;V}
      \le 2 \|f\|_\CND{0}{\dom;V}^{1-\alpha} \|f\|_\CHND{1}{\dom;V}^\alpha.
    \]

    \item Assume $\activation$ satisfies the growth and Lipschitz conditions $\|\activationp{x}\|_V \lesssim \|x\|_V$ and $\|\activationp{x} - \activationp{\px}\|_V \lesssim \|x-\px\|_V$. Then
    \[
      \|\activation \circ f\|_\CHND{\alpha}{\dom;V}
      \lesssim \|f\|_\CHND{\alpha}{\dom;V}.
    \]

    \item \label{item:dot-product:lemma:supplements:holder-properties} Let $V_1$ and $V_2$ be two normed spaces and $f, g: \dom \to V_1$. Let $\cdot: V_1 \times V_1 \to V_2$ be a distributive product that satisfies $\|u \cdot v\|_{V_2} \lesssim \|u\|_{V_1} \|v\|_{V_1}$. Then
    \[
      \|f \cdot g\|_\CHHND{\alpha}{\beta}{\dom;V_2} \lesssim \|f\|_\CHND{\alpha}{\dom;V_1} \|g\|_\CHND{\beta}{\dom;V_1}.
    \]

    \item \label{item:product:lemma:supplements:holder-properties} Let $V=\real$ and $f, g: \dom \times \dom \to \real$. Then
    \[
      \|f g\|_\CHHND{\alpha}{\beta}{\dom} \lesssim \|f\|_\CHHND{\alpha}{\beta}{\dom} \|g\|_\CHHND{\alpha}{\beta}{\dom}.
    \]
    
  \end{enumerate}
\end{lemma}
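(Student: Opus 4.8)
The plan is to prove the four items one at a time, in each case unwinding the definition of the relevant (mixed) Hölder seminorm and invoking the finite-difference product and chain rules of Lemma~\ref{lemma:supplements:fd-properties}. For \emph{item 1 (interpolation)}, since $\|f\|_\CHND{1}{\dom;V}\ge\|f\|_\CND{0}{\dom;V}$ the supremum part trivially satisfies $\|f\|_\CND{0}{\dom;V}\le\|f\|_\CND{0}{\dom;V}^{1-\alpha}\|f\|_\CHND{1}{\dom;V}^{\alpha}$. For the seminorm I would split a pair $x\ne\px$ according to whether $\|x-\px\|_U\le t$ or $\|x-\px\|_U>t$: on the first range $\|f(x)-f(\px)\|_V\le|f|_\CHND{1}{\dom;V}\|x-\px\|_U\le|f|_\CHND{1}{\dom;V}\,t^{1-\alpha}\|x-\px\|_U^{\alpha}$, and on the second $\|f(x)-f(\px)\|_V\le2\|f\|_\CND{0}{\dom;V}\le2\|f\|_\CND{0}{\dom;V}\,t^{-\alpha}\|x-\px\|_U^{\alpha}$. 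Choosing $t=2\|f\|_\CND{0}{\dom;V}/|f|_\CHND{1}{\dom;V}$ (the case $|f|_\CHND{1}{\dom;V}=0$ being immediate) balances the two, giving $|f|_\CHND{\alpha}{\dom;V}\le 2\|f\|_\CND{0}{\dom;V}^{1-\alpha}|f|_\CHND{1}{\dom;V}^{\alpha}$; the maximum over the two parts yields the claimed inequality with constant $2$.

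\emph{Item 2 (composition with $\activation$)} is essentially definitional: the growth bound gives $\|\activation\circ f\|_\CND{0}{\dom;V}\lesssim\|f\|_\CND{0}{\dom;V}$ and the Lipschitz bound gives $|\activation\circ f|_\CHND{\alpha}{\dom;V}\lesssim|f|_\CHND{\alpha}{\dom;V}$, so taking the maximum gives $\|\activation\circ f\|_\CHND{\alpha}{\dom;V}\lesssim\|f\|_\CHND{\alpha}{\dom;V}$. For \emph{item 3 (separated product)}, view $f\cdot g$ as the map $(x,y)\mapsto f(x)\cdot g(y)$ on $\dom\times\dom$, so that each partial finite difference hits exactly one factor: by distributivity $\Delta_{x,h_x}^{a}\Delta_{y,h_y}^{b}(f\cdot g)(x,y)=(\Delta_{x,h_x}^{a}f(x))\cdot(\Delta_{y,h_y}^{b}g(y))$ for $a\in\{0,\alpha\}$, $b\in\{0,\beta\}$. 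The bound $\|u\cdot v\|_{V_2}\lesssim\|u\|_{V_1}\|v\|_{V_1}$ then turns each of the four mixed Hölder seminorms of $f\cdot g$ into a product of a seminorm of $f$ with one of $g$, each bounded by $\|f\|_\CHND{\alpha}{\dom;V_1}\|g\|_\CHND{\beta}{\dom;V_1}$; taking the max over the four finishes it.

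\emph{Item 4 (pointwise product of two-variable functions)} needs the most care, since now $f,g\colon\dom\times\dom\to\real$ are genuine functions of two variables. I would apply the finite-difference product rule first in $y$ and then in $x$ — two nested applications — to expand
\begin{align*}
  \Delta_{x,h_x}^{\alpha}\Delta_{y,h_y}^{\beta}(fg)
  &= (\Delta_{x,h_x}^{\alpha}\Delta_{y,h_y}^{\beta}f)\,g
     + (\Delta_{y,h_y}^{\beta}f)\,(\Delta_{x,h_x}^{\alpha}g) \\
  &\quad + (\Delta_{x,h_x}^{\alpha}f)\,(\Delta_{y,h_y}^{\beta}g)
     + f\,(\Delta_{x,h_x}^{\alpha}\Delta_{y,h_y}^{\beta}g),
\end{align*}
where the suppressed arguments are $(x,y)$ shifted by $h_x$ and/or $h_y$; these shifts are irrelevant once one takes the supremum over $\Delta\dom\times\Delta\dom$. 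In each term the orders carried by $f$ and by $g$ add up to $(\alpha,\beta)$, so each term is bounded by a product of two mixed Hölder seminorms, hence by $\|f\|_\CHHND{\alpha}{\beta}{\dom}\|g\|_\CHHND{\alpha}{\beta}{\dom}$; summing controls $|fg|_\CHHND{\alpha}{\beta}{\dom}$. The remaining seminorms $|fg|_\CHHND{0}{0}{\dom}$, $|fg|_\CHHND{\alpha}{0}{\dom}$, $|fg|_\CHHND{0}{\beta}{\dom}$ are treated identically using a single application of the product rule, and the maximum over all four seminorms gives the claim. The only genuine obstacle anywhere in the lemma is this bookkeeping of shifted arguments in the expansion for item 4; everything else is a direct consequence of the definitions and Lemma~\ref{lemma:supplements:fd-properties}.
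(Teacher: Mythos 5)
Your proposal is correct and follows essentially the same route as the paper: items 2--4 are handled exactly as in the paper's proof (definitional bounds, factoring the mixed difference of $f(x)\cdot g(y)$ into $[f(x)-f(\px)]\cdot[g(y)-g(\py)]$, and the nested finite-difference product rule with shifted arguments absorbed by the supremum). The only deviation is item 1, where you use a threshold split in $\|x-\px\|_U$ with a balancing parameter instead of the paper's direct pointwise factorization $\|f(x)-f(\px)\|_V=\|f(x)-f(\px)\|_V^{1-\alpha}\,\|f(x)-f(\px)\|_V^{\alpha}$; both are standard and yield the constant $2$.
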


\begin{proof}

\begin{enumerate}

  \item The inequality follows directly from
  \begin{align*}
    \left|f\right|_\CHND{\alpha}{\dom;V}
    & = \sup_{x,\px \in \dom} \frac{\|f(x) - f(\px)\|_V}{\|x-\px\|_U^\alpha}
    \\
    & \le \sup_{x\ne\px \in \dom} \|f(x) - f(\px)\|_V^{1-\alpha} \sup_{x\ne\px \in \dom} \frac{\|f(x) - f(\px)\|_V^\alpha}{\|x-\px\|_U^\alpha}
    \\
    & \le 2 \|f\|_\CND{0}{\dom;V}^{1-\alpha} \|f\|_\CHND{1}{\dom;V}^\alpha.
  \end{align*}

  \item Follows from
  \begin{align*}
    \left|\activation \circ f \right|_\CHND{\alpha}{\dom;V}
    & = \sup_{x,\px \in \dom} \frac{\|\activation(f(x)) - \activation(f(\px))\|_V}{\|x-\px\|_U^\alpha}
    \\
    & \lesssim \sup_{x,\px \in \dom} \frac{\|f(x) - f(\px)\|_V^\alpha}{\|x-\px\|_U^\alpha}
    = \|f\|_\CHND{\alpha}{\dom;V}.
  \end{align*}
  and likewise for the $|\cdot|_\CND{0}{\dom;V}$ norm.

  \item Follows from 
  \begin{align*}
    |f \cdot g|_\CHHND{\alpha}{\beta}{\dom;V_2} 
    & = \sup_{x,\px, y, \py \in D} \frac{\left\|f(x) \cdot g(y) - f(\px) \cdot g(y) - f(x) \cdot g(\py) + f(\px) \cdot g(\py) \right\|_{V_2}}{\|x - \px\|_U^\alpha \|y - \py\|_U^\beta}
    \\
    & = \sup_{x,\px, y, \py \in D} \frac{\left\|[f(x) - f(\px)] \cdot [g(y) - g(\py)] \right\|_{V_2}}{\|x - \px\|_U^\alpha \|y - \py\|_U^\beta}
    \\
    & \lesssim \sup_{x,\px, y, \py \in D} \frac{\|f(x) - f(\px)\|_{V_1} \|g(y) - g(\py)\|_{V_1}}{\|x - \px\|_U^\alpha \|y - \py\|_U^\beta}
    \\
    & = |f|_\CHND{\alpha}{\dom;V_1} |g|_\CHND{\beta}{\dom;V_1}
  \end{align*}
  and analogous identities for the remaining semi norms $|fg|_\CHHND{0}{0}{\dom;V_2}$, $|fg|_\CHHND{\alpha}{0}{\dom;V_2}$, $|fg|_\CHHND{0}{\beta}{\dom;V_2}$.

  \item We only show the bound for $|\cdot|_\CHHND{\alpha}{\beta}{\dom}$. The other semi-norms follow analogously. Applying the product rule (Lemma \ref{lemma:supplements:fd-properties})
  \begin{equation*}
    \Delta_{x,h_x}^\alpha \left[f(x,y) g(x,y)\right]
    = \left[\Delta_{x,h_x}^\alpha f(x,y)\right] g(x,y) 
      + \ f(x+h_x,y) \left[\Delta_{x,h_x}^\alpha f(x,y)\right] 
  \end{equation*}
  and then analogously for $\Delta_{y,h_y}^\beta$
  \begin{multline*}
  \Delta_{y,h_y}^\beta \Delta_{x,h_x}^\alpha \left[f(x,y) g(x,y)\right]
  \\
  \begin{aligned}
    & = \Delta_{y,h_y}^\beta \left\{
        \left[\Delta_{x,h_x}^\alpha f(x,y)\right] g(x,y) 
        + \ f(x+h_x,y) \left[\Delta_{x,h_x}^\alpha f(x,y)\right] 
      \right\}
    \\
    & = \left[\Delta_{y,h_y}^\beta \Delta_{x,h_x}^\alpha f(x,y)\right] g(x,y) 
        + \left[\Delta_{x,h_x}^\alpha f(x,y+h_y)\right] \left[\Delta_{y,h_y}^\beta g(x,y) \right]
    \\
    & \quad
        + \left[\Delta_{y,h_y}^\beta f(x+h_x,y) \right] \left[\Delta_{x,h_x}^\alpha f(x,y)\right] 
        + f(x+h_x,y+h_y) \left[\Delta_{y,h_y}^\beta  \Delta_{x,h_x}^\alpha f(x,y)\right].
  \end{aligned}
  \end{multline*}
  Taking the supremum directly shows the result.

\end{enumerate}

\end{proof}

The following two lemmas contain chain rules for Hölder and mixed Hölder spaces.

\begin{lemma} \label{lemma:supplements:fd-composition}

  Let $\dom \subset U$ and $\dom_f \subset V$ be domains in normed spaces $U$, $V$ and $W$. Let $g \colon \dom \to \dom_f$ and $f \colon \dom_f \to W$. Let $0 < \alpha, \, \beta \le 1$. Then
  \begin{equation*}
    \left\|\Delta^\alpha (f \circ g)\right\|_\CND{0}{\Delta\dom; W}
    \le \|f'\|_\CHND{0}{\dom_f; L(V,W)} \|g\|_\CHND{\alpha}{\dom;V}
  \end{equation*}
  and
  \begin{multline*}
    \left\|\Delta^\alpha (f \circ g) - \Delta^\alpha (f \circ \pg) \right\|_\CND{0}{\Delta\dom; W}
    \\
    \begin{aligned}
      & \le \|f'\|_\CHND{1}{\dom_f; L(V,W)} \|g-\pg\|_\CND{0}{\dom;V} \|\pg\|_\CHND{\alpha}{\dom;V}
      \\
      & \quad+ \|f'\|_\CHND{0}{\dom_f; L(V,W)} \|g-\pg\|_\CHND{\alpha}{\dom;V},
      \\
      & \le 2 \|f'\|_\CHND{1}{\dom_f; L(V,W)} \|g-\pg\|_\CHND{\alpha}{\dom;V} \max\{1, \|\pg\|_\CHND{\alpha}{\dom;V}\},
    \end{aligned}
  \end{multline*}
  where $L(V,W)$ is the space of all linear maps $V \to W$ with induced operator norm.

\end{lemma}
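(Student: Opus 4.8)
The plan is to reduce the composition to the finite-difference chain rule of Lemma~\ref{lemma:supplements:fd-properties},
\[
  \Delta_h^\alpha(f\circ g)(x) = \idelta_h(f,g)(x)\,\Delta_h^\alpha g(x),
  \qquad
  \idelta_h(f,g)(x) = \int_0^1 f'\bigl(t g(x+h)+(1-t)g(x)\bigr)\,dt ,
\]
and then to estimate the two factors on the right separately. Throughout we use that the integral defining $\idelta_h(f,g)(x)$ is an average of values of $f'$ taken on the segment joining $g(x)$ and $g(x+h)$, so that $\|\idelta_h(f,g)(x)\|_{L(V,W)}$ is bounded by the supremum of $\|f'\|$ over $\dom_f$; tacitly this requires $\dom_f$ to be convex, which is the only case we apply the lemma to (and likewise for $\pg$ below). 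For the first inequality, passing to norms in the chain rule gives $\|\Delta_h^\alpha(f\circ g)(x)\|_W \le \|f'\|_\CHND{0}{\dom_f;L(V,W)}\,\|\Delta_h^\alpha g(x)\|_V$, and taking the supremum over $(x,h)\in\Delta\dom$, using $\sup_{(x,h)}\|\Delta_h^\alpha g(x)\|_V \le \|g\|_\CHND{\alpha}{\dom;V}$, yields the claim.

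For the second inequality I would split the difference by the telescoping identity
\[
  \Delta_h^\alpha(f\circ g)(x) - \Delta_h^\alpha(f\circ\pg)(x)
  = \idelta_h(f,g)(x)\,\Delta_h^\alpha(g-\pg)(x)
    + \bigl[\idelta_h(f,g)(x) - \idelta_h(f,\pg)(x)\bigr]\,\Delta_h^\alpha\pg(x) .
\]
The first summand is estimated exactly as before and contributes $\|f'\|_\CHND{0}{\dom_f;L(V,W)}\,\|g-\pg\|_\CHND{\alpha}{\dom;V}$. For the second, I write $\idelta_h(f,g)(x)-\idelta_h(f,\pg)(x)$ as the integral over $t\in[0,1]$ of $f'(t g(x+h)+(1-t)g(x)) - f'(t\pg(x+h)+(1-t)\pg(x))$, bound the integrand by the Lipschitz constant of $f'$ (which is what $\|f'\|_\CHND{1}{\dom_f;L(V,W)}$ controls) times $\|t(g-\pg)(x+h)+(1-t)(g-\pg)(x)\|_V \le \|g-\pg\|_\CND{0}{\dom;V}$, and then multiply by $\|\Delta_h^\alpha\pg(x)\|_V \le \|\pg\|_\CHND{\alpha}{\dom;V}$. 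Adding the two contributions and taking suprema gives the first displayed bound; the simplified estimate then follows from the crude inequalities $\|g-\pg\|_\CND{0}{\dom;V}\le\|g-\pg\|_\CHND{\alpha}{\dom;V}$ and $\|f'\|_\CHND{0}{\dom_f;L(V,W)}\le\|f'\|_\CHND{1}{\dom_f;L(V,W)}$, which let us dominate each of the two terms by $\|f'\|_\CHND{1}{\dom_f;L(V,W)}\,\|g-\pg\|_\CHND{\alpha}{\dom;V}\,\max\{1,\|\pg\|_\CHND{\alpha}{\dom;V}\}$.

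The proof is essentially bookkeeping, so I do not expect a real obstacle. The only points that need any care are the choice of the telescoping split (peeling off $\Delta_h^\alpha(g-\pg)$ first, then the difference of the averaged derivatives), keeping straight which factor is controlled by the sup-norm of $f'$ and which by its Lipschitz norm, and the mild technical requirement that $f'$ be evaluated only on segments contained in $\dom_f$, which is why $\dom_f$ must be convex.
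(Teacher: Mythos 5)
Your proposal is correct and follows essentially the same route as the paper: the finite-difference chain rule $\Delta_h^\alpha(f\circ g)=\idelta_h(f,g)\,\Delta_h^\alpha g$, a telescoping of the product, the bound $\|\idelta_h(f,\cdot)\|_{L(V,W)}\le\|f'\|_\CND{0}{\dom_f;L(V,W)}$, and the Lipschitz estimate $\|\idelta_h(f,g)-\idelta_h(f,\pg)\|_{L(V,W)}\le\|f'\|_\CHND{1}{\dom_f;L(V,W)}\|g-\pg\|_\CND{0}{\dom;V}$. The only (harmless) difference is the order of the telescoping, which in your version actually reproduces the stated factor $\|\pg\|_\CHND{\alpha}{\dom;V}$ literally, and your remark about $f'$ being evaluated on segments (convexity of $\dom_f$) is a reasonable caveat the paper leaves implicit.
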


\begin{proof}

Note that
\begin{equation*}
  \idelta_h (f,g)(x)
  := \int_0^1 f'(t g(x+h) + (1-t) g(x)) \, dt
\end{equation*}
takes values in the linear maps $L(V,W)$ and thus $\|\idelta_h(f,g)(x) v\|_W \le \|\idelta_h(f,g)(x)\|_{L(V,W)} \|v\|_V$, for all $v \in V$. Using the chain rule Lemma \ref{lemma:supplements:fd-properties}, it follows that
\begin{align*}
  \left\|\Delta_h^\alpha (f \circ g)(x)\right\|_W
  & = \left\|\idelta_h(f,g)(x) \Delta_h^\alpha g(x)\right\|_W
  \\
  & \le \left\|\idelta_h(f,g)(x) \right\|_{L(V,W)} \left\|\Delta_h^\alpha g(x)\right\|_V
\end{align*}
and
\begin{align*}
  \left\|\Delta_h^\alpha (f \circ g)(x) - \Delta_h^\alpha (f \circ \pg)(x)\right\|_W
  & = \left\|\idelta_h(f,g)(x) \Delta_h^\alpha g(x) - \idelta_h(f,\pg)(x) \Delta_h^\alpha \pg(x)\right\|_W
  \\
  & \le \left\|\idelta_h(f,g)(x) - \idelta_h(f,\pg)(x) \right\|_{L(V,W)} \left\|\Delta_h^\alpha g(x)\right\|_V
  \\
  & \quad + \left\|\idelta_h(f,\pg)(x)\right\|_{L(V,W)} \left\|\Delta_h^\alpha g(x) - \Delta_h^\alpha \pg(x) \right\|_V.
\end{align*}
Hence, the result follows from
\begin{equation} \label{eq:proof:lemma:supplements:fd-composition:1}
  \left\|\idelta_h(f,\pg)(x)\right\|_{L(V,W)}
  \le \|f'\|_\CND{0}{\dom_f; L(V,W)}
\end{equation}
and
\begin{multline} \label{eq:proof:lemma:supplements:fd-composition:2}
  \left\|\idelta_h(f,g)(x) - \idelta_h(f,\pg)(x)\right\|_{L(V,W)}
  \\
  \le \|f'\|_\CHND{1}{\dom_f; L(V,W)} \int_0^1 \left\|t (g-\pg)(x+h) + (1-t)(g-\pg)(x)\right\| \, dt
  \\
  \le \|f'\|_\CHND{1}{\dom_f; L(V,W)} \|g-\pg\|_\CND{0}{\dom; V},
\end{multline}
where we have used that unlike $\Delta_h^\alpha$, the integral $\idelta_h$ does not have an inverse $\|h\|_U^{-\alpha}$ factor.
  
\end{proof}

\begin{lemma} \label{lemma:supplements:fd-composition-2}

  Let $\dom \subset U$ and $\dom_f \subset V$ be domains in normed spaces $U$, $V$ and $W$. Let $g \colon \dom \to \dom_f$ and $f \colon \dom_f \to W$. Let $0 < \alpha, \, \beta \le 1$. Then
  \begin{multline*}
    \left\|\Delta^\alpha \Delta^\beta \left[f \circ g - f \circ \pg\right] \right\|_\CND{0}{\Delta\dom \times \Delta\dom; W}
    \\
    \le \|f\|_\CND{3}{\dom_f,W} \|g-\pg\|_\CHHND{\alpha}{\beta}{\dom;V} 
    \\
    \max\{1, \|g\|_\CHHND{\alpha}{\beta}{\dom;V}\}
    \max\{1, \|\pg\|_\CHHND{\alpha}{\beta}{\dom;V}\}.
  \end{multline*}

\end{lemma}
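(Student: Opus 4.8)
The plan is to express the difference of compositions through a secant operator and then reduce everything to the product and chain rules for finite differences (Lemma~\ref{lemma:supplements:fd-properties}), in the same spirit as Lemma~\ref{lemma:supplements:fd-composition} but now with the mixed difference $\Delta_x^\alpha\Delta_y^\beta$ and with one factor $g-\pg$ carried through the whole calculation. Concretely, set $R(x,y):=\int_0^1 f'\bigl(s\,g(x,y)+(1-s)\,\pg(x,y)\bigr)\,ds$, an $L(V,W)$-valued function of $(x,y)$, so that $f\circ g-f\circ\pg = R\,(g-\pg)$ pointwise. This identity is what isolates a single $g-\pg$ factor up front.

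First I would apply the mixed-difference product rule — the computation carried out in the proof of Lemma~\ref{lemma:supplements:holder-properties}, Item~\ref{item:product:lemma:supplements:holder-properties}, with $\Delta_{x,h_x}^\alpha$ followed by $\Delta_{y,h_y}^\beta$ — to $R\,(g-\pg)$. This produces four terms; in each of them the factor built from $g-\pg$ is one of $g-\pg$, $\Delta_x^\alpha(g-\pg)$, $\Delta_y^\beta(g-\pg)$, $\Delta_x^\alpha\Delta_y^\beta(g-\pg)$ (possibly evaluated at shifted points, which is irrelevant for sup-norm bounds), hence bounded by $\|g-\pg\|_\CHHND{\alpha}{\beta}{\dom;V}$, while the other factor is the corresponding mixed difference of $R$, i.e. one of $R$, $\Delta_x^\alpha R$, $\Delta_y^\beta R$, $\Delta_x^\alpha\Delta_y^\beta R$.

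Next I would bound these mixed differences of $R$. Since $R$ is an integral of $f'$ post-composed with the affine combination $s\,g+(1-s)\,\pg$, each finite difference applied to $R$ is handled by the chain rule of Lemma~\ref{lemma:supplements:fd-properties} (the operator $\idelta$): it replaces $f'$ by a secant of $f''$, uniformly bounded by $\|f''\|_{\CND{0}{\dom_f}}$, and brings out a finite difference of $s\,g+(1-s)\,\pg$, i.e. a convex combination of $\Delta g$ and $\Delta\pg$; the second difference needed for $\Delta_x^\alpha\Delta_y^\beta R$ is then distributed by the product rule and produces, in its leading term, a secant of $f'''$. Thus $R$ costs $\|f'\|_{\CND{0}{\dom_f}}$, each of $\Delta_x^\alpha R$ and $\Delta_y^\beta R$ costs $\|f''\|_{\CND{0}{\dom_f}}$ times one factor $\max\{\|g\|_\CHHN{\alpha}{\beta},\|\pg\|_\CHHN{\alpha}{\beta}\}$, and $\Delta_x^\alpha\Delta_y^\beta R$ costs $\|f'''\|_{\CND{0}{\dom_f}}$ times a product of at most two such factors. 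Bounding $\|f^{(k)}\|_{\CND{0}{\dom_f;W}}\le\|f\|_{\CND{3}{\dom_f,W}}$ for $k\le 3$, multiplying by the retained $\|g-\pg\|$ factor, and absorbing the at-most-quadratic dependence on the mixed Hölder norms of $g$ and $\pg$ into $\max\{1,\|g\|_\CHHN{\alpha}{\beta}\}\max\{1,\|\pg\|_\CHHN{\alpha}{\beta}\}$ yields the claimed inequality after taking the supremum over $\Delta\dom\times\Delta\dom$.

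I expect the main obstacle to be organizational rather than analytic: $\Delta_x^\alpha\Delta_y^\beta R$ expands, after the nested chain-rule and product-rule applications, into a moderately large collection of terms, and one must check that in every one of them the number of derivatives landing on $f$ never exceeds three and that exactly one $g-\pg$-difference factor is kept, the remaining $g$- and $\pg$-difference factors collecting into the stated maxima. There is no difficulty beyond the hypotheses already in force — uniform bounds on $f',f'',f'''$ over $\dom_f$ and $C^{0;\alpha,\beta}$ bounds on $g$, $\pg$ and $g-\pg$ — so the proof is essentially the disciplined bookkeeping of this expansion.
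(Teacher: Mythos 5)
Your reduction $f\circ g-f\circ\pg=R\,(g-\pg)$ with $R(x,y)=\int_0^1 f'\bigl(s\,g(x,y)+(1-s)\,\pg(x,y)\bigr)\,ds$, followed by the mixed product rule, is a genuinely different organization from the paper's proof (which applies the chain rule in $y$, splits into a difference-of-secants term times $\Delta_y^\beta g$ plus a single secant times $\Delta_y^\beta(g-\pg)$, and then reuses Lemma \ref{lemma:supplements:fd-composition} for the $x$-differences). Most of your bookkeeping is fine, but the final step --- ``absorbing the at-most-quadratic dependence \dots into $\max\{1,\|g\|\}\max\{1,\|\pg\|\}$'' --- does not go through, so the lemma as stated is not obtained. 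The problematic term is $[\Delta_x^\alpha\Delta_y^\beta R]\cdot(g-\pg)$: here both finite differences land on the secant $R$, which depends on $g$ and $\pg$ jointly through $G_s=s g+(1-s)\pg$, and its leading ($f'''$) contribution is controlled by $\|\Delta_x^\alpha G_s\|\,\|\Delta_y^\beta G_s\|\le\bigl(s\|g\|_{\CHHN{\alpha}{\beta}}+(1-s)\|\pg\|_{\CHHN{\alpha}{\beta}}\bigr)^2$. This is quadratic in $\max\{\|g\|_{\CHHN{\alpha}{\beta}},\|\pg\|_{\CHHN{\alpha}{\beta}}\}$ and cannot be dominated by $\max\{1,\|g\|_{\CHHN{\alpha}{\beta}}\}\max\{1,\|\pg\|_{\CHHN{\alpha}{\beta}}\}$: if $\|\pg\|_{\CHHN{\alpha}{\beta}}\le 1\ll\|g\|_{\CHHN{\alpha}{\beta}}$ your route produces a factor of order $\|g\|_{\CHHN{\alpha}{\beta}}^2$ where the statement allows only order $\|g\|_{\CHHN{\alpha}{\beta}}$. (Concretely, with $f=\sin$, $\pg=0$, $g(x,y)=Mxy$, $\alpha=\beta=1$, both the left-hand side and the stated bound are of order $M^2$, while your bound is of order $M^3$.)

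The asymmetric splitting in the paper is exactly what keeps every term at most linear in $\|g\|_{\CHHN{\alpha}{\beta}}$ and at most linear in $\max\{1,\|\pg\|_{\CHHN{\alpha}{\beta}}\}$: the factor $g-\pg$ is always kept in a ``difference slot'' --- either as $\Delta_y^\beta(g-\pg)$, or inside a difference of secants $\idelta_{y,h_y}(f,g)-\idelta_{y,h_y}(f,\pg)$ bounded via Lemma \ref{lemma:supplements:fd-composition} --- and never appears as a plain $\CN{0}$ factor multiplying a doubly differenced secant built from both $g$ and $\pg$. As written, your argument does prove the weaker variant in which $\max\{1,\|g\|_{\CHHN{\alpha}{\beta}}\}\max\{1,\|\pg\|_{\CHHN{\alpha}{\beta}}\}$ is replaced by $\max\{1,\|g\|_{\CHHN{\alpha}{\beta}},\|\pg\|_{\CHHN{\alpha}{\beta}}\}^2$, which would in fact suffice for the paper's only application (Lemma \ref{lemma:concentration:covariance-perturbation}, where these norms are bounded), but it does not prove the stated bilinear bound; to recover it you must redistribute the finite differences along the lines of the paper's decomposition.
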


\begin{proof}

In the following, we fix $x$ and $y$, but only include it in the formulas if necessary, e.g. $f = f(x,y)$. By the chain rule Lemma \ref{lemma:supplements:fd-properties}, we have
\begin{align*}
  \Delta_{y,h_y}^\beta [f \circ g - f \circ \pg]
  & = \idelta_{y,h_y}(f,g) \Delta_{y,h_y}^\beta g - \idelta_{y,h_y}(f,\pg) \Delta_{y,h_y}^\beta \pg
  \\
  & = \left[\idelta_{y,h_y}(f,g) - \idelta_{y,h_y}(f,\pg) \right] \Delta_{y,h_y}^\beta g
  \\
  & \quad + \idelta_{y,h_y}(f,\pg) \left[\Delta_{y,h_y}^\beta g - \Delta_{y,h_y}^\beta \pg \right]
  \\
  & =: I + II.
\end{align*}
Applying the product rule Lemma \ref{lemma:supplements:fd-properties} to the first term yields
\begin{align*}
  \left\|\Delta_{x,h_x}^\alpha  I\right\|_W
  & = \left\|\left[\Delta_{x,h_x}^\alpha [\idelta_{y,h_y}(f,g)] - \Delta_{x,h_x}^\alpha [\idelta_{y,h_y}(f,\pg)] \right] \Delta_{y,h_y}^\beta g(x+h_x,y)\right.
  \\
  & \quad + \left.\left[\idelta_{y,h_y}(f,g) - \idelta_{y,h_y}(f,\pg) \right] \Delta_{x,h_x}^\alpha \Delta_{y,h_y}^\beta g\right\|_W
  \\
  & \le \left\|\left[\Delta_{x,h_x}^\alpha [\idelta_{y,h_y}(f,g)] - \Delta_{x,h_x}^\alpha [\idelta_{y,h_y}(f,\pg)] \right] \right\|_{L(V,W)} \left\|\Delta_{y,h_y}^\beta g(x+h_x,y)\right\|_W
  \\
  & \quad + \left\|\left[\idelta_{y,h_y}(f,g) - \idelta_{y,h_y}(f,\pg) \right] \right\|_{L(V,W)} \left\| \Delta_{x,h_x}^\alpha \Delta_{y,h_y}^\beta g\right\|_W.
\end{align*}
Likewise, applying the product Lemma rule \ref{lemma:supplements:fd-properties} to the second term yields
\begin{align*}
  \left\|\Delta_{x,h_x}^\alpha  II\right\|_W
  & = \left\|\Delta_{x,h_x}^\alpha \idelta_{y,h_y}(f,\pg) \left[\Delta_{y,h_y}^\beta g - \Delta_{y,h_y}^\beta \pg \right]\right\|_W
  \\
  & \quad + \left\|\idelta_{y,h_y}(f,\pg)(x+h_x,y) \left[\Delta_{x,h_x}^\alpha \Delta_{y,h_y}^\beta g - \Delta_{x,h_x}^\alpha \Delta_{y,h_y}^\beta \pg \right]\right\|_W
  \\
  & \le \left\|\Delta_{x,h_x}^\alpha \idelta_{y,h_y}(f,\pg) \right\|_{L(V,W)} \left\| \Delta_{y,h_y}^\beta g - \Delta_{y,h_y}^\beta \pg \right\|_W
  \\
  & \quad + \left\|\idelta_{y,h_y}(f,\pg)(x+h_x,y) \right\|_{L(V,W)} \left\| \Delta_{x,h_x}^\alpha \Delta_{y,h_y}^\beta g - \Delta_{x,h_x}^\alpha \Delta_{y,h_y}^\beta \pg \right\|_W.
\end{align*}
All terms involving only $g$ and $\pg$ can easily be upper bounded by $\|g\|_\CHHND{\alpha}{\beta}{\dom;V}$, $\|\pg\|_\CHHND{\alpha}{\beta}{\dom;V}$ or $\|g-\pg\|_\CHHND{\alpha}{\beta}{\dom;V}$. The terms 
\begin{align*}
  \left\|\idelta_{y,h_y}(f,\pg)(x+h_x,y) \right\|_{L(V,W)} 
  & \le \|f'\|_\CND{0}{\dom_f;L(V,W)}& 
  \\
  \left\|\left[\idelta_{y,h_y}(f,g) - \idelta_{y,h_y}(f,\pg) \right] \right\|_{L(V,W)}
  & \le \|f'\|_\CHND{1}{\dom_f;L(V,W)} \|g-\pg\|_\CND{0}{D;V} 
\end{align*}
are bounded by \eqref{eq:proof:lemma:supplements:fd-composition:1} and \eqref{eq:proof:lemma:supplements:fd-composition:2} in the proof of Lemma \ref{lemma:supplements:fd-composition}. For the remaining terms, define 
\begin{equation*}
  G(x) := tg(x, y+h_y) + (1-t)g(x,y)
\end{equation*}
\newcommand{\pG}{\bar{G}}
and likewise $\pG$. Then 
\begin{align*}
  \|G\|_\CHND{\alpha}{D,V} & \lesssim \|g\|_\CHHND{\alpha}{\beta}{D,V}, &
  \|G-\pG\|_\CHND{\alpha}{D,V} & \lesssim \|g-\pg\|_\CHHND{\alpha}{\beta}{D,V}.
\end{align*}
Thus, by Lemma \ref{lemma:supplements:fd-composition}, we have
\begin{align*}
  \left\|\Delta_{x,h_x}^\alpha \left[\idelta_{y,h_y}(f,g) \right] \right\|_{L(V,W)}
  & = \left\|\int_0^1 \Delta_{x,h_x}^\alpha (f' \circ G) \, dt \right\|_{L(V,W)}
\\
  & \le \|f''\|_\CHND{0}{\dom_f; L(V,L(V,W))} \|g\|_\CHHND{\alpha}{\beta}{\dom;V}
\end{align*}
and
\begin{multline*}
  \left\|\Delta_{x,h_x}^\alpha \left[\idelta_{y,h_y}(f,g) - \idelta_{y,h_y}(f,\pg) \right] \right\|_{L(V,W)}
  \\
  \begin{aligned}
    & = \left\|\int_0^1 \Delta_{x,h_x}^\alpha \left[f' \circ G - f' \circ \pG \right] \, dt \right\|_{L(V,W)}
    \\
    & \le 2 \|f''\|_\CHND{1}{\dom_f; L(V,L(V,W))} \|g-\pg\|_\CHHND{\alpha}{\beta}{\dom;V} \max \{1, \|\pg\|_\CHHND{\alpha}{\beta}{\dom;V}\}.
  \end{aligned}
\end{multline*}
Combining all inequalities yields the proof.
  
\end{proof}

\subsection{Concentration}
\label{sec:supplements:concentration}

In this section, we recall the definition of Orlicz norms, some basic properties and the chaining concentration inequalities we use to show that the empirical NTK is close to the NTK. 

\begin{definition}
  For random variable $X$, we define the \emph{sub-gaussian} and \emph{sub-exponential} \emph{norms} by
  \begin{align*}
    \|X\|_{\psi_2} & = \inf \left\{ t>0 : \, \E{\exp(X^2/t^2)} \le 2 \right\},
    \\
    \|X\|_{\psi_1} & = \inf \left\{ t>0 : \, \E{\exp(|X|/t)} \le 2 \right\}.
  \end{align*}
  
\end{definition}

\begin{lemma} \label{lemma:supplements:psi-norm-lipschitz}
  Assume that $\activation$ satisfies the growth and Lipschitz conditions 
  \begin{align*}
    |\activation(x)| & \le G |x|, &
    |\activation(x) - \activation(y)| & \le L |x-y|
  \end{align*}
  for all $x, y \in \real$ and let $X$, $Y$ be two sub-gaussian random variables. Then
  \begin{align*}
    \|\activationp{X}\|_{\psi_2} & \lesssim G \|X\|_{\psi_2}, &
    \|\activationp{X} - \activationp{Y}\|_{\psi_2} & \lesssim L \|X-Y\|_{\psi_2}.
  \end{align*}
\end{lemma}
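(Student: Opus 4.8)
The plan is to reduce everything to two elementary structural facts about the $\psi_2$-norm — its \emph{monotonicity} under pointwise domination and its \emph{positive homogeneity} — and then to invoke the pointwise bounds $|\activationp{X}| \le G|X|$ and $|\activationp{X} - \activationp{Y}| \le L|X-Y|$. No concentration or tail estimate beyond the definition of the Orlicz norm is needed.

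First I would record the monotonicity property: if $|Z| \le |W|$ almost surely, then $\|Z\|_{\psi_2} \le \|W\|_{\psi_2}$. This is immediate from the definition, since $\exp(Z^2/t^2) \le \exp(W^2/t^2)$ pointwise implies $\E{\exp(Z^2/t^2)} \le \E{\exp(W^2/t^2)}$, so every $t>0$ admissible for $W$ is admissible for $Z$, and hence the infimum defining $\|Z\|_{\psi_2}$ is over a larger set. I would also record the homogeneity $\|cW\|_{\psi_2} = |c|\,\|W\|_{\psi_2}$ for a scalar $c$, which follows from the definition by the substitution $t \mapsto |c|\,t$ inside the infimum.

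For the first assertion, I would apply monotonicity with $Z = \activationp{X}$ and $W = GX$, using the growth hypothesis $|\activationp{X}| \le G|X|$, and then homogeneity, to get $\|\activationp{X}\|_{\psi_2} \le \|GX\|_{\psi_2} = G\,\|X\|_{\psi_2}$. For the second, I would apply the same two steps to $Z = \activationp{X} - \activationp{Y}$ and $W = L(X-Y)$, using the Lipschitz hypothesis $|\activationp{X} - \activationp{Y}| \le L|X-Y|$, to obtain $\|\activationp{X} - \activationp{Y}\|_{\psi_2} \le L\,\|X-Y\|_{\psi_2}$. In fact the implicit constants in $\lesssim$ may be taken to be $1$.

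The only point needing a little care — and essentially the only "obstacle", though a minor one — is to ensure the argument does not require the infimum in the definition of $\|\cdot\|_{\psi_2}$ to be attained: I would phrase monotonicity in terms of the inclusion of the admissible sets $\{t>0 : \E{\exp(\cdot/t^2)} \le 2\}$, as above, rather than comparing possibly unattained minima. One should also note that the right-hand sides are finite because $X$, $Y$, and hence $X-Y$, are sub-gaussian. Alternatively, one could work with $t = G\|X\|_{\psi_2} + \varepsilon$ (resp. $L\|X-Y\|_{\psi_2} + \varepsilon$) and let $\varepsilon \downarrow 0$.
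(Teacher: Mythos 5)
Your proposal is correct and follows essentially the same route as the paper's proof: monotonicity of the $\psi_2$-norm under almost-sure domination (argued via inclusion of the admissible sets in the infimum) combined with the pointwise bounds $|\activationp{X}|\le G|X|$ and $|\activationp{X}-\activationp{Y}|\le L|X-Y|$, plus homogeneity to pull out $G$ and $L$. Your explicit remark on homogeneity and on not needing the infimum to be attained is a minor clarification of what the paper leaves implicit.
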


\begin{proof}

For two random variables $X$ and $Y$ with $X^2 \le Y^2$ almost surely, we have
\begin{multline*}
  \|X\|_{\psi_2} 
  = \inf \left\{ t>0 : \, \E{\exp(X^2/t^2)} \le 2 \right\}
  \\
  \le \inf \left\{ t>0 : \, \E{\exp(Y^2/t^2)} \le 2 \right\}
  = \|Y\|_{\psi_2}.
\end{multline*}
Thus, the result follows directly form
  \begin{align*}
    \activation(X)^2 & \le G^2 X^2, &
    [\activation(x) - \activation(y)]^2 & \le L^2 [x-y]^2.
  \end{align*}
  
\end{proof}

\begin{lemma} \label{lemma:supplements:psi-norm-product}
  Let $X$ and $Y$ be two sub-gaussian random variables. Then
  \begin{equation*}
    \|XY\|_{\psi_1} \le \|X\|_{\psi_2} \|Y\|_{\psi_2}.
  \end{equation*}
\end{lemma}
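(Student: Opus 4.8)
\emph{Proof proposal.} Write $a := \|X\|_{\psi_2}$ and $b := \|Y\|_{\psi_2}$. If either $a = 0$ or $b = 0$, then the corresponding variable vanishes almost surely, hence $XY = 0$ a.s. and both sides equal $0$; so assume $a, b > 0$. The plan is to show directly from the definition that $\E{\exp(|XY|/(ab))} \le 2$, which gives $\|XY\|_{\psi_1} \le ab$ as claimed.

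First I would record that the infimum defining the $\psi_2$ norm is attained: the map $t \mapsto \E{\exp(X^2/t^2)}$ is non-increasing, and by monotone convergence it is continuous from the right, so $\E{\exp(X^2/a^2)} \le 2$ and likewise $\E{\exp(Y^2/b^2)} \le 2$. (Alternatively one may work with $t > a$, $s > b$ throughout and pass to the limit at the end; this avoids the attainment discussion entirely.)

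The core step is Young's inequality in the form $uv \le \tfrac12 u^2 + \tfrac12 v^2$ applied to $u = |X|/a$, $v = |Y|/b$, which yields
\begin{equation*}
  \frac{|XY|}{ab} \le \frac{1}{2}\,\frac{X^2}{a^2} + \frac{1}{2}\,\frac{Y^2}{b^2}
  \qquad\text{pointwise.}
\end{equation*}
Exponentiating and then applying the Cauchy--Schwarz inequality gives
\begin{equation*}
  \E{\exp\!\left(\frac{|XY|}{ab}\right)}
  \le \E{\exp\!\left(\frac{X^2}{2a^2}\right)\exp\!\left(\frac{Y^2}{2b^2}\right)}
  \le \left(\E{\exp\!\left(\frac{X^2}{a^2}\right)}\right)^{1/2}\!\left(\E{\exp\!\left(\frac{Y^2}{b^2}\right)}\right)^{1/2}
  \le \sqrt{2}\cdot\sqrt{2} = 2.
\end{equation*}
By the definition of $\|\cdot\|_{\psi_1}$ this says precisely that $\|XY\|_{\psi_1} \le ab = \|X\|_{\psi_2}\|Y\|_{\psi_2}$, completing the argument.

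There is no real obstacle here: the statement is a routine consequence of Young's and Cauchy--Schwarz inequalities together with the definitions. The only mild technical point is the attainment of the infima in the Orlicz norms, which is handled by monotone convergence as noted above.
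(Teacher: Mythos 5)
Your proof is correct and follows essentially the same route as the paper's: Young's inequality to dominate $|XY|$ by a sum of squares, then the $\psi_2$ moment bounds to control the exponential expectation. The only cosmetic difference is the normalization — you work directly with $t = \|X\|_{\psi_2}\|Y\|_{\psi_2}$ and invoke Cauchy--Schwarz explicitly, whereas the paper symmetrizes by rescaling $X$ and $Y$ to a common $\psi_2$-norm; your version is, if anything, the cleaner bookkeeping.
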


\begin{proof}

Let
\begin{equation*}
  t 
  = \|X\|_{\psi_2}^{1/2} \|Y\|_{\psi_2}^{1/2}
  = \left\|\left(\frac{\|Y\|_{\psi_2}}{\|X\|_{\psi_2}}\right)^{1/2} X\right\|_{\psi_2}
  = \left\|\left(\frac{\|X\|_{\psi_2}}{\|Y\|_{\psi_2}}\right)^{1/2} Y\right\|_{\psi_2}.
\end{equation*}
Ignoring a simple $\epsilon$ perturbation, we assume that the infima in the definition of the $\|X\|_{\psi_2}$ and $\|Y\|_{\psi_2}$ norms are attained. Then
\begin{align*}
  \exp\left(\frac{\|Y\|_{\psi_2}}{\|X\|_{\psi_2}} \frac{X^2}{t^2} \right) & \le 2, &
  \exp\left(\frac{\|X\|_{\psi_2}}{\|Y\|_{\psi_2}} \frac{Y^2}{t^2} \right) & \le 2.
\end{align*}
Thus, Young's inequality implies
\begin{multline*}
  \exp\left(\frac{|XY|}{t}\right)
  \le \exp\left(\frac{1}{2}\frac{\|Y\|_{\psi_2}}{\|X\|_{\psi_2}} \frac{X^2}{t^2} + \frac{1}{2}\frac{\|X\|_{\psi_2}}{\|Y\|_{\psi_2}} \frac{Y^2}{t^2}\right)
  \\
  \le \exp\left(\frac{\|Y\|_{\psi_2}}{\|X\|_{\psi_2}} \frac{X^2}{t^2} + \frac{\|X\|_{\psi_2}}{\|Y\|_{\psi_2}} \frac{Y^2}{t^2}\right)^{1/2}
  \le \sqrt{2} \sqrt{2}
  \le 2.
\end{multline*}
Hence
\begin{equation*}
  \|XY\|_{\psi_1} \le t \le \|X\|_{\psi_2} \|Y\|_{\psi_2}.
\end{equation*}
  
\end{proof}

\begin{theorem}[{\cite[Theorem 3.5]{Dirksen2015}}] \label{th:supplements:chaining}
  
  Let $\mathcal{X}$ be a normed linear space. Assume the $\mathcal{X}$ valued separable random process $(X_t)_{t \in T}$,  has a mixed tail, with respect to some semi-metrics $d_1$ and $d_2$ on $T$, i.e.
  \begin{equation*}
    \pr{ \|X_t - X_s\| \ge \sqrt{u} d_2(t,s) + u d_1(t,s) }
    \le 2 e^{-u}
  \end{equation*}
  for all $s, \, t \in T$ and $u \ge 0$. Set
  \begin{align*}
    \gamma_\alpha(T, d_i) & := \inf_{\mathcal{T}} \sup_{t \in T} \sum_{n=0}^\infty 2^{n/a} d(t, T_n), &
    \alpha & \in \{0, 1\},
    \\
    \Delta_d(T) & := \sup_{s,t \in T} d(s,t),
  \end{align*}
  where the infimum is taken over all admissible sequences $T_n \subset T$ with $|T_0| = 1$ and $|T_n| \le 2^{2^n}$. Then for any $t_0 \in T$
  \begin{equation*}
    \pr{
      \sup_{t \in T} \|X_t - X_{t_0}\| 
      \ge C \left[ \gamma_2(T, d_2) + \gamma_1(T, d_1) + \sqrt{u} \Delta_{d_2}(T) + u \Delta_{d_1}(T)\right]
    }
    \le e^{-u}.
  \end{equation*}

\end{theorem}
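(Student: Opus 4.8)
The plan is to reproduce the two-metric generic chaining argument of Talagrand type that underlies \cite[Theorem 3.5]{Dirksen2015}; since the increment hypothesis is stated for the norm $\|X_t - X_s\|$ and only the triangle inequality in $\mathcal{X}$ is used, the vector-valued setting is no obstacle. First I would reduce to the case that $T$ is finite, by taking the supremum over finite subsets and passing to a monotone limit using separability, and arrange that $t_0$ is the unique point of the first level $T_0$. Because $\gamma_2(T,d_2)$ and $\gamma_1(T,d_1)$ are each defined through admissible sequences, I would fix near-optimal admissible sequences for $d_2$ and for $d_1$ separately and merge them level by level into one admissible sequence $(T_n)_{n\ge 0}$ that simultaneously controls $\sum_n 2^{n/2} d_2(t,T_n) \lesssim \gamma_2(T,d_2)$ and $\sum_n 2^n d_1(t,T_n) \lesssim \gamma_1(T,d_1)$, at the cost of absolute constants.

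The core is the chaining decomposition. For each $t$ pick a nearest point $\pi_n(t) \in T_n$, so that $\pi_0(t) = t_0$ and $X_t - X_{t_0} = \sum_{n\ge 1}\bigl(X_{\pi_n(t)} - X_{\pi_{n-1}(t)}\bigr)$, the tail term $X_t - X_{\pi_N(t)}$ tending to $0$ in probability since $d_i(t,T_N) \to 0$. To obtain a clean $u$-dependence I would split this chain at the level $j_0 := \lfloor \log_2(1+u)\rfloor$: the coarse part is the single increment $X_{\pi_{j_0}(t)} - X_{t_0}$, the fine part is $\sum_{n > j_0}\bigl(X_{\pi_n(t)} - X_{\pi_{n-1}(t)}\bigr)$. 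For the coarse part, apply the increment hypothesis to each pair $(\pi_{j_0}(t), t_0)$ with confidence level $u + c\,2^{j_0}$; there are at most $|T_{j_0}| \le 2^{2^{j_0}} \le 2^{2(1+u)}$ such pairs and $2^{j_0}$ is of order $u$, so a union bound leaves failure probability $\lesssim e^{-u}$ while the increment is at most $\sqrt{u + c\,2^{j_0}}\,\Delta_{d_2}(T) + (u + c\,2^{j_0})\,\Delta_{d_1}(T) \lesssim \sqrt{u}\,\Delta_{d_2}(T) + u\,\Delta_{d_1}(T)$. For the fine part, apply the hypothesis to each consecutive pair at level $n$ with confidence $u_n := L\,2^n$; there are at most $|T_n|\,|T_{n-1}| \le 2^{2^{n+1}}$ such pairs and $2^n \gtrsim u$ for $n > j_0$, so a large absolute constant $L$ makes $\sum_{n > j_0} 2^{2^{n+1}}\cdot 2 e^{-u_n} \lesssim e^{-u}$.

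On the resulting good event, $\|X_t - X_{t_0}\|$ is at most the coarse increment above plus $\sum_{n > j_0}\bigl(\sqrt{u_n}\,d_2(\pi_n(t),\pi_{n-1}(t)) + u_n\,d_1(\pi_n(t),\pi_{n-1}(t))\bigr)$. Estimating $d_i(\pi_n(t),\pi_{n-1}(t)) \le d_i(t,T_n) + d_i(t,T_{n-1})$ and using $\sqrt{u_n} \lesssim 2^{n/2}$, $u_n \lesssim 2^n$, the two fine sums collapse into $\lesssim \sum_n 2^{n/2} d_2(t,T_n) \lesssim \gamma_2(T,d_2)$ and $\lesssim \sum_n 2^n d_1(t,T_n) \lesssim \gamma_1(T,d_1)$, uniformly in $t$. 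Combining the coarse and fine bounds and taking one union bound over their failure events gives the stated inequality.

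I expect the main obstacle to be the bookkeeping of the confidence levels: the $u_n$ must be large enough that the union bound over the $\le 2^{2^{n+1}}$ links at level $n$ (and the $\le 2^{2^{j_0}}$ links at the splitting level) survives with total failure $\lesssim e^{-u}$, yet small enough that $\sqrt{u_n}$ and $u_n$, summed against $d_2(t,T_n)$ and $d_1(t,T_n)$, reproduce exactly $\gamma_2(T,d_2)$, $\gamma_1(T,d_1)$, $\sqrt{u}\,\Delta_{d_2}(T)$ and $u\,\Delta_{d_1}(T)$ with no cross terms such as $u\,\gamma_2$. Splitting the chain at $j_0$ of order $\log_2 u$, rather than carrying $u$ through every scale, is the one genuinely delicate point; everything else is the routine two-metric generic chaining of \cite{Dirksen2015}, which one could of course simply invoke verbatim.
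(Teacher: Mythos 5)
The paper does not prove this statement at all: it is imported verbatim from Dirksen (Theorem 3.5), and the only original content is the accompanying remark that the finite-$T$ version extends to separable processes by monotone convergence — which is exactly your opening reduction. Your sketch is therefore not compared against an in-paper argument but against the cited source, and it is in substance a faithful reconstruction of Dirksen's own two-metric generic chaining proof: merge near-optimal admissible sequences for $d_1$ and $d_2$ (the index shift keeps $|T_n|\le 2^{2^n}$ at the cost of absolute constants), chain along $\pi_n(t)$, split the chain at a level $j_0\sim\log_2(1+u)$, bound the coarse piece $X_{\pi_{j_0}(t)}-X_{t_0}$ by a union bound over $|T_{j_0}|\le 2^{1+u}$ points with confidence $u+c\,2^{j_0}$, and bound the fine links at level $n$ with confidence $u_n\sim 2^n$ so that $\sqrt{u_n}$ and $u_n$ regenerate the $\gamma_2$ and $\gamma_1$ sums. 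Two pieces of bookkeeping deserve explicit mention. First, for $u\lesssim 1$ the coarse increment gives constant multiples of $\Delta_{d_2}(T)$ and $\Delta_{d_1}(T)$, not $\sqrt{u}\,\Delta_{d_2}(T)+u\,\Delta_{d_1}(T)$; these must be absorbed into $C(\gamma_2+\gamma_1)$ via the elementary bound $\Delta_{d_i}(T)\le 2\sup_t d_i(t,T_0)\le 2\gamma_i(T,d_i)$, which your write-up implicitly needs but does not state. Second, forcing $T_0=\{t_0\}$ can degrade the chosen admissible sequences only through this same diameter term, so it is harmless, but again only because of that absorption. With those two remarks added, your argument is complete and matches the cited proof; invoking Dirksen directly, as the paper does, is of course equally legitimate.
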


\begin{remark}

  \cite[Theorem 3.5]{Dirksen2015} assumes that $T$ is finite. Using separability and monotone convergence, this can be extended to infinite $T$ by standard arguments.
  
\end{remark}

\begin{lemma} \label{lemma:supplements:gamma-functional}
  Let $0 \le \alpha \le 1$ and $\dom \subset \real^d$ be as set of Euclidean norm $|\cdot|$-diameter smaller than $R \ge 1$. Then
  \begin{align*}
    \gamma_1(\dom, |\cdot|^\alpha)
    & \lesssim \frac{3 \alpha + 1}{\alpha} R^{1+\alpha} d, &
    \gamma_2(\dom, |\cdot|^\alpha)
    & \lesssim \left(\frac{3^\alpha}{4 \alpha}\right)^{1/2}  R^{\alpha/2} d^{1/2}.
  \end{align*}
\end{lemma}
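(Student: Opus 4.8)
The plan is to bound both functionals directly from their definition (as recalled in Theorem~\ref{th:supplements:chaining}) by constructing one explicit admissible sequence that is efficient for the Euclidean metric and then transferring it to the power metric $|\cdot|^\alpha$. Since $\dom$ has Euclidean diameter at most $R$, it lies in a Euclidean ball of radius $R$, and a maximal $r$-separated subset of $\dom$ is an internal $r$-net of cardinality at most $(3R/r)^d$ for every $0<r\le R$. Setting $r_n := 3R\,2^{-2^n/d}$ and letting $T_n\subset\dom$ be such a net (with $T_0$ a single point), one checks $|T_n|\le 2^{2^n}$ for all $n\ge0$ and $|T_0|=1$ --- the latter valid because $r_0=3R\,2^{-1/d}\ge R$ since $2^{1/d}\le 3$ --- so $(T_n)_{n\ge0}$ is admissible and $\operatorname{dist}_{|\cdot|}(t,T_n)\le\min\{R,r_n\}\le r_n$ for every $t\in\dom$.

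The key step is the elementary identity $\operatorname{dist}_{|\cdot|^\alpha}(t,T_n)=\bigl(\operatorname{dist}_{|\cdot|}(t,T_n)\bigr)^\alpha\le (3R)^\alpha\,2^{-\alpha 2^n/d}$; inserting it into the definition of the $\gamma$-functionals gives
\begin{align*}
  \gamma_1(\dom,|\cdot|^\alpha) &\le (3R)^\alpha\sum_{n\ge0}2^{n}\,2^{-\alpha 2^n/d}, &
  \gamma_2(\dom,|\cdot|^\alpha) &\le (3R)^\alpha\sum_{n\ge0}2^{n/2}\,2^{-\alpha 2^n/d}.
\end{align*}
It then remains to estimate the two numerical series. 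Substituting $u=2^n$ and using that $u\mapsto u^{\beta-1/2}e^{-cu}$ (with $c:=(\alpha\ln2)/d$) is unimodal, each series is controlled by its peak value near $u\sim d/\alpha$ plus the comparison integral $\int_0^\infty u^{\beta-1}e^{-cu}\,du=\Gamma(\beta)c^{-\beta}$, giving $\sum_n 2^{n}2^{-\alpha2^n/d}\lesssim d/\alpha$ (take $\beta=1$) and $\sum_n 2^{n/2}2^{-\alpha2^n/d}\lesssim (d/\alpha)^{1/2}$ (take $\beta=1/2$). With $3^\alpha\le 3$ and $R\ge1$ this produces bounds of exactly the stated shape, the factors $\tfrac{3\alpha+1}{\alpha}$ and $\bigl(\tfrac{3^\alpha}{4\alpha}\bigr)^{1/2}$ carrying the $\Gamma$-function constants, the $1/\alpha$ (resp.\ $1/\sqrt\alpha$) blow-up, and the constant $3^\alpha$ from the volumetric covering estimate. (Equivalently, one may invoke the standard entropy-integral bound $\gamma_1\lesssim\int_0^\infty\log N(\dom,|\cdot|^\alpha,\epsilon)\,d\epsilon$, $\gamma_2\lesssim\int_0^\infty\sqrt{\log N(\dom,|\cdot|^\alpha,\epsilon)}\,d\epsilon$ together with $N(\dom,|\cdot|^\alpha,\epsilon)=N(\dom,|\cdot|,\epsilon^{1/\alpha})$ and the same change of variables.)

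The one genuine difficulty is the uniform control of the $\alpha$-dependence as $\alpha\to0$: the power metric weakens the separation between successive nets, so the series no longer decays geometrically and instead peaks at the dyadic scale $2^n\sim d/\alpha$, which is the source of the $1/\alpha$ and $1/\sqrt\alpha$ factors; the estimates must be arranged to capture this peak tightly. Verifying admissibility of the net sequence (in particular $|T_0|=1$), the passage to the $|\cdot|^\alpha$ metric, and the elementary integral comparisons are all routine.
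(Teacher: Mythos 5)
Your argument is correct, and it takes a mildly different route from the paper. The paper invokes the standard entropy-integral bound $\gamma_i(\dom,|\cdot|^\alpha)\lesssim\int_0^{R^\alpha}\left[\log N(\dom,|\cdot|^\alpha,u)\right]^{1/i}du$ from \cite{Dirksen2015}, inserts $N(\dom,|\cdot|^\alpha,u)=N(\dom,|\cdot|,u^{1/\alpha})\le\left((3R)^\alpha/u\right)^{d/\alpha}$, and evaluates the two integrals directly (using $\log x\le x$ for the $\gamma_2$ case); you instead construct an explicit admissible sequence from maximal $r_n$-separated nets with $r_n=3R\,2^{-2^n/d}$ and bound the resulting dyadic series $\sum_n 2^{n\beta}2^{-\alpha 2^n/d}\lesssim(d/\alpha)^\beta$ for $\beta\in\{1,1/2\}$ (which is fine: with $c=\alpha\ln 2/d\le\ln 2$ the function $u^{\beta-1}e^{-cu}$ is decreasing for $\beta\le 1$, so the sum over $u=2^n$ is controlled by its first term plus twice $\int_0^\infty u^{\beta-1}e^{-cu}\,du=\Gamma(\beta)c^{-\beta}$). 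Both proofs rest on the same volumetric covering estimate and the same rescaling of the metric, and both isolate the $\alpha^{-1}$ and $\alpha^{-1/2}$ blow-up at the scale $2^n\sim d/\alpha$; your version is slightly more self-contained since it does not need the entropy-integral inequality, while the paper's is shorter to write, and you yourself note the equivalence in your parenthetical.

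One caveat concerns your closing claim that the bounds are ``exactly the stated shape.'' Your construction yields $\gamma_2(\dom,|\cdot|^\alpha)\lesssim(3R)^\alpha(d/\alpha)^{1/2}$, i.e.\ with $R^\alpha$, whereas the lemma states $R^{\alpha/2}$, which is strictly smaller for $R\ge 1$; so your bound does not literally reproduce the statement. This is not a defect of your argument but of the statement: $\gamma_2$ dominates a fixed fraction of the $|\cdot|^\alpha$-diameter, hence is $\gtrsim R^\alpha$ when $\dom$ is a ball of diameter $R$, so an $R^{\alpha/2}$ scaling with constants independent of $R$ cannot hold; and the paper's own computation in fact produces $\left(3^\alpha dR^\alpha/\alpha\right)^{1/2}\cdot 2R^{\alpha/2}\sim\left(3^\alpha d/\alpha\right)^{1/2}R^\alpha$ before its final display drops a factor $R^{\alpha/2}$. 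Since the lemma is only applied in Corollary~\ref{corollary:supplements:chaining} with a domain of radius $\lesssim 1$, the discrepancy is harmless there, but you should state your $\gamma_2$ bound with $R^\alpha$ rather than assert agreement with the printed constant.
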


\begin{proof}

Let $N(\dom, |\cdot|^\alpha, u)$ be the covering number of $D$, i.e. the smallest number of $u$-balls in the metric $|\cdot|^\alpha$ necessary to cover $\dom$. It is well known (e.g. \cite[(2.3)]{Dirksen2015}) that
\begin{equation*}
  \gamma_i(\dom, |\cdot|^\alpha) 
  \lesssim \int_0^\infty \left[\log N(\dom, |\cdot|^\alpha, u) \right]^{1/i} \, du
  \lesssim \int_0^{R^\alpha} \left[\log N(\dom, |\cdot|^\alpha, u) \right]^{1/i} \, du,
\end{equation*}
where in the last step we have used that $N(\dom, |\cdot|^\alpha, u) = 1$ for $u \ge R^\alpha$ and thus its logarithm is zero. Since every $u$-cover in the $|\cdot|$ norm is a $u^\alpha$ cover in the $|\cdot|^\alpha$ metric, the covering numbers can be estimated by
\begin{equation*}
  N(\dom, |\cdot|^\alpha, u)
  = N(\dom, |\cdot|, u^{1/\alpha}) 
  \le \left(\frac{3R}{u^{1/\alpha}}\right)^d
  = \left(\frac{(3R)^\alpha}{u}\right)^{d/\alpha},
\end{equation*}
see e.g. \cite{Vershynin2018}. Hence
\begin{multline*}
  \gamma_1(\dom, |\cdot|^\alpha)
  \lesssim \int_0^{R^\alpha} \log \left(\frac{(3R)^\alpha}{u}\right)^{d/\alpha} \, du
  = \frac{d}{\alpha} \int_0^{R^\alpha} \alpha \log (3R) - \log u \, du
  \\
  \le \frac{d}{\alpha} \left[3 \alpha R^{1+\alpha} - R^\alpha \log R^\alpha + R^\alpha \right]
  \le \frac{d}{\alpha} (3 \alpha + 1) R^{1+\alpha}
\end{multline*}
and using $\log x \le x-1 \le x$
\begin{multline*}
  \gamma_2(\dom, |\cdot|^\alpha)
  \lesssim \left(\frac{d}{\alpha}\right)^{1/2} \int_0^{R^\alpha} \left[ \log \frac{(3R)^\alpha}{u} \right]^{1/2} \, du
  \lesssim \left(\frac{d}{\alpha}\right)^{1/2} \int_0^{R^\alpha} \left[ \frac{(3R)^\alpha}{u} \right]^{1/2} \, du
  \\
  \lesssim \left(\frac{3^\alpha d R^\alpha}{\alpha}\right)^{1/2} \int_0^{R^\alpha} \left[ \frac{1}{u} \right]^{1/2} \, du
  \lesssim \left(\frac{3^\alpha d}{4 \alpha}\right)^{1/2}  R^{\alpha/2}.
\end{multline*}

\end{proof}

The following is a rewrite of the chaining inequality \cite[Theorem 3.5]{Dirksen2015} or Theorem \ref{th:supplements:chaining}, that is compatible with the terminology used in the NTK concentration proof.

\begin{corollary} \label{corollary:supplements:chaining}
  
  For $j \in [N]$, let $(X_{j,t})_{t \in \dom}$ be real valued independent stochastic processes on some domain $\dom$ with radius $\lesssim 1$. Assume that the map $t \to X_{j,t}$ with values in the Orlicz space $L_{\psi_1}$ is Hölder continuous
  \begin{equation*}
    \|X_{j,\cdot}\|_\CHND{\alpha}{\dom;\psi_1} \le L.
  \end{equation*}
  Then
  \begin{equation*}
    \pr{
      \sup_{t \in T} \left\|\frac{1}{N} \sum_{j=1}^N X_{j,t}- \E{X_{j,t}}\right\| 
      \ge C L \left[ \left(\frac{d}{N}\right)^{1/2} + \frac{d}{N} + \left(\frac{u}{N}\right)^{1/2} + \frac{u}{N} \right]
    }
    \le e^{-u}.
  \end{equation*}

\end{corollary}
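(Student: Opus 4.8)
The plan is to apply the chaining inequality of Theorem~\ref{th:supplements:chaining} to the centered empirical process
\[
  Y_t := \frac{1}{N}\sum_{j=1}^N \bigl(X_{j,t} - \E{X_{j,t}}\bigr), \qquad t \in \dom,
\]
which we may take to be separable because $t \mapsto X_{j,t}$ is Hölder continuous into $L_{\psi_1}$, hence continuous in probability. Once the mixed-tail hypothesis is verified, Theorem~\ref{th:supplements:chaining} bounds $\sup_t |Y_t - Y_{t_0}|$ in terms of Talagrand functionals, Lemma~\ref{lemma:supplements:gamma-functional} turns those into the explicit $d$, $N$, $u$ expression, and a separate Bernstein estimate handles the base-point term $Y_{t_0}$.

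First I would verify the mixed-tail increment condition. For $s, t \in \dom$ write $Y_t - Y_s = \frac{1}{N}\sum_{j=1}^N \xi_j$ with $\xi_j := (X_{j,t} - X_{j,s}) - \E{X_{j,t} - X_{j,s}}$ independent and centered. Since centering changes a sub-exponential norm only by a universal factor and $\|X_{j,\cdot}\|_\CHND{\alpha}{\dom;\psi_1} \le L$, we have $\|\xi_j\|_{\psi_1} \lesssim \|X_{j,t} - X_{j,s}\|_{\psi_1} \le L \|t-s\|^\alpha =: K$. Bernstein's inequality for sums of independent centered sub-exponential random variables (see, e.g., \cite{Vershynin2018}) gives $\pr{|Y_t - Y_s| \ge r} \le 2\exp\bigl(-cN\min(r^2/K^2,\, r/K)\bigr)$, and choosing $r$ so that the exponent equals $u$ yields, for a universal constant $C'$,
\[
  \pr{\,|Y_t - Y_s| \ge C' L \|t-s\|^\alpha \bigl(\sqrt{u/N} + u/N\bigr)\,} \le 2e^{-u}.
\]
Since $\sqrt{u/N} + u/N = \sqrt{u}\, N^{-1/2} + u\, N^{-1}$, this is precisely the mixed-tail hypothesis of Theorem~\ref{th:supplements:chaining} for the semi-metrics $d_2 := C' L N^{-1/2}\|\cdot\|^\alpha$ and $d_1 := C' L N^{-1}\|\cdot\|^\alpha$ on $\dom$.

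Next I would apply Theorem~\ref{th:supplements:chaining} with an arbitrary base point $t_0 \in \dom$ and estimate its four terms. Since the functionals $\gamma_i$ and the diameters $\Delta$ are $1$-homogeneous in the metric, and $\dom$ has diameter $R \lesssim 1$ (taking $R \ge 1$), Lemma~\ref{lemma:supplements:gamma-functional} gives $\gamma_2(\dom,\|\cdot\|^\alpha) \lesssim d^{1/2}$ and $\gamma_1(\dom,\|\cdot\|^\alpha) \lesssim d$, hence $\gamma_2(\dom,d_2) \lesssim L\sqrt{d/N}$, $\gamma_1(\dom,d_1) \lesssim Ld/N$, $\Delta_{d_2}(\dom) \lesssim L/\sqrt{N}$ and $\Delta_{d_1}(\dom) \lesssim L/N$, so that with probability at least $1 - e^{-u}$,
\[
  \sup_{t \in \dom} |Y_t - Y_{t_0}| \lesssim L\bigl(\sqrt{d/N} + d/N + \sqrt{u/N} + u/N\bigr).
\]
It remains to control $Y_{t_0} = \frac{1}{N}\sum_j (X_{j,t_0} - \E{X_{j,t_0}})$, which is again a normalized sum of independent centered random variables with $\|X_{j,t_0}\|_{\psi_1} \le L$; a second application of Bernstein's inequality gives $\pr{|Y_{t_0}| \ge CL(\sqrt{u/N} + u/N)} \le 2e^{-u}$. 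Running the chaining estimate and this last estimate at level $u + \log 3$ and taking a union bound produces total failure probability $e^{-(u+\log3)} + 2e^{-(u+\log3)} = e^{-u}$; since $u + \log 3 \le u + 2$ and $d \ge 1$, the enlarged thresholds are still $\lesssim L(\sqrt{d/N} + d/N + \sqrt{u/N} + u/N)$, and $\sup_t |Y_t| \le |Y_{t_0}| + \sup_t |Y_t - Y_{t_0}|$ gives the claim.

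The main obstacle is the careful verification of the mixed-tail increment condition: translating the two-branch Bernstein bound into the exact form $\sqrt{u}\, d_2(s,t) + u\, d_1(s,t)$ required by Theorem~\ref{th:supplements:chaining}, and separately controlling the base-point contribution $Y_{t_0}$ (invisible to the chaining theorem) while keeping the overall failure probability exactly $e^{-u}$. The rest — homogeneity of the $\gamma$-functionals under rescaling of the metric and the covering/diameter bounds — is routine once Lemma~\ref{lemma:supplements:gamma-functional} is available.
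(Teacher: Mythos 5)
Your proposal is correct and follows essentially the same route as the paper's proof: center the empirical process, verify the mixed-tail condition via Bernstein's inequality using $\|X_{j,t}-X_{j,s}\|_{\psi_1}\le L\|t-s\|^\alpha$, apply Theorem~\ref{th:supplements:chaining} with the rescaled metrics $d_i \sim LN^{-1/i}\|\cdot\|^\alpha$ and Lemma~\ref{lemma:supplements:gamma-functional}, and add a separate Bernstein bound for the base point $Y_{t_0}$. Your treatment of the centering constant and the $u+\log 3$ union bound is in fact slightly more explicit than the paper's, which handles the base-point term and the combination of probabilities rather tersely.
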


\begin{proof}

We show the result with Theorem \ref{th:supplements:chaining} for the process
\begin{equation*}
  Y_t := \frac{1}{N} \sum_{j=1}^N X_{j,t}- \E{X_{j,t}}.
\end{equation*}
We first show that it has mixed tail. For all $s, t \in \dom$, we have
\begin{equation*}
  \|X_{j,t} - X_{j,s}\|_{\psi_1} \le L |s-t|^\alpha.
\end{equation*}
Hence, Bernstein's inequality implies
\begin{multline*}
  \pr{|Y_t - Y_s| \ge \tau}
  = \pr{
    \left| \frac{1}{N} \sum_{j=1}^N [X_{j,t} - X_{j,s}] - \E{X_{j,t} - X_{j,s}} \right| \ge \tau
  }
  \\
  \le 2 \exp\left(
    -c N \min\left\{\frac{\tau^2}{L^2 |t-s|^{2 \alpha}}, \frac{\tau}{L |t-s|^\alpha}\right\}
  \right).
\end{multline*}
An elementary computation shows that
\begin{align*}
  u & := cN \min\left\{\frac{\tau^2}{L^2|t-s|^2}, \, \frac{\tau}{L|t-s|}\right\} & 
  & \Rightarrow & 
  \tau & = L|t-s|^\alpha \max\left\{\sqrt{\frac{u}{cN}}, \frac{u}{cN}\right\}
\end{align*}
and thus
\begin{equation} \label{eq:proof:1:corollary:supplements:chaining}
  \pr{
    | Y_t - Y_s| \ge L|t-s|^\alpha \max\left\{\sqrt{\frac{u}{cN}}, \frac{u}{cN}\right\}
  }
  \le 2 \exp(-u).
\end{equation}
I.e. the centered process $Y_t$ has mixed tail with
\begin{align*}
  d_i(t,s) := (cN)^{-1/i} L |t-s|^\alpha,
\end{align*}
for $i =1,2$, which are metrics because $\alpha \le 1$. Moreover the $\gamma_i$-functional are linear in scaling
\begin{equation*}
  \gamma_i(\dom, d_i) 
  = (cN)^{-1/i} L \gamma_i(\dom, |\cdot|^\alpha) 
\end{equation*}
and thus by Lemma \eqref{lemma:supplements:gamma-functional}
\begin{align*}
  \gamma_1(\dom, |\cdot|^\alpha)
  & \lesssim L \frac{d}{N}, &
  \gamma_2(\dom, |\cdot|^\alpha)
  & \lesssim  L \left(\frac{d}{N}\right)^{1/2}.
\end{align*}
Thus, by chaining Theorem \ref{th:supplements:chaining} we have
\begin{equation*}
  \pr{
    \sup_{t \in T} \|Y_t - Y_{t_0}\| 
    \ge C L \left[ \left(\frac{d}{N}\right)^{1/2} + \frac{d}{N} + \left(\frac{u}{N}\right)^{1/2} + \frac{u}{N} \right]
  }
  \le e^{-u},
\end{equation*}
which directly yields the corollary with $\sup_{t \in \dom} \|Y_t\| \le \sup_{t \in \dom} \|Y_t - Y_{t_0}\| + \|Y_{t_0}\|$ and \eqref{eq:proof:1:corollary:supplements:chaining}.

\end{proof}

\subsection{Hermite Polynomials}

Hermite polynomials are defined by
\begin{equation*}
  H_n(x) := (-1)^n e^{x^2/2} \frac{d^n}{dx^n} e^{-x^2/2}
\end{equation*}
and orthogonal with respect to the Gaussian weighted scalar product
\begin{equation*}
  \dualp{f, g}_N 
  := \EE{u\sim \gaussian{0,1}}{f(u) g(u)} 
  = \frac{1}{\sqrt{2\pi}} \int_\real f(u) g(u) e^{-x^2/2} \, du.
\end{equation*}

\begin{lemma} \label{lemma:supplements:hermite:properties}
  \begin{enumerate}

    \item \emph{Normalization:}
    \begin{equation*}
      \dualp{H_n, H_m}_N = n! \, \delta_{nm}.
    \end{equation*}

    \item {Derivatives:} Let $f: \real \to \real$ be $k$ times continuously differentiable so that all derivatives smaller or equal to $k$ have at most polynomial growth for $x \to \pm \infty$. Then
    \begin{equation*}
      \dualp{f, H_n}
      = \dualp{f^{(k)}, H_{n-k}}_N.
    \end{equation*}

  \end{enumerate}
\end{lemma}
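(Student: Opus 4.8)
The plan is to derive both identities from the Rodrigues formula $H_n(x)\, e^{-x^2/2} = (-1)^n \frac{d^n}{dx^n} e^{-x^2/2}$, which is built into the definition of $H_n$, together with repeated integration by parts against the Gaussian weight. First I would record two elementary consequences. Writing $\frac{d^n}{dx^n} e^{-x^2/2} = p_n(x)\, e^{-x^2/2}$, an induction shows $p_n$ is a polynomial of degree $n$ with leading coefficient $(-1)^n$, so $H_n = (-1)^n p_n$ is \emph{monic of degree $n$}; in particular $H_n^{(n)} \equiv n!$. Moreover, peeling off one derivative in the Rodrigues formula gives the one-step relation
\[
  H_n(x)\, e^{-x^2/2} = -\frac{d}{dx}\Big( H_{n-1}(x)\, e^{-x^2/2}\Big), \qquad n \ge 1,
\]
and each $H_{j}(x)\, e^{-x^2/2}$ is a polynomial times a Gaussian, hence tends to $0$ as $x \to \pm\infty$.

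For the derivative identity, assume $k \le n$ (the only case used in the paper, since $\partial^{\gamma_\rho} \rho^k/k!$ vanishes for $k < \gamma_\rho$). Substituting the one-step relation into $\dualp{f, H_n}_N$ and integrating by parts once,
\[
  \dualp{f, H_n}_N
  = \frac{1}{\sqrt{2\pi}} \int_\real f(x)\, H_n(x)\, e^{-x^2/2}\, dx
  = \frac{1}{\sqrt{2\pi}} \int_\real f'(x)\, H_{n-1}(x)\, e^{-x^2/2}\, dx
  = \dualp{f', H_{n-1}}_N,
\]
where the boundary term vanishes because $f$ has at most polynomial growth while $H_{n-1}(x)\, e^{-x^2/2} \to 0$. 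Iterating $k$ times — legitimate since each step consumes exactly one more derivative of $f$, all of which are continuous with polynomial growth by hypothesis — yields $\dualp{f, H_n}_N = \dualp{f^{(k)}, H_{n-k}}_N$.

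For the normalization, by symmetry of $\dualp{\cdot, \cdot}_N$ I may assume $m \le n$ and apply the just-proved identity with $f = H_m$ (which is $C^\infty$ of polynomial growth) and $k = m$:
\[
  \dualp{H_m, H_n}_N = \dualp{H_m^{(m)}, H_{n-m}}_N = m!\, \dualp{1, H_{n-m}}_N.
\]
It remains to check $\dualp{1, H_j}_N = \delta_{j0}$: for $j = 0$ this is $\EE{u \sim \gaussian{0,1}}{1} = 1$, and for $j \ge 1$ the Rodrigues formula gives $\dualp{1, H_j}_N = \frac{(-1)^j}{\sqrt{2\pi}} \int_\real \frac{d^j}{dx^j} e^{-x^2/2}\, dx = \frac{(-1)^j}{\sqrt{2\pi}} \big[ \tfrac{d^{j-1}}{dx^{j-1}} e^{-x^2/2} \big]_{-\infty}^{\infty} = 0$. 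Hence $\dualp{H_m, H_n}_N = m!\, \delta_{n-m, 0} = n!\, \delta_{nm}$.

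There is no genuine obstacle here — this is a classical computation — so the only point requiring care is the justification of the vanishing boundary terms in the repeated integration by parts, which is precisely what the "at most polynomial growth of $f, \dots, f^{(k)}$" assumption provides, together with keeping the sign $(-1)^n$ and the monic normalization straight so that $H_n^{(n)} = n!$ comes out with the right constant.
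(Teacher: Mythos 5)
Your proof is correct and follows essentially the same route as the paper: the Rodrigues-type formula built into the definition of $H_n$ plus repeated integration by parts, with the polynomial-growth hypothesis on $f,\dots,f^{(k)}$ killing the boundary terms (the paper does the $k$-fold integration by parts in one step rather than iterating the one-step relation, but this is the same argument). You additionally supply a proof of the normalization $\dualp{H_n,H_m}_N = n!\,\delta_{nm}$, which the paper omits as well known, and your derivation of it via monicity and the derivative identity is sound.
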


\begin{proof}
  
The normalization is well known, we only show the formula for the derivative. By the growth condition, we have $\left| f^{(k)}(x) \frac{d^{n-k-1}}{d x^{n-k-1}} e^{-x^2/2} \right| \to 0$ for $x \to \pm \infty$. Thus, in the integration by parts formula below all boundary terms vanish and we have
\begin{align*}
  \dualp{f, H_n}
  & = \frac{1}{\sqrt{2\pi}} \int_\real f(u) H_n(u) e^{-x^2/2} \, du.
  \\
  & = \frac{1}{\sqrt{2\pi}} \int_\real f(u) \left[(-1)^n e^{x^2/2} \frac{d^n}{dx^n} e^{-x^2/2} \right] e^{-x^2/2} \, du.
  \\
  & = \frac{1}{\sqrt{2\pi}} (-1)^n \int_\real f(u) \frac{d^n}{dx^n} e^{-x^2/2} \, du.
  \\
  & = \frac{1}{\sqrt{2\pi}} (-1)^{n-k} \int_\real f^{(k)}(u) \frac{d^{n-k}}{dx^{n-k}} e^{-x^2/2} \, du.
  \\
  & = \frac{1}{\sqrt{2\pi}} \int_\real f^{(k)}(u) \left[(-1)^{n-k} e^{x^2/2} \frac{d^{n-k}}{dx^{n-k}} e^{-x^2/2} \right] e^{-x^2/2} \, du.
  \\
  & = \dualp{f^{(k)}, H_{n-k}}_N.
\end{align*}

\end{proof}

\begin{theorem}[Mehler's theorem] \label{th:supplements:hermite:mehler}
  Let 
  \begin{equation*}
    A = \begin{bmatrix}
      1 & \rho \\
      \rho & 1
    \end{bmatrix}.
  \end{equation*}
  Then the multi- and uni-variate normal density functions satisfy
  \begin{equation*}
    \pdf_{\gaussian{0,A}}
    = \sum_{k=0}^\infty H_k(u) H_k(v) \frac{\rho^k}{k!} \pdf_{\gaussian{0,1}}(u) \pdf_{\gaussian{0,1}}(v).
  \end{equation*}
  
\end{theorem}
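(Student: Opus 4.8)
The plan is to verify the identity by matching the Hermite coefficients of both sides in $L^2$ of the standard Gaussian measure on $\real^2$, using the generating function of the Hermite polynomials together with the moment generating function of $\gaussian{0,A}$. Note first that, for $\pdf_{\gaussian{0,A}}$ to be a genuine density, $A$ must be positive definite, so $\det A = 1-\rho^2 > 0$, i.e.\ $|\rho|<1$; this nondegeneracy is used repeatedly below. Write $\phi := \pdf_{\gaussian{0,1}}$, let $\mu := \phi\otimes\phi$ be the standard Gaussian on $\real^2$, and set
\[
  g(u,v) := \frac{\pdf_{\gaussian{0,A}}(u,v)}{\phi(u)\,\phi(v)}, \qquad
  h_N(u,v) := \sum_{k=0}^{N} H_k(u)\, H_k(v)\, \frac{\rho^k}{k!}.
\]
A direct computation with the Gaussian densities gives $g(u,v) = (1-\rho^2)^{-1/2}\exp\!\big(\tfrac{2\rho u v - \rho^2(u^2+v^2)}{2(1-\rho^2)}\big)$, and $\int g^2\,d\mu < \infty$ precisely because the quadratic form $A^{-1} - \tfrac12 I$ is positive definite for $|\rho|<1$ (its eigenvalues are $\tfrac{1-\rho}{2(1+\rho)}$ and $\tfrac{1+\rho}{2(1-\rho)}$); hence $g\in L^2(\mu)$. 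On the other hand, by the orthogonality relation $\dualp{H_j,H_k}_N = j!\,\delta_{jk}$ of Lemma~\ref{lemma:supplements:hermite:properties}, one computes $\|h_N - h_M\|_{L^2(\mu)}^2 = \sum_{M < k\le N}\rho^{2k}$, so $(h_N)$ is Cauchy and converges in $L^2(\mu)$ to some $h$, again using $|\rho|<1$.

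Since the tensor products $\{H_j\otimes H_k\}_{j,k\ge 0}$ form a complete orthogonal system of $L^2(\mu)$ (the one–variable completeness being classical and consistent with Lemma~\ref{lemma:supplements:hermite:properties}), it suffices to show $\dualp{g, H_j\otimes H_k}_\mu = \dualp{h, H_j\otimes H_k}_\mu$ for all $j,k$. For $h$ this follows by termwise integration (justified by $L^2$ convergence) and orthogonality: $\dualp{h, H_j\otimes H_k}_\mu = \sum_\ell \tfrac{\rho^\ell}{\ell!}\dualp{H_\ell,H_j}_N\dualp{H_\ell,H_k}_N = k!\,\rho^k\,\delta_{jk}$. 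For $g$, unravelling the definition gives $\dualp{g, H_j\otimes H_k}_\mu = \int_{\real^2}\pdf_{\gaussian{0,A}}(u,v)\, H_j(u)\, H_k(v)\,du\,dv = \E{H_j(U)\,H_k(V)}$ for $(U,V)\sim\gaussian{0,A}$. To evaluate this I would use the Hermite generating function $\sum_{k\ge 0}\tfrac{t^k}{k!}H_k(x) = e^{tx-t^2/2}$ — a one–line consequence of the Rodrigues formula defining $H_k$ — together with the moment generating function $\E{e^{sU+tV}} = \exp\!\big(\tfrac12(s^2+2\rho s t+t^2)\big)$ of $\gaussian{0,A}$: multiplying the two generating functions and taking expectations gives
\[
  \sum_{j,k\ge 0} \frac{s^j t^k}{j!\,k!}\, \E{H_j(U)\, H_k(V)}
  = e^{-(s^2+t^2)/2}\, \E{e^{sU+tV}}
  = e^{\rho s t},
\]
which equals $\sum_{k\ge 0}(\rho s t)^k/k!$, so matching the coefficient of $s^j t^k$ yields $\E{H_j(U)\,H_k(V)} = k!\,\rho^k\,\delta_{jk}$, the same value as for $h$. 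Hence $g = h$ in $L^2(\mu)$, i.e.\ $\pdf_{\gaussian{0,A}} = \big(\sum_k H_k H_k\,\rho^k/k!\big)\,\phi\otimes\phi$ almost everywhere.

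To upgrade to the pointwise identity in the statement, note that $g$ is real–analytic on $\real^2$, while the series $\sum_k H_k(u)H_k(v)\rho^k/k!$ converges locally uniformly for $|\rho|<1$: using a standard growth bound $|H_k(x)|\le K\sqrt{k!}\,e^{x^2/4}$ with $K$ universal, its terms are dominated by $K^2 e^{(u^2+v^2)/4}|\rho|^k$, summable by the Weierstrass test, so the sum is continuous; two continuous functions agreeing a.e.\ agree everywhere, which gives the claim after multiplying through by $\phi(u)\phi(v)$. (An alternative route expands $H_k(x)\phi(x) = (-1)^k\phi^{(k)}(x)$ by Fourier inversion and sums the resulting geometric–type series in $\xi,\eta$, reproducing the closed form of the Mehler kernel directly; the generating–function argument above seems cleaner.) I expect the only real friction to be this convergence bookkeeping — $g\in L^2(\mu)$, the $L^2$ convergence of the $\rho^k$–series, and the local–uniform bound — all of which hinge on the nondegeneracy $|\rho|<1$ of $A$; the algebraic core, the generating–function match yielding $\E{H_j(U)H_k(V)} = k!\rho^k\delta_{jk}$, is short.
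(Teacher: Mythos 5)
Your proof is correct, but it is a genuinely different route from the paper: the paper does not prove Mehler's theorem at all, it simply cites the reference \cite{WithersNadarajah2010} for the identity in the stated form, whereas you give a self-contained argument. Your route — writing $g=\pdf_{\gaussian{0,A}}/(\phi\otimes\phi)$, checking $g\in L^2(\phi\otimes\phi)$ via positivity of $A^{-1}-\tfrac12 I$, matching Hermite coefficients using $\E{H_j(U)H_k(V)}=k!\,\rho^k\delta_{jk}$ obtained from the generating function $\sum_k \tfrac{t^k}{k!}H_k(x)=e^{tx-t^2/2}$ and the Gaussian moment generating function, and then upgrading the a.e.\ identity to a pointwise one through Cram\'er's bound $|H_k(x)|\lesssim \sqrt{k!}\,e^{x^2/4}$ — is sound, and all the stated eigenvalue and norm computations check out (including $\|h_N-h_M\|_{L^2(\mu)}^2=\sum_{M<k\le N}\rho^{2k}$). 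What your approach buys is a complete elementary proof using only the orthogonality relation already recorded in Lemma \ref{lemma:supplements:hermite:properties}, at the cost of some convergence bookkeeping that a citation avoids. The one step you pass over silently is the interchange of the expectation with the double series when you "multiply the two generating functions and take expectations": this needs a Fubini justification, which is routine — the same bound $|H_k(x)|\lesssim\sqrt{k!}\,e^{x^2/4}$ you invoke later, together with $\int e^{(u^2+v^2)/4}\,d\gaussian{0,A}<\infty$ (again because $A^{-1}\succ\tfrac12 I$ for $|\rho|<1$), dominates the double series and makes the swap legitimate — so it is a gap of presentation, not of substance. Note also that you correctly make explicit the nondegeneracy $|\rho|<1$, which the paper's statement leaves implicit in speaking of the density $\pdf_{\gaussian{0,A}}$.
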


\begin{proof}
See \cite{WithersNadarajah2010} for Mehler's theorem in the form stated here.
\end{proof}

\subsection{Sobolev Spaces on the Sphere}

\subsubsection{Definition and Properties}
\label{sec:sobolev:norms}

We use two alternative characterizations of Sobolev spaces on the sphere. The first is based on spherical harmonics, which are also eigenfunctions of the NTK and thus establishes connections to the available NTK literature. Second, we consider Sobolev Slobodeckij type norms, which are structurally similar to Hölder norms and allow connections to the perturbation analysis in this paper.

The spherical harmonics
\begin{align*}
  & Y_\ell^j, &
  \ell & = 0, 1, 2, \dots, &
  1 & \le j \le \nu(\ell)
\end{align*}
of degree $\ell$ and order $j$ are an orthonormal basis on the sphere $L_2(\Sd)$, comparable to Fourier bases for periodic functions. For any $f \in L_2(\Sd)$, we denote by $\hat{f}_{\ell j} = \dualp{f, Y_\ell^j}$ the corresponding basis coefficient. The Sobolev space $H^\alpha(\Sd)$ consists of all function for which the norm
\begin{equation*}
  \|f\|_{H^\alpha(\Sd)}^2 
  = \sum_{\ell=0}^\infty \sum_{j=1}^{\nu(\ell)}
    \left(1 + \ell^{1/2} (\ell+d-2)^{1/2}\right)^{2\alpha} \left|\hat{f}_{\ell j}\right|^2 
\end{equation*}
is finite. We write $H^\alpha = H^\alpha(\Sd)$ if the domain is understood from context. Since the constants in this paper are dimension dependent, we simplify this to the equivalent norm
\begin{equation} \label{eq:supplements:sobolev-spherical-harmonics}
  \|f\|_{H^\alpha(\Sd)}^2 
  = \sum_{\ell=0}^\infty \sum_{j=1}^{\nu(\ell)}
    \left( 1 + \ell \right)^{2\alpha} \left|\hat{f}_{\ell j}\right|^2.
\end{equation}
Another equivalent norm, similar to Sobolev-Slobodeckij norms, is given in \cite[Proposition 1.4]{BarceloLuquePerez2020} and defined as follows for the case $0 < \alpha < 2$. For the spherical cap centered at $x \in \Sd$ and angle $t \in (0, \pi)$ given by
\begin{equation*}
  C(x, t) := \left\{y \in \Sd : \, x \cdot y \ge \cos t\right\}
\end{equation*}
set
\newcommand{\aint}{\mathop{\ooalign{$\int$\cr$-$}}}
\begin{equation*}
  A_t(f)(x) := \dashint_{C(x,t)} f(\tau) \, d\tau.
\end{equation*}
With
\begin{equation*}
  S_\alpha(f)^2(x) := \int_0^\pi \left| A_t f(x) - f(x) \right|^2 t^{-2\alpha-1} \, dt
\end{equation*}
the Sobolev norm on the sphere is equivalent to
\begin{equation} \label{eq:supplements:sobolev:slobodeckij}
  \|f\|_{H^{\alpha}(\Sd)} \sim \left\|S_\alpha(f)\right\|_{L_2(\Sd)}.
\end{equation}
Using the definition \eqref{eq:supplements:sobolev-spherical-harmonics} for $a < b < c$, the interpolation inequality
\begin{align} \label{eq:supplements:sobolev-interpolation-inequality}
  \|\cdot\|_{H^b(\Sd)} & \lesssim \|\cdot\|_{H^a(\Sd)}^{\frac{c-b}{c-a}} \|\cdot\|_{H^c(\Sd)}^{\frac{b-a}{c-a}}, &
  \dualp{\cdot, \cdot}_{-\sm} & \lesssim \|\cdot\|_{-3\sm} \|\cdot\|_\sm,
\end{align}
follows directly from Cauchy-Schwarz. Moreover, we have the following embedding.

\begin{lemma} \label{lemma:supplements:sobolev-holder}

  Let $0 < \alpha < 1$. Then for any $\epsilon > 0$ with $\alpha+\epsilon \le 1$, we have
  \begin{equation*}
    \|\cdot\|_{H^\alpha(\Sd)} \lesssim \|\cdot\|_\CHND{\alpha+\epsilon}{\Sd}.
  \end{equation*}
  
\end{lemma}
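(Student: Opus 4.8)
The plan is to use the Sobolev--Slobodeckij characterization \eqref{eq:supplements:sobolev:slobodeckij}, which controls $H^\alpha(\Sd)$ through the spherical cap averages $A_t f$ and is directly comparable to a Hölder estimate, rather than the spherical harmonics norm \eqref{eq:supplements:sobolev-spherical-harmonics}. First I would record the elementary geometric fact linking the chord metric used in the Hölder norm on $\Sd = \mathbb{S}^{d-1} \subset \real^d$ to the angular metric appearing in the cap $C(x,t)$: if $\tau \in C(x,t)$, i.e. $x\cdot\tau \ge \cos t$, then with $\theta := \arccos(x\cdot\tau) \le t$ one has $\|x-\tau\| = 2\sin(\theta/2) \le \theta \le t$. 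Thus for $f \in \CHND{\alpha+\epsilon}{\Sd}$ and any $\tau \in C(x,t)$,
\[
  |f(\tau) - f(x)| \le \|f\|_\CHND{\alpha+\epsilon}{\Sd}\,\|x-\tau\|^{\alpha+\epsilon} \le \|f\|_\CHND{\alpha+\epsilon}{\Sd}\,t^{\alpha+\epsilon}.
\]

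Next I would carry out the main estimate. Averaging the previous display over the cap gives, for every $x \in \Sd$ and $t \in (0,\pi)$,
\[
  |A_t f(x) - f(x)| = \Big| \dashint_{C(x,t)} \big( f(\tau) - f(x) \big)\, d\tau \Big| \le \dashint_{C(x,t)} |f(\tau) - f(x)|\, d\tau \le \|f\|_\CHND{\alpha+\epsilon}{\Sd}\,t^{\alpha+\epsilon}.
\]
Substituting into the definition of $S_\alpha(f)$ yields
\[
  S_\alpha(f)^2(x) = \int_0^\pi |A_t f(x) - f(x)|^2\, t^{-2\alpha-1}\, dt \le \|f\|_\CHND{\alpha+\epsilon}{\Sd}^2 \int_0^\pi t^{2\epsilon-1}\, dt = \frac{\pi^{2\epsilon}}{2\epsilon}\,\|f\|_\CHND{\alpha+\epsilon}{\Sd}^2,
\]
and finiteness of the last integral is exactly where the hypothesis $\epsilon > 0$ is used. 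Since this bound is uniform in $x$ and $\Sd$ has finite surface measure, $\|S_\alpha(f)\|_{L_2(\Sd)} \lesssim \|f\|_\CHND{\alpha+\epsilon}{\Sd}$, and combining with \eqref{eq:supplements:sobolev:slobodeckij} — together, if one reads that equivalence with the obvious zeroth-order term, with the trivial bound $\|f\|_{L_2(\Sd)} \le |\Sd|^{1/2}\|f\|_{\CND{0}{\Sd}} \le |\Sd|^{1/2}\|f\|_\CHND{\alpha+\epsilon}{\Sd}$ — gives the claimed embedding. One only needs $0 < \alpha + \epsilon \le 1$ so that $\CHND{\alpha+\epsilon}{\Sd}$ is an honest Hölder space, and $0 < \alpha < 2$ so that \eqref{eq:supplements:sobolev:slobodeckij} applies.

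The argument is essentially routine, and there is no serious obstacle; the one point deserving attention is the role of $\epsilon$. Hölder continuity with the sharp exponent $\alpha$ alone does not embed into $H^\alpha(\Sd)$, since then $\int_0^\pi t^{-1}\, dt$ diverges — it is precisely the extra $\epsilon$ of Hölder smoothness that makes the cap-average deficit $|A_t f - f|$ decay fast enough near $t = 0$ to be square-integrable against the weight $t^{-2\alpha-1}\, dt$. The remaining steps — the chord-versus-angle comparison, pulling the average out of the absolute value, and the harmless behaviour of the integrand near $t = \pi$ — are bookkeeping.
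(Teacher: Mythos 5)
Your argument is correct and is essentially the paper's intended proof: the paper only remarks that the proof is ``standard and similar to Lemma \ref{lemma:supplements:kernel-bound}'', and that lemma uses exactly your strategy of bounding the cap-average deficit $|A_t f(x)-f(x)|$ by the Hölder seminorm via the chord--angle comparison and integrating the Slobodeckij weight $t^{-2\alpha-1}$, with the extra $\epsilon$ making the $t$-integral converge. Your additional remark about the zeroth-order $L_2$ term in the equivalence \eqref{eq:supplements:sobolev:slobodeckij} is a sensible clarification and does not change the substance.
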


\begin{proof}
  The proof is standard and similar to Lemma \ref{lemma:supplements:kernel-bound}.
\end{proof}

\subsubsection{Kernel Bounds}

In this section, we provide bounds for the kernel integral
\begin{equation*}
  \dualp{f,g}_k := \iint_{\dom \times \dom} f(x) k(x,y) g(y) \, dx \, dy
\end{equation*}
on the sphere $\dom = \Sd$ in Sobolev norms on the sphere. Clearly, for $0 \le \alpha, \beta < 2$, we have
\begin{equation*}
  \dualp{f,g}_k 
  \le \|f\|_{H^{-\alpha}} \left\|\int_\dom k(\cdot, y) g(y) \, dy \right\|_{H^\alpha}
  \le \|f\|_{H^{-\alpha}} \|k\|_{H^\alpha \leftarrow H^{-\beta}}\|g\|_{H^{-\beta}},
\end{equation*}
where the norm of $k$ is the induced operator norm. While the norms for $f$ and $g$ are the ones used in the convergence analysis, concentration and perturbation results for $k$ are computed in mixed Hölder norms. We show in this section, that these bound the operator norm.

Indeed, $\dualp{f,g}_k$ is a bilinear form on $f$ and $g$ and thus is bounded by the tensor product norms
\begin{equation*}
  \dualp{f,g}_k
  \le \|f \otimes g\|_{(H^\alpha \otimes H^\beta)'} \|k\|_{H^\alpha \otimes H^\beta}
  \le \|f\|_{H^{-\alpha}} \|g\|_{H^{-\beta}} \|k\|_{H^\alpha \otimes H^\beta},
\end{equation*}
where $\cdot '$ denotes the dual norm. The $H^\alpha \otimes H^\beta$ norm contains mixed smoothness and with Sobolev-Slobodeckij type definition \eqref{eq:supplements:sobolev:slobodeckij} is easily bounded by corresponding mixed Hölder regularity. In order to avoid rigorous characterization of tensor product norms on the sphere, the following lemma shows the required bounds directly.

\begin{lemma} \label{lemma:supplements:kernel-bound}

  Let $0 < \alpha, \beta < 1$. Then for any $\epsilon > 0$ with $\alpha+\epsilon \le 1$ and $\beta+\epsilon<1$, we have
  \begin{equation*}
    \iint_{\dom \times \dom} f(x) k(x,y) g(y) \, dx \, dy
    \le \|f\|_{H^{-\alpha}(\Sd)} \|g\|_{H^{-\beta}(\Sd)} \|k\|_\CHHND{\alpha+\epsilon}{\beta+\epsilon}{\Sd}.
  \end{equation*}
  
\end{lemma}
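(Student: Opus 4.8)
The plan is to move the integral operator onto one argument and use the self-duality of Sobolev spaces on the sphere. Writing $(Kg)(x) := \int_{\Sd} k(x,y)\, g(y)\, dy$, the left-hand side is $\dualp{f, Kg}$, and since the norm \eqref{eq:supplements:sobolev-spherical-harmonics} is given by spherical-harmonic coefficients, Cauchy--Schwarz (exactly as for \eqref{eq:supplements:sobolev-interpolation-inequality}) gives
\begin{equation*}
  \iint_{\Sd\times\Sd} f(x)\, k(x,y)\, g(y)\, dx\, dy
  \le \|f\|_{H^{-\alpha}(\Sd)}\, \|Kg\|_{H^\alpha(\Sd)} .
\end{equation*}
So it suffices to prove $\|Kg\|_{H^\alpha(\Sd)} \lesssim \|g\|_{H^{-\beta}(\Sd)}\, \|k\|_\CHHND{\alpha+\epsilon}{\beta+\epsilon}{\Sd}$.

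For this I would invoke the Sobolev--Slobodeckij characterization \eqref{eq:supplements:sobolev:slobodeckij}, $\|Kg\|_{H^\alpha(\Sd)} \sim \|S_\alpha(Kg)\|_{L_2(\Sd)}$. The spherical-cap average commutes with the $y$-integral, so
\begin{equation*}
  A_t(Kg)(x) - (Kg)(x)
  = \int_{\Sd} \Big[ \dashint_{C(x,t)} k(\tau,y)\, d\tau - k(x,y) \Big]\, g(y)\, dy ,
\end{equation*}
and applying the same $H^{\beta}$--$H^{-\beta}$ duality in the $y$ variable and then Lemma \ref{lemma:supplements:sobolev-holder} yields
\begin{equation*}
  |A_t(Kg)(x) - (Kg)(x)|
  \le \|g\|_{H^{-\beta}(\Sd)}\, \Big\| \dashint_{C(x,t)} k(\tau,\cdot)\, d\tau - k(x,\cdot) \Big\|_\CHND{\beta+\epsilon}{\Sd} .
\end{equation*}
Next I would bound this last Hölder norm in $y$ by the mixed Hölder norm of $k$: for $\tau \in C(x,t)$ one has the elementary estimate $\|\tau - x\|^2 = 2 - 2\,\tau\cdot x \le 2 - 2\cos t \le t^2$, so the sup-part is a cap-average of first $x$-differences of $k$, bounded by $\|k\|_\CHHND{\alpha+\epsilon}{0}{\Sd}\, t^{\alpha+\epsilon}$, while the Hölder-$(\beta+\epsilon)$ seminorm in $y$ is a cap-average of mixed second differences $k(\tau,y) - k(x,y) - k(\tau,y') + k(x,y')$, bounded by $\|k\|_\CHHND{\alpha+\epsilon}{\beta+\epsilon}{\Sd}\, t^{\alpha+\epsilon}\, \|y-y'\|^{\beta+\epsilon}$. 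Hence, uniformly in $x$,
\begin{equation*}
  |A_t(Kg)(x) - (Kg)(x)|
  \lesssim \|g\|_{H^{-\beta}(\Sd)}\, \|k\|_\CHHND{\alpha+\epsilon}{\beta+\epsilon}{\Sd}\, t^{\alpha+\epsilon} .
\end{equation*}

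Plugging into $S_\alpha(Kg)^2(x) = \int_0^\pi |A_t(Kg)(x) - (Kg)(x)|^2\, t^{-2\alpha-1}\, dt$ produces the factor $\int_0^\pi t^{2\epsilon-1}\, dt$, finite because $\epsilon>0$; integrating over $x\in\Sd$ (finite measure) gives $\|S_\alpha(Kg)\|_{L_2(\Sd)} \lesssim \|g\|_{H^{-\beta}(\Sd)}\, \|k\|_\CHHND{\alpha+\epsilon}{\beta+\epsilon}{\Sd}$, which combined with the first display proves the lemma. The routine parts are the two Sobolev dualities and the cap-average difference bounds. The point that needs genuine care is arranging a single power $t^{\alpha+\epsilon}$ out of $k$'s differences — this is precisely why one loses $\epsilon$ smoothness and why the mixed (rather than separate) Hölder norm is the correct object, since it is the mixed second difference that must be divided by $\|y-y'\|^{\beta+\epsilon}$ while still retaining the full $t^{\alpha+\epsilon}$ gain needed for convergence of $\int_0^\pi t^{2\epsilon-1}\,dt$; one must also check that measuring the cap by the geodesic angle $t$ against the chordal distance $\|\tau-x\|$ in the Hölder norm costs only a dimension-dependent constant, which $\|\tau-x\|\le t$ supplies.
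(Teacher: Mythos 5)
Your proposal is correct and follows essentially the same route as the paper: duality in $x$, the Slobodeckij characterization of $\|Kg\|_{H^\alpha(\Sd)}$, commuting the cap average with the $y$-integral, duality in $y$, and the mixed second-difference Hölder bound producing the factor $t^{\alpha+\epsilon}$ and the convergent integral $\int_0^\pi t^{2\epsilon-1}\,dt$. The only deviation is that the paper treats the $y$-smoothness by a second Slobodeckij expansion (the mixed averages $(A_s^y-I)(A_t^x-I)k$ and a double $(s,t)$-integral), while you instead invoke the embedding of Lemma \ref{lemma:supplements:sobolev-holder}; this is a harmless substitution, since that lemma does not depend on the present one (its proof is only ``similar'' to it) and your cap-average estimates reproduce exactly the mixed Hölder bound the paper uses.
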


\begin{proof}

Since for any $u$, $v$
\begin{multline*}
  \int u(x) v(x) \, dx
  = \int u(x) \frac{v(x)}{\|v\|_{H^\alpha}} \, dx \, \|v\|_{H^\alpha}
  \\
  \le \sup_{\|w\|_{H^\alpha} \le 1} \int u(x) w \, dx \, \|v\|_{H^\alpha}
  \le \|u\|_{H^{-\alpha}} \|v\|_{H^\alpha},
\end{multline*}
with $\dom = \Sd$ we have
\begin{equation*}
  \dualp{f,g}_k = \iint_{\dom \times \dom} f(x) k(x,y) g(y) \, dx \, dy
  \le \|f\|_{H^{-\alpha}} \left\|\int_\dom k(\cdot, y) g(y) \right\|_{H^\alpha}
\end{equation*}
so that it remains to estimate the last term. Plugging in definition \eqref{eq:supplements:sobolev:slobodeckij} of the Sobolev norm, we obtain
\begin{equation*}
  \left\|\int_\dom k(\cdot, y) g(y) \right\|_{H^\alpha}^2
  = \int_D \int_0^\pi \left|(A_t^x-I) \left(\int_D k(\cdot, y) g(y) \, dy\right)(x) \right|^2 t^{-2\alpha-1} \, dt \, dx,
\end{equation*}
where $A_t^x$ is the average in \eqref{eq:supplements:sobolev:slobodeckij} applied to the $x$ variable only and $I$ the identity. Swapping the inner integral with the one inside the definition of $A_t^x$, we estimate
\begin{align*}
  \left\|\int_\dom k(\cdot, y) g(y) \right\|_{H^\alpha}^2
  & = \int_D \int_0^\pi \left|\int_D \left[(A_t^x-I) (k(\cdot, y))(x)\right] g(y) \, dy \right|^2 t^{-2\alpha-1} \, dt \, dx,
  \\
  & \le \int_D \int_0^\pi \left\|(A_t^x-I) (k(\cdot, y))(x)\right\|_{H^\beta}^2 \|g\|_{H^{-\beta}}^2 t^{-2\alpha-1} \, dt \, dx,
  \\
  & = \iint_{\dom\times\dom} \iint_0^\pi \left|(A_s^y-I)(A_t^x-I) (k)(x,y) \right|^2  t^{-2\alpha-1} s^{-2\beta-1} \, dst \, dxy \, \|g\|_{H^{-\beta}}^2.
\end{align*}
Plugging in the definition of the averages $A_s^y$ and $A_t^x$, the integrand is estimated by the mixed Hölder norm
\begin{align*}
  \left|(A_s^y-I)(A_t^x-I) (k)(x,y) \right|
  & = \dashint_{C(y,s)} \dashint_{C(x,t)} \left| k(\tau, \sigma) - k(x, \sigma) - k(\tau, y) + k(x,y) \right| \, d\tau\sigma
  \\
  & \le \dashint_{C(y,s)} \dashint_{C(x,t)} |x-\tau|^{\alpha + \epsilon} |y-\sigma|^{\beta + \epsilon} \|k\|_\CHHN{\alpha+\epsilon}{\beta+\epsilon} \, d\tau\sigma.
\end{align*}
The difference $|x-\tau|$, and likewise $|y-\sigma|$, is bounded by the angle of the cap $C(x,t)$. Indeed
\begin{equation*}
  |x-\tau|^2 
  = |x|^2 + |\tau|^2 - 2 \dualp{x, \tau}
  = 2(1 - \dualp{x, \tau})
  \le 2(1 - \cos t)
  \lesssim t^2
\end{equation*}
for $t \le T$ for some $T \ge 0$. Since for all other $t$ the difference $|x-\tau| \le 2$ is bounded, we obtain
\begin{align*}
  |x-\tau| & \lesssim \min\{t, T\}, &
  |y-\sigma| & \lesssim \min\{s, T\}.
\end{align*}
It follows that
\begin{align*}
  \left| (A_s^y-I)(A_t^x-I) (k)(x,y) \right|
  & \lesssim  \min\{t, T\}^{\alpha + \epsilon} \min\{s, T\}^{\beta + \epsilon} \|k\|_\CHHN{\alpha+\epsilon}{\beta+\epsilon}.
\end{align*}
Putting all estimates together, we find that
\begin{multline*}
  \dualp{f,g}_k
  \lesssim \|f\|_{H^{-\alpha}} \|g\|_{H^{-\beta}} \|k\|_\CHHN{\alpha+\epsilon}{\beta+\epsilon} 
  \\
  \cdot \left[ \iint_{\dom\times\dom} \iint_0^\pi \left[ \min\{t, T\}^{\alpha + \epsilon} \min\{s, T\}^{\beta + \epsilon} \right]^2 t^{-2\alpha-1} s^{-2\beta-1} \, dst \, dxy \right]^{\frac{1}{2}}.
\end{multline*}
Since the integral is bounded, we conclude that
\begin{equation*}
  \dualp{f,g}_k
  \lesssim \|f\|_{H^{-\alpha}} \|g\|_{H^{-\beta}} \|k\|_\CHHN{\alpha+\epsilon}{\beta+\epsilon}.
\end{equation*}
  
\end{proof}

\subsubsection{NTK on the Sphere}
\label{sec:ntk-sphere}

This section fills in the proofs for Section \ref{sec:coercivity}. Recall that we denote the normal NTK used in \cite{BiettiMairal2019,GeifmanYadavKastenGalunJacobsRonen2020,ChenXu2021} by
\begin{equation*}
  \NTK(x,y) 
  = \lim_{\text{width}\to\infty} \sum_\w \partial_\w f^{L+1}(x) \partial_\w f^{L+1}(y),
\end{equation*}
whereas the NTK $\ntk(x,y)$ used in this paper confines the sum to $|\w| = L-1$, i.e. the second but last layer, see Section \ref{sec:coercivity}. We first show that the reproducing kernel Hilbert space (RKHS) of the NTK is a Sobolev space.

\begin{lemma} \label{lemma:supplements:sobolev:ntk-sobolev}
  Let $\NTK(x,y)$ be the neural tangent kernel for a fully connected neural network on the sphere $\Sd$ with bias and $\relu$ activation. Then the corresponding RKHS $H_\NTK$ is the Sobolev space $H^{d/2}(\Sd)$ with equivalent norms
  \begin{equation*}
    \|\cdot\|_{H_\NTK} \sim \|\cdot\|_{H^{d/2}}.
  \end{equation*}
  
\end{lemma}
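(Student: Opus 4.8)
The plan is to identify the RKHS of $\NTK$ through its eigenvalue decay. Since $\NTK$ is a zonal kernel on $\Sd$ (its eigenfunctions are spherical harmonics $Y_\ell^j$ by the cited Lemma), it suffices to compute its eigenvalues $\mu_\ell$, indexed by the spherical harmonic degree $\ell$, and show $\mu_\ell \sim (1+\ell)^{-d}$; the RKHS norm is then
\[
  \|f\|_{H_\NTK}^2 = \sum_{\ell,j} \mu_\ell^{-1} |\hat f_{\ell j}|^2 \sim \sum_{\ell,j} (1+\ell)^{d} |\hat f_{\ell j}|^2 = \|f\|_{H^{d/2}}^2,
\]
using the norm definition \eqref{eq:supplements:sobolev-spherical-harmonics}. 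So the whole lemma reduces to pinning down the asymptotic decay rate of the NTK eigenvalues.

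First I would recall, from the recursive formula in \cite{JacotGabrielHongler2018}, that for $\relu$ activations the kernels $\gp^\ell$, $\dgp^\ell$ and hence $\NTK$ are explicitly given in terms of the arccosine kernels $\kappa_0, \kappa_1$ (the functions appearing in Cho–Saul / arc-cosine kernel computations), each composed and multiplied a bounded number of times. The key analytic input is the known singularity structure of these kernels near the antipodal/diagonal behavior: $\NTK(x,y)$, viewed as a function of $t = x\cdot y \in [-1,1]$, is smooth except for a mild non-smoothness that controls the eigenvalue decay. The references \cite{BiettiMairal2019,GeifmanYadavKastenGalunJacobsRonen2020,ChenXu2021} carry out exactly this eigenvalue asymptotic analysis: they show that for a fully connected $\relu$ network (with bias) on $\Sd$, the NTK eigenvalues satisfy $\mu_\ell \sim C \ell^{-d}$ for even $\ell$ (and, without bias, vanish for odd $\ell$; with bias in the first layer they are nonzero for all $\ell$). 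So the second step is simply to invoke these results, with the bias handled by the "$x_0 = 1$" embedding remark already made in Section \ref{sec:coercivity}, to get $\mu_\ell \sim (1+\ell)^{-d}$ with two-sided constants (dimension-dependent, which is allowed here).

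The third step is bookkeeping: translate "$\mu_\ell \sim (1+\ell)^{-d}$" into the RKHS-norm equivalence displayed above, using that the RKHS of a Mercer kernel $K = \sum_\ell \mu_\ell \sum_j Y_\ell^j \otimes Y_\ell^j$ is precisely $\{f : \sum_\ell \mu_\ell^{-1}\sum_j |\hat f_{\ell j}|^2 < \infty\}$ with that quantity as its squared norm, and matching against \eqref{eq:supplements:sobolev-spherical-harmonics} with $\alpha = d/2$. The main obstacle is the second step: obtaining the sharp two-sided eigenvalue decay $\mu_\ell \sim (1+\ell)^{-d}$ rather than just an upper or lower bound. This requires the precise local regularity of the arc-cosine kernels — the decay exponent is governed by the order of the leading non-smooth term of $t \mapsto \NTK(t)$ at the boundary — and depends on the bias and on using \emph{all} layers, which is why this argument genuinely does not transfer to our $\ntk$ (only the second-but-last layer, smoother activations). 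Since the cited papers already establish this asymptotic, I would quote it rather than re-derive the arc-cosine kernel expansion, and only verify that their hypotheses (fully connected, $\relu$, sphere, bias) match ours after the $x_0 = 1$ reduction.
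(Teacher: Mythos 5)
Your plan is correct, and it reaches the same conclusion from the same pool of references, but it is routed differently from the paper. The paper never works with the NTK spectrum directly: it first invokes \cite[Theorem 1]{ChenXu2021} to identify $H_\NTK$ with the RKHS of the Laplace kernel $e^{-\|x-y\|}$ (checking, by inspection of that proof, that the norms are equivalent), then quotes \cite[Theorem 2]{GeifmanYadavKastenGalunJacobsRonen2020} for the two-sided decay $(\ell+1)^{-d}$ of the \emph{Laplace} kernel's eigenvalues, and only then applies Mercer's theorem and the spherical-harmonics definition \eqref{eq:supplements:sobolev-spherical-harmonics} to conclude $\|\cdot\|_{H_\NTK}\sim\|\cdot\|_{Lap}\sim\|\cdot\|_{H^{d/2}}$. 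You instead bypass the Laplace-kernel intermediary and quote the two-sided eigenvalue asymptotics of the NTK itself, then apply Mercer directly; this is legitimate, since Geifman et al.\ do prove that the (deep, biased, $\relu$) NTK has spherical-harmonic eigenfunctions with the same polynomial decay rate $\ell^{-d}$ as the Laplace kernel, and Bietti--Mairal cover the shallow bias-free case you mention. In the paper these direct NTK eigenvalue bounds appear only afterwards (Lemma \ref{lemma:supplements:sobolev:ntk-eigenvalues}) and are \emph{derived from} the RKHS equivalence, so your argument reverses that logical order — which is fine because you source the asymptotics externally, not from that lemma. What each route buys: the paper's detour through Chen--Xu gives the RKHS identification at the level of a stated theorem (at the cost of a norm-equivalence check "by inspection"), while your direct route is shorter but puts all the weight on the quoted spectral result being genuinely two-sided for all degrees; as you correctly note, you also need the finitely many low-order eigenvalues to be strictly positive (bias / the $x_0=1$ embedding) so that $\mu_\ell\sim(1+\ell)^{-d}$ holds for every $\ell$, which the paper handles by the same "rearranging the constants" step. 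The arc-cosine-kernel discussion in your second step is motivation rather than proof, but since you ultimately quote the references rather than re-derive the expansion, there is no gap.
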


\begin{proof}

By \cite[Theorem 1]{ChenXu2021} the RKHS $H_{\NTK}$ is the same as the RKHS $H_{Lap}$ of the Laplacian kernel 
\begin{equation*}
  k(x,y) = e^{- \|x-y\|}.
\end{equation*}
An inspection of their proof reveals that these spaces have equivalent norms. By \cite[Theorem 2]{GeifmanYadavKastenGalunJacobsRonen2020}, the Laplace kernel has the same eigenfunctions as the NTK (both are spherical harmonics) and eigenvalues
\begin{align*}
  \ell^{-d} & \lesssim \lambda_{\ell,j} \lesssim \ell^{-d}, & 
  \ell & \ge \ell_0, &
  j = 1, \dots, \nu(\ell),
\end{align*}
for some $\ell_0 \ge 0$, whereas the remaining eigenvalues are strictly positive. By rearranging the constants, this implies 
\begin{align*}
  (\ell+1)^{-d} & \lesssim \lambda_{\ell,j} \lesssim (\ell+1)^{-d}, & 
  \ell & \ge 0, &
  j = 1, \dots, \nu(\ell),
\end{align*}
for all eigenvalues. With Mercer's theorem and the definition \eqref{eq:supplements:sobolev-spherical-harmonics} of Sobolev norms, we conclude that
\begin{equation*}
  \|f\|_{H_\NTK}^2
  \sim \|f\|_{Lap}^2
  = \sum_{\ell=0}^\infty \sum_{j=1}^{\nu(\ell)} \lambda_{\ell,j}^{-1} |\hat{f}_{\ell,j}|^2
  \sim \sum_{\ell=0}^\infty \sum_{j=1}^{\nu(\ell)} (\ell+1)^d |\hat{f}_{\ell,j}|^2
  = \|f\|_{H^{d/2}(\Sd)}^2.
\end{equation*}
 
\end{proof}

\begin{lemma} \label{lemma:supplements:sobolev:ntk-eigenvalues}

  Let $\NTK(x,y)$ be the neural tangent kernel for a fully connected neural network on the sphere $\Sd$ with bias and $\relu$ activation. It's eigenfunctions are spherical harmonics with eigenvalues
  \begin{align*}
    (\ell+1)^{-d} & \lesssim \lambda_{\ell,j} \lesssim (\ell+1)^{-d}, & 
    \ell & \ge 0, &
    j = 1, \dots, \nu(\ell),
  \end{align*}
  
\end{lemma}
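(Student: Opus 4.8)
The plan is to deduce the eigenvalue asymptotics directly from the reproducing-kernel Hilbert space identification already established in Lemma~\ref{lemma:supplements:sobolev:ntk-sobolev}, rather than re-analyzing the kernel from scratch. First I would recall that $\NTK$ is a zonal kernel on $\Sd$, so by the cited result of \cite{GeifmanYadavKastenGalunJacobsRonen2020} (the unnumbered lemma restated in Section~\ref{sec:coercivity}) its eigenfunctions are exactly the spherical harmonics $Y_\ell^j$; denote the associated eigenvalues by $\lambda_{\ell,j}$. Since the network carries a bias term, the analysis of \cite{GeifmanYadavKastenGalunJacobsRonen2020,ChenXu2021,BiettiMairal2019} also guarantees that every $\lambda_{\ell,j}$ is strictly positive (the ``every other eigenvalue vanishes'' phenomenon of the bias-free shallow case does not occur), so each $\lambda_{\ell,j}^{-1}$ is well defined.

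Next I would invoke Mercer's theorem, which expresses the RKHS norm in the eigenbasis as $\|f\|_{H_\NTK}^2 = \sum_{\ell=0}^\infty \sum_{j=1}^{\nu(\ell)} \lambda_{\ell,j}^{-1} |\hat f_{\ell,j}|^2$, while the definition \eqref{eq:supplements:sobolev-spherical-harmonics} gives $\|f\|_{H^{d/2}(\Sd)}^2 = \sum_{\ell,j} (\ell+1)^d |\hat f_{\ell,j}|^2$. Lemma~\ref{lemma:supplements:sobolev:ntk-sobolev} states these two norms are equivalent, i.e.\ there are constants $0 < c_1 \le c_2$ with $c_1 \|f\|_{H^{d/2}} \le \|f\|_{H_\NTK} \le c_2 \|f\|_{H^{d/2}}$ for all $f$. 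Evaluating this equivalence on the single basis element $f = Y_\ell^j$, so that only the coefficient $\hat f_{\ell,j}$ is nonzero, collapses both sums to one term and yields $c_1^2 (\ell+1)^d \le \lambda_{\ell,j}^{-1} \le c_2^2 (\ell+1)^d$ with constants independent of $\ell$ and $j$; inverting gives $(\ell+1)^{-d} \lesssim \lambda_{\ell,j} \lesssim (\ell+1)^{-d}$, which is the claim.

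Since the heavy lifting — showing that the RKHS is norm-equivalent to $H^{d/2}(\Sd)$ — was already carried out in Lemma~\ref{lemma:supplements:sobolev:ntk-sobolev}, I expect no genuine obstacle here: the statement is essentially a repackaging of a fact used inside that proof. The only two points needing a word of care are (i) the positivity of \emph{all} eigenvalues, which is what makes the Mercer expansion of the RKHS norm (and the inverses) legitimate, and (ii) the elementary but necessary observation that equivalence of two weighted $\ell_2$ norms forces coefficientwise equivalence of the weights, obtained by testing on coordinate vectors as above. An alternative, equally short route would bypass the RKHS entirely and cite directly the eigenvalue bounds $\ell^{-d} \lesssim \lambda_{\ell,j} \lesssim \ell^{-d}$ for $\ell \ge \ell_0$ from \cite[Theorem~2]{GeifmanYadavKastenGalunJacobsRonen2020}, absorbing the finitely many degrees $\ell < \ell_0$ into the constants exactly as in the proof of Lemma~\ref{lemma:supplements:sobolev:ntk-sobolev}.
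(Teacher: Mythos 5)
Your argument is correct and is essentially the paper's own proof: it deduces the eigenvalue bounds from the norm equivalence $\|\cdot\|_{H_\NTK} \sim \|\cdot\|_{H^{d/2}(\Sd)}$ of Lemma \ref{lemma:supplements:sobolev:ntk-sobolev} via Mercer's representation, tested on $f = Y_\ell^j$. The extra care you take about strict positivity of the eigenvalues and the coefficientwise comparison is a fine (implicit in the paper) refinement, not a different approach.
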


\begin{proof}
  This follows directly form the norm equivalence $\|\cdot\|_{H_\NTK} \sim \|\cdot\|_{H^{d/2}}$ in Lemma \ref{lemma:supplements:sobolev:ntk-sobolev} and in Mercer's theorem representation of the RKHS
  \begin{equation*}
    \sum_{\ell=0}^\infty \sum_{j=1}^{\nu(\ell)} \lambda_{\ell,j}^{-1} |\hat{f}_{\ell,j}|^2
    = \|f\|_{H_\NTK}^2
    \sim \|f\|_{H^{d/2}(\Sd)}^2.
    = \sum_{\ell=0}^\infty \sum_{j=1}^{\nu(\ell)} (\ell+1)^d |\hat{f}_{\ell,j}|^2,
  \end{equation*}
  choosing $f=Y_\ell^j$ as a spherical harmonic.
\end{proof}

With the knowledge of the full spectrum of the NTK, it is now straight forward to show coercivity.

\begin{lemma}[Lemma \ref{lemma:supplements:sobolev:ntk-coercive}, restated]
  \lemmaCoercivity
\end{lemma}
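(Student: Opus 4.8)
The plan is to reduce the asserted coercivity to the two–sided eigenvalue estimate for the NTK that has already been established, by diagonalizing $L_\NTK$ in the spherical–harmonic basis.

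First I would invoke Lemma~\ref{lemma:supplements:sobolev:ntk-eigenvalues}: the eigenfunctions of $L_\NTK$ are the spherical harmonics $Y_\ell^j$, $\ell \ge 0$, $1 \le j \le \nu(\ell)$, with eigenvalues $\lambda_{\ell,j}$ obeying $(1+\ell)^{-d} \lesssim \lambda_{\ell,j} \lesssim (1+\ell)^{-d}$ for all $\ell$. Expanding $f = \sum_{\ell,j} \hat f_{\ell j} Y_\ell^j$ with $\hat f_{\ell j} = \dualp{f, Y_\ell^j}$, this gives $L_\NTK f = \sum_{\ell,j} \lambda_{\ell,j}\, \hat f_{\ell j}\, Y_\ell^j$. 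I would then read the $H^\alpha(\Sd)$ scalar product off definition~\eqref{eq:supplements:sobolev-spherical-harmonics}, i.e. $\dualp{u,v}_{H^\alpha(\Sd)} = \sum_{\ell,j} (1+\ell)^{2\alpha}\, \hat u_{\ell j}\, \hat v_{\ell j}$; for the exponents occurring here, where $f$ need only lie in $H^{\alpha-d/2}$, this series is precisely the natural definition of the pairing, and since the NTK eigenbasis diagonalizes all the relevant operators simultaneously there is no ambiguity.

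Using orthonormality of $\{Y_\ell^j\}$ in $L_2(\Sd)$, the computation is then immediate:
\[
  \dualp{f, L_\NTK f}_{H^\alpha(\Sd)}
  = \sum_{\ell,j} (1+\ell)^{2\alpha}\, \lambda_{\ell,j}\, |\hat f_{\ell j}|^2
  \;\gtrsim\; \sum_{\ell,j} (1+\ell)^{2\alpha-d}\, |\hat f_{\ell j}|^2
  = \sum_{\ell,j} (1+\ell)^{2(\alpha-d/2)}\, |\hat f_{\ell j}|^2
  = \|f\|_{H^{\alpha-d/2}(\Sd)}^2 ,
\]
where the first equality is the diagonalization, the inequality is the lower eigenvalue bound of Lemma~\ref{lemma:supplements:sobolev:ntk-eigenvalues}, and the last equality is again definition~\eqref{eq:supplements:sobolev-spherical-harmonics}. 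Every term is nonnegative, so the identities hold in $[0,\infty]$ and no convergence subtlety arises. The same chain with the upper eigenvalue bound gives the matching estimate $\dualp{f, L_\NTK f}_{H^\alpha(\Sd)} \lesssim \|f\|_{H^{\alpha-d/2}(\Sd)}^2$, should it be wanted.

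There is essentially no hard step: the work has all been done in Lemmas~\ref{lemma:supplements:sobolev:ntk-sobolev} and~\ref{lemma:supplements:sobolev:ntk-eigenvalues}, which import the RKHS identification of \cite{ChenXu2021,GeifmanYadavKastenGalunJacobsRonen2020}. The one point to keep honest is that those eigenvalue bounds must hold for \emph{all} $\ell \ge 0$, not merely asymptotically — the finitely many low-degree harmonics $\ell < \ell_0$ (whose eigenvalues are strictly positive) are absorbed into the implied constants by replacing $\ell^{-d}$ with $(1+\ell)^{-d}$, which is exactly how Lemma~\ref{lemma:supplements:sobolev:ntk-eigenvalues} is phrased, so the argument carries over verbatim and the proof is one line once that lemma is cited.
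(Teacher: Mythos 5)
Your proposal is correct and follows essentially the same route as the paper: expand $f$ in the spherical-harmonic eigenbasis, insert the two-sided eigenvalue bound $\lambda_{\ell j}\sim(\ell+1)^{-d}$ from Lemma~\ref{lemma:supplements:sobolev:ntk-eigenvalues}, and identify the resulting sum with $\|f\|_{H^{\alpha-d/2}(\Sd)}^2$ via \eqref{eq:supplements:sobolev-spherical-harmonics}. No substantive differences.
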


\begin{proof}

Plugging in $f = \sum_{\ell=0}^\infty \sum_{j=1}^{\nu(\ell)} \hat{f}_{\ell j} Y_{\ell}^j$ in eigenbasis, and using the estimate $\lambda_{\ell j} \sim (\ell+1)^{-d}$ of the eigenvalues in Lemma \ref{lemma:supplements:sobolev:ntk-eigenvalues}, we have
\begin{align*}
    \dualp{f, L_\NTK f}_{H^\alpha(\Sd)}
    & = \sum_{\ell=0}^\infty \sum_{j=1}^{\nu(\ell)} (\ell+1)^{2\alpha} \hat{f}_{\ell j} \widehat{L_\theta f}_{\ell j}
    \\
    & = \sum_{\ell=0}^\infty \sum_{j=1}^{\nu(\ell)} (\ell+1)^{2\alpha} \lambda_{\ell j} |\hat{f}_{\ell j}|^2
    \\
    & = \|f\|_{H^{\alpha - d/2}(\Sd)}^2.
\end{align*}

\end{proof}

\bibliographystyle{abbrv}
\bibliography{approxoptdeep}

\end{document}